\numberwithin{equation}{section}
\newtheorem{theorem}{Theorem}
\newtheorem{lemma}{Lemma}
\newtheorem{definition}{Definition}
\theoremstyle{definition}
\newtheorem{remark}{Remark}
\newtheorem{example}{Example}
\numberwithin{theorem}{section}
\numberwithin{lemma}{section}
\numberwithin{corollary}{section}
\numberwithin{proposition}{section}
\numberwithin{definition}{section}
\numberwithin{remark}{section}
\newcommand{\cmark}{\textcolor{green!60!black}{\ding{51}}}  
\newcommand{\xmark}{\textcolor{red}{\ding{55}}}
\definecolor{tancolor}{RGB}{204,102,0}
\definecolor{edgegreen}{RGB}{0,153,0}
\colorlet{edgegreen}{edgegreen}
\colorlet{purplepink}{purple!50!pink}
\tikzset{
  Curved/.style={
    rounded corners,
    to path={
      -- ([xshift=2ex]\tikztostart.east)
      |- (#1) [near end]\tikztonodes
      -| ([xshift=-2ex]\tikztotarget.west)
      -- (\tikztotarget)
    }
  }
}
\definecolor{darkgreen}{HTML}{006400}   
\definecolor{darkorange}{HTML}{FF8C00}  
\definecolor{deeppink}{HTML}{FF69B4}    
\title{On the Sheafification of Higher-Order Message Passing}
\author{  Jacob Hume\\
	University of Cambridge\\
	\texttt{jmh274@cam.ac.uk} \\
	\And
	Pietro Liò \\
	University of Cambridge\\\texttt{pl219@cam.ac.uk} \\
}
\date{}
\begin{document}
\maketitle

\vspace{-6mm}

\begin{abstract}
Recent work in Topological Deep Learning (TDL) seeks to generalize graph learning's preeminent \textit{message passing} paradigm to more complex relational structures: simplicial complexes, cell complexes, hypergraphs, and combinations thereof. Many approaches to such \textit{higher-order message passing} (HOMP) admit formulation in terms of nonlinear diffusion with the Hodge (combinatorial) Laplacian, a graded operator which carries an inductive bias that dimension-$k$ data features correlate with dimension-$k$ topological features encoded in the (singular) cohomology of the underlying domain. For $k=0$ this recovers the graph Laplacian and its well-studied homophily bias. In higher gradings, however, the Hodge Laplacian's bias is more opaque and potentially even degenerate. In this essay, we position sheaf theory as a natural and principled formalism for modifying the Hodge Laplacian's diffusion-mediated interface between local and global descriptors toward more expressive message passing. The sheaf Laplacian's inductive bias correlates dimension-$k$ data features with dimension-$k$ \textit{sheaf} cohomology, a data-aware generalization of singular cohomology. We will contextualize and novelly extend prior theory on sheaf diffusion in graph learning ($k=0$) in such a light — and explore how it fails to generalize to $k>0$ — before developing novel theory and practice for the higher-order setting. Our exposition is accompanied by a self-contained introduction shepherding sheaves from the abstract to the applied.

\end{abstract}

\vspace{4mm}

\small{\tableofcontents}

\section{Introduction}
\label{sec:introduction}





A sheafification of data science is underway. As foundational objects in modern algebraic geometry and topology, sheaves model the notion of a locally consistent attachment of data to a space. Although sheaves on arbitrary spaces are famously slippery, the spaces that support real-world relational data — (hyper)graphs, cell complexes, and other posets — are not arbitrary. Sheaves on a given poset admit a tremendously tractable characterization, for, as we shall see, they turn out to be precisely the \textit{diagrams} on that poset (Figure~\ref{fig:diagram-example}). Diagrams on posets latently pervade signal processing and artificial intelligence, where they often serve to encode relational structure indexed by the underlying domain~\cite{hansen2020laplacians}. The lofty term `sheaf' is used to maintain suggestivity:  behind the innocuous, routine notion of a diagram on data lies a storied geometric enterprise. Can we wield it? 

\begin{figure}[htbp]
    \centering
    \includegraphics[width=0.5\linewidth]{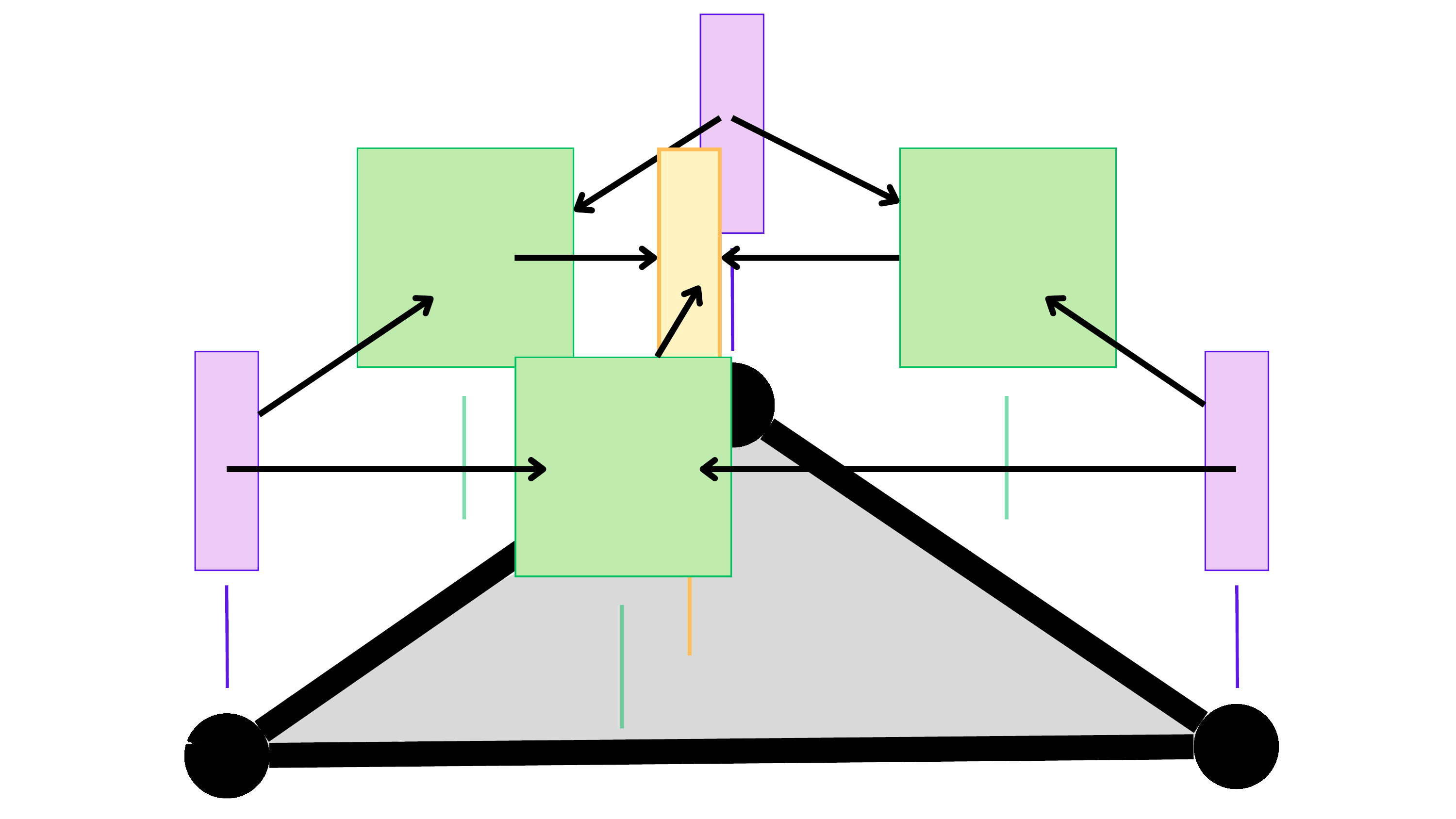} 
    \caption{A diagram (sheaf) on the face poset of a simplicial complex.}
    \label{fig:diagram-example}
\end{figure}

A flourish of recent inquiry gestures to the affirmative. Applied sheaf theory has lent novel insights to opinion dynamics~\cite{hansen2021opinion}, temporal data processing~\cite{bumpus_towards_2025}, causal knowledge representation~\cite{dacunto_relativity_2025}, network science~\cite{wolf2023topological} persistent homology and topological data analysis~\cite{wei2021persistent, curry2014sheaves, arya2025sheaf, russold2022persistent}, dimensionality reduction~~\cite{hansen2020laplacians, singer2012vector, peach2023implicit, gao2021diffusion}, consensus and distributed optimization problems~\cite{hansen2019distributed, hanks2025distributed}, knowledge graphs and symbolic reasoning~\cite{gebhart2023knowledge}, relational database theory~\cite{abramsky2015contextuality}, network coding~\cite{panteleev2024maximally, ghrist2011applications},  structural engineering~\cite{cooperband2023cosheaf,cooperband2024cellular, cooperband2024equivariant}, and much more. Deep learning is no exception. A plethora of recent work in graph representation learning employs \textit{sheaf-based message passing}~\cite{hansen2020sheaf, bodnar2022neural, duta2023sheaf, bamberger2024bundle, barbero2022sheafconnectionlaplacian, braithwaite2024heterogeneous, gillespie2024bayesian, battiloro2023tangent}, with successful standalone applications ranging from natural language processing~\cite{atri2023promoting} and recommendation systems~\cite{purificato2023sheaf4rec} to federated learning~\cite{nguyen2024sheaf, liang2024fedsheafhn}. Central to sheaf-based GNNs is the use of the \textit{sheaf graph Laplacian} as the shift operator with respect to which message passing is performed. This sheaf Laplacian generalizes the graph Laplacian, substituting its low-pass message passing dynamics for a more controllable type of diffusion process. Sheaf neural networks excel, therefore, at modifying the homophily bias implicit in prior message passing approaches toward mitigating oversmoothing and accommodating heterophily~\cite{bodnar_neural_2023} and heterogeneity~\cite{braithwaite2024heterogeneous}. 








Despite the generality of the applied sheaf theory formalism, extant work within and without deep learning focuses near-exclusively on order $k=0$, namely, on (hyper)graphs. This comes in spite of the fact that learning on higher-order data has burgeoned into a frontier of network science and relational deep learning~\cite{papamarkou2024position}, becoming thus a core tenet of \textit{topological deep learning} (TDL,~\cite{su2025topological, zia2024topological, hajij2022topological, papamarkou2024position}). Most TDL architectures operate with respect to the \textit{higher-order message passing} paradigm, commonly instantiated via augmented diffusion with the \textit{Hodge (combinatorial) Laplacian}, a graded extension of the graph Laplacian to orders $k\geq 0$. Diffusion with the $k$th Hodge Laplacian converts local descriptors into global ones; namely, the diffusion limit projects initial features onto the $k$th (singular) cohomology of the underlying domain. This encodes into such architectures the inductive bias that dimension-$k$ data features roughly correlate with dimension-$k$ topological ones. For graphs, $k=0$ and dimension-$k$ topological features coincide with components, whence this \textit{Hodge bias} amounts to the rough assumption that intra-cluster nodes are likely to have similar features, i.e., that homophily is present. When $k>0$, however, the Hodge bias appears to model a notion that is not as widespread for higher-order data as homophily is for graph data, and (as we shall see) can in fact be quite harmful. As natural tools for tweaking biases to align with data, sheaves thus have the potential to impact expressive TDL even more than they have graph learning. 

This essay aims to initiate a theory and practice for higher-order message passing with sheaves. Many new challenges emerge when $k>0$, alongside many compelling applications. Our contributions may be summarized as follows:

\begin{itemize}
    \item We offer a thorough exposition of the role sheaves play in graph representation learning. We focus in particular depth on the expressivity of linear and neural sheaf diffusion on (hyper)graphs and the relation to phenomena such as oversmoothing. This includes an explication of extant arguments in addition to the development of novel ones. While the excellent work~\cite{ayzenberg2025sheaf} reviews sheaves in deep learning at a macroscopic level, this is to our knowledge the first review to discuss graph sheaf diffusion in deep learning at this level of detail. Key to both our conceptual motivation and technical arguments is the identification of sheaf cohomology with \textit{global sections} in grading zero. Importantly, a similarly nice identification is absent in higher gradings, representing another nontrivial step in the generalization of sheaf-based message passing to higher orders. 

    \item We provide a streamlined account shepherding sheaves from their abstract origins to their applications in deep learning. In contrast to a majority of applied work, we focus on sheaves supported on general posets rather than immediately specializing to cell complexes or hypergraphs. Proceeding at this level of abstraction has a few benefits. For one, the two main settings where applied sheaf theory and deep learning meet, cell complexes and hypergraphs, may be treated with the same formalism. Dually, the formalism applies without modification to other domains of interest with which sheaves have yet to interact: posets arise naturally in realms ranging from complex systems to formal concept analysis and knowledge representation~\cite{sowa1984conceptual, ganter2005formal}. Indeed, there is a sense in which posets are the `natural habitat' supporting combinatorial sheaf theory (this will be the content of Theorem~\ref{thm:sheaves-diagrams-category-equivalence}); consequently, working over them can serve to clarify — not complicate — phenomena taking place at lower levels of abstraction.

    \item  We analyze the nature of higher-order message passing based on diffusion with the Hodge Laplacian, highlighting its potential and pitfalls. Along the way, we address some misconceptions in recent literature regarding the nature of `higher-order oversmoothing'. Using sheaf cohomology and combinatorial Hodge theory, we prove that sheaf diffusion has no such expressivity limitations and is capable of asymptotically solving any rank-$k$ classification task on a poset. Our proof proceeds in distinct cases $k=0$ and $k>0$, where the $k=0$ proof arises as a (nontrivial) generalization of previous work using a subclass of sheaves called \textit{discrete vector bundles}. We show that \textit{discrete vector bundles are not powerful enough when $k>0$}, contrasting existing intuition in the literature and requiring new machinery which we develop and which may be of independent interest. 
    
    \item We introduce a suite of `sheafified' architectures for TDL, discussing both the handcrafting and the learning of higher-order sheaves and providing empirical validation.

\end{itemize}

The remainder of this document is organized as follows. Section~\ref{sec:rapid-preliminaries} rapidly reviews the message passing paradigm in graph representation learning before providing motivation for how sheaves can help mitigate its traditional shortcomings. Section~\ref{sec:sheaves} introduces sheaves on topological spaces
and on posets, and derives the canonical setting where they become equivalent. Section~\ref{sec:motvation-higher-order} discuss combinatorial Hodge theory and heat diffusion in the abstract before specializing to the setting of simplicial cochains, leading to a discussion of the \textit{Hodge bias} underlying many higher-order message passing algorithms. Higher-order sheaves are then motivated as an approach to mitigating Hodge diffusion's shortcomings. Section~\ref{sec:sheaf-cohomology-diffusion} substantiates the theory of higher-order sheaves on posets, establishing a dictionary between sheaf cohomology, sheaf Laplacians, heat diffusion, and (in grading zero) global sections. In Section~\ref{sec:linear-separation-power-sheaf-diffusion} this dictionary is employed toward proving various results concerning the expressive power of \textit{linear} sheaf diffusion. Finally, Section~\ref{sec:learning-with-sheaves} proposes and validates a collection of higher-order \textit{neural} sheaf diffusion architectures.


\textbf{Notation and Convention.} All (co)homology is done with $\mathbb{R}$-coefficients. The inner product on $\mathbb{R}^n$ is the standard one unless otherwise stated. By `simplicial complex' we mean `\textit{abstract} simplicial complex'. By `cell complex' we mean `\textit{regular} cell complex'. (These additional assumptions are what allow us to treat the topological domains in question merely as posets without worrying e.g. about Euclidean embeddings or attaching maps.) Throughout this document, we assume acquaintance with algebraic topology and differential geometry, including a basic familiarity with category theory (the respective Part III courses are more than sufficient). We do not assume prior knowledge of sheaf theory, though it is engaging to compare our discussions with those of Part III Algebraic Geometry. Only a few proofs will really invoke the full machinery of these Part III courses, and these proofs were written by the first author — meaning it is always possible that more elementary arguments were missed.

\begin{table}[h]
\label{table:notation-table}
  \centering
  \begin{tabular}{@{}ll@{}}
    \toprule
    \textbf{Notation} & \textbf{Description} \\ \midrule
    $\mathcal{F}$, $\mathcal{F}_{x}$         & Sheaf on a topological space $X$, stalk at $x\in X$ \\
    $F$                                      & Diagram/sheaf on a poset $S$ \\
    $\Delta^{k}=d^{*}d + d\,d^{*}$           & $k$th-order Laplacian\\
    $\Delta^{0},\,L$                         & $0$th-order Laplacian\\
    ${H}^{k}(C^{\bullet}, d^{\bullet})$& Cohomology of a cochain complex $(C^\bullet, d^\bullet)$\\
    $Q_{k}= \langle \Delta^{k} -,\, - \rangle$  & $k$th Dirichlet energy\\
    $L'$                                     & Normalized vanilla Laplacian \\
    $E_{F}$                                  & Sheaf Dirichlet energy \\
    $G=(V,E)$                                & (Possibly weighted) graph with $n = \lvert V\rvert$ nodes, by default connected \\
    $S$                                      & Finite (pre)ordered set / (pre)poset with relation $\le$ \\
    $X_{S}$                                  & Alexandrov topological space associated to $S$ (as sets, $S = X_{S}$) \\
    $\mathsf{D}$                             & “Data” category in which a sheaf is valued; $\mathsf{Set}$-like, Abelian, or $\mathbb{R}\mathsf{Vect}$ as needed\\
    \bottomrule
  \end{tabular}
  \caption{Notation.}
\end{table}

\section{Motivation $\mathrm{I}$: Sheaves and Graph Representation Learning}
\label{sec:rapid-preliminaries}

We begin by motivating the use of sheaf diffusion in message passing graph neural networks. Following this, we will motivate the use of sheaf diffusion for higher-order message passing, observing which principles discussed here do and do not generalize to the higher-order setting.

\subsection{Message Passing Graph Neural Networks}
\label{sec:MPNNs}

Contemporary graph representation learning largely operates within the \textbf{message passing paradigm}, an architectural framework wherein the representations of graph nodes are progressively updated based on information from their neighbors. 

\begin{definition}Let $G=(V,E)$ be a graph, and consider a node $u \in V$. Let $N_{u}$ be the (one-hop) neighborhood of $u$. Let $\boldsymbol x_{u}$ be the features of $u \in V$, and $\boldsymbol e_{uv}$ be the features of edge $(u,v) \in E$. A \textbf{message passing update}, or \textbf{message passing layer}, can be expressed node-wise as follows:
\begin{equation}
\boldsymbol  h_{u}=\phi\left( \boldsymbol  x_{u}, \biguplus_{v \in N_{u}} \psi(\boldsymbol  x_{u}, \boldsymbol  x_{v}, \boldsymbol  e_{uv}) \right),
\end{equation}
where $\phi$ and $\psi$ are differentiable functions and $\biguplus$ is a permutation-invariant \textbf{aggregation operator}\footnote{Commonly the notation $\bigoplus$ is used instead of $\biguplus$ to denote the aggregation operator. The symbol $\bigoplus$ already plays the role of various direct sum constructions in this document, however, and is hence avoided here.} accepting indefinitely many arguments (e.g., \texttt{mean} or \texttt{max}). $\phi$ is called the  \textbf{update function}. $\psi$ is called the \textbf{message function}. The output of a message passing layer is a representation $\boldsymbol h_{u}$ for each node $u \in V$.
\end{definition}

\noindent Graph neural networks are generally built as compositions of several message passing layers, potentially in tandem with other standard techniques in the deep learning toolkit~\cite{hamilton2020graph}. The following instantiation of the message passing framework is central to our exposition. 



\begin{example}[Kipf and Welling, 2017 \cite{kipf2016semi}]
\label{ex:kipf-welling-GCN}
    Let $\boldsymbol W$ be a $|V| \times |V|$ `weight matrix', and let $d_u$ denote the degree of a node $u \in V$. Define a message passing layer as follows: put \begin{itemize}
        \item (Aggregation operator) $\biguplus:=\sum$ (summation);
        \item  (Message function) $\psi(\boldsymbol x_{u}, \boldsymbol x_{v}, \boldsymbol e_{uv}):=\frac{1}{\sqrt{ (d_{u}+1) (d _{v}+1) }}\boldsymbol W^{\top}\boldsymbol x_{v}$;
        \item (Update function) $\phi(\boldsymbol x_{u}, \biguplus):=\sigma\left( \frac{1}{\sqrt{ (d_{u}+1) ^{2}}} \boldsymbol W^{\top} \boldsymbol x_{u} + \biguplus\right)$ for element-wise nonlinearity $\sigma$  .
    \end{itemize}
This yields, for a given node $u$, the update 
\begin{align}
\boldsymbol  h_{u}&=\sigma \left( \frac{1}{\sqrt{ (d_{u}+1)^{2} }} \boldsymbol  W^{\top} \boldsymbol  x_{u}+ \sum_{v \in N_{u}} \frac{1}{\sqrt{ (d_{u}+1)(d_{v}+1) }}\boldsymbol  W^{\top} \boldsymbol  x_{v} \right) \\
&= \sigma\left( \sum_{v \in \tilde{N}_{u}} \frac{1}{\sqrt{ (d_{u}+1)(d_{v}+1) }} \boldsymbol  W^{\top} \boldsymbol  x_{v} \right),
\end{align}
where $\tilde{N}_{u}=N_{u}\cup \{ u \}$. With $\tilde{\boldsymbol A}=\boldsymbol I+ \boldsymbol A$ denoting the adjacency matrix of $G$ augmented with self-loops, and $\tilde{\boldsymbol D}$ the degree matrix of $\tilde{\boldsymbol A}$, this may be written \begin{equation}
\label{eqn:original-gcn-update}
    \boldsymbol  h_{u}= \sigma \left(\sum_{v \in V} \tilde{\boldsymbol  D}_{u u}^{-1/2} \tilde{\boldsymbol  A}_{uv} \tilde{\boldsymbol  D}^{-1/2}_{vv} \boldsymbol  W^{\top} \boldsymbol  x_{v}\right),\end{equation}
which is the $u$th row of $\sigma( \tilde{\boldsymbol D}^{-1/2} \tilde{\boldsymbol A} \tilde{\boldsymbol D}^{-1/2} \boldsymbol X \boldsymbol W)$. This is precisely the update for the well-studied graph convolutional network (GCN) architecture of Kipf and Welling~\cite{kipf2016semi}.  

\end{example}



\subsection{Intuition: Sheaves and Oversmoothing in Graph Representation Learning}
\label{sec:intuition-oversmoothing-heterophily} 
This section aims to motivate the introduction of sheaves and sheaf diffusion for more expressive deep learning on graphs and their generalizations. 
Perhaps the most concrete way to exhibit sheaves' utility in this context is to demonstrate the role they play in mitigating the so-called \textit{oversmoothing phenomenon} in message passing graph neural networks. Thus, the goal of this section is twofold: to explore sheaves in relation to oversmoothing, and then to explore oversmoothing in terms of sheaves. (We will return to this topic much later in Section~\ref{sec:learning-with-sheaves}, proving explicit results on the relationship between sheaves and oversmoothing and finishing the story begun here.)



\subsubsection{Oversmoothing, Heterophily, and Heat Diffusion}
We lead off with the following fact, which, though elementary, is central to the relationship between oversmoothing in MPNNs and heat diffusion on graphs. Later (Sections~\ref{sec:motvation-higher-order}-\ref{sec:sheaf-cohomology-diffusion}), we will generalize it to the sheaf-theoretic setting, where it will again be central.

\begin{theorem}[GCNs and Heat Diffusion]
\label{thm:GCNs-and-heat-diffusion}
    Omitting nonlinearity, weights, normalization, and self-loops from the GCN update (\ref{eqn:original-gcn-update}) recovers an Euler discretization of the graph heat diffusion equation \begin{equation}
    \label{eqn:graph-heat-diffusion}
        \boldsymbol{\dot{X}}(t)=-\alpha\boldsymbol{L} \boldsymbol{X}(t),  \ \ \alpha>0,
    \end{equation}
    where $\boldsymbol{L}=\boldsymbol{D}-\boldsymbol{A}$ is the (unnormalized) graph Laplacian of $G$. Solutions $\boldsymbol X(t)$ converge exponentially as $t \to \infty$ to the orthogonal projection of $\boldsymbol X(0)$ onto the \textit{harmonic space} $\ker \boldsymbol L$.
\end{theorem}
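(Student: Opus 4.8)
The statement splits into a definitional half (the Euler identity) and an analytic half (the convergence), and essentially all the content is in the latter, so that is where I would put the work.

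For the discretization claim the plan is just to unwind Example~\ref{ex:kipf-welling-GCN}: setting $\sigma = \mathrm{id}$, $\boldsymbol W = \boldsymbol I$, replacing the symmetric normalizers $\tilde{\boldsymbol D}^{-1/2}$ by $\boldsymbol I$ and the self-loop-augmented adjacency $\tilde{\boldsymbol A}$ by $\boldsymbol A$, one checks that the layer \eqref{eqn:original-gcn-update} reduces, up to the standard residual/normalization bookkeeping, to the map $\boldsymbol X \mapsto \boldsymbol X - \alpha \boldsymbol L \boldsymbol X$ --- by definition one forward-Euler step of \eqref{eqn:graph-heat-diffusion} with time-step $\alpha$. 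I regard this as routine and would not dwell on it.

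For the convergence, I would first record the structure of $\boldsymbol L$. Since $G$ is undirected, $\boldsymbol L = \boldsymbol D - \boldsymbol A$ is symmetric, and it is positive semidefinite because $\langle \boldsymbol L \boldsymbol z, \boldsymbol z\rangle = \sum_{(u,v) \in E} w_{uv}(\boldsymbol z_u - \boldsymbol z_v)^2 \ge 0$ is the graph Dirichlet energy (for $\boldsymbol z \in \mathbb{R}^n$). Hence the spectral theorem supplies an orthonormal eigenbasis $\phi_1, \dots, \phi_n$ of $\mathbb{R}^n$ with eigenvalues $0 = \lambda_1 \le \lambda_2 \le \cdots \le \lambda_n$; moreover $\boldsymbol z \in \ker \boldsymbol L$ iff $\boldsymbol z$ has vanishing Dirichlet energy iff $\boldsymbol z$ is constant on each connected component, so under our standing assumption that $G$ is connected we get $\ker \boldsymbol L = \mathrm{span}\{\mathbf{1}\}$ and a spectral gap $\lambda_2 > 0$. (For matrix-valued features $\boldsymbol X \in \mathbb{R}^{n \times d}$ everything below is applied column by column, i.e. to $\boldsymbol L \otimes \boldsymbol I_d$.) Since \eqref{eqn:graph-heat-diffusion} is a linear constant-coefficient system, Picard--Lindel\"of gives the unique solution $\boldsymbol X(t) = e^{-\alpha t \boldsymbol L}\boldsymbol X(0)$; expanding $\boldsymbol X(0) = \sum_{i=1}^n c_i \phi_i$ and applying the functional calculus yields $\boldsymbol X(t) = \sum_{i=1}^n e^{-\alpha \lambda_i t} c_i \phi_i$. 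Letting $t \to \infty$, the terms with $\lambda_i > 0$ decay to zero while those with $\lambda_i = 0$ are fixed, so $\boldsymbol X(t) \to \sum_{i : \lambda_i = 0} c_i \phi_i = \boldsymbol P_{\ker \boldsymbol L}\boldsymbol X(0)$, the orthogonal projection onto the harmonic space; and $\lVert \boldsymbol X(t) - \boldsymbol P_{\ker \boldsymbol L}\boldsymbol X(0)\rVert \le e^{-\alpha \lambda_2 t}\lVert \boldsymbol X(0)\rVert$ exhibits the convergence as exponential with rate $\alpha\lambda_2 > 0$. Equivalently, $e^{-\alpha t \boldsymbol L} \to \boldsymbol P_{\ker \boldsymbol L}$ in operator norm.

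I do not expect a genuine obstacle here --- this is the textbook spectral analysis of the heat semigroup. The two points that require (mild) care are: (i) invoking connectedness to secure $\lambda_2 > 0$, without which one still has convergence but no uniform exponential rate, and $\ker \boldsymbol L$ is no longer a line; and (ii) keeping the matrix-of-features versus scalar-eigendecomposition translation honest. It is worth isolating this $k = 0$ result cleanly, since Sections~\ref{sec:motvation-higher-order}--\ref{sec:sheaf-cohomology-diffusion} will replay precisely this heat-semigroup argument with $\boldsymbol L$ replaced by a sheaf (or Hodge) Laplacian and $\ker \boldsymbol L$ by (sheaf) cohomology.
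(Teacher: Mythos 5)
Your proposal is correct and follows essentially the same route as the paper: unwind the GCN layer to the update $\boldsymbol X \mapsto (\boldsymbol I - \boldsymbol L)\boldsymbol X$, then solve the linear ODE as $e^{-\alpha t\boldsymbol L}\boldsymbol X(0)$ and use the orthogonal eigendecomposition of the symmetric positive semidefinite $\boldsymbol L$ to see the nonzero-eigenvalue modes decay while the harmonic modes persist. Your additional remarks on the spectral gap under connectedness and the columnwise treatment of matrix-valued features are refinements the paper leaves implicit, but they do not change the argument.
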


\noindent In order to prove Theorem~\ref{thm:GCNs-and-heat-diffusion}, we recall the following standard fact from the theory of ordinary differential equations. 

\begin{theorem}
\label{thm:ODE-fact}
    Let $\boldsymbol{M} \in \mathbb{R}^{n \times n}$, giving rise to a linear homogeneous system of ODEs with constant coefficients \begin{equation}
    \label{eqn:LHSODECC}
\dot{\boldsymbol{X}}(t)=\boldsymbol{M} \boldsymbol{X}(t).        
    \end{equation}
    Solutions to (\ref{eqn:LHSODECC}) are given by \begin{equation}
    \boldsymbol{X}(t)=e^{\boldsymbol{M}t} \boldsymbol{X}(0) \end{equation} for $\boldsymbol{X}(0)$ a choice of initial condition.
\end{theorem}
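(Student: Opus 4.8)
The plan is to verify the claimed formula by direct substitution and then establish uniqueness, the only genuine subtlety being the legitimacy of differentiating the matrix exponential termwise. First I would set $\boldsymbol{X}(t) := e^{\boldsymbol{M}t}\boldsymbol{X}(0)$, where $e^{\boldsymbol{M}t} := \sum_{k=0}^{\infty}\frac{t^{k}}{k!}\boldsymbol{M}^{k}$, and observe that this series, together with the series obtained by formal termwise differentiation, converges uniformly on every compact time interval. This follows from submultiplicativity of any operator norm on $\mathbb{R}^{n\times n}$: the $k$th term is bounded by $\frac{|t|^{k}}{k!}\lVert\boldsymbol{M}\rVert^{k}$, so the Weierstrass $M$-test applies, and $\mathbb{R}^{n\times n}$ being complete, termwise differentiation is justified. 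Differentiating then gives $\frac{d}{dt}e^{\boldsymbol{M}t} = \sum_{k\geq 1}\frac{t^{k-1}}{(k-1)!}\boldsymbol{M}^{k} = \boldsymbol{M}e^{\boldsymbol{M}t} = e^{\boldsymbol{M}t}\boldsymbol{M}$, the last equality recording that $\boldsymbol{M}$ commutes with its own exponential. Hence $\dot{\boldsymbol{X}}(t) = \boldsymbol{M}e^{\boldsymbol{M}t}\boldsymbol{X}(0) = \boldsymbol{M}\boldsymbol{X}(t)$ and $\boldsymbol{X}(0) = e^{\boldsymbol{0}}\boldsymbol{X}(0) = \boldsymbol{X}(0)$, so $\boldsymbol{X}(t)$ solves the system with the prescribed initial condition.

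For uniqueness, which one needs if the solutions are to be characterized exactly, I would take an arbitrary solution $\boldsymbol{Y}(t)$ and consider $\boldsymbol{Z}(t) := e^{-\boldsymbol{M}t}\boldsymbol{Y}(t)$. The product rule for matrix-valued differentiable maps, together with the commutation just noted, gives $\dot{\boldsymbol{Z}}(t) = -\boldsymbol{M}e^{-\boldsymbol{M}t}\boldsymbol{Y}(t) + e^{-\boldsymbol{M}t}\boldsymbol{M}\boldsymbol{Y}(t) = \boldsymbol{0}$, using $e^{-\boldsymbol{M}t}\boldsymbol{M} = \boldsymbol{M}e^{-\boldsymbol{M}t}$. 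Thus $\boldsymbol{Z}$ is constant, equal to $\boldsymbol{Z}(0) = \boldsymbol{Y}(0)$, and left-multiplying by $e^{\boldsymbol{M}t}$, whose two-sided inverse is $e^{-\boldsymbol{M}t}$ by the series identity $e^{\boldsymbol{M}t}e^{-\boldsymbol{M}t} = e^{\boldsymbol{0}} = \boldsymbol{I}$ for commuting exponents, yields $\boldsymbol{Y}(t) = e^{\boldsymbol{M}t}\boldsymbol{Y}(0)$.

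There is no real obstacle here; this is a textbook fact, and the ``hard part'' is merely bookkeeping, namely being careful that the infinite-series manipulations (uniform convergence, termwise differentiation, and the identities $e^{\boldsymbol{M}t}\boldsymbol{M} = \boldsymbol{M}e^{\boldsymbol{M}t}$ and $e^{\boldsymbol{M}t}e^{-\boldsymbol{M}t} = \boldsymbol{I}$) are justified from first principles rather than merely quoted. Since the paper explicitly flags that such elementary facts are recalled only for self-containedness, I would keep the write-up to a few lines, optionally citing a standard ODE text for the convergence lemma and devoting at most a couple of sentences each to existence and uniqueness.
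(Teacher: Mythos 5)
Your proposal is correct and complete. The paper itself offers no proof of this statement — it is recalled explicitly as ``the following standard fact from the theory of ordinary differential equations'' and used as a black box — so there is nothing to diverge from; your argument (termwise differentiation of the exponential series justified by the Weierstrass $M$-test and submultiplicativity, followed by the $\boldsymbol{Z}(t)=e^{-\boldsymbol{M}t}\boldsymbol{Y}(t)$ trick for uniqueness) is the canonical textbook proof and supplies exactly the existence-and-uniqueness content the paper implicitly relies on.
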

\noindent Note that, when $\boldsymbol{M}$ is diagonalizable, in a corresponding eigenbasis the solution operator takes the form $\operatorname{diag}(e^{\lambda_1 t}, \dots, e^{\lambda_n t})$ if $(\lambda_i)_{i=1}^n$ is the spectrum of $\boldsymbol{M}$. If $\boldsymbol{M}$ is in fact negative-semidefinite, so that $\lambda_i(\boldsymbol{M}) \leq 0$ for all $i \in [n]$, then as $t \to \infty$ this solution operator converges to the orthogonal projection onto $\ker \boldsymbol{M}$. 

\begin{proof}
    Deriving from (\ref{eqn:original-gcn-update}) the Euler discretization of Equation \ref{eqn:graph-heat-diffusion} is a matrix computation. Showing convergence to $\ker \boldsymbol{L}$ is a straightforward exercise in differential equations. Indeed, let $\boldsymbol{L}=\boldsymbol{D}-\boldsymbol{A}$ denote the Laplacian of $G$, and $\boldsymbol L'=\boldsymbol{D}^{-1/2} \boldsymbol{L} \boldsymbol{D}^{-1/2}$ its normalization. In matrix form, the update in (\ref{eqn:original-gcn-update}) may be written (recall that self-loops have been omitted): \begin{align}
\boldsymbol  H&=\sigma( {\boldsymbol D}^{-1/2} {\boldsymbol A} {\boldsymbol D}^{-1/2} \boldsymbol X \boldsymbol W) \\
&= \sigma(\boldsymbol  D^{-1/2}(\boldsymbol  D - \boldsymbol  L)\boldsymbol  D^{-1/2} \boldsymbol  X \boldsymbol  W) \notag \\
&= \sigma\big((\boldsymbol  I - \boldsymbol  D^{-1/2} \boldsymbol  L \boldsymbol  D^{-1/2}) \boldsymbol  X \boldsymbol  W \big) \notag \\
& = \sigma\big( (\boldsymbol  I - \boldsymbol  L' ) \boldsymbol  X \boldsymbol  W\big) \label{eqn:gcn-update-Laplacian}
\end{align}
Omitting nonlinearity $\sigma$, weights $\boldsymbol{W}$, and normalization from (\ref{eqn:gcn-update-Laplacian}) gives the update \begin{equation}
\label{eqn:discrete-graph-linear-heat-diffusion}
    \boldsymbol{X_{\operatorname{new}}} = (\boldsymbol{I} - \boldsymbol{L})\boldsymbol{X} = \boldsymbol{X} - \boldsymbol{L} \boldsymbol{X}.
\end{equation}
This is precisely the Euler update for the differential equation (\ref{eqn:graph-heat-diffusion}), as claimed.
Now, Theorem~\ref{thm:ODE-fact} says that solutions $\boldsymbol{X}(t)$ to Equation~\ref{eqn:graph-heat-diffusion} are of the form $e^{-\alpha \boldsymbol{L} t} \boldsymbol{X}(0)$.  $\boldsymbol{L}$ is orthogonally diagonalizable (it is symmetric); in an orthonormal eigenbasis for $\boldsymbol{L}$ the solution operator is represented by the matrix $\operatorname{diag}(e^{-\alpha \lambda_1 t},\dots, e^{-\alpha \lambda_n t})$, where $(\lambda_i)_{i=1}^n$ is the spectrum of $\boldsymbol{L}$. Because $\boldsymbol{L}$ is positive semidefinite, $\lambda_i \geq 0$. As $t \to \infty$, the diagonal entries corresponding to $\lambda >0$ exponentially decay to $0$, while those corresponding to $\lambda=0$ — that is, those corresponding to the harmonic space $\ker \boldsymbol{L}$ — remain constant $1$. This specifies exactly the orthogonal projection onto $\ker \boldsymbol{L}$. 
\end{proof}

\remark{Proving the convergence claim in Theorem~\ref{thm:GCNs-and-heat-diffusion} relied only on positive-semidefiniteness of $\boldsymbol{L}$ (or, rather, negative-semidefiniteness of $-\boldsymbol{L}$). Since normalizing $\boldsymbol{L}$ preserves positive-semidefiniteness, the analogous result holds also for the normalization $\boldsymbol{L'}$ of $\boldsymbol{L}$. 
}\\

\noindent In the motivating examples that follow, we will for simplicity restrict ourselves to scalar features/signals $\boldsymbol{x} \in \mathbb{R}^{|V|}$ on the nodes of $G$. 
The following provisional definition quantifies the `smoothness' of a node signal $\boldsymbol{x}$ on $G$. It is greatly generalized in Section~\ref{sec:motvation-higher-order} (Definition~\ref{def:dirichlet-energy-general-def}).

\begin{definition}
\label{def:graph-dirichlet-energy}
The \textbf{Dirichlet energy form} on $G$ is the symmetric positive semidefinite quadratic form $E=\langle -, \boldsymbol{L} - \rangle.$
The \textbf{Dirichlet energy of a scalar node signal $\boldsymbol{x}$} on $G$ is $E(\boldsymbol{x})=\langle \boldsymbol{x}, \boldsymbol{L} \boldsymbol x \rangle$.
\end{definition}
\noindent As a sanity check, recall that the harmonic space $\ker \boldsymbol{L}$ consists of locally constant signals on $G$. Thus, the Dirichlet energy of a constant (i.e., smoothest possible) signal on a connected graph is zero. In general,  the Dirichlet energy specifies how far away a signal $\boldsymbol{x}$ is from being (locally) constant. Indeed, recalling that $\boldsymbol{L}=\boldsymbol{B}^\top \boldsymbol{B}$ for $\boldsymbol{B}$ the incidence matrix of $G$, and recalling the general linear-algebraic fact that $\ker \boldsymbol{B}^\top \boldsymbol{B}=\ker \boldsymbol{B}$, one has \begin{equation}
     E(\boldsymbol{x}) = \boldsymbol{x}^\top \boldsymbol{B}^\top \boldsymbol{B} \boldsymbol{x}=\| \boldsymbol{B} \boldsymbol{x} \|^2_2=\text{distance}(\boldsymbol{x}, \ker \boldsymbol{L}).
\end{equation}

As with much of the content in this section, the Dirichlet energy's generalization to the sheaf-theoretic setting and beyond will be instrumental in future sections.

\begin{example}
\label{ex:mlmi}
    Let $G$ be a graph and $\boldsymbol{x}=\boldsymbol{x}(0) \in \mathbb{R}^n$ a scalar signal on $G$. What are the consequences of Theorem~\ref{thm:GCNs-and-heat-diffusion} for the limiting behavior $\boldsymbol{x}_{\infty}$ of node representations if we diffuse $\boldsymbol{x}$ over $G$? \\
    If $G$ is connected, then we know $\ker \boldsymbol{L}$ consists of constant functions: $\ker \boldsymbol{L}=\langle \boldsymbol 1 \rangle$. Thus, $E(\boldsymbol{x}_\infty)=0$: in the limit, the diffusion process has fully `smoothened' $\boldsymbol x$ out. 

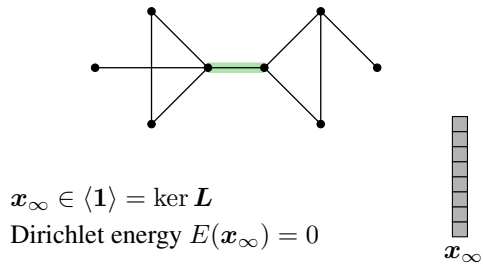
\begin{figure}[H]
    \centering
    \begin{tikzpicture}
 
    \begin{scope}[shift={(0,-1)}]
        \draw [line width=0.5pt](5,14.75) to[short] (5,14.75);
        \draw [line width=0.5pt](5,14.5) to[short] (5.75,13.75);
        \draw [line width=0.5pt](5.75,13.75) to[short] (5,13);
        \draw [line width=0.5pt](5,14.5) to[short] (5,13);
        \draw [line width=0.5pt](5.75,13.75) to[short] (4.25,13.75);
        \draw [line width=0.5pt](7.25,14.5) to[short] (6.5,13.75);
        \draw [line width=0.5pt](6.5,13.75) to[short] (7.25,13);
        \draw [line width=0.5pt](7.25,14.5) to[short] (7.25,13);
        \draw [line width=0.5pt](7.25,14.5) to[short] (8,13.75);
        \draw [line width=4pt, edgegreen, opacity=0.3] (5.75,13.75) to[short] (6.5,13.75);
        \draw [line width=0.5pt](5.75,13.75) to[short] (6.5,13.75);

        \node (n1) at (5.75,13.75) [circle, draw, fill=black, inner sep=1pt] {};
        \node (n2) at (4.25,13.75) [circle, draw, fill=black, inner sep=1pt] {};
        \node (n3) at (5,14.5) [circle, draw, fill=black, inner sep=1pt] {};
        \node (n4) at (5,13) [circle, draw, fill=black, inner sep=1pt] {};
        \node (n5) at (8,13.75) [circle, draw, fill=black, inner sep=1pt] {};
        \node (n6) at (6.5,13.75) [circle, draw, fill=black, inner sep=1pt] {};
        \node (n7) at (7.25,14.5) [circle, draw, fill=black, inner sep=1pt] {};
        \node (n8) at (7.25,13) [circle, draw, fill=black, inner sep=1pt] {};
    \end{scope}

    \begin{scope}[shift={(0,-3)}]
        \node[anchor=west] at (3,14) {\textbf{$\boldsymbol{x}_\infty \in \langle \boldsymbol{1} \rangle=\ker \boldsymbol{L}$}};
        \node[anchor=west] at (3,13.5) {Dirichlet energy $E(\boldsymbol{x}_\infty)=0$};
        
        \begin{scope}[shift={(9,13.5)}]
            \foreach \y in {0,1,...,7} {
                \fill[gray!60] (0,\y*0.2) rectangle +(0.2,0.2);
                \draw (0,\y*0.2) rectangle +(0.2,0.2);
            }
            \node[below] at (.15,0) {$\boldsymbol{x}_\infty$};
        \end{scope}
    \end{scope}
    \end{tikzpicture}
    \caption{Graph $G$ with a single connected component. Node representations become identical to one another in the heat diffusion limit.}
    \label{fig:1-oversmoothing-motivation}
\end{figure}

\noindent Now, we might next sever $G$ along the green bottleneck depicted in Figure~\ref{fig:1-oversmoothing-motivation}, so that it now consists of components $C_1$ and $C_2$. The indicator signals $\boldsymbol{1}_{C_1}, \boldsymbol{1}_{C_2}$ for these components form then a basis for $\ker \boldsymbol{L}$. Consequently, each node's limiting representation takes on one of two values, depending on the component of $G$ to which it belongs. 

\begin{figure}[H]
    \centering
    \begin{tikzpicture}

    \begin{scope}[shift={(0,-1)}]
        \draw [line width=0.5pt](5,14.75) to[short] (5,14.75);
        \draw [line width=0.5pt](5,14.5) to[short] (5.75,13.75);
        \draw [line width=0.5pt](5.75,13.75) to[short] (5,13);
        \draw [line width=0.5pt](5,14.5) to[short] (5,13);
        \draw [line width=0.5pt](5.75,13.75) to[short] (4.25,13.75);
        \draw [line width=0.5pt](7.25,14.5) to[short] (6.5,13.75);
        \draw [line width=0.5pt](6.5,13.75) to[short] (7.25,13);
        \draw [line width=0.5pt](7.25,14.5) to[short] (7.25,13);
        \draw [line width=0.5pt](7.25,14.5) to[short] (8,13.75);

        \node (n1) at (5.75,13.75) [circle, draw, fill=black, inner sep=1pt] {};
        \node (n2) at (4.25,13.75) [circle, draw, fill=black, inner sep=1pt] {};
        \node (n3) at (5,14.5) [circle, draw, fill=black, inner sep=1pt] {};
        \node (n4) at (5,13) [circle, draw, fill=black, inner sep=1pt] {};
        \node (n5) at (8,13.75) [circle, draw, fill=black, inner sep=1pt] {};
        \node (n6) at (6.5,13.75) [circle, draw, fill=black, inner sep=1pt] {};
        \node (n7) at (7.25,14.5) [circle, draw, fill=black, inner sep=1pt] {};
        \node (n8) at (7.25,13) [circle, draw, fill=black, inner sep=1pt] {};
        
        \begin{scope}[on background layer]
            \node[fill=gray!30,draw=gray!60,rounded corners,inner sep=2pt,fit=(n1) (n2) (n3) (n4)] {};
            \node[fill=gray!20,draw=gray!50,rounded corners,inner sep=2pt,fit=(n5) (n6) (n7) (n8)] {};
            \node at (4.3,13.45) {$C_1$};
            \node at (7.95,13.45) {$C_2$};
        \end{scope}
    \end{scope}

    \begin{scope}[shift={(0,-3)}]
        \node[anchor=west] at (3,14) {$\boldsymbol{x}_\infty \in \langle \boldsymbol{1}_{C_1}, \boldsymbol{1}_{C_2} \rangle = \operatorname{ker }\boldsymbol{L}$};
        \node[anchor=west] at (3,13.5) {Get \textit{smoothing `in the components'}};
        ;
        
        \begin{scope}[shift={(9,13.5)}]
            \foreach \y in {0,1,...,4} {
                \fill[gray!30] (0,\y*0.2) rectangle +(0.2,0.2);
                \draw (0,\y*0.2) rectangle +(0.2,0.2);
            }
            \foreach \y in {4, 5,...,7} {
                \fill[gray!60] (0,\y*0.2) rectangle +(0.2,0.2);
                \draw (0,\y*0.2) rectangle +(0.2,0.2);
            }
            \node[below] at (.15,0) {$\boldsymbol{x}_\infty$};
        \end{scope}
    \end{scope}
    \end{tikzpicture}
    \caption{Graph $G$ with components $C_1$ and $C_2$. Node representations converge under heat diffusion to local constancy.}
    \label{fig:2-oversmoothing-motivation-disconnected-components}
\end{figure}
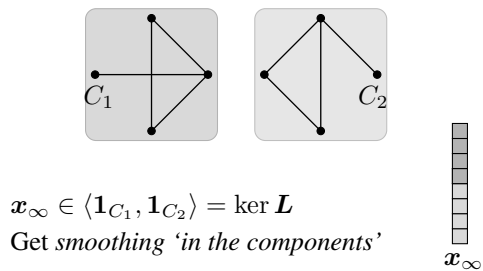

\noindent If $G$ is homophilic, this may be exactly what one wants! For instance, let us assume that the components $C_1$ and $C_2$ of $G$ coincide with two node classes $\textcolor{purple}{A_1}$ and $\textcolor{pink}{A_2}$ respectively. In this case, heat diffusion serves to separate the node representations, making classification a very easy task for all but the most degenerate initial conditions.

\begin{figure}[H]

    \centering
    \begin{tikzpicture}

    \begin{scope}[shift={(0,-1)}]
        \draw [line width=0.5pt](5,14.75) to[short] (5,14.75);
        \draw [line width=0.5pt](5,14.5) to[short] (5.75,13.75);
        \draw [line width=0.5pt](5.75,13.75) to[short] (5,13);
        \draw [line width=0.5pt](5,14.5) to[short] (5,13);
        \draw [line width=0.5pt](5.75,13.75) to[short] (4.25,13.75);
        \draw [line width=0.5pt](7.25,14.5) to[short] (6.5,13.75);
        \draw [line width=0.5pt](6.5,13.75) to[short] (7.25,13);
        \draw [line width=0.5pt](7.25,14.5) to[short] (7.25,13);
        \draw [line width=0.5pt](7.25,14.5) to[short] (8,13.75);
        
        \node (n1) at (5.75,13.75) [draw, circle, fill=purple, inner sep=1pt] {};
        \node (n2) at (4.25,13.75) [draw, circle, fill=purple, inner sep=1pt] {};
        \node (n3) at (5,14.5) [draw, circle, fill=purple, inner sep=1pt] {};
        \node (n4) at (5,13) [draw, circle, fill=purple, inner sep=1pt] {};
        \node (n5) at (8,13.75) [draw, circle, fill=pink, inner sep=1pt] {};
        \node (n6) at (6.5,13.75) [draw, circle, fill=pink, inner sep=1pt] {};
        \node (n7) at (7.25,14.5) [draw, circle, fill=pink, inner sep=1pt] {};
        \node (n8) at (7.25,13) [draw, circle, fill=pink, inner sep=1pt] {};
        
        \begin{scope}[on background layer]
            \node[fill=gray!30,draw=gray!60,rounded corners,inner sep=2pt,fit=(n1) (n2) (n3) (n4)] {};
            \node[fill=gray!20,draw=gray!50,rounded corners,inner sep=2pt,fit=(n5) (n6) (n7) (n8)] {};
            \node at (4.3,13.45) {$C_1$};
            \node at (7.95,13.45) {$C_2$};
        \end{scope}
    \end{scope}

    \begin{scope}[shift={(0,-3)}]
        \node[anchor=west] at (3,14) {$\boldsymbol{x}_\infty \in \langle \boldsymbol{1}_{C_1}, \boldsymbol{1}_{C_2} \rangle = \operatorname{ker }\boldsymbol{L}$};
        \node[anchor=west] at (3,13.5) {Get \textit{smoothing `in the components'.}};
        \node[anchor=west] at (3,13) {\textbf{If $G$ is homophilic, this may be exactly desirable!}};
        
        \begin{scope}[shift={(9,13.5)}]
            \foreach \y in {0,1,...,4} {
                \fill[pink] (0,\y*0.2) rectangle +(0.2,0.2);
                \draw (0,\y*0.2) rectangle +(0.2,0.2);
            }
            \foreach \y in {4, 5,...,7} {
                \fill[purple] (0,\y*0.2) rectangle +(0.2,0.2);
                \draw (0,\y*0.2) rectangle +(0.2,0.2);
            }
            \node[below] at (.15,0) {$\boldsymbol{x}_\infty$};
            \node[right] at (0.25,0.78) {\footnotesize{$\}$} \scriptsize{separable!}};
        \end{scope}
    \end{scope}
    \end{tikzpicture}
    \caption{The components $C_1$ and $C_2$ of $G$ align with node classes $\textcolor{purple}{A_1}$ and $\textcolor{pink}{A_2}$. This allows heat diffusion to achieve perfect separability in the diffusion limit.}
    \label{fig:3-oversmoothing-motivation-components-with-classes}
\end{figure}
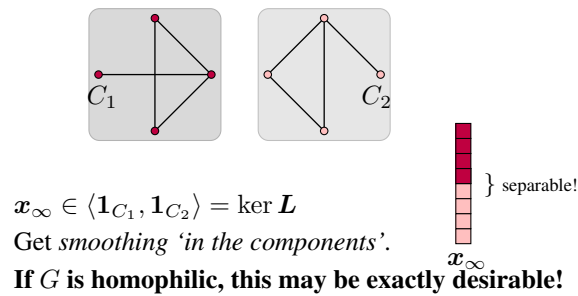

Of course, this situation is as fragile as it is fortuitous. Keeping the node classes $\textcolor{purple}{A_1}$ and $\textcolor{pink}{A_2}$ as is, and connecting $G$ once again — introducing thus a tiny amount of heterophily into the network — yields the same diffusion process as Figure~\ref{fig:1-oversmoothing-motivation}. In the diffusion limit, separating the classes based on node representations becomes impossible.

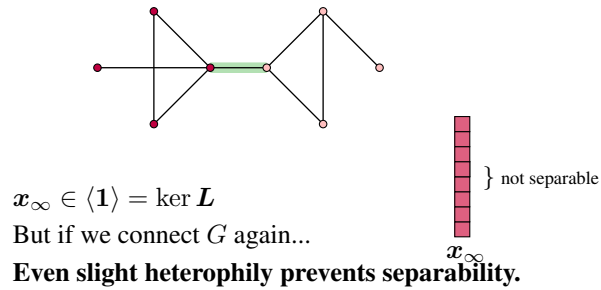
\begin{figure}[H]
    \centering
    \begin{tikzpicture}

    \begin{scope}[shift={(0,-1)}]
        \draw [line width=0.5pt](5,14.75) to[short] (5,14.75);
        \draw [line width=0.5pt](5,14.5) to[short] (5.75,13.75);
        \draw [line width=0.5pt](5.75,13.75) to[short] (5,13);
        \draw [line width=0.5pt](5,14.5) to[short] (5,13);
        \draw [line width=0.5pt](5.75,13.75) to[short] (4.25,13.75);
        \draw [line width=0.5pt](7.25,14.5) to[short] (6.5,13.75);
        \draw [line width=0.5pt](6.5,13.75) to[short] (7.25,13);
        \draw [line width=0.5pt](7.25,14.5) to[short] (7.25,13);
        \draw [line width=0.5pt](7.25,14.5) to[short] (8,13.75);
        \draw [line width=4pt, edgegreen, opacity=0.3] (5.75,13.75) to[short] (6.5,13.75);
        \draw [line width=0.5pt](5.75,13.75) to[short] (6.5,13.75);

        \node (n1) at (5.75,13.75) [draw, circle, fill=purple, inner sep=1pt] {};
        \node (n2) at (4.25,13.75) [draw, circle, fill=purple, inner sep=1pt] {};
        \node (n3) at (5,14.5) [draw, circle, fill=purple, inner sep=1pt] {};
        \node (n4) at (5,13) [draw, circle, fill=purple, inner sep=1pt] {};
        \node (n5) at (8,13.75) [draw, circle, fill=pink, inner sep=1pt] {};
        \node (n6) at (6.5,13.75) [draw, circle, fill=pink, inner sep=1pt] {};
        \node (n7) at (7.25,14.5) [draw, circle, fill=pink, inner sep=1pt] {};
        \node (n8) at (7.25,13) [draw, circle, fill=pink, inner sep=1pt] {};
    \end{scope}

    \begin{scope}[shift={(0,-3)}]
        \node[anchor=west] at (3,14) {\textbf{$\boldsymbol{x}_\infty \in \langle \boldsymbol{1} \rangle=\ker \boldsymbol{L}$}};
        \node[anchor=west] at (3,13.5) {But if we connect $G$ again...};
        \node[anchor=west] at (3,13) {\textbf{Even slight heterophily prevents separability.}};

        \begin{scope}[shift={(9,13.5)}]
            \foreach \y in {0,1,...,4} {
                \fill[purplepink] (0,\y*0.2) rectangle +(0.2,0.2);
                \draw (0,\y*0.2) rectangle +(0.2,0.2);
            }
            \foreach \y in {4, 5,...,7} {
                \fill[purplepink] (0,\y*0.2) rectangle +(0.2,0.2);
                \draw (0,\y*0.2) rectangle +(0.2,0.2);
            }
            \node[below] at (.15,0) {$\boldsymbol{x}_\infty$};
            \node[right] at (0.25,0.78) {\footnotesize{$\}$} \scriptsize{not separable}};
        \end{scope}
    \end{scope}
    \end{tikzpicture}
    \caption{When $G$ is reconnected, heterophily causes the node representations to mix, preventing class separability in the diffusion limit.}
    \label{fig:4-oversmoothing-motivation-connected-with-classes}
\end{figure}
\end{example}

\noindent The takeaway from Example~\ref{ex:mlmi}, in view of Theorem~\ref{thm:GCNs-and-heat-diffusion}, is that if we strip away weights, nonlinearity, and Laplacian normalization from a GCN, oversmoothing becomes exponentially inevitable. Importantly, this behavior fundamentally topological\footnote{Here and throughout this document, `topological' is to be construed in the coarse, point-set sense of the term, which does not always conceptually coincide with the `graph topology' sense of the term. One could conceptually say that modifying the latter really can include modifications both to `geometry' and `topology' (as is made somewhat precise e.g. in discrete curvature-based rewiring techniques for oversquashing~\cite{topping2021understanding}). Here, we mean `topological' in the specific sense that modifying the number of connected components of a graph (a topological invariant) will change the asymptotic oversmoothing behavior described previously, while merely perturbing it generally will not.}, and does not depend, for example, on the specific node features. One asks: \begin{quote}
\centering
\emph{Does oversmoothing remain inevitable even when nonlinearity, weights, and normalization are (re)introduced?}
\end{quote}
In time (Section~\ref{sec:learning-with-sheaves}), we shall see the answer is very-nearly \textit{yes}, and that a \textit{trivial sheaf} is to blame. So what is a sheaf?

\subsubsection{Network Sheaves: A First Look in Context}
The sheaf formalism will be properly developed in Section~\ref{sec:sheaves}. In this section, we motivate the introduction of sheaves in graph representation learning (\textit{network sheaves}) by illustrating how they naturally counteract the oversmoothing phenomenon informally introduced in the previous section.\\

So let us reconsider our graph from Example~\ref{ex:mlmi} (any connected graph will do). Zoom in, say, on the green bottleneck edge:

\begin{figure}[H]
\centering
\begin{tikzpicture}[baseline]

    \path[use as bounding box] (-5.5,-1.5) rectangle (5.5,3.5);

    \node [circle, draw=tancolor, fill=tancolor!20, minimum size=0.2cm] (u) at (-3,0) {};
    \node [circle, draw=tancolor, fill=tancolor!20, minimum size=0.2cm] (v) at (3,0) {};

    \draw [black, thick] (u) -- (v);
    \draw [edgegreen, opacity=0.3, line width=3pt] (u) -- (v);

    \draw [opacity=0.2] (-3,0) -- (-5,1);
    \draw [opacity=0.2] (-3,0) -- (-5,-1);

    \draw [opacity=0.2] (3,0) -- (5,1);
    \draw [opacity=0.2] (3,0) -- (5,-1);

    \node [tancolor] at (-3,-0.5) {$u$};
    \node [tancolor] at (3,-0.5) {$v$};
    \node [edgegreen] at (0,-0.3) {$e$};
\end{tikzpicture}
\label{fig:1-network-sheaves-motivation}
\end{figure}

\noindent We can visualize the scalar signal $\boldsymbol{x}$ over $G$ by attaching a copy of $\mathbb{R}$ to each node $u$ and depicting $x_u$ inside:

\begin{figure}[H]
\centering
\begin{tikzpicture}[baseline, every node/.style={font=\small}]
    \path[use as bounding box] (-5.5,-1.5) rectangle (5.5,3.5);
    
    \draw [opacity=0.2] (-3,0) -- (-5,1);
    \draw [opacity=0.2] (-3,0) -- (-5,-1);
    
    \node [circle, draw=tancolor, fill=tancolor!20, minimum size=0.2cm] (u) at (-3,0) {};
    \node [tancolor] at (-3,-0.5) {$u$};
    
    \draw [opacity=0.2] (3,0) -- (5,1);
    \draw [opacity=0.2] (3,0) -- (5,-1);
    
    \node [circle, draw=tancolor, fill=tancolor!20, minimum size=0.2cm] (v) at (3,0) {};
    \node [tancolor] at (3,-0.5) {$v$};

    \draw [black, thick] (u) -- (v);
    \draw [edgegreen, opacity=0.3, line width=3pt] (u) -- (v);
    \node [edgegreen] at (0,-0.3) {$e$};
    
    \node [draw=tancolor, minimum width=0.2cm, minimum height=1.2cm] (stalku) at (-3,2.5) {};
    \draw [tancolor, opacity=0.2] ($(u)+(0,0.2)$) -- ($(stalku.south)+(0,-0.1)$);

    \node [left=0.1cm of stalku] {\textcolor{tancolor}{$\mathbb{R}$}};

    \draw [dotted] ($(stalku.north)+(0,-0.1)$) -- ($(stalku.south)+(0,0.1)$);

    \node at (-3,2.7) {\textcolor{blue}{$\bullet$}};
    \node at (-2.7,2.7) {\textcolor{blue}{$x_u$}};

    \node [draw=tancolor, minimum width=0.2cm, minimum height=1.2cm] (stalkv) at (3,2.5) {};
    \draw [tancolor, opacity=0.2] ($(v)+(0,0.2)$) -- ($(stalkv.south)+(0,-0.1)$);

    \node [right=0.1cm of stalkv] {\textcolor{tancolor}{$\mathbb{R}$}};

    \draw [dotted] ($(stalkv.north)+(0,-0.1)$) -- ($(stalkv.south)+(0,0.1)$);

    \node at (3,2.9) {\textcolor{pink}{$\bullet$}};
    \node at (3.29,2.9) {\textcolor{pink}{$x_v$}};
    
\end{tikzpicture}
\label{fig:2-network-sheaves-motivation}
\end{figure}

\noindent Under the discrete heat diffusion (\ref{eqn:discrete-graph-linear-heat-diffusion}), the representation of the node $v$ is continually updated as $x_v^{\operatorname{new}}=x_v - (\boldsymbol{L} \boldsymbol{x})_v$, where \begin{equation}
\label{eqn:graph-Laplacian-nodewise}
(\boldsymbol{L}\boldsymbol{x})_v = \sum_{u \in N_v} w_{uv}(x_u - x_v) = \sum_{u \in N_v} \sqrt{w_{uv}}(\sqrt{w_{uv}}x_v - \sqrt{w_{uv}}x_u).
\end{equation} 
The sheaf-theoretic perspective interprets this in the following manner. Attached to the edges $e$, too, are copies of the vector space $\mathbb{R}$. The vector spaces over nodes and edges are called \textbf{stalks}: 

\begin{figure}[H]
\centering
\begin{tikzpicture}[baseline, every node/.style={font=\small}]

    \path[use as bounding box] (-5.5,-1.5) rectangle (5.5,3.5);

    \draw [opacity=0.2] (-3,0) -- (-5,1);
    \draw [opacity=0.2] (-3,0) -- (-5,-1);

    \node [circle, draw=tancolor, fill=tancolor!20, minimum size=0.2cm] (u) at (-3,0) {};
    \node [tancolor] at (-3,-0.5) {$u$};

    \draw [opacity=0.2] (3,0) -- (5,1);
    \draw [opacity=0.2] (3,0) -- (5,-1);

    \node [circle, draw=tancolor, fill=tancolor!20, minimum size=0.2cm] (v) at (3,0) {};
    \node [tancolor] at (3,-0.5) {$v$};

    \draw [black, thick] (u) -- (v);
    \draw [edgegreen, opacity=0.3, line width=3pt] (u) -- (v);
    \node [edgegreen] at (0,-0.3) {$e$};

    \node [draw=tancolor, minimum width=0.2cm, minimum height=1.2cm] (stalku) at (-3,2.5) {};
    \draw [tancolor, opacity=0.2] ($(u)+(0,0.2)$) -- ($(stalku.south)+(0,-0.1)$);

    \node [left=0.1cm of stalku] {\textcolor{tancolor}{$\mathbb{R}$}};
    
    \draw [dotted] ($(stalku.north)+(0,-0.1)$) -- ($(stalku.south)+(0,0.1)$);
    \node at (-3,2.7) {\textcolor{blue}{$\bullet$}};
    \node at (-2.7,2.7) {\textcolor{blue}{${x}_u$}};

    \node [draw=tancolor, minimum width=0.2cm, minimum height=1.2cm] (stalkv) at (3,2.5) {};
    \draw [tancolor, opacity=0.2] ($(v)+(0,0.2)$) -- ($(stalkv.south)+(0,-0.1)$);

 \node [right=0.1cm of stalkv] {\textcolor{tancolor}{$\mathbb{R}$}};
    
    \draw [dotted] ($(stalkv.north)+(0,-0.1)$) -- ($(stalkv.south)+(0,0.1)$);

    \node at (3,2.9) {\textcolor{pink}{$\bullet$}};
    \node at (3.29,2.9) {\textcolor{pink}{${x}_v$}};

    \node [draw=edgegreen, minimum width=0.2cm, minimum height=1.2cm] (stalke) at (0,1.5) {};
    \draw [edgegreen, opacity=0.2] ($(0,0)+(0,0.2)$) -- ($(stalke.south)+(0,-0.1)$);

    \node at ($(stalke.north)+(0,0.3)$) {\textcolor{edgegreen}{$\mathbb{R}$}};
    \draw [dotted] ($(stalke.north)+(0,-0.1)$) -- ($(stalke.south)+(0,0.1)$);

    \node at (0,1.5) {$-$};

\end{tikzpicture}
\label{fig:graph_sheaf}
\end{figure}

The $u$th term in the sum (\ref{eqn:graph-Laplacian-nodewise}) is then viewed as follows. First, the node representations $x_u, x_v$ `send' themselves into the vector space over $e$ via the linear maps $\mathbb{R} \to \mathbb{R}$ given by multiplication by the matrices ${F}({u \leq e})= \begin{bmatrix}\sqrt{w_{uv}}\end{bmatrix}$. These modulations are called \textbf{restriction maps}, for reasons that will be made clear in Section~\ref{sec:sheaves}. The difference of images $\sqrt{w_{uv}}x_v-\sqrt{w_{uv}}x_u$ is taken in the stalk over $e$, then sent back into the stalk over $v$ via the transpose of the restriction map (here, this is again given by $\sqrt{w_{uv}}$):

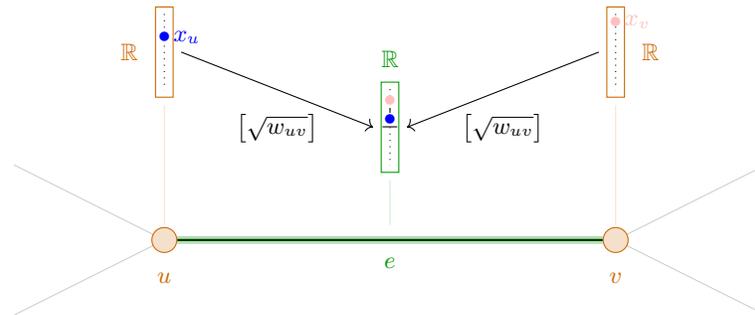
\begin{figure}[H]
\centering
\begin{tikzpicture}[baseline, every node/.style={font=\small}]

    \path[use as bounding box] (-5.5,-1.5) rectangle (5.5,3.5);

    \draw [opacity=0.2] (-3,0) -- (-5,1);
    \draw [opacity=0.2] (-3,0) -- (-5,-1);

    \node [circle, draw=tancolor, fill=tancolor!20, minimum size=0.2cm] (u) at (-3,0) {};
    \node [tancolor] at (-3,-0.5) {$u$};

    \draw [opacity=0.2] (3,0) -- (5,1);
    \draw [opacity=0.2] (3,0) -- (5,-1);

    \node [circle, draw=tancolor, fill=tancolor!20, minimum size=0.2cm] (v) at (3,0) {};
    \node [tancolor] at (3,-0.5) {$v$};

    \draw [black, thick] (u) -- (v);
    \draw [edgegreen, opacity=0.3, line width=3pt] (u) -- (v);
    \node [edgegreen] at (0,-0.3) {$e$};

    \node [draw=tancolor, minimum width=0.2cm, minimum height=1.2cm] (stalku) at (-3,2.5) {};
    \draw [tancolor, opacity=0.2] ($(u)+(0,0.2)$) -- ($(stalku.south)+(0,-0.1)$);

    \node [left=0.1cm of stalku] {\textcolor{tancolor}{$\mathbb{R}$}};
    
    \draw [dotted] ($(stalku.north)+(0,-0.1)$) -- ($(stalku.south)+(0,0.1)$);

    \node at (-3,2.7) {\textcolor{blue}{$\bullet$}};
    \node at (-2.7,2.7) {\textcolor{blue}{${x}_u$}};

    \node [draw=tancolor, minimum width=0.2cm, minimum height=1.2cm] (stalkv) at (3,2.5) {};
    \draw [tancolor, opacity=0.2] ($(v)+(0,0.2)$) -- ($(stalkv.south)+(0,-0.1)$);

 \node [right=0.1cm of stalkv] {\textcolor{tancolor}{$\mathbb{R}$}};
    
    \draw [dotted] ($(stalkv.north)+(0,-0.1)$) -- ($(stalkv.south)+(0,0.1)$);

    \node at (3,2.9) {\textcolor{pink}{$\bullet$}};
    \node at (3.29,2.9) {\textcolor{pink}{${x}_v$}};

    \node [draw=edgegreen, minimum width=0.2cm, minimum height=1.2cm] (stalke) at (0,1.5) {};
    \draw [edgegreen, opacity=0.2] ($(0,0)+(0,0.2)$) -- ($(stalke.south)+(0,-0.1)$);

    \node at ($(stalke.north)+(0,0.3)$) {\textcolor{edgegreen}{$\mathbb{R}$}};
    \draw [dotted] ($(stalke.north)+(0,-0.1)$) -- ($(stalke.south)+(0,0.1)$);

    \node at (0,1.5) {$-$};

    \draw [->] 
        ($(stalku.east)+(0.1,0)$) -- node[below=0.2cm, midway] {$\begin{bmatrix}\sqrt{w_{uv}}\end{bmatrix}$} 
        ($(stalke.west)+(-0.1,0)$);
        
    \draw [->] 
        ($(stalkv.west)+(-0.1,0)$) -- node[below=0.2cm, midway] {$\begin{bmatrix}\sqrt{w_{uv}}\end{bmatrix}$} 
        ($(stalke.east)+(0.1,0)$);

    \node at (0,1.6) {\textcolor{blue}{$\bullet$}};
    \node at (0,1.85) {\textcolor{pink}{$\bullet$}};
    \draw (0,1.69) -- (0,1.75);
    
\end{tikzpicture}
\caption{The graph Laplacian propagates node representations into edge-stalks, where their disagreement may be quantified. }
\label{fig:3-network-sheaves-motivation}
\end{figure}

Although each is a copy of $\mathbb{R}$, we view node representations $x_u$, $x_v$ as living in distinct spaces ${F}(u)$, ${F}(v)$. In this schematic, directly comparing node representations is illegal. The way to compare the representations of adjacent nodes is to map them to the stalk ${F}(e)$ of the edge between them and perform the operation there. 

Theorem~\ref{thm:GCNs-and-heat-diffusion} says that, in the limit, this procedure will `smoothen out' the node representations by way of successive comparisons to one another in the edge stalks. Eventually, the images in the edge stalks of node representations all become equal, $\sqrt{w_{uv}}x_u=\sqrt{w_{uv}}x_v$ for all nodes $u,v$. Consequently, the node representations themselves become equal.

\begin{figure}[H]
\centering
\begin{tikzpicture}[baseline]

    \path[use as bounding box] (-5.5,-1.5) rectangle (5.5,3.5);

    \node [circle, draw=tancolor, fill=tancolor!20, minimum size=0.2cm] (u) at (-3,0) {};
    \node [circle, draw=tancolor, fill=tancolor!20, minimum size=0.2cm] (v) at (3,0) {};

    \draw [black, thick] (u) -- (v);
    \draw [edgegreen, opacity=0.3, line width=3pt] (u) -- (v);

    \draw [opacity=0.2] (-3,0) -- (-5,1);
    \draw [opacity=0.2] (-3,0) -- (-5,0);
    \draw [opacity=0.2] (-3,0) -- (-5,-1);

    \draw [opacity=0.2] (3,0) -- (5,1);
    \draw [opacity=0.2] (3,0) -- (5,0);
    \draw [opacity=0.2] (3,0) -- (5,-1);

    \node [tancolor] at (-3,-0.5) {$u$};
    \node [tancolor] at (3,-0.5) {$v$};
    \node [edgegreen] at (0,-0.3) {$e$};

    \node [draw=tancolor, minimum width=0.2cm, minimum height=1.2cm] (squareu) at (-3,2.5) {};
    \draw [tancolor, opacity=0.2] ($(u)+(0,0.2)$) -- ($(squareu.south)+(0,-0.1)$);
    \node [left=0.1cm of squareu] {\textcolor{tancolor}{$\mathbb{R}$}};
    \draw [dotted] ($(squareu.north)+(0,-0.1)$) -- ($(squareu.south)+(0,0.1)$);
    \node at (-3,2.5) {$-$};

    \node [draw=edgegreen, minimum width=0.2cm, minimum height=1.2cm] (squaree) at (0,1.5) {};

    \draw [edgegreen, opacity=0.2] ($(0,0)+(0,0.2)$) -- ($(squaree.south)+(0,-0.1)$);
    \node [above=0.1cm of squaree] {\textcolor{edgegreen}{$\mathbb{R}$}};
    \draw [dotted] ($(squaree.north)+(0,-0.1)$) -- ($(squaree.south)+(0,0.1)$);
    \node at (0,1.5) {$-$}; 

    \node [draw=tancolor, minimum width=0.2cm, minimum height=1.2cm] (squarev) at (3,2.5) {};
    \draw [tancolor, opacity=0.2] ($(v)+(0,0.2)$) -- ($(squarev.south)+(0,-0.1)$);
    \node [right=0.1cm of squarev] {\textcolor{tancolor}{$\mathbb{R}$}};
    \draw [dotted] ($(squarev.north)+(0,-0.1)$) -- ($(squarev.south)+(0,0.1)$);
    \node at (3,2.5) {$-$};

    \draw [->]
        ($(squareu.east)+(0.1,0)$) --
        node[below=0.2cm, midway] {$\begin{bmatrix}
            \sqrt{w_{uv}}
        \end{bmatrix}$}
        ($(squaree.west)+(-0.1,0)$);

    \draw [->]
        ($(squarev.west)+(-0.1,0)$) --
        node[below=0.2cm, midway] {$\begin{bmatrix}
            \sqrt{w_{uv}}
        \end{bmatrix}$}
        ($(squaree.east)+(0.1,0)$);

    \node at (-3,2.8) {\textcolor{blue}{$\bullet$}}; 
    \node at (3,2.8) {\textcolor{pink}{$\bullet$}}; 
    \node at (-0.05,1.6) {\textcolor{blue}{$\bullet$}}; 
    \node at (0.05,1.6) {\textcolor{pink}{$\bullet$}}; 

\end{tikzpicture}
\end{figure}

It is worth reiterating the cause-and-effect here: equality in the \textit{edge} stalks leads to equality in the \textit{node} stalks, not the other way around (this will be made more precise in future sections). That is, in the diffusion limit we have ${F}({u \leq e})x_u={F}({u \leq e})x_v$, and \textit{this} is what entails $x_u=x_v$. What this means is that, at least in this special case, we might prevent oversmoothing simply by sufficiently breaking the symmetry between the two restriction maps incident to an edge.
For example, we might negate one of the restriction maps, in which case the picture in the diffusion limit might look something like this:

\begin{figure}[H]
\centering
\begin{tikzpicture}[baseline]

    \path[use as bounding box] (-5.5,-1.5) rectangle (5.5,3.5);

    \node [circle, draw=tancolor, fill=tancolor!20, minimum size=0.2cm] (u) at (-3,0) {};
    \node [circle, draw=tancolor, fill=tancolor!20, minimum size=0.2cm] (v) at (3,0) {};

    \draw [black, thick] (u) -- (v);
    \draw [edgegreen, opacity=0.3, line width=3pt] (u) -- (v);

    \draw [opacity=0.2] (-3,0) -- (-5,1);
    \draw [opacity=0.2] (-3,0) -- (-5,0);
    \draw [opacity=0.2] (-3,0) -- (-5,-1);

    \draw [opacity=0.2] (3,0) -- (5,1);
    \draw [opacity=0.2] (3,0) -- (5,0);
    \draw [opacity=0.2] (3,0) -- (5,-1);

    \node [tancolor] at (-3,-0.5) {$u$};
    \node [tancolor] at (3,-0.5) {$v$};
    \node [edgegreen] at (0,-0.3) {$e$};

    \node [draw=tancolor, minimum width=0.2cm, minimum height=1.2cm] (squareu) at (-3,2.5) {};
    \draw [tancolor, opacity=0.2] ($(u)+(0,0.2)$) -- ($(squareu.south)+(0,-0.1)$);
    \node [left=0.1cm of squareu] {\textcolor{tancolor}{$\mathbb{R}$}};
    \draw [dotted] ($(squareu.north)+(0,-0.1)$) -- ($(squareu.south)+(0,0.1)$);
    \node at (-3,2.5) {$-$};

    \node [draw=edgegreen, minimum width=0.2cm, minimum height=1.2cm] (squaree) at (0,1.5) {};
    \draw [edgegreen, opacity=0.2] ($(0,0)+(0,0.2)$) -- ($(squaree.south)+(0,-0.1)$);
    \node [above=0.1cm of squaree] {\textcolor{edgegreen}{$\mathbb{R}$}};
    \draw [dotted] ($(squaree.north)+(0,-0.1)$) -- ($(squaree.south)+(0,0.1)$);
    \node at (0,1.5) {$-$};

    \node [draw=tancolor, minimum width=0.2cm, minimum height=1.2cm] (squarev) at (3,2.5) {};
    \draw [tancolor, opacity=0.2] ($(v)+(0,0.2)$) -- ($(squarev.south)+(0,-0.1)$);
    \node [right=0.1cm of squarev] {\textcolor{tancolor}{$\mathbb{R}$}};
    \draw [dotted] ($(squarev.north)+(0,-0.1)$) -- ($(squarev.south)+(0,0.1)$);
    \node at (3,2.5) {$-$}; 

    \draw [->]
        ($(squareu.east)+(0.1,0)$) --
        node[below=0.2cm, midway] {$\begin{bmatrix}
            -\sqrt{w_{uv}}
        \end{bmatrix}$}
        ($(squaree.west)+(-0.1,0)$);

    \draw [->]
        ($(squarev.west)+(-0.1,0)$) --
        node[below=0.2cm, midway] {$\begin{bmatrix}
            \sqrt{w_{uv}}
        \end{bmatrix}$}
        ($(squaree.east)+(0.1,0)$);

    \node at (3,2.8) {\textcolor{pink}{$\bullet$}}; 
    \node at (-0.05,1.6) {\textcolor{blue}{$\bullet$}}; 
    \node at (0.05,1.6) {\textcolor{pink}{$\bullet$}}; 
    \node at (-3,2.2) {\textcolor{blue}{$\bullet$}};  

\end{tikzpicture}
\end{figure}

In particular, note that although the \textit{images} of the node representations in the edge stalks `smoothen out', the node representations themselves in this picture have remained distinct.\footnote{This type of `twisted' or `lying' sheaf, despite its simplicity, turns out to be quite powerful, as Theorems~\ref{thm:2-way-linear-separability} and~\ref{thm:k-way-linear-separability} will (much later) show.}
More generally, we could allow the stalks ${F}(u)$, ${F}(v)$, ${F}(e)$ and restriction maps ${F}({u \leq e})$, ${F}({v \leq e})$ to be arbitrary, and study diffusion with the resulting `Laplacian' \begin{equation}
\label{eqn:motivation-network-sheaf-Laplacian}
    (\boldsymbol{L}_{F} \boldsymbol{x})_v= \sum_{v,u \leq e} {F}({v \leq e})^\top ({F}({v \leq e}) \boldsymbol{x}_v - {F}({u \leq e}) \boldsymbol{x}_u).
\end{equation}
Roughly speaking, such an assignment ${F}$ of of objects to nodes/edges of $G$ and morphisms to their incidences is a \textbf{sheaf} on the graph $G$; the operator $\boldsymbol{L}_{F}$ its \textbf{sheaf Laplacian}.



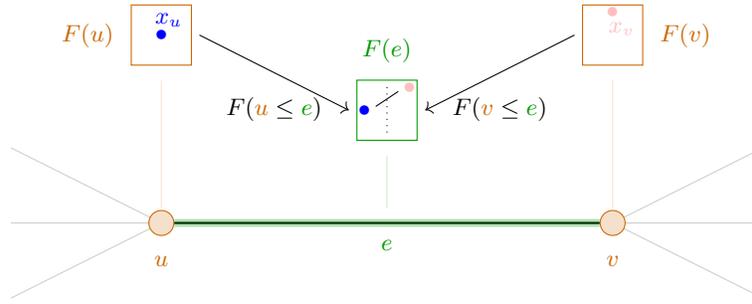
\begin{figure}[H]
\centering
\begin{tikzpicture}[baseline, every node/.style={font=\small}]
    \path[use as bounding box] (-5.5,-1.5) rectangle (5.5,3.5);

    \draw [opacity=0.2] (-3,0) -- (-5,1);
    \draw [opacity=0.2] (-3,0) -- (-5,0);
    \draw [opacity=0.2] (-3,0) -- (-5,-1);

    \node [circle, draw=tancolor, fill=tancolor!20, minimum size=0.2cm] (u) at (-3,0) {};
    \node [tancolor] at (-3,-0.5) {$u$};

    \draw [opacity=0.2] (3,0) -- (5,1);
    \draw [opacity=0.2] (3,0) -- (5,0);
    \draw [opacity=0.2] (3,0) -- (5,-1);

    \node [circle, draw=tancolor, fill=tancolor!20, minimum size=0.2cm] (v) at (3,0) {};
    \node [tancolor] at (3,-0.5) {$v$};

    \draw [black, thick] (u) -- (v);
    \draw [edgegreen, opacity=0.3, line width=3pt] (u) -- (v);
    \node [edgegreen] at (0,-0.3) {$e$};

    \node [draw=tancolor, minimum size=0.8cm] (squareu) at (-3,2.5) {};
    \draw [tancolor, opacity=0.2] ($(u)+(0,0.2)$) -- ($(squareu.south)+(0,-0.2)$);

    \node [left=0.1cm of squareu] {\textcolor{tancolor}{${F}(u)$}};

    \node at (-3,2.5) {\textcolor{blue}{$\bullet$}};
    \node at (-2.9,2.7) {\textcolor{blue}{$x_u$}};

    \node [draw=edgegreen, minimum size=0.8cm] (squaree) at (0,1.5) {};
    \draw [edgegreen, opacity=0.2] ($(0,0)+(0,0.2)$) -- ($(squaree.south)+(0,-0.2)$);

    \node [above=0.1cm of squaree] {\textcolor{edgegreen}{${F}(e)$}};
    \draw [dotted] ($(squaree.north)+(0,-0.1)$) -- ($(squaree.south)+(0,0.1)$);

    \node at (-0.3,1.5) {\textcolor{blue}{$\bullet$}};
    \node at (0.3,1.8) {\textcolor{pink}{$\bullet$}};
    \draw (-0.15,1.55) -- (0.15,1.75);
    
    \node [draw=tancolor, minimum size=0.8cm] (squarev) at (3,2.5) {};
    \draw [tancolor, opacity=0.2] ($(v)+(0,0.2)$) -- ($(squarev.south)+(0,-0.2)$);

    \node [right=0.1cm of squarev] {\textcolor{tancolor}{${F}(v)$}};

    \node at (3,2.8) {\textcolor{pink}{$\bullet$}};
    \node at (3.12,2.58) {\textcolor{pink}{$x_v$}};

    \draw [->] 
        ($(squareu.east)+(0.1,0)$) -- 
        node[below=0.2cm, midway] {${F}({\textcolor{tancolor}{u}\leq \textcolor{edgegreen}{e}})$} 
        ($(squaree.west)+(-0.1,0)$);
        
    \draw [->] 
        ($(squarev.west)+(-0.1,0)$) -- 
        node[below=0.2cm, midway] {${F}({\textcolor{tancolor}{v}\leq \textcolor{edgegreen}{e}})$} 
        ($(squaree.east)+(0.1,0)$);
        
\end{tikzpicture}
\caption{A general network sheaf $F$. A diffusion step with the sheaf Laplacian (Equation~\ref{eqn:motivation-network-sheaf-Laplacian}) $\boldsymbol{L}_F$ adjusts the representation $x_v$ of node $v$ by aggregating its `public disagreement' with each neighbor $u$. }
\label{fig:sheaf_square_final}
\end{figure}
We end this section with two foreshadowing remarks. The first remark is that the graph Laplacians considered in this motivating setup are \textit{unnormalized}. We will see in Section~\ref{sec:linear-separation-power-sheaf-diffusion} that simply normalizing the graph Laplacian before performing diffusion increases separation power with respect to the present discussion. Nevertheless, it cannot guarantee separation power: for that, we will need to embrace sheaf Laplacians. The second remark is that, because graphs are one-dimensional objects, the network sheaves defined here are not illustrative of the entire picture: \textit{compositionality} of restriction maps will be a crucial feature of general sheaves, the subject to which we now turn.



\section{Sheaves: Definitions and Examples}
\label{sec:sheaves}
This section rapidly introduces sheaves, formally and generally. In deference to the needs of downstream applications, the presentation's focus necessarily skews somewhat nontraditional after Section~\ref{sec:sheaves-supported-on-top-spaces}. 


On the other hand, Section~\ref{sec:sheaves-supported-on-top-spaces} treats entirely standard material which may be found e.g. in Part III Algebraic Geometry or in the books of Bredon~\cite{bredon2012sheaf}, Hartshorne~\cite{hartshorne2013algebraic}, or Vakil~\cite{vakil2025rising}. We consequently take the liberty to omit many proofs in this section, freeing up space to discuss less standard material later on.

We will briefly review sheaves supported on arbitrary topological spaces before specializing to sheaves supported on finite posets. The setting of posets will be where the majority of exposition takes place: it is specific enough to admit friendly theories of cohomology, heat diffusion, etc. while being general enough to subsume many interesting data structures as special cases. Eventually, the setting of \textit{network sheaves}, i.e., sheaves supported on graphs, will come into focus as our eye turns toward extant applications in deep learning and making the motivating ideas introduced in the previous section precise.



Throughout, $\mathsf{D}$ will denote a `nice enough' category in which data shall live. Initially $\mathsf{D}$ will be arbitrary; before long we will require it to be abelian. We will refer liberally to `elements' of objects in $\mathsf{D}$. It is not harmful to imagine $\mathsf{D}=\mathbb{R}\mathsf{Vect}$ throughout. In fact, we will eventually have need for ordering and completeness, fixing such a choice.

\subsection{Sheaves Supported on Topological Spaces}
\label{sec:sheaves-supported-on-top-spaces}
Let $X$ be a topological space. The collection $\tau_X$ of open sets in $X$ forms a filtered poset with respect to inclusion. The ability to `locally restrict data to smaller open subsets' gives rise to the concept of a \textit{pre}sheaf on $X$.

\begin{definition}
    A \textbf{presheaf $\mathcal{F}$ on $X$} is an inverse system indexed by $\tau_X$, i.e.,  a contravariant functor from $\tau_{X}$ to $\mathsf{D}$. That is to say, 
\begin{enumerate}
\item For each open $U \subset X$, there is an object $\mathcal{F}(U)$ of $\mathsf{D}$. This object is also denoted $\Gamma(U,\mathcal{F})$. Its elements are called \textbf{sections of $\mathcal{F}$}. Elements of $\Gamma(X, \mathcal{F})$ are called \textbf{global sections of $\mathcal{F}$}. 
\item For each inclusion of open sets $V \subset U$, there is a morphism $\mathcal{F}(U) \xrightarrow{} \mathcal{F}(V)$, denoted $\mathcal{F}(U \supset V)$ or $\mathcal{F}_{U \supset V}$ , called a \textbf{restriction morphism}. Given a section $s \in \mathcal{F}(V)=\Gamma(V,\mathcal{F})$, the notation $s |_{V}$ is sometimes used to denote $\mathcal{F}_{U \supset V}(s) \in \mathcal{F}(V)$. 
\item The restriction morphisms are (contravariantly) functorial, meaning that $\mathcal{F}_{U \subset U}=1_{\mathcal{F}(U)}$ and given three opens $W \subset V \subset U$ one has  $\mathcal{F}_{V\supset W} \circ \mathcal{F}_{U\supset V}=\mathcal{F}_{U\supset W}$.
\end{enumerate}

\noindent Presheaves on $X$ and natural transformations between them form a (functor) category, which we shall denote $\mathsf{pShv}_{\mathsf{D}}(X):=[\tau_X^{\operatorname{op}}, \mathsf{D}]$. Each morphism of presheaves in particular carries with it a morphism of global sections; the functor to which this gives rise is called the \textbf{global sections functor} $\Gamma(X, -):\mathsf{pShv}_\mathsf{D}(X) \to \mathsf{D}$ on $X$.
\end{definition}

\begin{example}
\label{ex:constant-presheaf}
Any object $D$ of $\mathsf{D}$ gives rise to a \textbf{constant presheaf} $\underline{D}$, defined as $\underline{D}(U)=D$ for all $U \in \tau_X$ with each restriction map the identity.
\end{example}

\begin{example}
\label{ex:presheaf-cont-ftns}
    The \textbf{presheaf of continuous\footnote{Or smooth, holomorphic, real analytic, regular, differential forms, etc. as relevant.} functions} on a topological space $X$ is defined as the assignment $U \mapsto \{\text{continuous maps }U \to \mathbb{R}\}$ to each $U \in \tau_X$, with restriction maps given by function restriction. 
\end{example}

\noindent A presheaf $\mathcal{F}$ is defined in terms of open sets, but the underlying topological space $X$ consists of points. In order to isolate the behavior of $\mathcal{F}$ around a specific point $p \in X$, we would like to consider `smaller and smaller neighborhoods' about $p$. This leads to the following (co)limiting construction.

\begin{definition}
 The \textbf{stalk of $\mathcal{F}$ at $x \in X$} is the object of $\mathsf{D}$ given by the colimit
\begin{equation}
\mathcal{F}_x := \varinjlim_{U \ni x} \mathcal{F}(U),
\end{equation}
taken over the filtered poset of open neighborhoods \(U\) of \(x\). \\

Explicitly, this colimit consists of equivalence classes of pairs \((U, s)\) where \(U\) is a neighborhood of \(x\) and \(s \in \mathcal{F}(U)\), with the equivalence relation
\begin{equation}
(U, s) \sim (V, t) \iff  \text{there exists }W \subset U \cap V \text{ such that } s|_W = t|_W.
\end{equation}

The class $[U,s] \in \mathcal{F}_{p}$ is written as $s_{p}$ and called the \textbf{germ} of the section $s$ at $p$. Note that a presheaf morphism $f:\mathcal{F} \to \mathcal{G}$ induces morphisms of stalks $f_{p}:\mathcal{F}_{p} \to \mathcal{G}_{p}$ by defining
\begin{equation}
f_{p}\big( [U,s] \big):=[U, f_{U}(s)],
\end{equation} giving rise to a \textbf{stalk functor} $X \to \mathsf{D}$. 




\end{definition}



Importantly, the locally compatible assignment of data to $X$ captured by a presheaf need not entail global compatibility, and the restrictions of a section may not determine it. The notion of a sheaf adds extra conditions which allow for more seamless travel between local and global phenomena. 

\begin{definition}
\label{def:sheaf-on-top-space}
A \textbf{sheaf} on $X$ is a presheaf $\mathcal{F}$ on $X$ satisfying the following. 

\textbf{1. (Locality)} Suppose $U \subset X$ is open, and let $\{ U_{i} \}_{i \in I}$ be an open cover of $U$ by subsets $U_{i} \subset U$. If $s$, $t \in \mathcal{F}(U)$ satisfy $s |_{U_{i}}=t |_{U_{i}}$ for all $i \in I$, then in fact $s=t$. 

\textbf{2. (Gluing)} Suppose $U \subset X$ is open, and let $\{ U_{i} \}_{i \in I}$ be an open cover of $U$ by subsets $U_{i} \subset U$. If a family of sections $\{ s_{i} \in \mathcal{F}(U_{i}) \}_{i}$ has pairwise agreement on all overlap of their domains\footnote{That is, if $U_{i} \cap U_{i'} \neq \emptyset$ then $s_{i} |_{U_{i} \cap U_{i'}}=s_{i'} |_{U_{i} \cap U_{i'}}$ for all indices $i,i'$. }, then they `patch together': there is a section $s \in \mathcal{F}(U)$ such that $s |_{U_{i}}=s_{i}$ for all $i \in I$.  By \textbf{(1)}, such a section is unique.\\

$\mathcal{F}(\emptyset)$ is always the final object of $\mathsf{D}$. $\mathsf{D}$-valued sheaves on $X$ form a full subcategory of $\mathsf{pShv}_\mathsf{D}(X)$, denoted $\mathsf{Shv}_\mathsf{D}(X)$.
\end{definition}




\begin{example}
    The presheaf of continuous functions on a space $X$ (Example~\ref{ex:presheaf-cont-ftns}) is a sheaf, by the pasting lemma. The presheaf of \textit{bounded} continuous functions, say, on $\mathbb{R}$, is \textit{not} a sheaf: gluing fails.
\end{example}

\begin{example}
    The constant presheaf is \textit{not} generally a sheaf, e.g. because every sheaf needs to assign to $\emptyset$ the final object of $\mathsf{D}$. 
\end{example}


\noindent There is a universal way to turn any presheaf $\mathcal{F}$ into a sheaf.

\begin{definition}[Sheafification] 
Let $X$ be a topological space, $\mathcal{F}$ a presheaf on $X$. The \textbf{sheafification of $\mathcal{F}$} is a new sheaf $\mathcal{F}^{+}$, together with a morphism of (pre)sheaves $\theta:\mathcal{F} \to \mathcal{F}^{+}$, satisfying the universal property that any morphism $\mathcal{F} \xrightarrow{\varphi}\mathcal{G}$, $\mathcal{G}$ a sheaf, factors uniquely through this new sheaf $\mathcal{F}^{+}$:

\begin{center}
\begin{tikzcd}
\mathcal{F} \arrow[r, "\varphi"] \arrow[d, "\theta"']             & \mathcal{G} \\
\mathcal{F}^+ \arrow[ru, "\exists ! \overline{\varphi}"', dashed] &            
\end{tikzcd}
\end{center}

this defines $\mathcal{F}^{+}$ up to isomorphism, if it exists. Indeed, exist it does: $\mathcal{F}^{+}$ may be obtained as
\begin{align}
\mathcal{F}^{+}(U)=\left\{ s:U \to \coprod_{p \in U} \mathcal{F}_{p} : \begin{aligned}
&\textcolor{Thistle}{(1) \ s(p) \in \mathcal{F}_{p} \text{ for all }p}  \\
&\textcolor{SkyBlue}{(2) \ \forall p \in U, \exists \text{neighborhood } p \in V \subset U \text{ and }} \\
&\quad \quad \textcolor{SkyBlue}{t \in \mathcal{F}(V) \text{ such that } s(q)=[V,t] \ \forall q \in V}
\end{aligned} \right\}.
\end{align}

with the natural transformation $\theta:\mathcal{F} \to \mathcal{F}^{+}$ specified via $U$-component
\begin{align}
\theta_{U}:\mathcal{F}(U) &\to \mathcal{F}^{+}(U) \\
s &\mapsto (p \mapsto [U,s]).
\end{align}
\end{definition}

The most important property of sheafification is that it preserves stalks: $(\mathcal{F}^{+})_{p} \cong_{\theta_{p}} \mathcal{F}_p$ for all $p \in X$.

\begin{example}
\label{ex:constant-sheaf}
      Let $\mathsf{D}$ be a $\mathsf{Set}$-like category, $D$ an object of $\mathsf{D}$. The \textbf{constant sheaf} $\underline{D}$ on $X$ is defined to be the sheafification of the constant presheaf. Explicitly, $\underline{D}$ is given by the rule \begin{equation}
        U \mapsto \{ \text{locally constant maps } U \to  S\}
      \end{equation}
      with restriction maps given by function restriction.
\end{example}

Generally speaking, the situation with the constant sheaf is the rule, not the exception: oftentimes the `obvious' way to build a new sheaf from old merely gives a presheaf. By constructing an optimally parsimonious sheaf out of a given presheaf, the sheafification functor offers a canonical resolution to this obstruction. Here are two more definitions in this flavor: 

\begin{definition}
The \textbf{direct sum} $\bigoplus_{i \in I} \mathcal{F}_{i}$ of a family of sheaves $\{ \mathcal{F}_{i} \}_{i \in I}$ on a topological space $X$ is defined as the sheafification of the presheaf
\begin{align}
U \mapsto \bigoplus_{i \in I}\mathcal{F}_{i}(U).
\end{align}
\end{definition}

\begin{definition}
    Let $X$ be a topological space. Let $f:\mathcal{F} \to \mathcal{G}$ be a morphism of (pre)sheaves on $X$. The \textbf{sheaf image} $\operatorname{im}f$ is defined as the sheafification\footnote{It follows from the universal property of sheafification and Theorem~\ref{thm:testing-on-stalks} below that $\operatorname{im}f$ can be naturally identified with a subsheaf of $\mathcal{G}$.} of the presheaf \begin{equation}
        U \mapsto \operatorname{im}f_U.
    \end{equation}

\end{definition}

\noindent There is also a self-evident definition of $\ker f$. Notably, $\ker f$ is a sheaf whenever $\mathcal{F}$ and $\mathcal{G}$ are.

\begin{definition}
    Let $f:\mathcal{F} \to \mathcal{G}$ be a morphism of presheaves valued in a category where kernels make sense. The \textbf{presheaf kernel} of $f$, $\ker f$, is the presheaf specified by $(\ker f)(U):=\ker \big(f_{U}:\mathcal{F}(U) \to \mathcal{G}(U) \big)$. If $f:\mathcal{F} \to \mathcal{G}$ is a morphism of sheaves, then $\ker \mathcal{F}$ is in fact a sheaf — no sheafification required. 
\end{definition}

\begin{definition}
    A sheaf morphism $f: \mathcal{F} \to \mathcal{G}$ is \textbf{injective} if its kernel if trivial. It is \textbf{surjective} if its image equals $\mathcal{G}$.
\end{definition}

\begin{definition}
    \textbf{Cochain complexes} and \textbf{exactness} are defined for sheaves in terms of images and kernels in the usual way. An important property of the global sections functor $\Gamma(X, -)$ is that it is left-exact. In a sense, its failure to be right-exact gives rise to sheaf cohomology (Section~\ref{sec:sheaf-cohomology-diffusion}).
\end{definition}   

In practice, sheafification does not require one to think `as hard' as it might first seem. Indeed, sheafification preserves stalks, and many properties can be `tested at the level of stalks'. For instance:

\begin{theorem}[Testing at the level of stalks]
\label{thm:testing-on-stalks}
Let $f:\mathcal{F} \to \mathcal{G}$ be a morphism of sheaves. For each assertion that follows, $\mathsf{D}$ is understood to be a category where that assertion is defined.
\begin{enumerate}
    \item $(\operatorname{im} f)_{p}=\operatorname{im}(f_{p}:\mathcal{F}_{p} \to \mathcal{G}_{p})$ for all $p \in X$.
    \item $(\ker f)_{p}=\ker (f_{p}: \mathcal{F}_{p} \to \mathcal{G}_{p})$ for all $p \in X$.
    \item  $\mathcal{F}$ is the zero sheaf if and only if $\mathcal{F}_{p}=(0)$ for all $p \in X$
    \item If $\mathcal{F} \subset \mathcal{G}$ is an inclusion of sheaves, one has
$\mathcal{F}=\mathcal{G} \iff \mathcal{F}_{p}=\mathcal{G}_{p} \ \forall p \in X$.
\item $f$ is injective if and only if $f_p:\mathcal{F}_p \to \mathcal{G}_p$ is injective for all $p \in X$
\item $f$ surjective if and only if $f_p:\mathcal{F}_p \to \mathcal{G}_p$ is surjective for all $p \in X$ 
\item A cochain complex of sheaves over a topological space $X$
\begin{align}
\cdots \to \mathcal{F}^{i-1} \to \mathcal{F}^{i} \xrightarrow{d} \mathcal{F}^{i+1} \to \cdots
\end{align}
is exact at $i$ if and only if for every $p \in X$, the cochain complex 
\begin{align}
\cdots \to \mathcal{F}^{i-1}_{p} \to \mathcal{F}^{i}_{p} \xrightarrow{d_{p}} \mathcal{F}^{i+1}_{p} \to \cdots
\end{align}
is exact at $i$.
\end{enumerate}

\end{theorem}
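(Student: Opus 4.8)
The strategy is to reduce all seven assertions to two basic inputs and then let the rest cascade. The first input is already recorded above: \emph{sheafification preserves stalks}. To this I add the observation that the stalk functor at $p$ is a colimit over the \emph{directed} poset of neighborhoods of $p$, hence a filtered colimit, and therefore exact on $\mathsf D$ (this is automatic in the running case $\mathsf D = \mathbb R\mathsf{Vect}$, and more generally holds in any $\mathrm{AB5}$ category); in particular it commutes with forming kernels and images. The second input is the \emph{germ-faithfulness} of sheaves: a section $s \in \mathcal F(U)$ of a \emph{sheaf} vanishes as soon as all of its germs $s_p$, $p \in U$, vanish. This is immediate from the locality axiom, since $s_p = 0$ means $s$ restricts to $0$ on some neighborhood of $p$, and such neighborhoods cover $U$.

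With these in hand, items (1) and (2) are nearly formal. For (2), the paper notes $\ker f$ is already a sheaf, so $(\ker f)_p = \varinjlim_{U \ni p}\ker f_U = \ker\bigl(\varinjlim_{U\ni p} f_U\bigr) = \ker f_p$ by exactness of the filtered colimit. For (1), $\operatorname{im} f$ is by definition the sheafification of $U \mapsto \operatorname{im} f_U$; since sheafification preserves stalks and filtered colimits preserve images, $(\operatorname{im} f)_p = \varinjlim_{U\ni p}\operatorname{im} f_U = \operatorname{im} f_p$. Item (3) then follows: the forward implication is trivial, and conversely, if every stalk vanishes then germ-faithfulness forces every section over every open set to be zero, so $\mathcal F = 0$.

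Item (4) is the place where the \emph{gluing} axiom — not merely locality — is genuinely needed, and I expect it to be the main obstacle. Given $\mathcal F \subset \mathcal G$ with $\mathcal F_p = \mathcal G_p$ for all $p$, I would fix an open $U$ and a section $s \in \mathcal G(U)$; for each $p \in U$, the identification $s_p \in \mathcal G_p = \mathcal F_p$ yields a neighborhood $V_p \subset U$ and $t_p \in \mathcal F(V_p)$ with the same germ at $p$ as $s|_{V_p}$, and after shrinking $V_p$ we may assume $t_p = s|_{V_p}$ inside $\mathcal G(V_p)$. On the overlaps $V_p \cap V_q$ the sections $t_p$ and $t_q$ agree in $\mathcal G$, hence in $\mathcal F$ (using the inclusion and locality for $\mathcal F$), so gluing them over the cover $\{V_p\}_{p \in U}$ produces $t \in \mathcal F(U)$ restricting to $s$; thus $\mathcal F(U) = \mathcal G(U)$. (Alternatively, one can apply (3) to the quotient sheaf $\mathcal G/\mathcal F$, whose stalks are $\mathcal G_p/\mathcal F_p$ by exactness of colimits.)

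Finally, items (5), (6), (7) follow mechanically. For (5), $f$ injective means $\ker f = 0$, which by (3) holds iff $(\ker f)_p = 0$ for all $p$, i.e. by (2) iff each $f_p$ is injective. For (6), $f$ surjective means $\operatorname{im} f = \mathcal G$ as subsheaves, which by (4) may be tested stalkwise, i.e. by (1) amounts to each $f_p$ being surjective. For (7), exactness at $i$ is the equality $\operatorname{im} d^{i-1} = \ker d^i$ of subsheaves of $\mathcal F^i$ (the inclusion $\operatorname{im} d^{i-1} \subseteq \ker d^i$ being automatic from $d^i d^{i-1} = 0$), and by (4) together with (1) and (2) this equality holds iff $\operatorname{im} d_p^{i-1} = \ker d_p^i$ for every $p$, which is precisely exactness of the complex of stalks at $i$.
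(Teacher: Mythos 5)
Your proposal is correct. Note that the paper states Theorem~\ref{thm:testing-on-stalks} without proof — Section~\ref{sec:sheaves} explicitly takes "the liberty to omit many proofs" of this standard material — so there is no in-paper argument to compare against; your reduction (exactness of the filtered colimit defining stalks together with stalk-preservation of sheafification for (1)–(2), locality for (3), gluing for (4), and the remaining items as formal corollaries) is the standard textbook route, and each step checks out, including the subtle point that (4) genuinely requires gluing and not merely locality. The two hypotheses you rightly flag — that $\mathsf{D}$ has exact filtered colimits (automatic for $\mathbb{R}\mathsf{Vect}$) and that "inclusion of sheaves" in (4) means injectivity on sections over each open set (used when transferring agreement on overlaps from $\mathcal{G}$ to $\mathcal{F}$) — are exactly the ones the paper's blanket clause "$\mathsf{D}$ is understood to be a category where that assertion is defined" is sweeping under the rug.
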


In the following section, we will discover that sheaves on posets — i.e., most applied sheaves — are, in a sense, `defined stalkwise'. Theorem~\ref{thm:testing-on-stalks} is great news, in this case: it tells us that images, kernels, injectivity, surjectivity, exactness, etc. all behave in applied contexts just as one might initially anticipate. 

\subsection{Sheaves Supported on Posets}
\label{sec:sheaves-supported-on-finite-posets}


So far, it is not clear why our provisional notion of a sheaf on a graph $G$ (Section~\ref{sec:intuition-oversmoothing-heterophily}) — an assignment of objects to nodes/edges and morphisms to their incidences — is an earnest sheaf in the manner just defined. Indeed, there are not even any open sets to assign data to, since $G$ was never given a topology. There was, however, \textit{some} sense of functoriality involved, as objects and morphisms in $\mathsf{D}$ were `indexed' by $G$. A more apt description of the construction in Section~\ref{sec:intuition-oversmoothing-heterophily} is that it captures the notion of a diagram in $\mathsf{D}$ supported on $G$ (appropriately viewed as a poset).\\

In this section, we establish an imperfect dictionary between topology and order theory witnessing conditions under which the two ideas are (categorically) equivalent. While the topological sheaf definition in Section~\ref{sec:sheaves-supported-on-top-spaces} is certainly a relevant perspective and will be invoked later, the diagrammatic formulation of sheaves discussed herein will become a true cornerstone of this document. 

\begin{definition}

Let $\mathsf{I}$ be an `indexing category'.  A \textbf{diagram in $\mathsf{D}$ supported on\footnote{Or \textbf{indexed by} $\mathsf{I}$, or \textbf{of shape} $\mathsf{I}$.} $\mathsf{I}$ }is a (covariant) functor $F:\mathsf{I} \to \mathsf{D}$. Often the object of $\mathsf{D}$ to which $i$ in $\mathsf{I}$ gets assigned under the functor will be denoted $F_{i}$; likewise for morphisms.\\

\noindent We denote category of diagrams on $\mathsf{I}$ and natural transformations between them by $\mathsf{Diag}_{\mathsf{D}}(\mathsf{I}):=[\mathsf{I}, \mathsf{D}]$
\end{definition}

\noindent The idea is that $\mathsf{I}$ `indexes both morphisms and objects'. The actual objects and morphisms in $\mathsf{I}$ do not matter — only their interrelationships.

\begin{example}
    If $\mathsf{I}=\square$ is the category 
\begin{tikzcd}
\bullet \arrow[r] \arrow[d] & \bullet \arrow[d, "\text{,}"] \\
\bullet \arrow[r]           & \bullet     
\end{tikzcd} then a diagram $\mathsf{\square} \to \mathsf{D}$ is precisely the data of a commutative square in $\mathsf{D}$. 
\end{example}

\begin{example}
\label{ex:presheaves-as-diagrams}
    If $\mathsf{I}$ is a filtered poset, then a diagram $\mathsf{I} \to \mathsf{D}$ is precisely a directed system in $\mathsf{D}$ indexed by $\mathsf{I}$. In this picture, an inverse system corresponds to a diagram $\mathsf{I}^{\operatorname{op}} \to \mathsf{D}$. It follows that, for a given topological space $X$, $\mathsf{pShv}_{\mathsf{D}}(X)=\mathsf{Diag}_{\mathsf{D}}(\tau_X^{\operatorname{op}})$. Sheaves on $X$ are diagrams on $\tau_X^{\operatorname{op}}$ subject to the locality and gluing axioms.
\end{example}

\begin{example}
\label{ex:diagram-on-graph}
    A graph $G$ may be endowed with a (strict) partial order by declaring $v \leq e$ when $e$ is an edge incident to $v$. A diagram ${F}:G \to \mathsf{D}$ indexed by the poset $G$ in this way consists of an assignment to each node $v$ an object ${F}(v)$ of $\mathsf{D}$, to each edge $e$ an object ${F}(e)$ of $\mathsf{D}$, and to each incidence $v \leq e$ a morphism ${F}({v \leq e}) \in \operatorname{Hom}_{\mathsf{D}}\big({F}(v), {F}(e)\big)$. 

\begin{figure}[H]
\centering
\begin{tikzpicture}[baseline, every node/.style={font=\small}]
    \path[use as bounding box] (-5.5,-1.5) rectangle (5.5,3.5);

    \draw [opacity=0.2] (-3,0) -- (-5,1);
    \draw [opacity=0.2] (-3,0) -- (-5,0);
    \draw [opacity=0.2] (-3,0) -- (-5,-1);

    \node [circle, draw=tancolor, fill=tancolor!20, minimum size=0.2cm] (u) at (-3,0) {};
    \node [tancolor] at (-3,-0.5) {$u$};

    \draw [opacity=0.2] (3,0) -- (5,1);
    \draw [opacity=0.2] (3,0) -- (5,0);
    \draw [opacity=0.2] (3,0) -- (5,-1);

    \node [circle, draw=tancolor, fill=tancolor!20, minimum size=0.2cm] (v) at (3,0) {};
    \node [tancolor] at (3,-0.5) {$v$};

    \draw [black, thick] (u) -- (v);
    \draw [edgegreen, opacity=0.3, line width=3pt] (u) -- (v);
    \node [edgegreen] at (0,-0.3) {$e$};

    \node [draw=tancolor, minimum size=0.8cm] (squareu) at (-3,2.5) {};
    \draw [tancolor, opacity=0.2] ($(u)+(0,0.2)$) -- ($(squareu.south)+(0,-0.2)$);

    \node [left=0.1cm of squareu] {\textcolor{tancolor}{${F}(u)$}};


    \node [draw=edgegreen, minimum size=0.8cm] (squaree) at (0,1.5) {};
    \draw [edgegreen, opacity=0.2] ($(0,0)+(0,0.2)$) -- ($(squaree.south)+(0,-0.2)$);

    \node [above=0.1cm of squaree] {\textcolor{edgegreen}{${F}(e)$}};

    
    \node [draw=tancolor, minimum size=0.8cm] (squarev) at (3,2.5) {};
    \draw [tancolor, opacity=0.2] ($(v)+(0,0.2)$) -- ($(squarev.south)+(0,-0.2)$);

    \node [right=0.1cm of squarev] {\textcolor{tancolor}{${F}(v)$}};


    \draw [->] 
        ($(squareu.east)+(0.1,0)$) -- 
        node[below=0.2cm, midway] {${F}({\textcolor{tancolor}{u}\leq \textcolor{edgegreen}{e}})$} 
        ($(squaree.west)+(-0.1,0)$);
        
    \draw [->] 
        ($(squarev.west)+(-0.1,0)$) -- 
        node[below=0.2cm, midway] {${F}({\textcolor{tancolor}{v}\leq \textcolor{edgegreen}{e}})$} 
        ($(squaree.east)+(0.1,0)$);
        
\end{tikzpicture}
\end{figure}
    
\end{example}

Example~\ref{ex:diagram-on-graph} concerns the object provisionally studied in Section~\ref{sec:intuition-oversmoothing-heterophily}. There, we dubbed it a sheaf $F$ on $G$. More generally:

\begin{definition}
\label{def:sheaf-on-poset}
    Let $S$ be a poset. A diagram $F:S \to \mathsf{D}$ is called a \textbf{sheaf supported on $S$}. Given $s \in S$, the object $F(s)$ of $\mathsf{D}$ is called the \textbf{stalk of $F$ over $s$}. The morphisms $F(s \leq s'): F(s) \to F(s')$ are called the \textbf{restriction maps} of $F$.
\end{definition}

For the time being, we reserve the unadorned term `sheaf' and calligraphic typesetting (e.g. $\mathcal{F}$) for sheaves on topological spaces (Definition~\ref{def:sheaf-on-top-space}). 
In view of Example~\ref{ex:presheaves-as-diagrams}, the decision to share terminology may appear to have some intuitive substance. After all, Definition~\ref{def:sheaf-on-poset} and Example~\ref{ex:presheaves-as-diagrams} both concern a diagram in $\mathsf{D}$ supported on a particular poset. Of course, the two posets involved are very different: the lattice of open sets for a general topological space can be very complicated, while the incidence structure of a graph $G$ is not even filtered unless $G$ is complete. It is not obvious how to convert between the two. Indeed, the topological category turns out to be, by nature, too general to draw a precise correspondence. At the very least, one would need a canonical embedding $X \hookrightarrow \tau_X$; some way to choose \textit{which} open set gets assigned to a particular point. Perhaps surprisingly, it turns out that having this is sufficient to establish a relationship reconciling definitions~\ref{def:sheaf-on-top-space} and~\ref{def:sheaf-on-poset}.  



\begin{theorem}
\label{thm:sheaves-diagrams-category-equivalence}
    The category $\mathsf{Diag}_{\mathsf{D}}(S)$ of diagrams $F$ on a poset $S$ is equivalent to the category $\mathsf{Shv}_{\mathsf{D}}(X_S)$  of sheaves $\mathcal{F}$ on its Alexandrov topological space $X_S$. Under this equivalence, 

    \begin{itemize}
        \item (Diagram of stalks) The stalk $\mathcal{F}_x$ over a point $x$ corresponds to the object $F(x)$ of $\mathsf{D}$;
        \item (Sheaf of sections) A section $s \in \Gamma(U, \mathcal{F})$ of $\mathcal{F}$  over $U$ corresponds to a tuple $(s_x)_{x \in U \subset S}\in \bigoplus_{x \in U}F(x)$ satisfying \begin{equation} x \leq y \implies 
        F_{x \leq y}s_{x}=s_{y}
        \end{equation}
        for all $x,y \in U$.
    \end{itemize}

\end{theorem}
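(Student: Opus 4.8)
The plan is to produce explicit functors $\Phi\colon\mathsf{Diag}_{\mathsf{D}}(S)\to\mathsf{Shv}_{\mathsf{D}}(X_S)$ and $\Psi\colon\mathsf{Shv}_{\mathsf{D}}(X_S)\to\mathsf{Diag}_{\mathsf{D}}(S)$ and show they are quasi-inverse, with $\Phi$ and $\Psi$ built so that the two bulleted identifications hold by construction. Recall that in the Alexandrov topology a subset of $S$ is open iff it is an up-set (a ``star''); hence arbitrary intersections of opens are open and every point $x$ has a \emph{smallest} open neighborhood $U_x = {\uparrow}x = \{y\in S: y\ge x\}$, with $x\le y \iff U_y\subseteq U_x$. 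Since $U_x$ is terminal among open neighborhoods of $x$ ordered by reverse inclusion, the colimit defining the stalk collapses: $\mathcal{F}_x = \mathcal{F}(U_x)$ for every presheaf $\mathcal{F}$ on $X_S$. Set $\Psi(\mathcal{F})(x) := \mathcal{F}_x$ with restriction maps $\Psi(\mathcal{F})(x\le y) := \mathcal{F}_{U_x\supseteq U_y}$ — a covariant functor $S\to\mathsf{D}$ — and
\begin{equation}
\Phi(F)(U) := \Bigl\{(s_x)_{x\in U}\in\textstyle\prod_{x\in U}F(x) \;:\; F(x\le y)\,s_x = s_y \text{ for all } x\le y \text{ in } U\Bigr\} \;=\; \varprojlim_{x\in U}F(x),
\end{equation}
with restriction maps the evident projections; this limit exists in $\mathsf{D}$ since $S$ is finite. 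These are, verbatim, the ``diagram of stalks'' and ``sheaf of sections'' descriptions in the statement.

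First I would verify $\Phi(F)$ really is a sheaf. For an open cover $\{U_i\}$ of $U$: \emph{locality} is trivial, since every $x\in U$ lies in some $U_i$; for \emph{gluing}, given compatible $s^i\in\Phi(F)(U_i)$ agreeing on overlaps, the tuple $s_x := s^i_x$ (any $i$ with $x\in U_i$) is well-defined and satisfies $F(x\le y)s_x = s_y$ because $U_i$ is an up-set, so $x\in U_i$ and $x\le y$ force $y\in U_i$ and both indices land in a common $U_i$. This is the sole place the up-set convention is used. A natural transformation $F\Rightarrow G$ induces $\Phi(F)\to\Phi(G)$ levelwise via the universal property of limits, so $\Phi$ is functorial; likewise $\Psi$ on morphisms by restricting a sheaf map to stalks. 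The composite $\Psi\Phi\cong\mathrm{id}$ is immediate: $\Psi\Phi(F)(x) = \varprojlim_{y\ge x}F(y) = F(x)$ since $x$ is initial in ${\uparrow}x$, and restriction maps match.

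The substantive step is $\Phi\Psi\cong\mathrm{id}$: I must show the canonical map $c_U\colon\mathcal{F}(U)\to\varprojlim_{x\in U}\mathcal{F}_x$, $s\mapsto(s|_{U_x})_{x\in U}$, is an isomorphism for every open $U$ and every sheaf $\mathcal{F}$. (It lands in the limit by presheaf functoriality.) Since $\{U_x\}_{x\in U}$ is an open cover of $U$ (each $U_x\subseteq U$ as $U$ is an up-set), locality gives injectivity. For surjectivity, take a compatible tuple $(s_x)$, $s_x\in\mathcal{F}(U_x)$; I claim this family agrees on overlaps. Indeed $U_x\cap U_{x'}$ consists exactly of the common upper bounds $z$ of $x,x'$, and is covered by the $U_z$ for such $z$; for each, compatibility of the tuple gives $s_x|_{U_z} = s_z = s_{x'}|_{U_z}$, so by locality $s_x$ and $s_{x'}$ agree on $U_x\cap U_{x'}$. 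Gluing then yields $s\in\mathcal{F}(U)$ with $s|_{U_x} = s_x$, i.e. $c_U(s) = (s_x)$. Checking $c_U$ is natural in $U$ (commutes with restrictions) and in $\mathcal{F}$ is routine, completing the equivalence.

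I expect the main obstacle to be exactly the surjectivity of $c_U$: one cannot directly glue the $s_x$ because a double overlap $U_x\cap U_{x'}$ is generally \emph{not} itself a minimal neighborhood (posets need not have joins), so one must first refine it by the minimal neighborhoods of the common upper bounds and re-apply locality before invoking gluing. Secondary care is needed to confirm $\mathsf{D}$ has the limits used — harmless since $S$ is finite — and to make every isomorphism natural, in particular so that the ``sheaf of sections'' identification respects restriction along inclusions of opens.
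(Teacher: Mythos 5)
Your proposal is correct and follows essentially the same route as the paper's proof: the same two functors (diagram of stalks via minimal neighborhoods $U_x$, sheaf of sections via the limit of compatible tuples), and the same key step of writing an overlap $U_x\cap U_{x'}$ as a union of stars $U_z$ of common upper bounds so that locality and then gluing produce the section from a compatible tuple. Your write-up is somewhat more explicit about the routine checks (that $\Phi(F)$ is a sheaf, and the two composite isomorphisms) which the paper deliberately omits from its sketch, but the substance is identical.
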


\noindent In order to unpack Theorem~\ref{thm:sheaves-diagrams-category-equivalence}, we first need to understand what is meant by the term `Alexandrov topological space of a poset'. Doing so entails a very brief digression into a relationship between topology and order theory.

\subsubsection{From Topology to Order Theory, and Back}
\label{sec:digression-topology-to-order-theory}
The points in a topological space $(X, \tau)$ carry a canonical preordering.

\begin{definition}
\label{def:specialization-preorder}
The \textbf{specialization preorder} on $(X, \tau)$ is defined by declaring
\[
x \leq y \iff x \in \overline{\{ y \}},
\]
where $\overline{\{ y \}}$ denotes the closure of $\{ y \}$. We say $x$ is a \textbf{specialization} of $y$.\footnote{Some conventions (e.g. Hartshorne's~\cite{hartshorne2013algebraic}) instead say $y$ is a specialization of $x$. } Any continuous map $f:X \to Y$ is monotone with respect to the specialization preorder, giving rise to a ``specialization functor" ${W}:\mathsf{Top} \to \mathsf{PrePos}$.

If $X$ satisfies the $T_0$ Axiom\footnote{Two points $x$ and $y$ of a topological space $X$ are said to be \textbf{topologically distinguishable} if they have different (open) neighborhoods. A space $X$ in which every pair of points are topologically distinguishable is said to satisfy the \textbf{$T_{0}$ Axiom}.}, this preorder is in fact a partial order, and is called the \textbf{specialization order}.\footnote{This is indeed a preorder: certainly $x \in  \overline{\{ x \}}$ for all $x \in X$, and if $x \in \overline{\{ y \}}$ and $y \in \overline{\{ z \}}$ then $x \in \overline{\{ x \}} \subset  \overline{\{ y \}} \subset  \overline{ \overline{\{ z \}}}=\overline{\{ z \}}$. If $X$ satisfies the $T_0$ Axiom, we have both $x \leq y$ and $y \leq x$, then in fact $x=y$. Indeed, $x \leq y$ implies (cf. Remark~\ref{rmk:unwinding-specializatino-preorder}) that $y$ belongs to every neighborhood of $x$. Meanwhile, $y \leq x$ implies that $x$ belongs to every neighborhood of $y$. Two points in a $T_{0}$ space having the same neighborhoods are equal.}
\end{definition}

\begin{remark}
\label{rmk:unwinding-specializatino-preorder}
    Unwinding Definition~\ref{def:specialization-preorder}, one has $x \leq y$ if and only if $x$ belongs to every closed set containing $y$. Equivalently, $x \leq y$ if and only if $y$ belongs to every \textit{open} set containing (that is, every neighborhood of) $x$. This is the sense in which $y$ is `more general' than $x$: it is contained in more open sets.
\end{remark}

    


With sheaves in mind, we would like to draw an equivalence between posets and certain topologies. \textit{A priori}, such an objective is ill-posed: it wants to go from an order relation between two points to an inclusion of two open sets, but \textit{which} two open sets? If $X$ is Alexandrov, then there is a canonical choice.



\begin{definition}
A topology $\tau$ on a set $X$ is called an \textbf{Alexandrov topology} if the intersection of any (possibly infinite) collection of open sets is open. We call $(X, \tau)$ an \textbf{Alexandrov topological space}. The (full) subcategory of $\mathsf{Top}$ consisting of Alexandrov topological spaces is denoted $\mathsf{AlexTop}$. Crucially, any point $x$ in an Alexandrov topological space has a unique inclusion-minimal open neighborhood $U_{x}$ containing it, namely, $U_{x}:=\bigcap_{U \ni x \operatorname{ open}} U$. These open sets $U_{x}$, called \textbf{stars}, form a basis generating $\tau$.
\end{definition}

\begin{figure}
    \centering
    \includegraphics[width=0.5\linewidth]{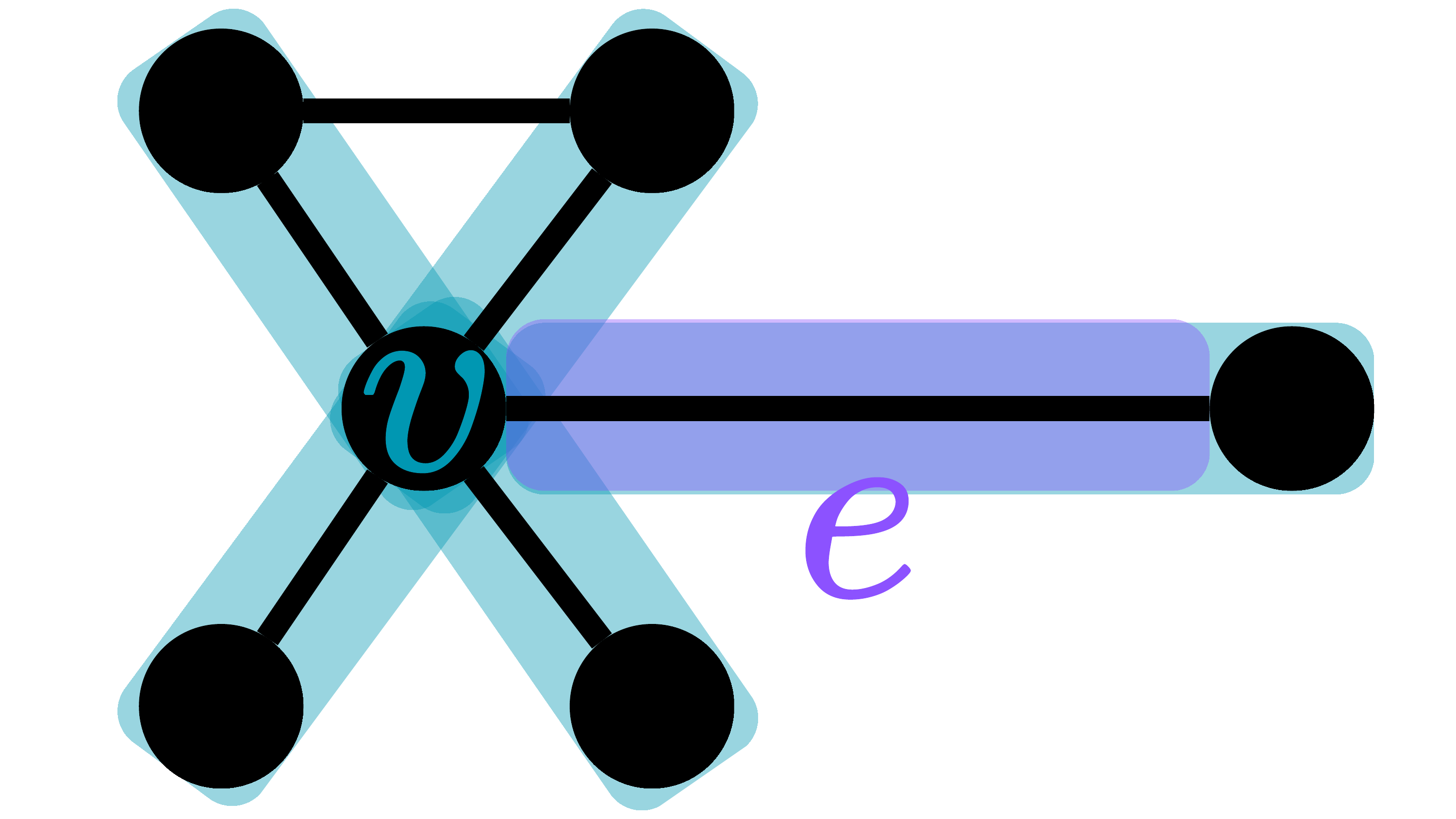}
    \caption{The stars $U_v$, $U_e$ of a node $v$ and edge $e$ in a graph $G$, where $G$ is viewed as a poset per Example~\ref{ex:diagram-on-graph}. Since $v \leq e$, $U_v \supset U_e$.}
    \label{fig:stars-figure}
\end{figure}

Note that if $X$ is Alexandrov, then for all $x, y \in X$,
\[
x \leq y \iff U_{x} \supset U_{y},
\]
where $U_{x}$ and $U_{y}$ denote the minimal open neighborhoods of $x$ and $y$ respectively.\footnote{Indeed, this is immediate from Remark~\ref{rmk:unwinding-specializatino-preorder}. If $x \leq y$, then $y$ belongs to every open set containing $x$, hence $y \in U_{x}$. Since $U_{y}$ equals the intersection of all open neighborhoods of $y$, and $U_{x}$ is one such, $U_{y} \subset U_{x}$. Conversely, if $U_{x} \supset U_{y}$, and $U$ is an open set containing $x$, then $U \supset U_{x} \supset U_{y} \ni y$. } This duality will explain why restriction maps `ascend' for sheaves on posets (Definition~\ref{def:sheaf-on-poset}) and `descend' for sheaves on topological spaces (Definition~\ref{def:sheaf-on-top-space}).



We have seen thus a way to go from a topological space $X$ to a (pre)poset $W(X)$. Of particular interest is the case when $X$ is Alexandrov, for then the topology $\tau_X$ remembers the specialization preorder on $X$. We can also go in the other direction: building an Alexandrov topology out of an order relation.

\begin{theorem}
    Given a preordered set $(X, \leq)$, there is a unique Alexandrov topology $\tau_{X}$ on $X$ whose specialization preorder is $\leq$, namely, the collection of upper sets in $(X, \leq)$. $\tau_{X}$ is called \textbf{the Alexandrov (or specialization) topology on $X$}. Any monotonic map $f:X \to Y$ is continuous with respect to the Alexandrov topologies $\tau_{X}$ and $\tau_{Y}$, giving thus a covariant functor $\tau:\mathsf{PrePos} \to \mathsf{AlexTop}$.
\end{theorem}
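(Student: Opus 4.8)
The plan is to exhibit $\tau_X$ concretely as the family of \emph{up-sets} (upper sets) of $(X,\leq)$ --- those $U\subseteq X$ with $x\in U$, $x\leq y\implies y\in U$ --- and then verify four things in turn: that this family is a topology, indeed an Alexandrov one; that its specialization preorder recovers $\leq$; that it is the \emph{unique} Alexandrov topology with this property; and that monotone maps become continuous, yielding the functor. For the first point, $\emptyset$ and $X$ are up-sets and the up-set condition is visibly stable under arbitrary unions \emph{and} arbitrary intersections, so $\tau_X$ is a topology closed under arbitrary intersections, hence Alexandrov. I would record here the minimal open neighbourhood of a point, $U_x:=\{z\in X : x\leq z\}$: this is an up-set by transitivity of $\leq$, and any up-set containing $x$ contains all $z\geq x$, so $U_x$ is contained in every open neighbourhood of $x$.

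For the second point, I would invoke Remark~\ref{rmk:unwinding-specializatino-preorder}: with respect to $\tau_X$, one has $x\leq_{\tau_X} y$ iff $y$ lies in every open neighbourhood of $x$, and since $U_x$ is the least such neighbourhood this says exactly $y\in U_x$, i.e.\ $x\leq y$. (Equivalently: the $\tau_X$-closed sets are the down-sets, $\overline{\{y\}}={\downarrow}y$, and $x\in{\downarrow}y\iff x\leq y$.) So the specialization preorder of $\tau_X$ is precisely $\leq$.

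The uniqueness claim is the only step with any real content, and I expect it to be the main obstacle. Let $\sigma$ be any Alexandrov topology on $X$ whose specialization preorder is $\leq$. First, $\sigma\subseteq\tau_X$: if $U$ is $\sigma$-open, $x\in U$ and $x\leq y$, then $x\leq_\sigma y$, so by Remark~\ref{rmk:unwinding-specializatino-preorder} $y$ belongs to every $\sigma$-open neighbourhood of $x$, in particular to $U$; hence $U$ is an up-set. Conversely $\tau_X\subseteq\sigma$: since $\sigma$ is Alexandrov, each $x$ has a least $\sigma$-open neighbourhood $U_x^\sigma$, and I claim $U_x^\sigma=U_x$. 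The inclusion $U_x\subseteq U_x^\sigma$ holds because $U_x^\sigma$ is an up-set (just shown) containing $x$; for the reverse, $y\in U_x^\sigma$ forces $y$ into every $\sigma$-open neighbourhood of $x$, so $x\leq_\sigma y$, i.e.\ $x\leq y$, i.e.\ $y\in U_x$. Thus every $U_x$ is $\sigma$-open, and an arbitrary $U\in\tau_X$ equals $\bigcup_{x\in U}U_x$, a union of $\sigma$-opens, so $U\in\sigma$. Hence $\sigma=\tau_X$. It is worth flagging that the Alexandrov hypothesis on $\sigma$ is genuinely needed here: without it one can have a strictly coarser topology with the same specialization preorder (e.g.\ the cofinite topology on an infinite antichain, which is $T_1$ like the discrete one).

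For functoriality, if $f\colon X\to Y$ is monotone and $V\subseteq Y$ is an up-set, then $f^{-1}(V)$ is an up-set in $X$ --- $x\in f^{-1}(V)$ and $x\leq x'$ give $f(x)\leq f(x')$ with $f(x)\in V$, whence $f(x')\in V$ --- so $f$ is continuous for $\tau_X,\tau_Y$. Since $\tau$ acts as the identity on underlying sets and functions, it visibly preserves identity maps and composition, so $\tau\colon\mathsf{PrePos}\to\mathsf{AlexTop}$ is a functor, landing in $\mathsf{AlexTop}$ by the first point. Everything except the uniqueness argument is bookkeeping; and even the uniqueness argument is carried entirely by the minimal-open-neighbourhood characterization of Alexandrov spaces together with Remark~\ref{rmk:unwinding-specializatino-preorder}.
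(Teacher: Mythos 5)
Your proof is correct and follows essentially the same route as the paper: identify $\tau_X$ with the up-sets, use minimal open neighbourhoods $U_x$ together with the unwinding of the specialization preorder to pin down uniqueness, and check continuity by pulling back up-sets along a monotone map. In fact you are somewhat more complete than the paper's own argument, which only proves the inclusion $\sigma\subseteq\{\text{up-sets}\}$ for a candidate Alexandrov topology $\sigma$ and omits both the reverse inclusion (your $U=\bigcup_{x\in U}U_x^{\sigma}$ step) and the explicit verification that the up-set topology's specialization preorder really is $\leq$.
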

\begin{proof}
    Clearly the collection of upper sets in $X$ forms an Alexandrov topology on $X$.  Now, Suppose $\tau_{X}$ is an Alexandrov topology on $X$ whose specialization preorder is $\leq$. If $U \in \tau_{X}$ and $x \in U$ with $x \leq y$, then $x \in U_{y} \subset U_{x} \subset U$. In particular, $y \in U$. Hence $U$ is an upper set. Therefore, $\tau_{X}$ is necessarily the topology on $X$ consisting of upper sets. Now assume $f:X \to Y$ is a monotonic map between preordered sets $X,Y$ endowed with their canonical Alexandrov topologies.  Let $V \in \tau_{Y}$, and assume $x \in f ^{-1}(V)$. Then having $x \leq y$ for some $y \in X$ implies $f(x) \leq f(y)$, and since $V$ is open in $Y$, it follows that $f(y) \in V$. $y \in f ^{-1}(V)$, witnessing $f ^{-1}(V)$ to be open.
\end{proof}

The present digression culminates in the following result:

\begin{theorem}

The category $\mathsf{PrePos}$ of preordered sets and monotone maps is equivalent to the category $\mathsf{AlexTop}$ of Alexandrov topological spaces and continuous maps. Under this equivalence, posets correspond to Alexandrov topologies satisfying the $T_{0}$ separation axiom. Explicitly, the Alexandrov functor $\tau:\mathsf{PrePos} \to \mathsf{AlexTop}$ and specialization functor ${W}:\mathsf{AlexTop} \to \mathsf{PrePos}$ are quasi-inverses.
\end{theorem}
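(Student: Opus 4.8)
The plan is to verify that the functors $\tau:\mathsf{PrePos}\to\mathsf{AlexTop}$ and $W:\mathsf{AlexTop}\to\mathsf{PrePos}$ are mutually quasi-inverse. In fact I expect them to be inverse on the nose --- both act as the identity on underlying sets and underlying functions --- so the claimed equivalence is really an isomorphism of categories; but it suffices to produce natural isomorphisms $W\circ\tau\cong\mathrm{id}_{\mathsf{PrePos}}$ and $\tau\circ W\cong\mathrm{id}_{\mathsf{AlexTop}}$ and to check compatibility with the $T_0$ condition.

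First I would dispatch $W\circ\tau\cong\mathrm{id}_{\mathsf{PrePos}}$, which is essentially the uniqueness statement established just above: given a preorder $(X,\le)$, the Alexandrov topology $\tau_X$ of upper sets has specialization preorder exactly $\le$. Unwinding via Remark~\ref{rmk:unwinding-specializatino-preorder}, $x$ specializes to $y$ in $\tau_X$ iff $y$ belongs to every open --- hence every upper --- neighbourhood of $x$; testing this against the upper set $\{z : x\le z\}$ shows it is equivalent to $x\le y$. Since $\tau$ and $W$ leave underlying sets and functions untouched, $W\circ\tau$ is literally the identity functor, so the comparison natural transformation is the identity; along the way I would note explicitly that monotone maps are carried to monotone maps.

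The substantive half is $\tau\circ W\cong\mathrm{id}_{\mathsf{AlexTop}}$: for an Alexandrov space $(X,\sigma)$ I must show $\sigma$ coincides with $\tau_{W(X)}$, the topology of upper sets of its specialization preorder. One inclusion is soft --- in any topological space an open set $U$ is an upper set for the specialization preorder, since $x\in U$ and $x\le y$ force $y\in U$ by Remark~\ref{rmk:unwinding-specializatino-preorder}. The reverse inclusion, that every upper set is open, is where the Alexandrov hypothesis does the real work and is the main obstacle: using that arbitrary intersections of opens are open, I would form for each $x$ the minimal open neighbourhood $U_x=\bigcap_{U\ni x\text{ open}}U$, observe that $y\in U_x \iff x\le y$ so that $U_x$ is precisely the principal upper set of $x$, and conclude that any upper set $A$ satisfies $A=\bigcup_{x\in A}U_x$ and is therefore open. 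Here too the comparison is the identity on underlying sets, so naturality is automatic, as is the statement that continuous maps go to continuous maps (this is just functoriality of $W$ followed by that of $\tau$, both already recorded).

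Finally I would handle the $T_0$ refinement. The key point is that $W(X)$ is antisymmetric --- a genuine poset --- exactly when $X$ is $T_0$: if $x\le y$ and $y\le x$ then $x$ and $y$ lie in the same open sets, which in a $T_0$ space forces $x=y$, whereas a failure of antisymmetry exhibits two distinct topologically indistinguishable points; dually, $\tau_X$ for a poset $(X,\le)$ is $T_0$ since distinct $x\ne y$ are separated by $U_x$ or $U_y$. Because the equivalence respects underlying sets, it then restricts to an equivalence between the full subcategory of posets inside $\mathsf{PrePos}$ and the full subcategory of $T_0$ spaces inside $\mathsf{AlexTop}$. Apart from the minimal-neighbourhood argument flagged above, every step is a routine unwinding of definitions.
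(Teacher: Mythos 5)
Your proof is correct, and it follows the route the paper intends: the paper does not actually print a proof of this theorem, presenting it instead as the culmination of the preceding lemmas (the uniqueness of the Alexandrov topology with prescribed specialization preorder, and the two functoriality statements). Your assembly of those ingredients is right, and you correctly identify the one step where the Alexandrov hypothesis genuinely works — showing that every upper set $A$ of the specialization preorder is open via $A=\bigcup_{x\in A}U_x$ with $U_x$ the minimal open neighbourhood. This is worth noting because the paper's printed proof of the preceding uniqueness theorem only establishes the inclusion $\tau_X\subseteq\{\text{upper sets}\}$ and leaves the reverse inclusion (and the verification that the upper-set topology has the correct specialization preorder) implicit; your argument supplies both, as well as the observation that the equivalence is in fact an isomorphism of categories since both functors are the identity on underlying sets.
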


\subsubsection{Sheaves as Diagrams}

With the data of Section~\ref{sec:digression-topology-to-order-theory} in hand, we can make sense of the statement of Theorem~\ref{thm:sheaves-diagrams-category-equivalence}. Proving it requires the construction of two quasi-inverse functors passing between diagrams and sheaves; this is the content of the following two constructions.

\begin{definition}
Let $X_{S}$ be an Alexandrov topological space induced by a preposet $(S, \leq)$. Let $\mathcal{F}$ be a $\mathsf{D}$-valued presheaf on $X_{S}$. We define a diagram $F:S \to \mathsf{D}$ on $S$ via 
\begin{align}
F(x):=\mathcal{F}(U_{x}), \ \ F(x \leq y):=\mathcal{F}(U_{x} \supset U_{y})
\end{align}
where $U_{x}$ denotes the minimal open neighborhood of $x \in X_{S}$. In light of the fact that $\mathcal{F}_{x}= \lim\limits_{\underset{x \in U}{\xrightarrow{}}} \mathcal{F}(U)=\mathcal{F}(U_{x})$, $F$ is called the \textbf{diagram of stalks} of the presheaf $\mathcal{F}$ on $X_{S}$.
\end{definition}

\begin{definition}
    Let $F$ be a diagram on a poset $S$. $F$ induces a sheaf $\mathcal{F}$ on the Alexandrov topological space $X_{S}$ corresponding to $S$ by taking 
\begin{align}
\mathcal{F}(U):=\varprojlim\limits_{s \in U} \ {F(s)}
\end{align}
and letting the restriction maps $\mathcal{F}_{U \supset V}$ be naturally determined by the universal property of the limit. That is, $\mathcal{F}_{U \supset V}$ is the unique map making the following diagram commute for all incidences $s \leq s'$ in $S$: 

\begin{center}
\begin{tikzcd}
                                 & \mathcal{F}(U) \arrow[ldd, bend right] \arrow[rdd, bend left] \arrow[d, "\exists ! \mathcal{F}_{U \supset V}" description, dashed] &       \\
                                 & \mathcal{F}(V) \arrow[ld] \arrow[rd]                                                                                               &       \\
F(s) \arrow[rr, "F_{s\leq s'}"'] &                                                                                                                                    & F(s')
\end{tikzcd}
\end{center}

Explicitly, in our categories of interest (like $\mathbb{R}\mathsf{Vect}$), $\mathcal{F}(U)$ is given by 
\begin{align}
\mathcal{F}(U)=\left\{  (x_{s})_{s \in U} \in  \prod_{s  \in U} F(s) : F_{s \leq s'} x_{s}= x_{s'} \text{ for all } s \leq s' \right\}
\end{align}
and
\begin{align}
\mathcal{F}_{U \supset V}\big( (x_{s})_{s \in U} \big)= (x_{s})  _{s \in V}.
\end{align}

That locality and gluing are satisfied is immediate. $\mathcal{F}$ is called the \textbf{sheaf of sections} of the diagram $F$.
\end{definition}

\begin{proof}[Proof sketch\footnote{Routine category-theoretic checks are omitted in light of space and clarity considerations.} of Theorem~\ref{thm:sheaves-diagrams-category-equivalence}.]
    Let $U \subset X_{S}$ be any open set, and look at the sections $\mathcal{F}(U)=\Gamma(U, X_{S})$ of the sheaf $\mathcal{F}$ over $U$. The collection $\{ U_{x}: x \in U \}$ is a covering of $U$ by open subsets (as stars form a basis). By the sheaf axioms, then, a section $s \in \mathcal{F}(U)$ is determined by its restrictions $s_{x}=s |_{U_{x}}$, $x \in U$. Given $x,y \in U$ with $U_{y} \supset U_{x}$, compositionality ($\mathcal{F}_{U_{y} \supset U_{x}} \circ \mathcal{F}_{X \supset U_{y}}=\mathcal{F}_{X \supset U_{x}}$) enforces
\begin{align}
\mathcal{F}_{U_{y} \supset U_{x}}s_{y}=\mathcal{F}_{U_{y} \supset U_{x}} (\mathcal{F}_{X \supset U_{y}} (s))=\mathcal{F}_{X \supset U_{x}}s=s_{x}.
\end{align}
This condition is equivalent to the assertion
\begin{align}
F_{x \leq y}s_{x}=s_{y}.
\end{align}
Hence the section $s$ over $U$ determines a ``local section of the diagram''.

Conversely, suppose we have a diagram $F$ on $S$ and are given elements $s_{x} \in F(x)$, $x$ in some upper set $U$ of $S$, satisfying
\begin{align}
x \leq y \implies F_{x \leq y} s_{x}= s_{y}.
\end{align}
(A ``local section of the diagram''.) Want to show that these uniquely determine a section $s \in \mathcal{F}(U)$ of the sheaf $\mathcal{F}$ on $X_{S}$.

The $s_{x} \in F(x)$ correspond to elements $s_{x} \in \mathcal{F}_x=\mathcal{F}(U_{x})$ satisfying
\begin{align}
U_{y} \supset U_{x} \implies \mathcal{F}_{U_{y} \supset U_{x}}(s_{y})=s_{x}.
\end{align}
Now, a given overlap $U_{i} \cap U_{j}$ equals a union of stars $U_{z}$, each contained in both $U_{i}$ and $U_{j}$. It follows that, for all such $z$, $\mathcal{F}_{U_{i} \supset U_{z}}s_{i}=s_{z}=\mathcal{F}_{U_{j} \supset U_{z}}s_{j}$. Since $s_{i}$ and $s_{j}$ agree when restricted to a cover of the overlap $U_{i} \cap U_{j}$ by open subsets, the sections $s_{i} |_{U_{i} \cap U_{j}}$ and $s_{j} |_{U_{i} \cap U_{j}}$ are equal, by the locality axiom. It follows from the gluing axiom that there is a unique section $s \in \mathcal{F}(U)$ satisfying $s |_{U_{x}}=s_{x}$ for all $x$. 

\end{proof}

\begin{figure}
    \centering
    \includegraphics[width=\linewidth]{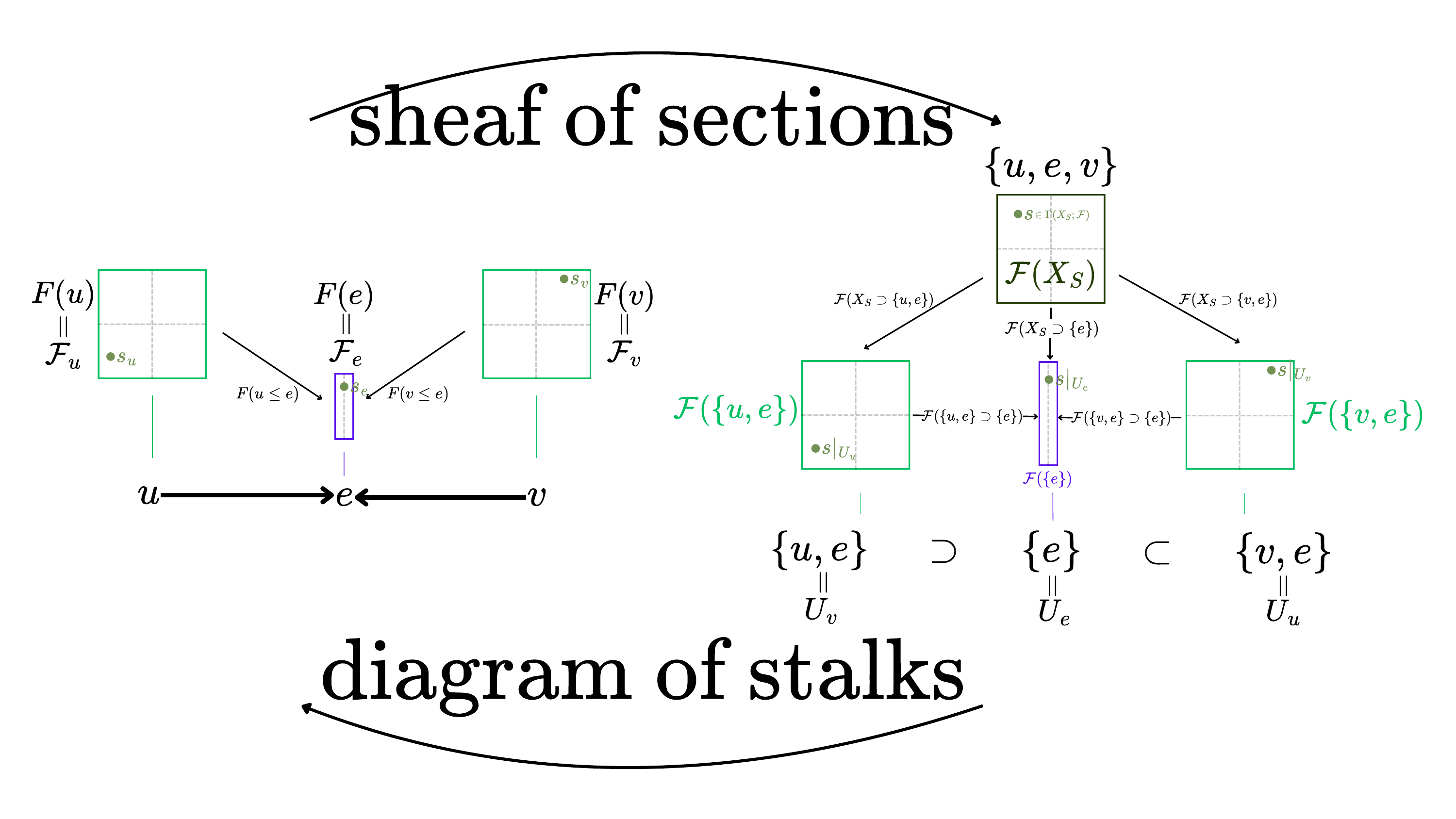}
    \caption{The correspondence between diagrams on a poset $S$ and sheaves on its Alexandrov space $X_S$.}
    \label{fig:enter-label}
\end{figure}

In deference to Theorem~\ref{thm:sheaves-diagrams-category-equivalence}, we will hereon (unambiguously) use terms such as `sheaf', `section', `stalk', `restriction map', etc. in both topological and posetal contexts.

We conclude this section by pointing out explicitly how sheafification factors through the equivalence in Theorem~\ref{thm:sheaves-diagrams-category-equivalence}. Since the sheafification $\mathcal{F}^+$ of a presheaf $\mathcal{F}$ on $X_S$ satisfies $\mathcal{F}^+_x=\mathcal{F}_x$ for all $x \in X$, the diagram of stalks $F:S \to \mathsf{D}$ corresponding to $\mathcal{F}^+$ is given by \begin{equation}
\label{eqn:diagram-sheafification-stalks}
    F(x)=\mathcal{F}^+_x=\mathcal{F}_x.
\end{equation}
A consequence of this is that, when transferring constructions from $\mathsf{Shv}_{\mathsf{D}}(X_S)$ to $\mathsf{Diag}_{\mathsf{D}}(S)$, one may merely `forget about any sheafification involved', as the following examples illustrate.

\begin{example}
    The constant sheaf $\underline{D}$ (Example~\ref{ex:constant-sheaf}) on a poset $S$ is merely the assignment $s \mapsto D$, $s \in S$, with restriction maps all the identity. 
\end{example}

\begin{theorem}
    The image of a morphism $f:F \to G$, $f=\big(f_s: F(s)\to G(s)\big )_{s \in S}$, of sheaves on a poset $S$ is merely the diagram on $G$ given by $s \mapsto \operatorname{im}f_s$, $s \in S$. 
\end{theorem}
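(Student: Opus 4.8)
The plan is to transport the claim through the equivalence $\mathsf{Diag}_{\mathsf{D}}(S) \simeq \mathsf{Shv}_{\mathsf{D}}(X_S)$ of Theorem~\ref{thm:sheaves-diagrams-category-equivalence} and reduce it to a stalkwise computation, exploiting two facts established above: on an Alexandrov space stalks are computed trivially — as the values over the minimal open sets $U_x$ — and sheafification preserves stalks (Equation~\ref{eqn:diagram-sheafification-stalks}).

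First I would set up notation: let $\mathcal{F}, \mathcal{G}$ be the sheaves on $X_S$ corresponding to the diagrams $F, G$, and let $\tilde f : \mathcal{F} \to \mathcal{G}$ be the sheaf morphism corresponding to $f$, so that by construction of the equivalence its $U_x$-component $\tilde f_{U_x} : \mathcal{F}(U_x) \to \mathcal{G}(U_x)$ is identified with $f_x : F(x) \to G(x)$. By definition $\operatorname{im}\tilde f$ is the sheafification $\mathcal{P}^{+}$ of the presheaf $\mathcal{P}\colon U \mapsto \operatorname{im}\tilde f_U$. By Equation~\ref{eqn:diagram-sheafification-stalks}, the diagram on $S$ corresponding to $\operatorname{im}\tilde f$ sends $x$ to $(\operatorname{im}\tilde f)_x = (\mathcal{P}^{+})_x = \mathcal{P}_x$, the last equality because sheafification preserves stalks.

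Next I would evaluate $\mathcal{P}_x$. Since $X_S$ is Alexandrov, $x$ has an inclusion-minimal open neighborhood $U_x$, and for any presheaf $\mathcal{H}$ on $X_S$ the colimit $\mathcal{H}_x = \varinjlim_{U \ni x}\mathcal{H}(U)$ collapses to $\mathcal{H}(U_x)$. Hence $\mathcal{P}_x = \mathcal{P}(U_x) = \operatorname{im}(\tilde f_{U_x})$, which under the equivalence is precisely $\operatorname{im}(f_x : F(x) \to G(x))$ as a subobject of $G(x)$. (One could equally invoke Theorem~\ref{thm:testing-on-stalks}(1), which gives $(\operatorname{im}\tilde f)_x = \operatorname{im}\tilde f_x$ directly and bypasses $\mathcal{P}$.) It then remains to check that the restriction maps of the diagram of stalks of $\operatorname{im}\tilde f$ are the corestrictions of $G(x \le y)$ to $\operatorname{im}f_x \to \operatorname{im}f_y$ — routine functoriality — and that these corestrictions are well defined, which follows from naturality of $f$: the commuting square $G(x\le y)\circ f_x = f_y\circ F(x\le y)$ forces $G(x\le y)\bigl(\operatorname{im}f_x\bigr) \subseteq \operatorname{im}f_y$. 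The same square exhibits $\operatorname{im}f$ as a genuine subsheaf of $G$, consistent with the footnote on the topological definition of the sheaf image.

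I do not anticipate a real obstacle: the content is exactly the degeneracy of Alexandrov stalks together with stalk-preservation of sheafification, and the rest is bookkeeping. The one point demanding care is variance: one must keep straight that $x \le y \iff U_x \supseteq U_y$, so that a restriction morphism $\mathcal{G}(U_y \supset U_x)$ of the topological sheaf corresponds to the \emph{forward} map $G(x \le y)$ of the diagram, and likewise after corestricting to images.
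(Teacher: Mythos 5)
Your proposal is correct and is exactly the argument the paper intends: it presents this theorem as an immediate consequence of the preceding observation that sheafification preserves stalks (Equation~\ref{eqn:diagram-sheafification-stalks}) combined with the degeneracy of stalks on an Alexandrov space ($\mathcal{H}_x = \mathcal{H}(U_x)$), so the image presheaf's sheafification has stalk $\operatorname{im}(f_x)$ at $x$. Your added checks — that the restriction maps are the corestrictions of $G(x\le y)$ and that these are well defined by naturality of $f$ — are the right bookkeeping and consistent with the paper's footnote identifying $\operatorname{im}f$ as a subsheaf of $G$.
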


\begin{theorem}
    The direct sum $\bigoplus_{i \in I} F_i$ of sheaves on a poset $S$ is the diagram $s \mapsto \bigoplus_{i \in I} F_i(s)$, $s \in S$, with componentwise restriction maps.
\end{theorem}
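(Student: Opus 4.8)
The plan is to rerun the ``forget about sheafification'' recipe, now applied to the defining presheaf of the direct sum. Let $\mathcal{F}_i$ be the sheaf on $X_S$ corresponding to $F_i$ under Theorem~\ref{thm:sheaves-diagrams-category-equivalence}, and let $P$ be the presheaf $U \mapsto \bigoplus_{i \in I}\mathcal{F}_i(U)$ on $X_S$, so that by definition $\bigoplus_{i \in I}\mathcal{F}_i = P^{+}$. Transferring this sheaf back across the equivalence yields its diagram of stalks, and by the stalk-preservation property of sheafification recorded in Equation~\ref{eqn:diagram-sheafification-stalks} this diagram is $x \mapsto (P^{+})_x = P_x$. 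So it remains to compute the stalks of $P$, together with its restriction maps.

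First I would compute $P_x$. Because $X_S$ is Alexandrov, $x$ has a minimal open neighborhood $U_x$, so the filtered colimit defining any stalk is already attained at $U_x$; hence $P_x = P(U_x) = \bigoplus_{i \in I}\mathcal{F}_i(U_x) = \bigoplus_{i \in I}(\mathcal{F}_i)_x = \bigoplus_{i \in I}F_i(x)$, where the penultimate equality is again the Alexandrov identity $\mathcal{F}_i(U_x) = (\mathcal{F}_i)_x$ and the last is the stalk clause of Theorem~\ref{thm:sheaves-diagrams-category-equivalence}. For the restriction maps: given $x \le y$ in $S$, equivalently $U_y \subset U_x$, the arrow $F(x \le y)$ of the diagram of stalks of $P^{+}$ is --- after the identifications above and using that the sheafification map $\theta\colon P \to P^{+}$ is an isomorphism on stalks --- precisely the presheaf restriction $P(U_x \supset U_y)$, which by construction of $P$ equals $\bigoplus_{i \in I}\mathcal{F}_i(U_x \supset U_y) = \bigoplus_{i \in I}F_i(x \le y)$, i.e.\ componentwise. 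Assembling these gives the asserted diagram.

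I do not expect a genuine obstacle here; the content is that the equivalence of Theorem~\ref{thm:sheaves-diagrams-category-equivalence} lets one replace a colimit-type construction on sheaves (which a priori requires sheafification) by its naive pointwise analogue on diagrams. Indeed, a second, more structural proof makes this transparent: the direct sum of sheaves is by construction the coproduct in $\mathsf{Shv}_{\mathsf{D}}(X_S)$; an equivalence of categories preserves coproducts; and coproducts in the functor category $\mathsf{Diag}_{\mathsf{D}}(S) = [S, \mathsf{D}]$ are computed pointwise whenever $\mathsf{D}$ (e.g.\ $\mathbb{R}\mathsf{Vect}$) has them. The only points that really need checking --- and they are routine given the preceding development --- are that $\mathsf{D}$ admits the relevant coproducts and that ``direct sum of sheaves'' is identified with the categorical coproduct; I would still spell out the stalkwise computation above so that the statement rests on the concrete sheaves-as-diagrams description rather than on abstract nonsense alone.
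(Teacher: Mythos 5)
Your proposal is correct and is essentially the argument the paper intends: the theorem is presented there as a direct consequence of Equation~\ref{eqn:diagram-sheafification-stalks} (sheafification preserves stalks, so one may ``forget the sheafification'' and read off the diagram of stalks of the defining presheaf), and your stalkwise computation via the minimal open neighborhoods $U_x$ fills in exactly those details. Your alternative coproduct argument is also sound, but it is a repackaging of the same content rather than a genuinely different route.
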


More generally, Equation~\ref{eqn:diagram-sheafification-stalks} has significance for any sheaf-theoretic notion that can be `tested on stalks' (cf. Theorem~\ref{thm:testing-on-stalks}). For instance:

\begin{theorem}
    A morphism $f:F\to G$ of sheaves on a poset $S$ is injective (resp. surjective) if and only if its components $f_s:F(s) \to G(s)$ each are.  
\end{theorem}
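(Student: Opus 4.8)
The plan is to reduce the statement to facts already in hand, via the category equivalence of Theorem~\ref{thm:sheaves-diagrams-category-equivalence} and the stalkwise criteria of Theorem~\ref{thm:testing-on-stalks}. First I would transport the data across the equivalence: the diagrams $F$ and $G$ on $S$ correspond to sheaves $\mathcal{F}$ and $\mathcal{G}$ on the Alexandrov space $X_S$, the morphism $f$ to a sheaf morphism $\varphi\colon \mathcal{F}\to\mathcal{G}$, and — crucially — the stalk $\mathcal{F}_s$ is identified with the object $F(s)$ of $\mathsf{D}$ in such a way that the induced stalk map $\varphi_s\colon\mathcal{F}_s\to\mathcal{G}_s$ is exactly the component $f_s\colon F(s)\to G(s)$. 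This is precisely the ``diagram of stalks'' half of Theorem~\ref{thm:sheaves-diagrams-category-equivalence} (together with Equation~\ref{eqn:diagram-sheafification-stalks}). Since injectivity is defined by triviality of $\ker\varphi$ and surjectivity by $\operatorname{im}\varphi=\mathcal{G}$, and since kernels, images, sheafification and the property of being a zero object are all preserved by an equivalence of categories, $f$ is injective (resp. surjective) as a morphism of diagrams if and only if $\varphi$ is injective (resp. surjective) as a morphism of sheaves. Now applying items 5 and 6 of Theorem~\ref{thm:testing-on-stalks} to $\varphi$, this holds if and only if $\varphi_s$ is injective (resp. surjective) for every $s$, which under the identification above is the claim that each $f_s$ is injective (resp. surjective).

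Alternatively — and this is the version I would probably present as the main proof, since it is self-contained — one can argue directly on the poset side. For kernels no sheafification is needed, so the kernel of $f$ viewed as a diagram is literally the subdiagram $s\mapsto\ker f_s$; by item 3 of Theorem~\ref{thm:testing-on-stalks} (a sheaf on a poset is the zero diagram iff all its stalks vanish) we get $\ker f=0 \iff \ker f_s=0$ for all $s \iff$ every $f_s$ is injective. For surjectivity, the theorem identifying the image of a sheaf morphism on a poset with the diagram $s\mapsto\operatorname{im} f_s$ gives $\operatorname{im} f = G \iff \operatorname{im} f_s = G(s)$ for all $s \iff$ every $f_s$ is surjective. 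Either route closes the argument.

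The only point needing real care — the ``main obstacle,'' such as it is — is the bookkeeping that the stalk functor for sheaves on $X_S$ sends $\mathcal{F}$ to the diagram $s\mapsto F(s)$ and $\varphi$ to $f_s$ componentwise, with no reindexing or dualization slipping in, so that ``stalkwise on the sheaf side'' translates faithfully to ``componentwise on the diagram side.'' But this is exactly what Theorem~\ref{thm:sheaves-diagrams-category-equivalence} records, so once it is invoked the remainder is formal; equivalently, the direct argument sidesteps the issue entirely because diagrams on a poset are manifestly determined ``stalkwise.''
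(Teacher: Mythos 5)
Your proposal is correct and follows essentially the same route the paper intends: the text surrounding this statement explicitly frames it as an instance of Equation~\ref{eqn:diagram-sheafification-stalks} combined with the stalkwise criteria of Theorem~\ref{thm:testing-on-stalks} (items 5 and 6), which is exactly your first argument, and the paper offers no further proof. Your self-contained second route (kernels and images of poset sheaves are computed componentwise, so injectivity/surjectivity reduce to the components by definition) is a fine equivalent restatement rather than a genuinely different method.
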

A similar claim holds for exactness.



\section{Motivation $\mathrm{II}$: Sheaves and Topological Deep Learning (TDL)}
\label{sec:motvation-higher-order}
\subsection{Combinatorial Hodge Theory and Heat Diffusion}
\label{sec:combinatorial-hodge-heat-diff}

Laplacian operators are ubiquitous, with avatars in Riemannian geometry, PDE, and stochastic processes manifesting myriad across discrete domains~\cite{wardetzky2007discrete}. A Laplacian's utility stems in part from the diffusion-mediated interface it establishes between local and global descriptors.  The Laplacian $\boldsymbol{L}$ on a graph $G$ (Definition~\ref{thm:GCNs-and-heat-diffusion}) offers a familiar example: 

\begin{example}
\label{ex:local-global-graph-lap-example}
\par\noindent
\begin{itemize}
    \item ($\text{Local}\to \text{Global}$) $\boldsymbol{L}$ measures ($\operatorname{grad}$), then aggregates ($\operatorname{div}$), local disagreements between node features. Iterating this process repeatedly and for different choices of features is known to eventually compute the kernel of $\boldsymbol{L}$ (Theorem~\ref{thm:ODE-fact}),  which is readily to seen to specify the connected components of $G$. 
    \item ($\text{Local}\leftarrow \text{Global}$) Conversely, we may know coarse global information about $G$, namely the number of components, and want to know how a certain node $v$ interacts with this topology, namely to which component $v$ belongs. Since $\ker \boldsymbol{L}$ is spanned by component indicator vectors, we can find out by projecting $(0, ..., \overbrace{x_v}^{\text{entry }v}, ... 0)$ onto $\ker \boldsymbol{L}$\footnote{Usually $x_v=1$ in this case, but anything nonzero would do the trick.}); equivalently, by performing diffusion with  $(0, ..., \underbrace{x_v}_{\text{entry }v}, ... 0)$ as initial condition.
\end{itemize}

\end{example}

For general Laplacians, this $\text{Local}\xleftrightarrow[]{\text{diffusion}}\text{Global}$ 
 principle follows from the \textit{Hodge Theorem}. 

\begin{definition}
 Let $(C^{\bullet}, d^{\bullet}):C^{0} \xrightarrow{d} C^{1} \xrightarrow{d} \cdots$ be a cochain complex of real Hilbert spaces.  The \textbf{Hodge Dirac operator} on $(C^{\bullet}, d^{\bullet})$ is the graded linear operator $D:=d+d^{*}$, where $d^*$ denotes the adjoint of $d$. The \textbf{Hodge Laplacian} on $(C^{\bullet}, d^{\bullet})$ is the graded linear operator \begin{equation}
     \Delta := D^2 = d^* d+ d d^*.
 \end{equation}
The operator $\Delta$ is naturally graded into components \begin{equation}
    \Delta^k:=(d^k+d^{^*k^{}})^2= \underbrace{d^{{^*} k} d^k}_{=:\Delta^k_{\text{up}}}+ \underbrace{d^{k-1} d^{^{*}k-1 }}_{=:\Delta^k_{\text{down}}},
\end{equation} 
where $\Delta^k$ is known as the \textbf{$k$th Hodge Laplacian} or the \textbf{Hodge $k$-Laplacian of $(C^{\bullet}, d^{\bullet})$}, $\Delta^k_{\text{up}}$ as its \textbf{up-Laplacian}, and $\Delta^k_{\text{down}}$ its \textbf{down-Laplacian}. The space $\ker \Delta^k$ is called the \textbf{harmonic space} of $\Delta^k$.
\end{definition}

\begin{theorem}{\cite{eckmann1944harmonische, hansen2020laplacians}}
\label{thm:discrete-Hodge-Theorem}
    Let $(C^{\bullet}, d^{\bullet})$ be a cochain complex of finite-dimensional inner product spaces, with corresponding Laplacians $\Delta^{k}:C^{k} \to C^{k}$.  Then the space $C^{k}$ has an orthogonal decomposition 
\begin{equation}
    C^{k}= \operatorname{ker }\Delta^{k} \oplus \operatorname{im }d^{k-1} \oplus \operatorname{im}(d^{k})^{*}
\end{equation}
with each summand invariant under $\Delta^k$,
and this follows from the fact that any cohomology class $a \in H^k(C^\bullet)$ has a canonical harmonic representative $\alpha \in \ker \Delta^k$, $a = [\alpha]$, yielding an isomorphism \begin{equation}
    \label{eqn:Hodge-thm-iso}
    \operatorname{ker }\Delta^{k}\cong H^{k}(C^\bullet).
\end{equation}
\end{theorem}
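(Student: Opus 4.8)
The plan is to reduce everything to two elementary facts about a linear map $f\colon V\to W$ of finite-dimensional inner product spaces: that $\ker f = (\operatorname{im} f^{*})^{\perp}$, and hence $V = \ker f \oplus \operatorname{im} f^{*}$ orthogonally (images are automatically closed in finite dimensions, so no completeness subtlety arises). Applying this to $d^{k}\colon C^{k}\to C^{k+1}$ gives $C^{k} = \ker d^{k} \oplus \operatorname{im}(d^{k})^{*}$, and applying it to $(d^{k-1})^{*}\colon C^{k}\to C^{k-1}$ gives $C^{k} = \operatorname{im} d^{k-1} \oplus \ker(d^{k-1})^{*}$. The cochain condition $d^{k}d^{k-1}=0$ yields $\operatorname{im} d^{k-1}\subseteq \ker d^{k}$, and dually $\operatorname{im}(d^{k})^{*}\subseteq\ker(d^{k-1})^{*}$; so one can refine the first decomposition by splitting $\ker d^{k}$ as $\operatorname{im} d^{k-1}$ plus its orthogonal complement \emph{within} $\ker d^{k}$, which is precisely $\ker d^{k}\cap\ker(d^{k-1})^{*}$.

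The key lemma, and the only place where the Laplacian (rather than $d$ and $d^{*}$ separately) really enters, is the identification $\ker\Delta^{k} = \ker d^{k}\cap\ker(d^{k-1})^{*}$. One inclusion is immediate: if $d^{k}\omega=0$ and $(d^{k-1})^{*}\omega=0$ then $\Delta^{k}\omega = (d^{k})^{*}d^{k}\omega + d^{k-1}(d^{k-1})^{*}\omega = 0$. For the converse I would pair $\Delta^{k}\omega$ with $\omega$: since $\Delta^k$ splits as $\Delta^k_{\text{up}}+\Delta^k_{\text{down}}$, one gets $0 = \langle\Delta^{k}\omega,\omega\rangle = \lVert d^{k}\omega\rVert^{2} + \lVert(d^{k-1})^{*}\omega\rVert^{2}$, forcing both summands to vanish. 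Combining this with the previous paragraph gives the orthogonal decomposition $C^{k} = \ker\Delta^{k}\oplus\operatorname{im} d^{k-1}\oplus\operatorname{im}(d^{k})^{*}$; orthogonality of the last two summands to $\ker\Delta^{k}$ follows from $\ker\Delta^{k}\subseteq\ker(d^{k-1})^{*} = (\operatorname{im} d^{k-1})^{\perp}$ and $\ker\Delta^{k}\subseteq\ker d^{k} = (\operatorname{im}(d^{k})^{*})^{\perp}$.

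For $\Delta^{k}$-invariance of the summands: $\ker\Delta^{k}$ is trivially invariant; on $\operatorname{im} d^{k-1}$ one computes $\Delta^{k}d^{k-1} = d^{k-1}(d^{k-1})^{*}d^{k-1}$ using $d^{k}d^{k-1}=0$, which lands back in $\operatorname{im} d^{k-1}$; dually $\Delta^{k}(d^{k})^{*} = (d^{k})^{*}d^{k}(d^{k})^{*}$ using $(d^{k-1})^{*}(d^{k})^{*} = (d^{k}d^{k-1})^{*}=0$, landing in $\operatorname{im}(d^{k})^{*}$. Finally, for the cohomology statement, observe that the first two summands recombine to give $\ker d^{k} = \ker\Delta^{k}\oplus\operatorname{im} d^{k-1}$, so the quotient map $\ker d^{k}\to H^{k}(C^{\bullet}) = \ker d^{k}/\operatorname{im} d^{k-1}$ restricts to a linear isomorphism on $\ker\Delta^{k}$; this says exactly that every class has a unique harmonic representative and yields $\ker\Delta^{k}\cong H^{k}(C^{\bullet})$.

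I do not expect a serious obstacle: the content is entirely finite-dimensional linear algebra resting on $\ker f = (\operatorname{im} f^{*})^{\perp}$ and the positivity computation $\langle\Delta^k\omega,\omega\rangle = \lVert d^k\omega\rVert^2 + \lVert (d^{k-1})^*\omega\rVert^2$. The one point requiring care is the bookkeeping of the three-way orthogonality — in particular, performing the refinement of $\ker d^{k}$ inside $\ker d^{k}$ and not in all of $C^{k}$, and checking that the orthogonal complement taken there genuinely coincides with $\ker(d^{k-1})^{*}\cap\ker d^k$.
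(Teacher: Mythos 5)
Your proposal is correct and follows essentially the same route as the paper's proof: both rest on the identity $\ker f=(\operatorname{im}f^{*})^{\perp}$, the energy computation $\langle\Delta^{k}\omega,\omega\rangle=\lVert d^{k}\omega\rVert^{2}+\lVert(d^{k-1})^{*}\omega\rVert^{2}$ giving $\ker\Delta^{k}=\ker d^{k}\cap\ker(d^{k-1})^{*}$, the same $d^{2}=0$ computations for invariance, and the same identification of $\ker\Delta^{k}$ with $H^{k}$ via the quotient map. The only cosmetic difference is that you obtain the three-way splitting by refining $\ker d^{k}$ inside the decomposition $C^{k}=\ker d^{k}\oplus\operatorname{im}(d^{k})^{*}$, whereas the paper computes $(\operatorname{im}d^{k-1}\oplus\operatorname{im}(d^{k})^{*})^{\perp}=\ker\Delta^{k}$ directly.
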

It then follows from positivity of $\Delta$ and standard ODE theory (namely, Theorem~\ref{thm:ODE-fact}) that solutions to the \textbf{$k$th-order heat diffusion equation} \begin{equation}
    {\dot{X}}(t) = {\Delta}^k {X}(t) 
\end{equation}
converge exponentially to the orthogonal projection of the initial condition ${X}(0)$ onto $\ker {\Delta}^k \cong H^k(C^\bullet)$, establishing a dictionary 
 \begin{equation}
\label{eqn:diff-dictionary}
   \text{harmonic space of }k\text{th} \text{ Laplacian}  \leftrightarrow k\text{th-order heat diffusion}  \leftrightarrow k\text{th cohomology},
\end{equation}
valid for any cochain complex of (in our case) finite-dimensional real inner product spaces. \\

 \begin{proof} The finite-dimensionality assumption reduces the proof of Theorem~\ref{thm:discrete-Hodge-Theorem} to linear algebra.\footnote{Some infinite-dimensional analogues hold, and are in fact central to Riemannian and complex geometry; these are trickier and require hard inputs from analysis to prove. } 
 \noindent\par
     \textit{Step 1. (Cohomology classes $\cong$ closed + coclosed forms)} Noting that $V / W \twoheadleftarrow V \hookleftarrow W^{\perp}$ is an isomorphism for any inclusion $W \subset V$ of finite-dimensional inner product spaces, it is immediate that there is an isomorphism \begin{align}
H^{k}(C^{\bullet})= \frac{\operatorname{ker }d^{}}{\operatorname{im }d^{}} & \xleftarrow{\cong} (\operatorname{im }d)^{\perp} \cap \operatorname{ker }d. \\
[\alpha] & \mapsfrom \alpha
\end{align}
By general linear algebra, $(\operatorname{im }d)^{\perp}=\operatorname{ker }d^{*}$, $\operatorname{ker }d d^{*}=\operatorname{ker }d^{*}$, and $\operatorname{ker }d=\operatorname{ker }d^{*}d$. It follows that 
$$H^{k}(C^{\bullet}) \xleftarrow{\cong} _{} \operatorname{ker }d^{*} \cap \operatorname{ker } d =\operatorname{ker } \Delta^{\text{down}} \cap \operatorname{ker }\Delta^{\text{up}}.$$

\textit{Step 2. (Closed + coclosed forms $=$ harmonic space)} At the same time, if $\alpha \in \operatorname{ker }\Delta$ then $\langle \Delta \alpha, \alpha \rangle=\|d \alpha\|^{2} + \|d^{*} \alpha\|^{2}=0$. It follows that $$\operatorname{ker }\Delta=\underbrace{ \operatorname{ker } d^{*} }_{ \text{coclosed} } \cap \underbrace{ \operatorname{ker } d }_{ \text{closed} }= \underbrace{ \operatorname{ker } \Delta^{\text{down}} }_{ \text{coclosed} } \cap \underbrace{ \operatorname{ker }\Delta^{\text{up}} }_{ \text{closed} }.$$
Thus, there is an isomorphism\begin{align}
\operatorname{ker } \Delta &\xrightarrow{\cong} H^{k}(C^{\bullet}) \\
\alpha & \mapsto [\alpha] ,
\end{align}
as claimed.

\textit{Step 3. (The decomposition)} We claim the orthogonal direct sum
\[
C^k \;=\; \operatorname{im} d^{k-1} \;\oplus\; \ker \Delta^k \;\oplus\; \operatorname{im} d^{*k}.
\]

First, $\operatorname{im} d^{k-1} \perp \operatorname{im} d^{*k}$ because for any $\alpha\in C^{k-1}$ and $\beta\in C^{k+1}$,
\[
\langle d^{k-1}\alpha,\, d^{*k}\beta\rangle \;=\; \langle d^k d^{k-1}\alpha,\, \beta\rangle \;=\; \langle 0,\, \beta\rangle \;=\; 0
\]
using $d \circ d=0$. Next, if $h\in\ker\Delta^k$ then $dh=d^*h=0$ (Step 2). Hence
\[
\langle h,\, d^{k-1}\alpha\rangle \;=\; \langle d^{*}h,\, \alpha\rangle \;=\; 0,
\qquad
\langle h,\, d^{*k}\beta\rangle \;=\; \langle dh,\, \beta\rangle \;=\; 0,
\]
so $\ker\Delta^k$ is orthogonal to both $\operatorname{im} d^{k-1}$ and $\operatorname{im} d^{*k}$. Thus the three summands are pairwise orthogonal.

It remains to show the sum equals $C^k$. Compute the orthogonal complement:
\[
\bigl(\operatorname{im} d^{k-1} \oplus \operatorname{im} d^{*k}\bigr)^{\perp}
\;=\;
\bigl(\operatorname{im} d^{k-1}\bigr)^{\perp}\cap\bigl(\operatorname{im} d^{*k}\bigr)^{\perp}
\;=\;
\ker (d^{k-1})^{*}\cap \ker d^{k}
\;=\;
\ker d^{*k}\cap \ker d^{k}
\;=\;
\ker \Delta^{k},
\]
where we used $(\operatorname{im}T)^\perp=\ker T^*$ and Step 2. Therefore
\[
C^k \;=\; \bigl(\operatorname{im} d^{k-1} \oplus \operatorname{im} d^{*k}\bigr) \;\oplus\; \bigl(\operatorname{im} d^{k-1} \oplus \operatorname{im} d^{*k}\bigr)^{\perp}
\;=\;
\operatorname{im} d^{k-1} \oplus \ker \Delta^{k} \oplus \operatorname{im} d^{*k},
\]
as claimed.

\textit{Step 4. (Invariance)}That each summand is invariant under $\Delta^k$ follows immediately from the condition $d^2=0$. Indeed,\begin{align}
  \Delta^k(\ker \Delta^k)&=\{0\} \subset \ker \Delta^k  \\
\Delta^{k}(d^{k-1} \alpha) &= d^{k-1} (d^{^{*}k-1} d^{k-1}) \alpha  \subset \operatorname{im } d^{k-1} \\
\Delta^{k}(d^{^{*}k \alpha})&= d^{^{*} k}(d^{k} d^{^{*}k}) \subset \operatorname{im }d^{^{*}k}.
\end{align}
 \end{proof}

The space $\operatorname{im} d$ is called the \textbf{gradient space}, and $\operatorname{im}d^*$ the \textbf{curl space}. Alongside the harmonic space $\ker \Delta^k$, these suggestive labels are most readily understood in the special case where $(C^\bullet, d^\bullet)$ is the simplicial cochain complex for a two-dimensional simplicial complex $S=S^2$ (imagining a triangulated surface will suffice). To this end, we first recall the following definition. 

\begin{definition}[Simplicial cochains (discrete forms)] Let $S$ be a simplicial complex. Designate an orientation for each constituent simplex $\sigma \in S$ by choosing an order for its vertices: $\sigma=[v_0, \dots, v_{\dim \sigma }]$.

When the coefficient ring of a simplicial cochain complex $(C^\bullet, d^\bullet)$ is $\mathbb{R}$, simplicial $n$-cochains are called \textbf{simplicial (discrete) differential $n$-forms}. In this case, the coboundary operator

    \begin{align} d^n:C^{n}(S) &\to C^{n+1}(S) 
 \end{align} is called the \textbf{$n$th simplicial (discrete) exterior derivative}. Explicitly, if $\{ \varphi_{\sigma}: \sigma \in C_{n}(S) \}$ is the dual basis for the vector space $C^{n}(S)=\operatorname{Hom}\big( C_{n}(S), \mathbb{R} \big)$, then $d$ is determined by incidence numbers

 \begin{align}
 [\sigma: \tau]:=(d\varphi_{\sigma})(\tau \in S^{n+1})&= \varphi_{\sigma}(d\tau)= \sum_{j=0}^{n+1} (-1)^{j} \varphi_{\sigma}(\tau |_{[v_{0},\dots,\widehat{v_{j}},\dots, v_{n}]})
 \end{align}
which manifestly equal \begin{equation}
[\sigma : \tau]=\begin{cases}
  1  & \sigma \leq \tau \text{ with matching orientation}  \\
 0 & \sigma \text{ and } \tau \text{ have no relation} \\
 -1 & \sigma \leq \tau \text{ with reversed orientation}.
 \end{cases} \end{equation}. 

The matrix $\boldsymbol{B}_n$ of the differential $d_n:C_n(S) \to C_n(S)$ with respect to the simplex bases is called the \textbf{$n$th boundary matrix} or \textbf{$n$th incidence matrix} of $S$; its transpose $\boldsymbol{B}_n^\top$ the $n$\textbf{th coboundary matrix} or \textbf{$n$th coincidence matrix}.  Note that $\boldsymbol{B}_n^\top$ is the matrix of $d$, not of $d^*$.  
 
\end{definition}

Put $V:=S^{0}$, $E:=S^{1}$, $T:=S^{2}$. 

\begin{definition}
If $f =\sum_{v \in V}x_{v} v$ (coordinates: $\boldsymbol x=(x_{v})_{v \in V}\in \mathbb{R}^{|V|}$) is a 0-form on $S$, define its \textbf{gradient} to be the 1-form given by $$\operatorname{grad }f:= df;$$in coordinates: $$\operatorname{grad } \boldsymbol  x=\boldsymbol  B_{1}^{\top} \boldsymbol  x= \left( \sum_{v \in V}[v : e] x_{v} \right)_{e \in E}=\left( x_{\text{tail}}- x_{\text{tip}} \right)$$
We see that $\operatorname{grad }\boldsymbol x$ is obtained by recording, for each edge $e=[ v_{0},v_{1} ]$ in $S$, the `potential difference' $x_{v_{1}}-x_{v_{0}}$ of the node signal $\boldsymbol x$ across $e$. $\operatorname{grad } \boldsymbol x=0$ means the node signal $\boldsymbol x$ is constant on every connected component.
\end{definition}

\begin{definition}
If $\theta =\sum_{e \in E} x_{e} e$ (coordinates: $\boldsymbol x=(x_{e})_{e \in E}$) is a 1-form on $S$, define its \textbf{curl} to be the $2$-form $$\operatorname{curl }\theta:= d\theta;$$
in coordinates: $$\text{curl } \boldsymbol  x=\boldsymbol  B_{2}^{\top}\boldsymbol  x=\left( \sum_{e \in E} [e: t]x_{e}  \right )_{t \in T}=(x_{[v_{0},v_{1}]}+x_{[v_{1},v_{2}]}-x_{[v_{0},v_{2}]})_{[v_{0},v_{1},v_{2}] \in T}$$We see that $\text{curl }\boldsymbol x$ is obtained by recording, for each triangle $t=[v_{0},v_{1},v_{2}]$ of $S$, the `circulation' $x_{[v_{0},v_{1}]}+x_{[v_{1},v_{2}]}-x_{[v_{0},v_{2}]}$ of $\boldsymbol x$ around $t$.  $\operatorname{curl } \boldsymbol x=0$ means the edge signal $\boldsymbol x$ is \textit{irrotational}.

\end{definition}

\begin{definition}
We can also define $$\operatorname{div }\theta:=d^{*} \theta;$$
in coordinates: $$\operatorname{div }\theta=\boldsymbol  B_{1}\boldsymbol  x=\left( \sum_{e \in E} [v: e] x_{e} \right)_{v \in V}=\left( \sum_{ e \text{ exiting }v} x_{e} - \sum_{e \text{ entering }v} x_{e} \right)_{v \in V}.$$
We see that $\operatorname{div }\boldsymbol  x$ is obtained by recording, for each node $v$, the 'net outward flux' through $v$. $\operatorname{div } \boldsymbol x=0$ means there are no sources or sinks. 
\end{definition}
Note that $\operatorname{curl} \operatorname{grad}=0$ because $d \circ d=0$, recovering a discrete analogue of an intuitive result in vector calculus.

\begin{definition}
    It also is useful to define the \textbf{cocurl} $d^{*}=\operatorname{curl}^{*}:C^{2} \to C^{1}$, given by $d^{*}(\omega: C_{2} \to \mathbb{R})(\theta:C_{1} \to \mathbb{R})=\omega(d \theta)$, which in coordinates is given by $$\operatorname{curl}^{*} \boldsymbol  x=\boldsymbol  B_{2}\boldsymbol  x= \left( \sum_{t \in T} [e : t]   x_{t} \right)_{e \in E}.$$
If $S$ is a triangulated surface, so that $e$ participates in at most two triangles, this simplifies to $\operatorname{curl}^{*}\boldsymbol x=(  x_{t_{L}(e)} -   x_{t_{R}(e)})_{e \in E}$, where $t_{L}(e)$ and $t_{R}(e)$ are the left and right triangles to which $e$ is interior (if such triangles exist). In this case we see that $\operatorname{curl}^{*} \boldsymbol  x$ is obtained by recording, for each edge $e$, the signed jump (difference) of the face signal $\boldsymbol x$ across $e$. 

\end{definition}
With this notation, the graph Laplacian is $$
\boldsymbol L= \Delta^{0}=\boldsymbol B_{1} \boldsymbol B_{1}^{\top}=\operatorname{div }\operatorname{grad},$$and Hodge $1$-Laplacian (sometimes called the \textbf{graph Helmholtzian}~\cite{lim2020hodge} in this setting) is $$  \Delta^{1}=    \underbrace{ \boldsymbol  B_{2}\boldsymbol   B_{2}^{\top} }_{ \text{up} } + \underbrace{ \boldsymbol  B_{1} ^{\top}\boldsymbol  B_{1} }_{ \text{down} }= \text{curl$^*$ }\text{curl} + \text{grad } \text{div} .$$

In Section~\ref{sec:rapid-preliminaries}, we saw that diffusion with the Laplacian on a graph $G$ served to minimize the Dirichlet energy functional on $G$ (Definition~\ref{def:graph-dirichlet-energy}). This holds in our present general setting, and in fact may be taken as an equivalent \textit{definition} of heat diffusion. To see this, first let $V$ be a finite dimensional inner product space over $\mathbb{R}$, and suppose $A \in \operatorname{End }V$ is any positive semidefinite operator. Since $A$ is positive semidefinite, $\operatorname{ker } A=\{ v \in V: \langle Av, v \rangle=0  \}.$ Letting $Q_{A}:V \to \mathbb{R}_{\geq 0}$ be the quadratic form $Q_{A}(x):=\langle Ax, x \rangle$, we see that
\begin{align}
x \in \operatorname{ker } A \iff Q_{A}(x)=0 \iff x \text{ minimizes }Q_{A}.
\end{align}
Thus, computing $\operatorname{ker }A$ amounts to minimizing the convex function $Q_{A}$. The gradient is $\nabla Q_{A}=2A$ (recall $A$ is self-adjoint). The continuous gradient descent flow of $Q_{A}$ with parameter $\eta>0$ takes the form
\begin{align}
\dot{x}=-2 \eta A x
\end{align}
By Theorem~\ref{thm:ODE-fact} and the surrounding discussion, the solution is $x(t)=e^{-2 \eta A t} x(0)$ for $x(0)$ an initial condition, and converges exponentially to the orthogonal projection of $x(0)$ onto $\operatorname{ker } A$. The discrete gradient descent of $Q_{A}$ is
\begin{align}
x_{k+1}=x_{k}-2 \eta A x _{k}.
\end{align}

The $k$th Laplacian $\Delta^k$ of a cochain complex $(C^\bullet, d^\bullet)$ 
is manifestly positive semidefinite, and so to it the above considerations apply. This yields the following definitions.

\begin{definition}
\label{def:dirichlet-energy-general-def}
  Let $(C^{\bullet}, d^{\bullet})$ be a cochain complex of Hilbert spaces, giving rise to Laplacians $\Delta^{k}:C^{k} \to C^{k}$. The quadratic form $Q_{k}=\langle \Delta^{k} -, - \rangle$ determined by $\Delta^{k}$,
\begin{align}
Q_{k}(x)=\langle \Delta^{k}x, x \rangle ,
\end{align}
will be called the \textbf{$k$th} \textbf{Dirichlet energy} associated to $\Delta^{k}$. We reserve the unadorned term \textbf{Dirichlet energy} for the vanilla case $k=0$.
\end{definition}
Recalling that $\ker \Delta = \ker d \cap \ker d^*$, $Q_k(x)$ quantifies how far a cochain is from being harmonic. Indeed, \begin{equation}
    Q_k(x)=\langle x, \Delta^{k} x \rangle=\langle x, d^{k-1} d^{^{*}k-1} \rangle+ \langle x, d^{^{*}k }d^{k} \rangle=\underbrace{ \|d^{^{*}k-1} x\|^{2} }_{ \text{distance from} \operatorname{ker }d^{*}}+ \underbrace{ \|d^{k} x\|^{2} }_{ \text{distance from}\operatorname{ker }d }
    \label{eqn:dirichlet-energy-as-distance}
\end{equation}

\begin{definition}
    The flow of gradient descent (continuous or discrete) of the $k$th Dirichlet energy $Q_{\Delta^k}$ is called the \textbf{$k$th order heat diffusion on the space $C^k$}. Usually $k=0$; for this scenario we reserve the unadorned term \textbf{heat diffusion}. Per the above discussion, we have that:

    \begin{enumerate}
        \item Irrespective of initial conditions, the $k$th-order heat diffusion flow converges to a harmonic $k$-cochain $x({\infty}) \in \operatorname{ker}\Delta^{k} \subset C^{k}$.
 \item The spectrum of $\Delta^{k}$ governs the convergence rate. 
    \end{enumerate}
\end{definition}

\begin{remark}
    We have chosen to formulate the present discussion in such a manner that the $k$th heat diffusion minimizes the $k$th Dirichlet energy \textit{essentially by construction}. Sometimes, e.g. in \cite{duta2023sheaf}, there is a sensible notion of Laplacian and Dirichlet energy that do not arise overtly from a cochain complex. In such cases, the existence of a relationship between the two notions is something to be \textit{proven} (this will be the case e.g. for Theorem~\ref{thm:duta-laplacian-kernel-and-global-sections}).
\end{remark}




\subsection{The Hodge Bias: What is Oversmoothing in TDL?}

Many higher-order message passing architectures arise as augmentations of discrete heat diffusion with the Hodge Laplacian $\Delta$ corresponding to a poset $S$. Of course, an immediate question should arise in light of the previous section: the Hodge Laplacian of \textit{what cochain complex}? This will be clear following the discussion of Roos cohomology in Section~\ref{sec:sheaf-cohomology-diffusion}; for our present motivational aims, it will be without much loss of generality to assume $S$ is a simplicial complex and $\Delta$ is the Hodge Laplacian on simplicial cochains. The heat diffusion underlying order-$k$ message passing is governed by the equation \begin{equation}
    \dot X(t) = \Delta^k X(t)
\end{equation}
with solutions (feature trajectories) converging in the limit to the orthogonal projection of the initial condition $X(0)$ onto the harmonic space $\ker \Delta^k$. When $k=0$, the diffusion procedure is characterized by the homogenization of node representations in each connected component of the $0$-skeleton $S^0$ and we recover the oversmoothing phenomenon from Section~\ref{sec:rapid-preliminaries}. Our goal is now to understand what happens when $k>0$, and relate this to a message passing inductive bias just like we related order-zero diffusion to the homophily bias in graphs. We will do so from a few different perspectives. 

\paragraph{The Signal Processing Perspective} 

In the parlance of topological signal processing~\cite{isufi2025topological}, the Hodge Laplacian plays the role of a \textit{shift operator} whose eigendecomposition $\Delta^{k}=\boldsymbol U \boldsymbol \Lambda \boldsymbol U^{\top}$ encodes the Fourier information of the underlying complex. Specifically, one views $\{ \{ \lambda_{i} \} \}=\text{diag}(\boldsymbol \Lambda)$ as the \textbf{frequency spectrum} corresponding to the underlying complex and $\boldsymbol U^{\top}$ as the \textbf{(simplicial) Fourier transform} which catalogs how much of each frequency $\lambda_{i}$ is present in $\boldsymbol x$ via $\hat{\boldsymbol x}=\boldsymbol U^{\top}\boldsymbol x$. The Dirichlet energy $Q_{k}(\boldsymbol u_{i})$ of each eigenvector (\textbf{Fourier mode}) $\boldsymbol u_{i}$ equals the frequency $\lambda_{i}$ it indexes, since $\langle \boldsymbol u, \Delta^{k}\boldsymbol u \rangle=\lambda_{i} \langle \boldsymbol u, \boldsymbol u \rangle=\lambda$.  The Dirichlet energy of a general $k$-cochain $\boldsymbol x$ is therefore $$Q_{k}(\boldsymbol  x)=Q_{k}(\boldsymbol  U \hat{\boldsymbol  x})=Q_{k}\left( \sum_{i}\hat{x}_{i} \boldsymbol  u_{i} \right)=\sum_{i} \lambda_{i}\hat{x}_{i}^{2} .$$
In other words, signals with higher frequency content are `less smooth', where by `smooth' we mean `small distance to being harmonic'. Given then a signal $\boldsymbol x=\boldsymbol x(0)$ on the $k$-simplices of $S$, the diffusion trajectory $\boldsymbol x(t)$ of $\boldsymbol x(0)$ looks like (Theorem~\ref{thm:ODE-fact})
$$\boldsymbol  x(t)=\boldsymbol  Ue ^{\boldsymbol  \Lambda t}\boldsymbol  U^{\top}\boldsymbol  x(0)=\boldsymbol  U\big( \text{diag}(e^{-\lambda_{i}t} \hat{x}_{i})\big )=\sum_{i} e ^{-\lambda_{i} t}\hat{x}_{i} \boldsymbol  u_{i},$$
so that every nonzero frequency component decays exponentially, and higher-frequency components decay faster. The Dirichlet energy descends as $$Q_{k}\big( \boldsymbol  x (t) \big)=\sum_{i} \lambda_{i} (e^{-\lambda _{i}t} \hat{x}_{i})^{2}=\sum_{i} \lambda_{i}e^{-2\lambda _{i} t} \hat{x}_{i};$$
as $t \to \infty$, $\boldsymbol x(t)$ converges to the orthogonal projection of $\boldsymbol x(0)$ onto the harmonic (DC) space $\operatorname{ker }\Delta^{k}$. The conclusion is that heat diffusion acts as a dynamic low-pass filter which increasingly dampens high frequencies until only the DC component of $\boldsymbol x(0)$ remains. The idea that `true' signals consist of low-frequency content corrupted by high-frequency noise is a powerful prior that reprises unremittingly throughout the signal processing canon. Message passing via augmented low-pass filtering with the Hodge Laplacian reflects such an inductive bias.

Importantly, there is no canonical notion of a shift operator on a graph (much less on a simplicial complex or more general poset), and there is therefore no canonical notion of frequency in our setting. To probe the high-frequency content which Hodge diffusion filters away (and the low-frequency content it prioritizes), we appeal to the Hodge Decomposition (Theorem~\ref{thm:discrete-Hodge-Theorem}). Every initial signal $\boldsymbol{x} \in C^k$ on the $k$-simplices of $S$ may be orthogonally decomposed as $\boldsymbol{x}=\boldsymbol{x}_h + \boldsymbol{x}_c + \boldsymbol{x}_g$ for $\boldsymbol{x}_h \in \ker \Delta^k$ the DC component (harmonic, zero frequency), $\boldsymbol{x}_c \in \operatorname{im}d^{^* k}$ the `curl component', and $\boldsymbol{x}_g \in \operatorname{im}d^{k-1}$ the `gradient component'. Hodge diffusion attenuates these latter components, reflecting the Hodge bias that they are noisy contributions to a `true' harmonic signal. 

Note that each simple non-harmonic Fourier mode $\boldsymbol{u}_i$ is contained in exactly one of the curl space or gradient space. Combining with Equation~\ref{eqn:dirichlet-energy-as-distance} and invoking the cochain condition $d^2=0$, we have

\begin{equation} \lambda_i = Q_k(\boldsymbol{u}_i)=
\tikzmarknode{term1}{\|d^{^* k-1} \boldsymbol{u}_i\|^2} + \tikzmarknode{term2}{\|d^k \boldsymbol{u}_i\|^2},
\label{eqn:eigenvalue-as-dirichlet-energy-of-eigenvector}
\end{equation}
\begin{tikzpicture}[overlay, remember picture]

    \coordinate (p1_top) at ([yshift=-0.5ex]term1.south);
    \coordinate (p2_top) at ([yshift=-0.5ex]term2.south);

    \coordinate (p1_corner) at ([yshift=-2ex]term1.south);
    \coordinate (p2_corner) at (p1_corner -| p2_top);

    \draw (p1_top) -- (p1_corner) -- (p2_corner) -- (p2_top);

    \node[anchor=north, font=\small] at ($(p1_corner)!0.5!(p2_corner)$) [below=1pt] {one of these is zero};
\end{tikzpicture}

a fact that becomes useful when attempting quantify what `high frequency' explicitly looks like for different eigenvectors (we will see an example in the next section). 







\paragraph{The Physical Perspective}
To understand more explicitly the nature of the dynamic low-pass filtering enacted by Hodge diffusion, we draw upon the intuitions and operations outlined in Section~\ref{sec:combinatorial-hodge-heat-diff}. Assume $k=1$. Then $d^0$ aligns with the gradient operation, meaning that for $\boldsymbol{x} \in C^0$ $(\operatorname{grad} \boldsymbol{x})_e=x_\text{tail}-{x}_{\text{tip}}$, recording the `potential difference' of the node signal (`scalar field') across each edge in the $1$-skeleton $S^1$. The gradient space $\operatorname{im}\operatorname{grad}$ is `spanned by stars' given by the images of the node indicator vectors $\boldsymbol{1}_v$ (Figure~\ref{fig:grad_frame}). The node signals with zero gradient are precisely those
which are locally constant. 

On the other hand, $d^{^*1}$ aligns with the cocurl operation $\operatorname{curl}^*$, meaning that (assuming for intuition that $S$ is a triangulated surface) for $\boldsymbol{x} \in C^2$, $\operatorname{curl^*}(\boldsymbol{x})_e=x_{t_L} - x_{t_R}$ where $t_L$ and $t_R$ are the left and right triangles to which $e$ is interior (if such triangles exist). The cocurl operation thus records the signed jump of a face signal across each edge. Still assuming $S$ is a triangulated surface, the curl space $\operatorname{im} \operatorname{curl^*}$ is spanned by boundary indicator signals around each triangle with signs $\pm 1$ according to orientation; these are the images of triangle indicator signals $\boldsymbol{1}_t$ under $d^{*}$ (Figure~\ref{fig:curl_frame}). 

The above considerations give explicit physical description to the sorts of `high frequency' information which Hodge diffusion attenuates. We can give quantitative meaning to the magnitude of a given frequency by examining the behavior of Equation~\ref{eqn:dirichlet-energy-as-distance} in this setting. Recall that a (simple) non-harmonic eigenvector (non-DC Fourier mode) $\boldsymbol{u}$ with eigenvalue (frequency) $\lambda>0$ is contained in exactly one of the curl or gradient spaces. In the former case $\boldsymbol{u}=\boldsymbol{u}_{c}$, Equation~\ref{eqn:eigenvalue-as-dirichlet-energy-of-eigenvector} gives \begin{equation}
    \lambda_i = \|d^{^* 0}\boldsymbol{u}_c\| = \|\operatorname{div}\boldsymbol{u}_c\| = \sum_{v \in V}    \text{Flux}(v)^2 ,
\end{equation}
where $\text{Flux}(v)=\sum_{ e \text{ exiting }v} x_{e} - \sum_{e \text{ entering }v} x_{e} $ for $v \in V$. Thus, we see quantitatively that the presence of larger sources and/or sinks (large magnitudes of flux across nodes) corresponds to higher frequency content in an edge flow. As it kills such gradient eigenvectors, Hodge diffusion homogenizes fluxes across nodes, decreasing the magnitudes of sources and sinks. In the latter case $\boldsymbol{u}=\boldsymbol{u}_g$, Equation~\ref{eqn:eigenvalue-as-dirichlet-energy-of-eigenvector} gives \begin{equation}
    \lambda_i = \|d^{1}\boldsymbol{u}_g\| = \|\operatorname{curl}\boldsymbol{u}_c\| = \sum_{t \in T}    \text{Circulation}(t)^2 ,
\end{equation}
where $\text{Circulation}(t)=x_{[v_{0},v_{1}]}+x_{[v_{1},v_{2}]}-x_{[v_{0},v_{2}]}$ for a triangle $t=[v_0, v_1, v_2] \in T$. Thus, we see quantitatively that the presence of larger amounts of net circulation around triangles in $S$ corresponds to higher frequency content in an edge flow. As it kills such curl eigenvectors, Hodge diffusion smoothens the flow toward parsimony; toward getting from one place to another sans circuitous detours.  


\begin{figure}[H]
  \centering
  \begin{subfigure}[t]{0.45\textwidth}
    \centering
    \includegraphics[width=\linewidth]{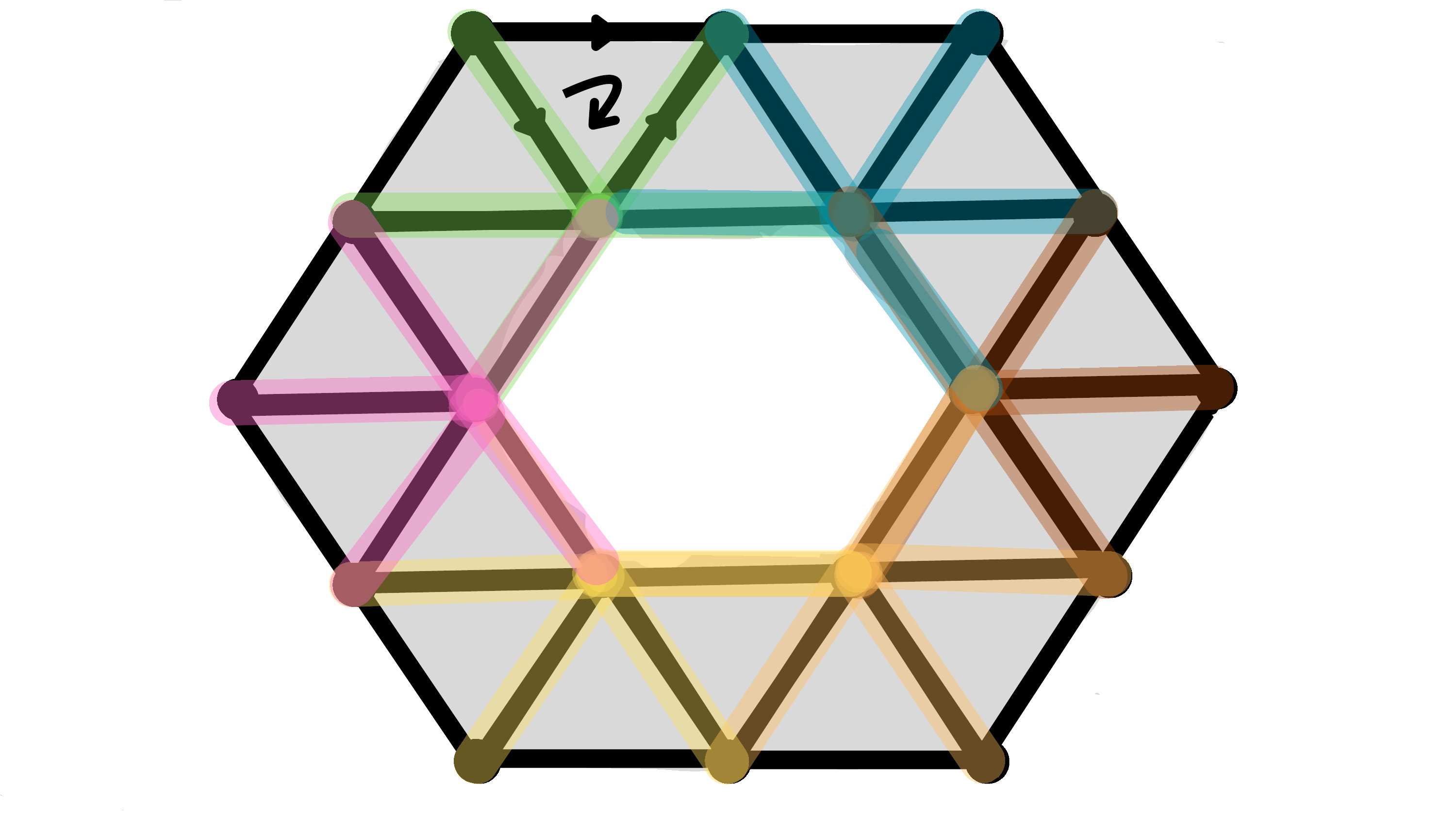}
    \caption{A subset of spanning elements of the gradient space $\operatorname{im}\operatorname{grad}$ on a simplicial complex. Each color corresponds to the image of a basis element $\boldsymbol{1}_v \in C^0(S)$ under $\operatorname{grad}=d^0$ up to signs determined by orientation. Every element of $\operatorname{im}\operatorname{grad}$ then arises as a linear combination of these `smeared node' $1$-cochains.}
    \label{fig:grad_frame}
  \end{subfigure}\hfill
  \begin{subfigure}[t]{0.45\textwidth}
    \centering
    \includegraphics[width=\linewidth]{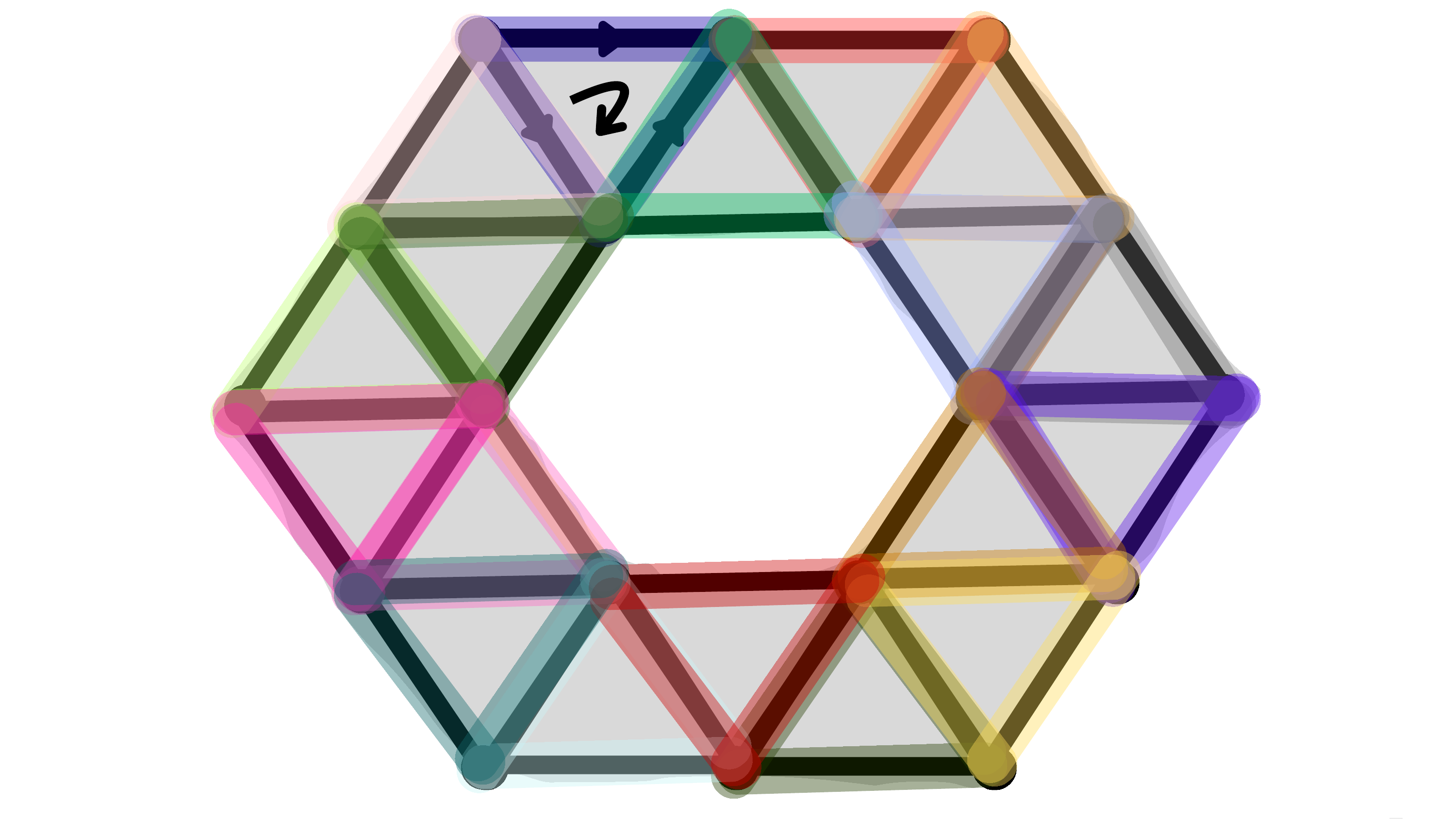}
    \caption{Spanning elements of the curl space $\operatorname{im}\operatorname{curl}^*$ on a simplicial complex. Each color corresponds to the image of a basis element $\boldsymbol{1}_t \in C^2(S)$ under $\operatorname{curl}^*=d^{^* 1}$ up to signs determined by orientation. Every element of $\operatorname{im}\operatorname{curl}^*$ then arises as a linear combination of these `curly' $1$-cochains.}
    \label{fig:curl_frame}
  \end{subfigure}
  \caption{Some spanning elements for the image spaces of the gradient and cocurl operators on a simplicial complex.}
  \label{fig:combined_frames}
\end{figure}

To understand the harmonic space, we recall from the proof of Theorem~\ref{thm:discrete-Hodge-Theorem} that it equals $\ker d^* \cap \ker d$, which in our present notation $d^1=\operatorname{curl}$, $d^{^* 1}=\operatorname{div}$ is \begin{equation}
\ker \Delta^k = \{ \text{edge flows with zero divergence and zero curl} \} =\{\text{irrotational + incompressible edge flows}\}.
\end{equation}
This is the destination to which Hodge $1$-diffusion converges: a smooth flow (in the sense of zero Dirichlet energy) achieved by diffusing away sources and sinks (by killing gradient eigenvectors) and diffusing away net circulations (by killing curl eigenvectors). This lends a physical interpretation to the Hodge bias, at least in certain cases. We observe that, when $k>0$, the `maximally smooth signals' to which Hodge diffusion converges \textit{no longer need to be locally constant} (see Figure~\ref{fig:harmonic-basis-example} for an example of a non-constant harmonic eigenvector). As such, care should be taken when referring to the overpowering asymptotic behavior of Hodge diffusion as `higher-order oversmoothing': while maximal smoothing occurs in a precise, Dirichlet sense, this is distinct from the `equalization of representations' notion of oversmoothing as discussed in the GNN literature. Only when $k=0$ are the two concepts guaranteed to coincide.

\begin{figure}[H]
  \centering
  \begin{subfigure}[t]{0.45\textwidth}
    \centering
    \includegraphics[width=\linewidth]{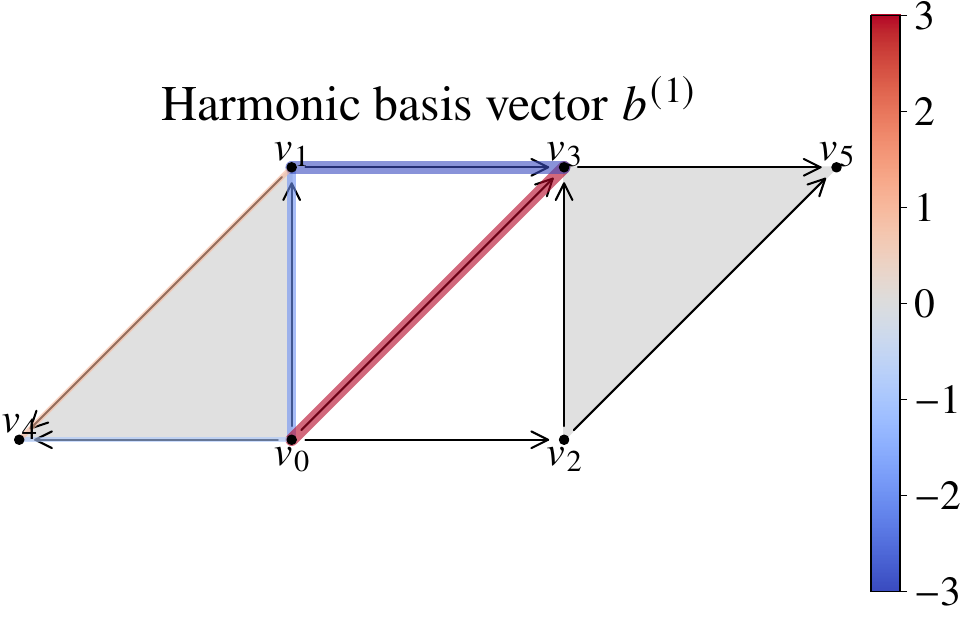}
  \end{subfigure}\hfill
  \begin{subfigure}[t]{0.45\textwidth}
    \centering
    \includegraphics[width=\linewidth]{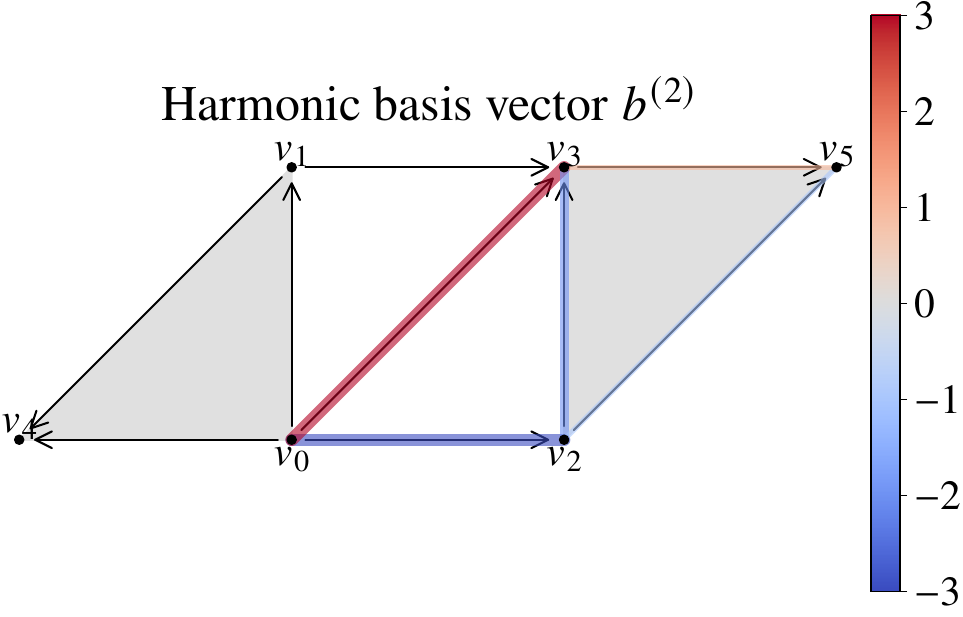}
  \end{subfigure}
  \caption{A harmonic basis for a small simplicial complex.}
  \label{fig:harmonic-basis-example}
\end{figure}

















\paragraph{The Topological Perspective} Per Theorem~\ref{thm:discrete-Hodge-Theorem}, the assignment $\alpha \mapsto [\alpha]$ defines an isomorphism between $\ker \Delta^k$ and the $k$th simplicial cohomology $H^k(S)$ of $S$. As a consequence, Hodge diffusion may be viewed topologically: in the limit, an initial $k$-cochain projects to a representation reflecting its contribution to each $k$-dimensional hole in $S$. In such a sense, the Hodge bias may viewed as the assumption that $k$-dimensional data features on $S$ ($k$-cochains) correlate with $k$-dimensional topological features of $S$ (holes). 

Let us make this claim precise. Write $b_k=\dim H_k(S)=\dim H^k(S)$ for the $k$th Betti number of $S$. Pick $b_k$ cycles $z_1, \dots, z_{\beta_k} \in \ker d_{k}$ descending to a homology basis $[z_1], \dots, [z_k] \in H_k(S)$. These are regarded as encoding the $k$-dimensional holes of $S$. Dualize to a cohomology basis $[\alpha_1], \dots, [\alpha_k] \in H^k(S)$ with each $\alpha_i$ the unique harmonic representative of its class $[\alpha_i]$, $\alpha_i(z_j)=\delta_{ij}$. This yields a basis $\alpha_1, \dots, \alpha_{\beta_k}$ of the harmonic space $\ker \Delta^k$, wherein each $\alpha_i$ corresponds precisely to a hole in $S$ and conversely. (Recovering $\alpha_i$ exactly from $z_i$ can be accomplished e.g. via least squares~\cite{grande_point-level_2025}, but we don't need this for the present theoretical discussion.) For any initial $k$-cochain $\varphi$, its projection onto the harmonic space is then given by $\sum_{i=1}^{\beta_k}$ $\varphi(z_i) \alpha_i$, where the scalar $\varphi(z_i)$ is the (signed) `amount of $\varphi$' contributed to the $i$th hole. The $\sigma$th entry of $\alpha_i$ captures the contribution of the simplex $\sigma$ to the $i$th hole. When $k=0$, the basis elements $\alpha_i$ are of earnest per-connected-component orthogonal (i.e., per $0$-dimensional hole) indicator vectors; when $k>0$ they measure weighted contributions and are (as previously discussed) no longer binary/constant, nor orthogonal. Phrased in terms of the physical perspective provided above, this topological view asserts that curl-free flows without sources or sinks `can only exist around holes'. This fits with intuition from vector calculus and differential geometry.

\subsection{Higher-Order Sheaves in Context}
\label{sec:higher-order-sheaves-in-context}
The perspectives outlined above indicate that the Hodge bias models a specific notion of economicity: that of low-pass flows, without circulation nor flux, correlated with holes in the underlying space. This specializes to the well-studied homophily bias when $k=0$, but diverges from `higher-order homophily' as studied in network science~\cite{sarker2024higher} when $k>0$. Instead, perhaps the most intuitive setting in which the higher-order Hodge bias is leveraged is that of trajectory representation, where it models the assumption that a walk in space rarely backtracks, loops, or fails to exit points it enters~\cite{roddenberry_principled_2021, schaub_random_2020}. Explicitly, a trajectory from node $i_0$ to node $i_m$ on (say) a two-dimensional simplicial complex is viewed as a $1$-(co)chain $[i_0, i_1] + \cdots +[i_{m-1}, i_m]$. However, just as heterophilic data readily provides a setting where the Hodge bias becomes harmful when $k=0$, it is not difficult to imagine instances  where the it is harmful when $k>0$. For example, we know that Hodge diffusion measures the extent to which a trajectory contributes to each hole in the complex, but what of trajectories that do not contribute to any hole at all, i.e., trajectories that live only in the gradient space and/or curl space (Figure~\ref{fig:example-trajs})?  More dramatically, what if $S$ contains no holes at all, so that $\ker \Delta^k=\{0\}$?

\begin{figure}[htbp]
  \centering
  \begin{subfigure}[b]{0.45\textwidth}
    \centering
    \includegraphics[width=\linewidth]{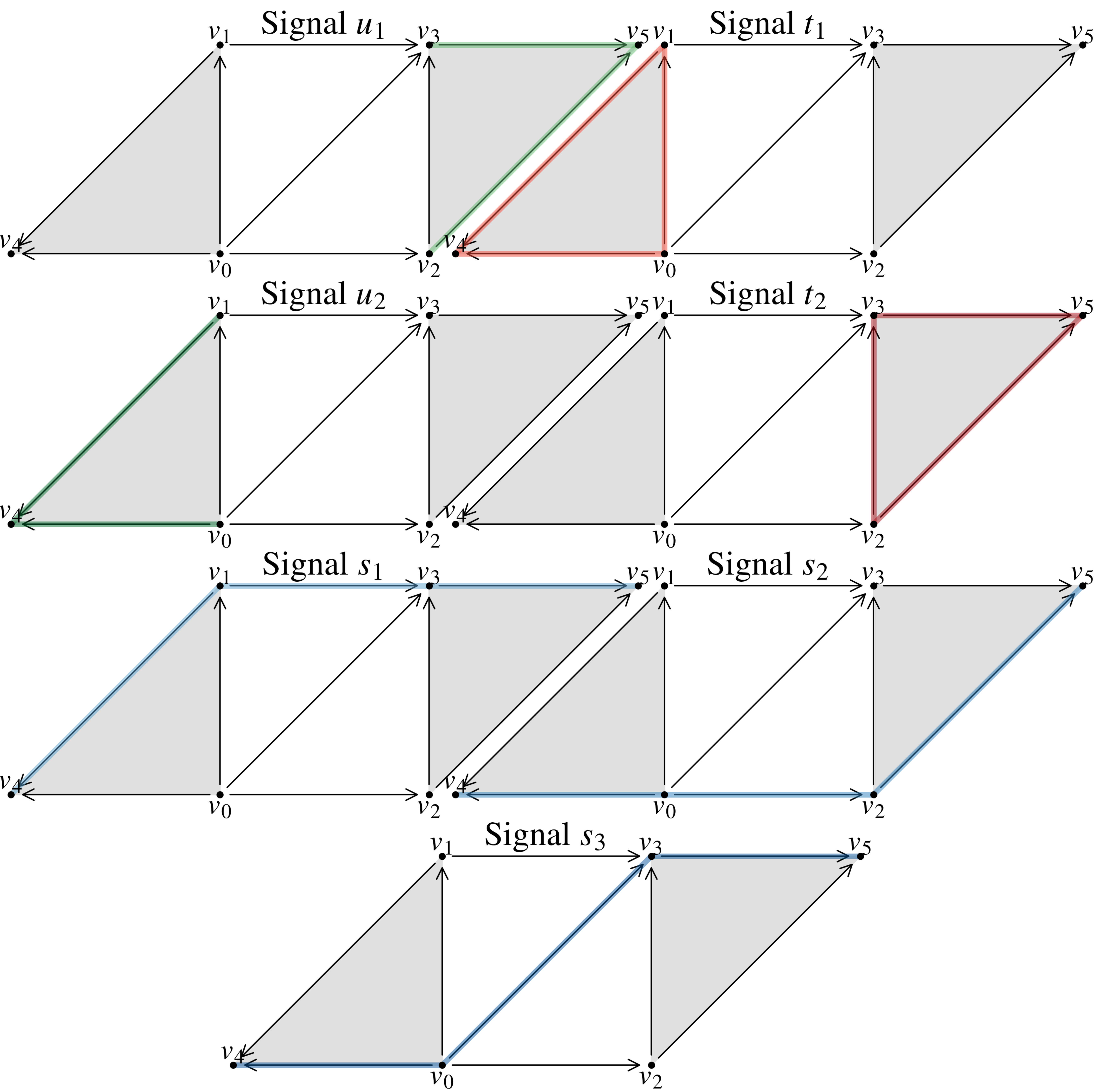}
    \caption{}
    \label{fig:example-trajs-1}
  \end{subfigure}
  \hfill
  \begin{subfigure}[b]{0.45\textwidth}
    \centering
    \includegraphics[width=\linewidth]{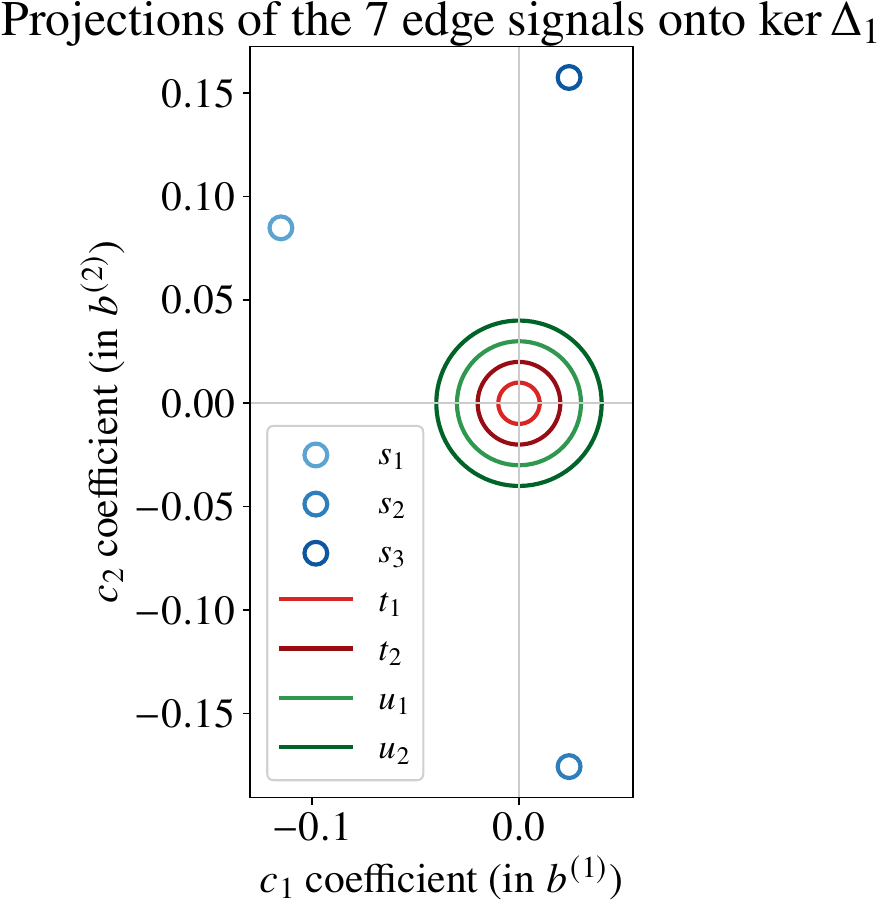}
    \caption{}
    \label{fig:example-trajs-2}
  \end{subfigure}

  \caption{Seven edge flows on the simplicial complex depicted in Figure~\ref{fig:harmonic-basis-example} (\ref{fig:example-trajs-1}), together with their projections onto the harmonic space (\ref{fig:example-trajs-2}). While performing diffusion with the Hodge Laplacian creates a suitable embedding for the \textcolor{blue}{blue} paths, the \textcolor{purple}{red} resp. \textcolor{darkgreen}{green} paths live solely in the curl resp. gradient spaces and therefore are each killed by the diffusion. }
  \label{fig:example-trajs}
\end{figure}
Perhaps a first approach to addressing such concerns would be to perform diffusion with only one of the up- or down- Laplacian. Since $\ker \Delta^k_\text{down}=\ker \Delta^{k} \oplus \operatorname{im }d^{^{*}k}$, performing diffusion with the down-Laplacian damps down-eigenvectors (smoothing out sources and sinks) but not up-eigenvectors (permitting circulation). Similarly,  $\ker \Delta^k_\text{up}=\ker \Delta^{k} \oplus \operatorname{im }d^{k-1}$ and so performing diffusion with the up-Laplacian kills circulations while permitting gradient flows. These constitute perhaps the simplest examples of diffusion with a nonconstant higher-order sheaf. Indeed, define sheaves $F_{\text{up}}$, $F_{\text{down}}$ \ as $F_{\text{up}}(\sigma)=F_{\text{down}}(\sigma)=\mathbb{R}$ for all simplices $\sigma$ and $F_{\text{up}}(v \leq e)=[0]$, $F_{\text{up}}(e \leq t)=[1]$, $F_{\text{down}}(v \leq e)=[1]$, $F_{\text{down}}(e \leq t)=[0]$ for all node-edge incidences $v \leq e$ and edge-triangle incidences $e \leq t$. (Node-triangle restriction maps are defined by composition.) Then, since $d_{F_{\text{up}}}^{0}=0$ and $d^{1}_{F_{\text{up}}}=d^{1}$,    
$$\Delta_{F_{\text{up}}}= d_{F_\text{up}}^{^{*}1} d^{1}_{F_\text{up}}+ d_{F_\text{up}}^{0} d^{^{*}0}_{F_{\text{up}}} =\Delta_{\text{up}}$$ 
and similarly $\Delta_{F_{\text{down}}}=\Delta_{\text{down}}$. Of course, one would expect in practice that the extent to which the Hodge bias is valid varies heterogeneously across the underlying complex. For instance, the underlying complex might approximate an ocean, as investigated in \cite{schaub_random_2020} (Figure~\ref{fig:schaub-ocean-drifters}), in which case e.g. the `curliness' or `harmonic-ness' of, say, a buoy's trajectory would ostensibly depend on the region of the ocean in which it is located. For example, if it is in a gyre, then circulation is an important signal property to preserve and so the Hodge bias is not appropriate; $F_{\text{up}}$ would be better. This can be modeled using a sheaf $F$ obtained by setting stalks as $F(\sigma)=\mathbb{R}$ for all simplices with restriction maps given by $F(v \leq e)=[1]$ for all node-edge incidences and $$F(e \leq t)=\begin{cases}
0 & t \text{ belongs to a gyre} \\
1 & \text{otherwise.}
\end{cases}$$
(Node-triangle restrictions are defined by composition.) If we let $S_+ \subset S$ be the set of triangles outside of gyres and $R:C^2(S) \to C^2(S)$ be the diagonal projector ($R^2=R=R^\top$) that keeps only coordinates in $S_+$, then the modified coboundary is $d_F^1=Rd^1$ and so the sheaf $1$-Laplacian is \begin{equation}
    \Delta_F^1 = d^0 d^{^* 0}+d^{1^*}R d^1.
\end{equation}
The $1$st-order sheaf diffusion thus converges to the orthogonal projection of an initial cochain $x \in C^1$ onto \begin{equation}
    \ker \Delta^1_F = \ker d_F^1 \cap \ker d^{^* 0}_F = \{x \in C^1(S): d^{^*0}x =0 \text{ and } Rd^1 x=0\}.
\end{equation} 
We see that the limit permits only edge flows that have no sources/sinks and no circulation on triangles in $S_+$, while circulations inside gyre regions $S-S_+$ are preserved. 

\begin{figure}
    \centering
    
\end{figure}

\begin{figure}[htbp]
  \centering
  \begin{subfigure}[t]{0.45\textwidth}
    \centering
    \includegraphics[width=\linewidth]{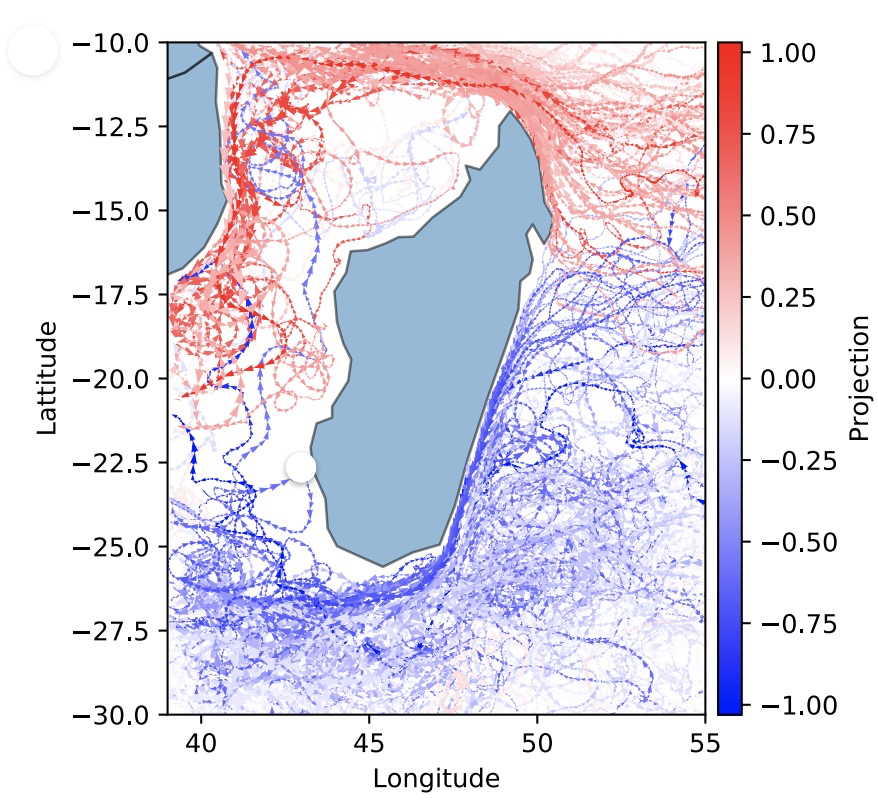}
    \caption{ }
    \label{fig:schaub-ocean-drifters}
  \end{subfigure}\hfill%
  \begin{subfigure}[t]{0.45\textwidth}
    \centering
    \raisebox{.7cm}{
    \includegraphics[width=\linewidth]{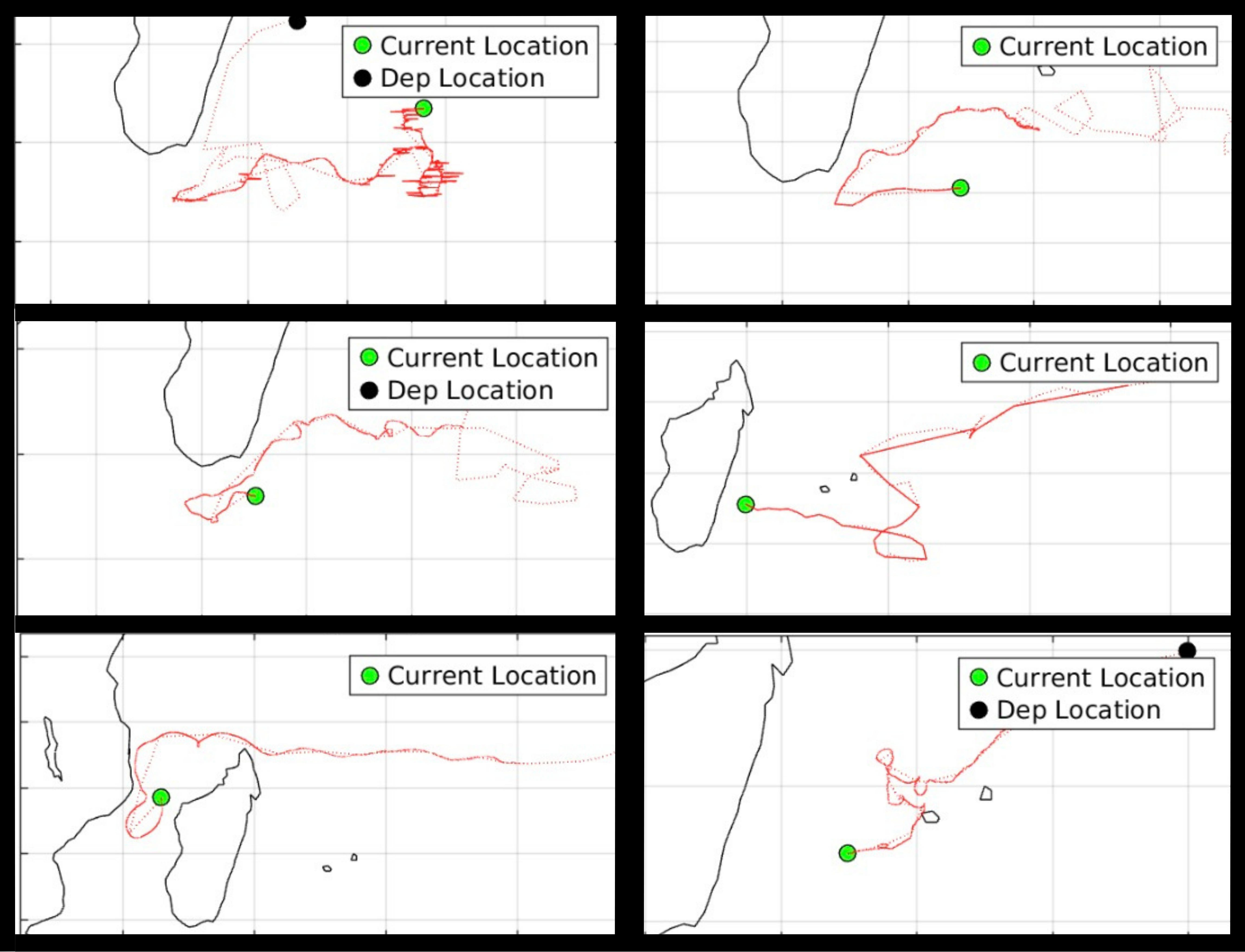}}
    \caption{}
    \label{fig:ocean-buoys-source}
  \end{subfigure}
  \caption{(\ref{fig:schaub-ocean-drifters}): Projection of buoy trajectory data around Madagascar onto the first Hodge-harmonic component of a simplicial complex obtained by discretizing the ocean surface (holes correspond to landmasses). Taken from Schaub et al.~\cite{schaub_random_2020}. (\ref{fig:ocean-buoys-source}): Away from Madagascar, trajectories are empirically less likely to obey the Hodge bias.}
  \label{fig:buoys}
\end{figure}

A straightforward extension of this reasoning shows that weights on a simplicial complex define a sheaf. The sheaf formalism further comes into play when some stalks are multidimensional and restriction maps provide coupling between the dimensions. To continue our running ocean trajectories example, one might imagine incorporating some temporal prior: maybe a certain region has been curl-y of late due to a storm, but is usually quite harmonic-y. Taking edge-stalks to be $\mathbb{R}^{2}$ and triangle-stalks to be $\mathbb{R}$, a restriction map in this region might look like $$F(e \leq t)=\begin{bmatrix}
\overbrace{ 0.1  }^{ \text{last checked} }& \overbrace{ 0.8 }^{ \text{on average} }
\end{bmatrix}$$
where the two values may be interpreted as `to what extent should curliness be treated as noise and hence smoothened away?'. One can lift trajectories to $\mathbb{R}^{2}$ based on how much prioritization the temporal prior should initially be given, e.g. $$\begin{bmatrix}
\overbrace{ 0.1  }^{ \text{last checked} }& \overbrace{ 0.8 }^{ \text{on average} } 
\end{bmatrix} \begin{bmatrix}
1 \\ 1
\end{bmatrix}=\text{last checked + on average}.$$
for equal weight, $$\begin{bmatrix}
\overbrace{ 0.1  }^{ \text{last checked} }& \overbrace{ 0.8 }^{ \text{on average} } 
\end{bmatrix} \begin{bmatrix}
\alpha \\ \beta
\end{bmatrix}=\alpha\text{(last checked) + $\beta$ (on average)}$$
for some other choice. We will revisit these examples in Section~\ref{sec:learning-with-sheaves}.

\section{Cohomology and Diffusion for Sheaves on Posets}
\label{sec:sheaf-cohomology-diffusion}
With the previous sections' machinery in hand, we return to the setting of sheaves supported on posets, and in particular to studying the diffusion processes to which they give rise. Per the identifications in the dictionary (\ref{eqn:diff-dictionary}), we know that studying sheaf diffusion should be closely tied to studying sheaf cohomology. This is the subject to which we now turn. As has been a recurring theme in this document, our discrete setting simplifies the general discussion considerably. Throughout this section, we assume $\mathsf{D}$ is an abelian category with enough injectives (for the relevant definitions, see \cite{weibel1994introduction}). This is true for all categories discussed so far; in particular, every object in $\mathbb{R}\mathsf{Vect}$ is injective.

\subsection{Sheaf Cohomology}

If $0 \to \mathcal{F}' \to \mathcal{F} \to \mathcal{F}''\to 0$ is a short exact sequence of sheaves,  then left-exactness of the global sections functor $\Gamma(X, -)$ implies 
\begin{equation}
0 \to \Gamma(X, \mathcal{F}') \to \Gamma(X, \mathcal{F}) \to \Gamma(X, \mathcal{F}'')
\end{equation}
is exact. But we don't know anything about surjectivity on the right. In a very broad sense, the goal of sheaf cohomology is to extend this sequence to better understand $\Gamma(X, \mathcal{F}'')$. 
The language of right derived functors~\cite{hartshorne2013algebraic} is the `right way to derive sheaf cohomology'. As space constraints prevent us from developing said language in this document, we jump to the punchline.  

\begin{definition}
\label{def:sheaf-cohomology}
For $i \geq 0$, there exist unique covariant functors $H^{i}(X, -):\mathsf{Shv}_{\mathsf{D}}(X) \to \mathsf{D}$ satisfying the following properties. 

\begin{enumerate}
\item $H^{0}(X, \mathcal{F})=\Gamma(X, \mathcal{F})$ 
 
\item (\textbf{Zig-Zag}) Whenever $0 \to \mathcal{F}' \to \mathcal{F} \to \mathcal{F}'' \to 0$ is exact, there are \textbf{connecting maps} $\delta:H^{i}(X, \mathcal{F}'') \to H^{i+1}(X, \mathcal{F}')$ fitting into a long exact sequence  

\begin{center}
    \begin{tikzcd}
0 \arrow[r] & H^0(X, \mathcal{F}') \arrow[r] & H^0(X, \mathcal{F}) \arrow[r] \arrow[d, phantom, ""{coordinate, name=A}] & H^0(X, \mathcal{F}'') \arrow[dll, Curved=A, "\delta"] \\
& H^1(X, \mathcal{F}') \arrow[r] & H^1(X, \mathcal{F}) \arrow[r] \arrow[d, phantom, ""{coordinate, name=B}] & H^1(X, \mathcal{F}'') \arrow[dll, Curved=B, "\delta"] \\
& H^2(X, \mathcal{F}') \arrow[r] & \cdots & \ 
\end{tikzcd}
\end{center}

\item (\textbf{Naturality}) Given a morphism of short exact sequences

\begin{center}
    \begin{tikzcd}
0 \arrow[r] & \mathcal{F}' \arrow[r] \arrow[d] & \mathcal{F} \arrow[r] \arrow[d] & \mathcal{F}'' \arrow[r] \arrow[d] & 0 \\
0 \arrow[r] & \mathcal{G}' \arrow[r]           & \mathcal{G} \arrow[r]           & \mathcal{G}'' \arrow[r]           & 0
\end{tikzcd}
\end{center}

the following diagram commutes for all $i$: 

\begin{center}
    \begin{tikzcd}
{H^i(X, \mathcal{F}'')} \arrow[d] \arrow[r, "\delta"] & {H^{i+1}(X, \mathcal{F}')} \arrow[d] \\
{H^i(X, \mathcal{G}'')} \arrow[r, "\delta"]           & {H^{i+1}(X, \mathcal{G}')}          
\end{tikzcd}
\end{center}

\item When $\mathcal{F}$ is flasque (i.e., all restriction maps are surjective), we have $H^{i}(X, \mathcal{F})=0$ for all $i>0$.
\end{enumerate}

The object $H^{i}(X,\mathcal{F})$ is called the \textbf{$i$th cohomology of the sheaf $\mathcal{F}$}.

\end{definition}

As is not uncommon for (co)homology theories, a definition suitable e.g. for checking properties is often unsuitable for performing computations. This is the case for Definition~\ref{def:sheaf-cohomology}. For computations, a common strategy is to find a cochain complex whose cohomology agrees with $H^*(X; F)$ as defined above. 

\begin{definition}[Ayzenberg et al.~\cite{ayzenberg2025sheaf}] For $X$ a topological space, say a functor $\mathscr{F}:\mathsf{Shv}_{\mathbb{R}\mathsf{Vect}}(X) \to \mathsf{Cochain}(\mathbb{R}\mathsf{Vect})$ \textbf{honestly computes sheaf cohomology} if $H^{*}\big( \mathscr{F}(F) \big) \cong H^{*}(X, F)$ for all sheaves $F$ on $X$.
\end{definition}

Luckily, for a sheaf $F$ supported on a poset $S$ there is a canonical functor honestly computing $H^*(S, F)$.

\begin{definition}
    A collection $\mathcal{K}$ of nonempty finite subsets of a set $M$ is called an \textbf{(abstract) simplicial complex} if it is stable with respect to inclusion.\footnote{ That is, if $I \in \mathcal{K}$ and $J \subset I$, then $J \in \mathcal{K}$. } Elements $F$ of $\mathcal{K}$ are called \textbf{faces}, or \textbf{simplices}, of \textbf{dimension $|F|-1$}. The \textbf{vertex set} $V=V(\mathcal{K})$ of $\mathcal{K}$ is the union of all its faces. It follows from the definition that every face $F$ is a union of vertices and for every vertex $v \in V$, $\{ v \} \in \mathcal{K}$.\footnote{It is common to enforce $M=V$.} We will always assume $V$ is well-ordered. 
    \end{definition}

\noindent A simplicial complex $\mathcal{K}$ is partially ordered by inclusion; the resulting poset is denoted by $\operatorname{Cells}(\mathcal{K})$ or just (again) by $\mathcal{K}$. One recovers the standard geometrical realization of $\mathcal{K}$ as the Euclidean subspace 
\begin{equation}
|\mathcal{K}|= \bigcup_{F \in \mathcal{K}} \Delta_{F} \subset \mathbb{R}^{|V|},
\end{equation}
where $\Delta_{F}$ is the standard simplex $\Delta_{F}=\operatorname{ConvHull}(e_{i}: i \in F)$. 


\begin{definition}
    Let $S$ be a poset. The \textbf{order complex} of $S$ is the (abstract) simplicial complex $\operatorname{ord}(S)$ whose faces/simplices are chains in $S$.
\begin{figure}
    \centering
    \includegraphics[width=0.5\linewidth]{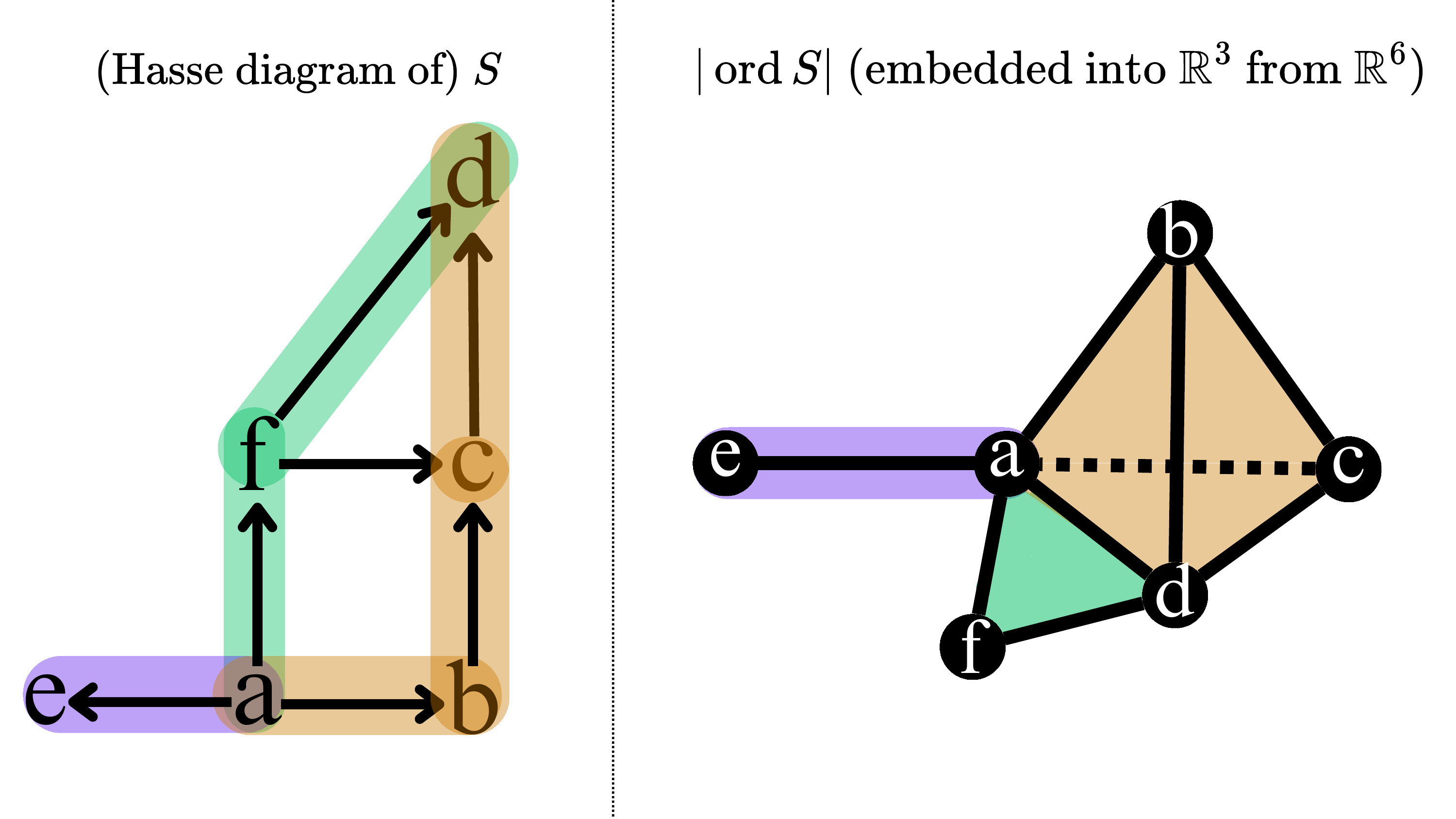}
    \caption{A poset $S$ and (an embedding into $\mathbb{R}^3$ of) the geometric realization of $\operatorname{ord}S$.}
    \label{fig:enter-label}
\end{figure}
\end{definition}

\begin{definition}
\label{def:roos-complex-cohomology}
    If $\sigma$ is a $(j+1)$-dimension \textit{simplex} of $\operatorname{ord}(S)$, i.e., a \textit{chain} $s_{0}<s_{1}<\dots<s_{j+1}$, and $\tau$ is a $j$-dimensional simplex, define the  \textbf{incidence number} 
\begin{equation}
[\sigma:\tau]:= 
\begin{cases}
(-1)^{k} & \tau = \sigma\setminus \{ s_{k} \} \text{ for some } k \ ;\\
0 & \text{otherwise.}
\end{cases}
\end{equation}
The \textbf{Roos complex} of $F$ is the \textit{cochain complex} of \textit{vector spaces} defined via
\begin{equation}
C^{j}_{\operatorname{Roos}}(S; F)=\bigoplus_{\tau \in \operatorname{ord}(S), \  \operatorname{dim }\tau=j}F(\tau_{\max}),
\end{equation}
where $\tau_{\operatorname{max}}=s_{j}$ is the maximal element of the \textit{chain} $\tau=s_{0} \leq \dots \leq s_{j}$. The codifferential  
\begin{align}
d&:C^{j}_{\operatorname{Roos}}(S; F) \to C^{j+1}_{\operatorname{Roos}}(S; F) 
\end{align}
acts on a cochain $\boldsymbol x=(x_{\tau})\in C^{j}_{\operatorname{Roos}}(S ;F)$ via 
\begin{equation}
(d^{j}\boldsymbol  x)_{\sigma}=\sum_{\tau \subsetneq \sigma, \operatorname{dim } \tau=j } [\sigma : \tau] F(\tau_{\operatorname{max}} \leq \sigma _{\operatorname{max}}) x_{\tau}
\end{equation}
for $\sigma \in \operatorname{ord}(S), \operatorname{dim }\sigma=j+1$.\footnote{We verify that $d^2=0$. Let $\boldsymbol x$ be a $j$-cochain. Then the $\beta$th entry of $d^{j+1} d^{j} \boldsymbol x$ is

\begin{align}
\textcolor{LimeGreen}{(}d^{j+1} \textcolor{Thistle}{d^{j} \boldsymbol x}\textcolor{LimeGreen}{)_{\beta}}
&= \textcolor{LimeGreen}{(}  d^{j+1} \big(  \textcolor{Thistle}{(}\textcolor{Thistle}{\sum_{\tau < \sigma, \operatorname{dim } \tau=j} [\sigma :\tau] F(\tau_{\max} \leq \sigma_{\max}) x _{\tau} )_{\{ \sigma  : \operatorname{dim } \sigma=j+1\}}} \big)  \textcolor{LimeGreen}{)_{\beta}}  \\
&=  \textcolor{Thistle}{\sum_{\tau < \sigma, \operatorname{dim } \tau=j} [\sigma: \tau]}  \textcolor{LimeGreen}{(}  d^{j+1}  \big( \textcolor{Thistle}{(F  (\tau_{\max } \leq \sigma_{\max})x_{\tau} \textcolor{Thistle}{)_{\{ \sigma:\operatorname{dim } \sigma=j+1 \}}}}  \big)   \textcolor{LimeGreen}{)_{\beta}} \\
&= \textcolor{Thistle}{\sum_{\tau < \sigma, \operatorname{dim } \tau=j} [\sigma: \tau]}   \textcolor{LimeGreen}{(} d^{j+1}  \big( \textcolor{Thistle}{(F  (\tau_{\max } \leq \sigma_{\max})x_{\tau} \textcolor{Thistle}{)_{\{ \sigma:\operatorname{dim } \sigma=j+1 \}}}}  \big)   \textcolor{LimeGreen}{)_{\beta}} \\
&= \textcolor{Thistle}{\sum_{\tau < \sigma, \operatorname{dim } \tau=j} [\sigma: \tau]}    \sum_{\sigma < \beta, \operatorname{dim } \sigma=j+1}[\beta:\sigma]F(\tau_{\max} \leq \beta_{\max})x_{\tau} \\
&= \textcolor{Thistle}{\sum_{\tau < \sigma, \operatorname{dim } \tau=j}}  \sum_{\sigma < \beta, \operatorname{dim } \sigma=j+1}\textcolor{Thistle}{ [\sigma: \tau]} [\beta : \sigma]   F(\tau_{\max} \leq \beta_{\max})x_{\tau}
\end{align}

\noindent where we have used $F(\sigma_{\max} \leq \beta_{\max}) F(\tau_{\max} \leq \sigma_{\max})=F(\tau_{\max} \leq \beta_{\max})$. The product $[\sigma:\tau][\beta:\sigma]$ is necessarily zero unless $\tau<\sigma<\beta$. The final summation is therefore

\begin{equation}
\sum_{\tau < \beta, \operatorname{dim } \tau=j}  \big(  \sum_{\tau < \sigma < \beta, \operatorname{dim } \sigma=j+1} [\sigma: \tau] [\beta : \sigma] \big) F(\tau_{\max} \leq \beta_{\max} ) x_{\tau}.
\end{equation}

\noindent It is then a general fact~\cite{ayzenberg2025sheaf} that the inner sum is always zero.
}
The \textbf{Roos cohomology} of the sheaf $F$ on $S$ is the cohomology $H^{*}_{\operatorname{Roos}}(S; F):= H^{*}_{\operatorname{Roos}}\big( C^{\bullet}_{\operatorname{Roos}}(S;F) \big)$ of the Roos complex.

\end{definition}


Observe that if $\boldsymbol x \in C^{0}_{\operatorname{Roos}}(S; F)$, then
\begin{equation}
\label{eqn:Roos-kernel-is-global-sections}
d^{0} \boldsymbol  x=0 \iff (d^{0} \boldsymbol  x)_{a \leq b}=0 \text{ for all } a \leq b \iff x_{b}-F(a \leq b) x_{a}=0 \text{ for all } a \leq b.
\end{equation}
In other words, $\operatorname{ker }d^{0}=H^{0}_{\operatorname{Roos}}( S; F)$ is exactly the space $\Gamma(S; F)$ of global sections. In fact, the two cohomologies agree in all degrees:



\begin{theorem}
\label{thm:Roos-complex-computes-sheaf-cohomology}
    The Roos cohomology $H^{*}_{\operatorname{Roos}}(S; F)$ of a sheaf $F$ on a poset $S$ is isomorphic to the cohomology $H^*(X_S; F)$. That is, $H^*_{\operatorname{Roos}}$ honestly computes cohomology.
\end{theorem}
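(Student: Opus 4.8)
The plan is to verify that the functors $H^{*}_{\operatorname{Roos}}(S;-)$ satisfy the four properties that, by Definition~\ref{def:sheaf-cohomology}, characterize sheaf cohomology $H^{*}(X_S,-)$ uniquely; uniqueness then forces a natural isomorphism $H^{*}_{\operatorname{Roos}}(S;-)\cong H^{*}(X_S,-)$, which is exactly the assertion that the Roos complex honestly computes sheaf cohomology. It is worth bearing in mind the reformulation supplied by Theorem~\ref{thm:sheaves-diagrams-category-equivalence}: the global sections of a diagram $F:S\to\mathbb{R}\mathsf{Vect}$ are the inverse limit $\varprojlim_{S}F$, so $H^{*}(X_S,-)$ is just the higher inverse limit $\mathrm{R}^{*}\varprojlim_{S}$ over the finite poset $S$, and the theorem amounts to the (classical) fact that the normalized bar/cobar complex over a poset computes $\mathrm{R}^{*}\varprojlim$.

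Property (1) is literally Equation~\ref{eqn:Roos-kernel-is-global-sections}: $H^{0}_{\operatorname{Roos}}(S;F)=\ker d^{0}=\Gamma(S;F)=\Gamma(X_S,F)$. For properties (2) and (3) the key point is that $C^{\bullet}_{\operatorname{Roos}}(S;-)$ is an \emph{exact} functor from $\mathsf{Shv}_{\mathbb{R}\mathsf{Vect}}(X_S)$ to cochain complexes of real vector spaces. Indeed, in each degree $j$, $C^{j}_{\operatorname{Roos}}(S;F)=\bigoplus_{\dim\tau=j}F(\tau_{\max})$ is a finite direct sum of the evaluation functors $F\mapsto F(\tau_{\max})$, and evaluation at a point of a sheaf on a poset is exact because exactness of sheaf morphisms on a poset is tested stalkwise (the last remarks of Section~\ref{sec:sheaves}); a morphism $F\to G$ induces a chain map because its components are natural with respect to the restriction maps $F(\tau_{\max}\leq\sigma_{\max})$. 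Hence a short exact sequence $0\to\mathcal F'\to\mathcal F\to\mathcal F''\to0$ produces a short exact sequence of Roos complexes, whose associated long exact cohomology sequence furnishes the connecting maps $\delta$ of property (2), and naturality of the snake-lemma connecting homomorphism gives property (3).

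The substantive step is property (4): if (the sheaf corresponding to) $F$ is flasque then $H^{j}_{\operatorname{Roos}}(S;F)=0$ for all $j>0$. I would prove this by induction on $\lvert S\rvert$. Choose a minimal element $m\in S$, put $S'=S\setminus\{m\}$ and $S_{>m}=\{s:s>m\}$; since $m$ is minimal, every simplex of $\operatorname{ord}(S)$ either avoids $m$ or has $m$ as its least vertex. The cochains supported on simplices avoiding $m$ form a quotient complex identified with $C^{\bullet}_{\operatorname{Roos}}(S';F|_{S'})$, and the kernel subcomplex $D^{\bullet}$ (cochains on chains through $m$) is, up to a degree shift, the mapping cone of the natural map $F(m)\to C^{\bullet}_{\operatorname{Roos}}(S_{>m};F|_{S_{>m}})$ induced by the restriction maps $F(m\leq s)$. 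Unpacking the flasqueness hypothesis in the poset dictionary — it says the maps $\varprojlim_{U}F\to\varprojlim_{V}F$ between limits over up-sets are surjective — together with the induction hypothesis applied to $S_{>m}$ and $S'$, one controls the cohomology of $D^{\bullet}$ and then of $C^{\bullet}_{\operatorname{Roos}}(S;F)$ via the long exact sequence of $0\to D^{\bullet}\to C^{\bullet}_{\operatorname{Roos}}(S;F)\to C^{\bullet}_{\operatorname{Roos}}(S';F|_{S'})\to0$. This acyclicity computation — and in particular pinning down precisely how the sheaf-flasqueness (rather than mere surjectivity of the diagram's restriction maps) enters — is the one genuine obstacle; everything else is formal homological algebra.

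Having checked (1)–(4), uniqueness in Definition~\ref{def:sheaf-cohomology} yields the desired natural isomorphism $H^{*}_{\operatorname{Roos}}(S;-)\cong H^{*}(X_S,-)$, proving the theorem. An alternative packaging of the same content, which may be cleaner to write, avoids the inductive acyclicity argument entirely: exhibit the terms $C^{j}_{\operatorname{Roos}}(S;F)$ as the global sections of a flasque resolution of $F$ — namely the (normalized) Godement/cobar resolution on $X_S$, whose degree-$n$ sheaf has sections $\bigoplus_{s_{0}<\cdots<s_{n}\text{ in }U}F(s_{n})$ over an open $U$ and is flasque by a standard argument — so that $H^{*}(X_S,F)=H^{*}\bigl(\Gamma(X_S,\mathcal C^{\bullet})\bigr)=H^{*}_{\operatorname{Roos}}(S;F)$ since flasque sheaves are $\Gamma$-acyclic (property (4) of Definition~\ref{def:sheaf-cohomology}). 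Either way, one may also simply invoke the corresponding statement for $\mathrm{R}^{*}\varprojlim$ over a small category from \cite{weibel1994introduction} or its sheaf-theoretic form in \cite{ayzenberg2025sheaf}, together with the identification $\Gamma(X_S,-)=\varprojlim_{S}$ from Theorem~\ref{thm:sheaves-diagrams-category-equivalence}.
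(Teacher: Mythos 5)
Your overall strategy is sound, but be aware that the paper does not actually prove this theorem: it verifies only the degree-zero identification $H^{0}_{\operatorname{Roos}}(S;F)=\Gamma(S;F)$ via Equation~\ref{eqn:Roos-kernel-is-global-sections} and explicitly defers all higher degrees to \cite{ayzenberg2025sheaf}, so your proposal is doing strictly more work than the source text. Within your proposal, properties (1)--(3) of Definition~\ref{def:sheaf-cohomology} are handled correctly: degreewise the Roos complex is a finite direct sum of stalk evaluations, evaluation is exact because exactness of sheaves on $X_S$ is tested stalkwise (Theorem~\ref{thm:testing-on-stalks} together with Theorem~\ref{thm:sheaves-diagrams-category-equivalence}), and the snake lemma supplies the connecting maps and their naturality. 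The genuine soft spot is exactly the one you flag yourself: the inductive argument for property (4) is not carried out, and the translation of sheaf-flasqueness (surjectivity of $\varprojlim_{U}F\to\varprojlim_{V}F$ for upper sets $V\subset U$, which is \emph{not} surjectivity of the diagram maps $F(s\le s')$) into control of your mapping cone $D^{\bullet}$ is where all the content lives; as written, that route is a plan rather than a proof. Your alternative packaging, however, does close the gap and is essentially the argument in the reference the paper cites: the degree-$n$ term of the cobar resolution is the finite direct sum $\bigoplus_{s_{0}<\cdots<s_{n}}(i_{s_{0}})_{*}F(s_{n})$ of skyscraper sheaves, which is flasque because each skyscraper is and the restriction maps are coordinate projections; exactness of $0\to F\to\mathcal{C}^{0}\to\mathcal{C}^{1}\to\cdots$ is checked on stalks, where the stalk at $x$ is the augmented Roos complex of $S_{\ge x}$ and the contracting homotopy prepends the minimum $x$ to each chain (well-defined on coefficients since prepending does not change $\tau_{\max}$), and global sections of this resolution recover $C^{\bullet}_{\operatorname{Roos}}(S;F)$ on the nose. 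Combined with the standard dimension-shifting consequence of axioms (2) and (4) — flasque resolutions compute $H^{*}$ — this yields a complete proof; I would lead with that version and drop the induction on $\lvert S\rvert$.
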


Just as we did not have the tools to justify Definition~\ref{def:sheaf-cohomology}, we will not have the tools to prove Theorem~\ref{thm:Roos-complex-computes-sheaf-cohomology} in its full generality. Such a proof may be found e.g. in~\cite{ayzenberg2025sheaf}. Do note that we \textit{have} proven Theorem~\ref{thm:Roos-complex-computes-sheaf-cohomology} in grading zero, which will in practice be all we need from Section~\ref{sec:linear-separation-power-sheaf-diffusion} onward.

\begin{example}
\label{ex:Roos-complex-graph-example}
    Let $S=\{ u, v , e : u \leq e, v \leq e \}$, and let $F$ be a diagram/sheaf on $S$. Viewing $S$ as a graph $S=(\{ u, v \}, \{ e \})$, depict $F$ as follows:

    \begin{figure}[H]
    \centering
    \includegraphics[width=0.5\linewidth]{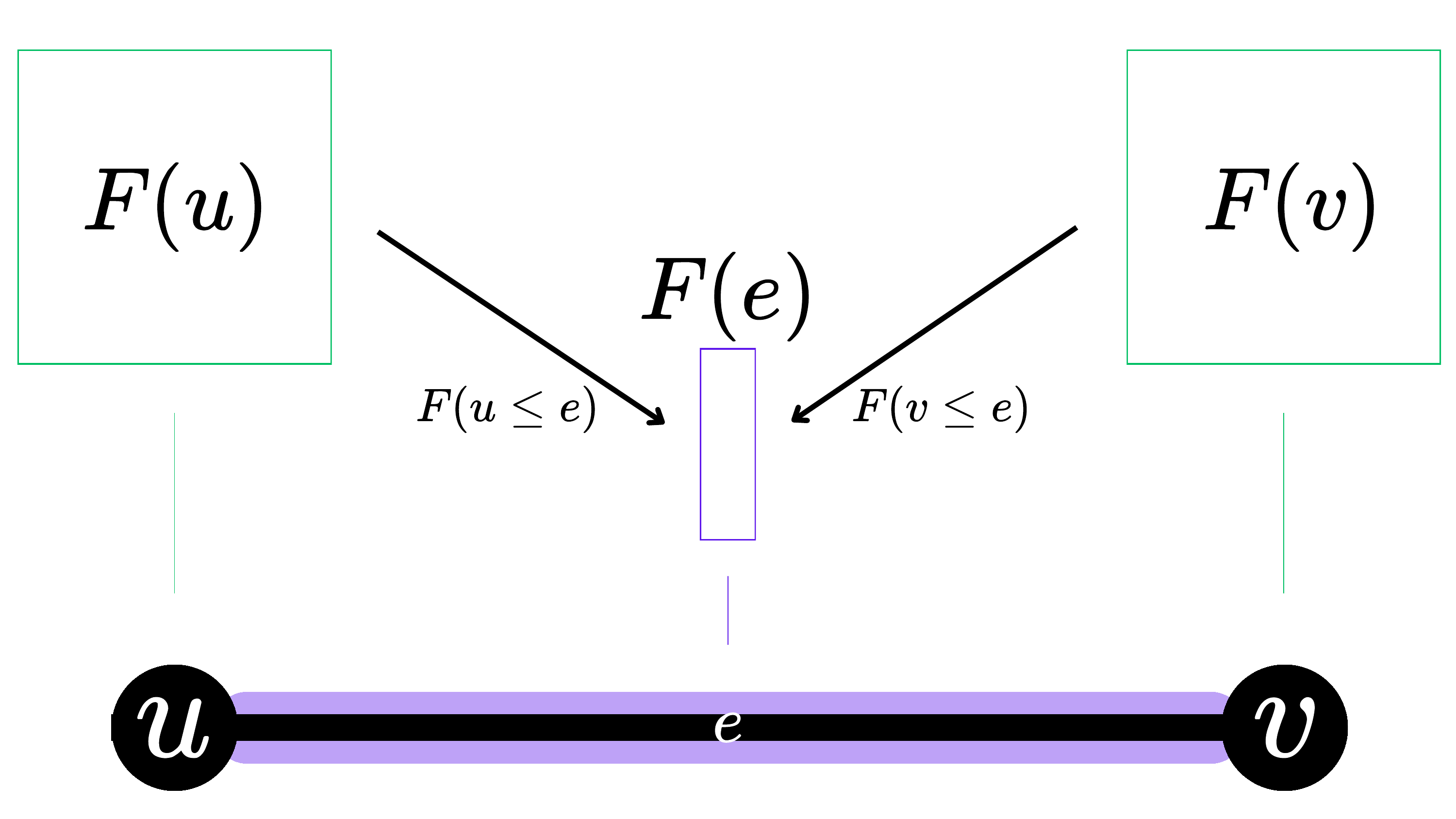}
    \caption{Depiction of the diagram defined in Example~\ref{ex:Roos-complex-graph-example}.}
    \label{fig:simple-diagram-roos-example}
\end{figure}

The Roos cochain spaces are

\begin{equation}C^{j}_{\operatorname{Roos}}(S; F)=
    \begin{cases}
    F(u) \oplus F(e) \oplus F(v)  & j=0\\
     F(e) \oplus F(e) & j =1\\
     0 & \text{otherwise.}
\end{cases}
\end{equation}

\begin{figure}[H]
    \centering
    \includegraphics[width=0.5\linewidth]{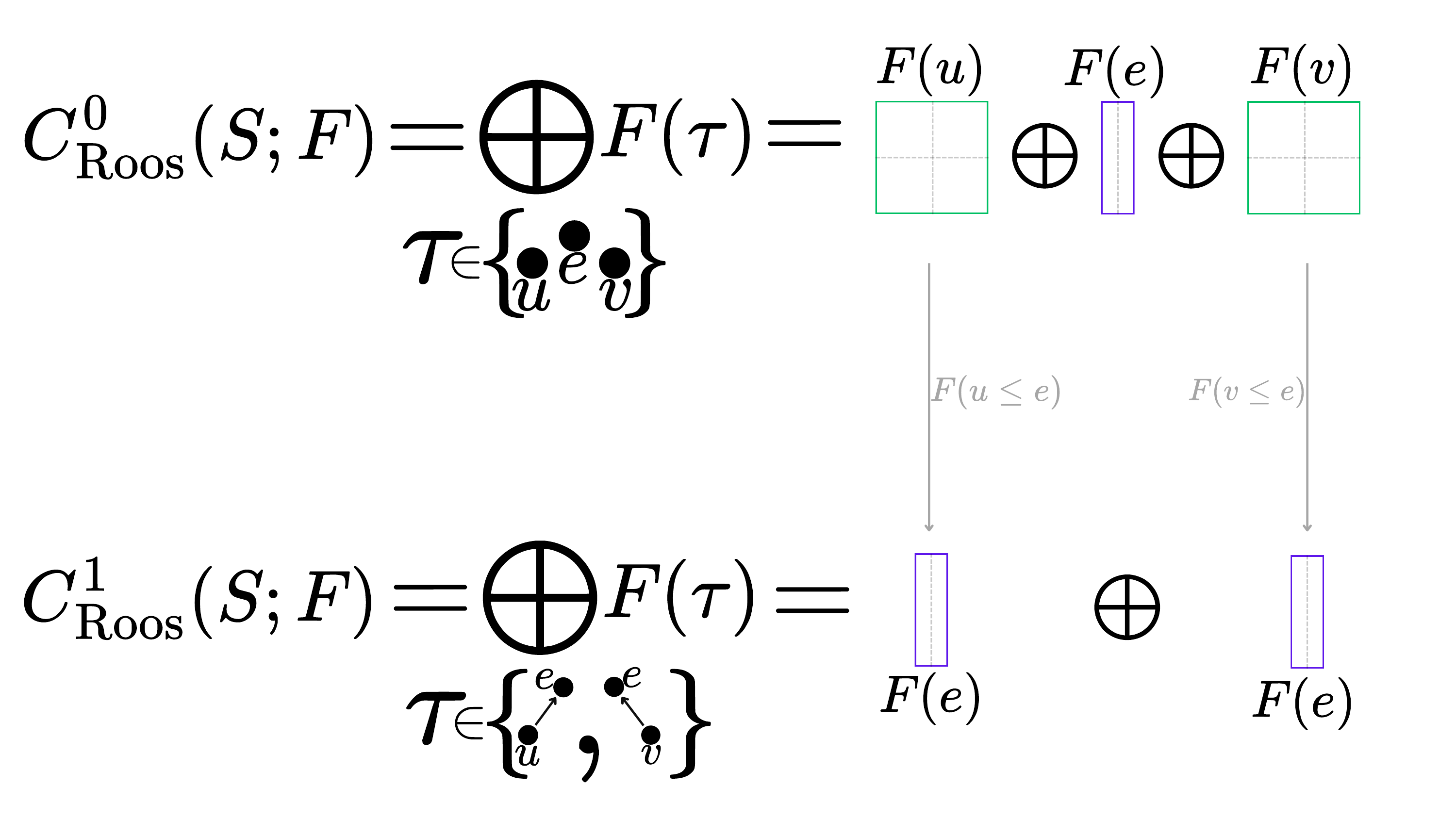}
    \label{fig:enter-label}
\caption{Roos cochain spaces corresponding to the diagram depicted in Figure~\ref{fig:simple-diagram-roos-example}.}
\end{figure}

The incidence numbers are

\begin{align}
[ u < e : u ]& = -1 \\
[v < e : v] &= -1 \\
[u < e : e] &= 1 \\
[v < e : e] &= 1.
\end{align}

If $\boldsymbol x=(x_{u}, x_{e}, x_{v}) \in C^{0}_{\operatorname{Roos}}(S ;F)$, then $(d^{0} \boldsymbol  x)_{u \leq e} = [u < e : u] F(u \leq e)x_{u} + [u< e: e]F(e \leq e)x_{e}=-F(u \leq e)x_{u} + x_{e}$, and similarly $(d^{0}\boldsymbol x)_{v \leq e}=-F(v \leq e)+x_{e}$. Hence
\begin{equation}
d^{0} \boldsymbol  x= (x_{e} - F_{u \leq e}x_{u}, x_{e}- F_{v \leq e} x_{v}).
\end{equation}

\begin{figure}[H]
    \centering
    \includegraphics[width=\linewidth]{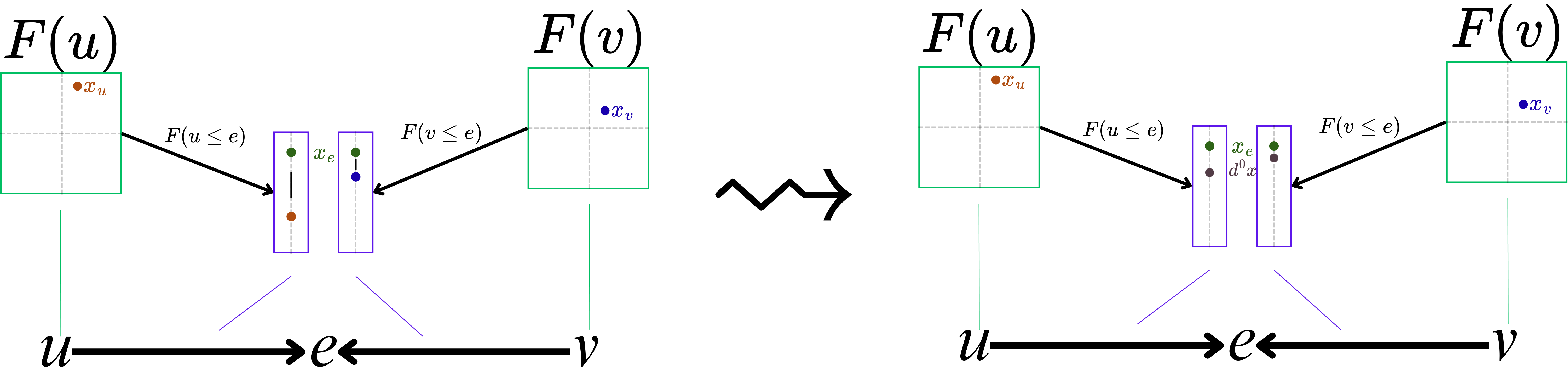}
    \label{fig:enter-label}
    \caption{The Roos codifferential corresponding to the diagram depicted in Figure~\ref{fig:simple-diagram-roos-example}.}
\end{figure}

\end{example}

Theorem~\ref{thm:Roos-complex-computes-sheaf-cohomology} shows that Roos complexes provide a universal formalism for computing the cohomology of sheaves supported on posets. In Example~\ref{ex:Roos-complex-graph-example}, however, there was redundancy: the edge $e$ gave rise to a summand grading zero despite `being one-dimensional', and each relation $u \leq e$, $v \leq e$ gave rise to an individual summand in grading one, even though the two relations regard the same edge $e$. Loosely speaking, the Roos functor `does not know that $e$ is an edge'. Should the posets in question in fact correspond to \textit{cell complexes} as in Example~\ref{ex:Roos-complex-graph-example}, a computationally superior (but still equivalent) cohomology framework emerges: that of \textit{cellular sheaf cohomology} \cite{curry2014sheaves, ayzenberg2025sheaf}. 

\begin{definition}
    \textbf{Definition.} (graded poset)\\
A \textbf{grading}, or \textbf{rank function}, on a (finite) poset $S$ is a function $\operatorname{rk}:S \to \mathbb{Z}$ satisfying the following:
\begin{enumerate}
\item (Strict monotonicity) If $s_{1}<s_{2}$ then $\operatorname{rk }s_{1} < \operatorname{rk }s_{2}$;
\item (Respects covering relation) If $s_{1}<s_{2}$ is a covering relation, then $\operatorname{rk }s_{2}=\operatorname{rk }s_{1}+1$;
\item If $s$ is minimal, then $\operatorname{rk }s=0$.
\end{enumerate}
We call the pair $(S, \operatorname{rk})$ a \textbf{graded poset}.
These properties turn out to determine a unique grading on $S$, should a grading exist at all. Specifically, if $S$ admits a grading $\operatorname{rk}:S \to \mathbb{Z}$, $s \in S$, and
\begin{align}
s_{1}<s_{2}<\dots<s_{k}<s
\end{align}
is a maximal chain descending from $s$, then the rank of $s$ is necessarily the length of this chain: $\operatorname{rk }s=k$.\footnote{Indeed, suppose $S$ admits a grading $\operatorname{rk}: S \to \mathbb{Z}$. Let $s \in S$. If $s$ is minimal, then $\operatorname{rk }s=0$. Otherwise, $s$ has a descending maximal chain $s_{1}<\dots<s_{k}<s$ for some $k \geq 1$. Since this chain is maximal, each comparison in it is a covering; the result follows inductively.} (It follows that if such a definition is not well-defined irrespective of the choice of maximal chain, then no grading exists.)
\end{definition}

In this document, we assume all posets encountered are graded unless stated otherwise.

\begin{example}
    The poset $\operatorname{Cells}(\mathcal{K})$ of nonempty simplices of an (abstract) simplicial complex $\mathcal{K}$ is naturally graded by dimension: $\operatorname{rk }F=\operatorname{dim }F$.
\end{example}

\begin{definition}
    A graded poset $(\mathcal{X}, \operatorname{rk})$ is called a \textbf{cell poset} if, for any $\sigma \in \mathcal{X}$, the boundary $|\partial{C}(\sigma)|$\footnote{Recall that the \textbf{boundary} of $s \in S$ is the lower set $\partial C(s)=S_{<s}=\{ t \in S: t < s \}$.
    } is homeomorphic to the sphere $S^{\operatorname{rk }\sigma - 1}$. In this case, a rank-$k$ element $\sigma \in \mathcal{X}$ is called a \textbf{$k$-dimensional cell}.
\end{definition}

\begin{definition}
    A \textbf{cellular sheaf} $F$ is a diagram on a cell poset $\mathcal{X}$. 
\end{definition}


\begin{definition}
\label{def:cellular-sheaf-cohomology}
Let $\mathcal{X}$ be a \textit{cell poset}, and $F$ a \textit{diagram} (cellular \textit{sheaf}) supported on $\mathcal{X}$. The \textbf{cellular cochain complex corresponding to $F$} is the \textit{cochain complex} of \textit{vector spaces} defined via
\begin{equation}
C^{j}_{\operatorname{cell}}(\mathcal{X}; F)=\bigoplus_{\tau \in \mathcal{X}, \operatorname{rk }\tau=j}D(\tau).
\end{equation}
The codifferential
\begin{equation}
d^{j}_{\operatorname{cell}}:C^{j}_{\operatorname{cell}}(\mathcal{X}; F) \to C^{j}_{\operatorname{cell}}(\mathcal{X}; F)
\end{equation}
acts on a cochain $\boldsymbol x \in C^{j}_{\operatorname{cell}}(\mathcal{X};F)$ via
\begin{equation}
(d^{j} \boldsymbol   x)_{\sigma}=\sum_{\tau < \sigma, \operatorname{rk }\tau=j, \operatorname{rk }\sigma=j+1}[\sigma: \tau]  F(\tau < \sigma),
\end{equation}
where the incidence numbers $[\sigma:\tau]$ are defined as in Part III Algebraic Topology.\footnote{The verification $d_{\operatorname{cell}}^2=0$ is also as in Part III Algebraic Topology.} (In practice, the orientation signs in the codifferential definition are `the obvious ones'.)

The \textbf{cellular sheaf cohomology} of $F$ is the \textit{cohomology} $H^{*}_{\operatorname{cell}}(\mathcal{X}; D):=H^{*}_{\operatorname{cell}}\big( C^{\bullet}(\mathcal{X};D) \big)$.
\end{definition}

Analogous to (\ref{eqn:Roos-kernel-is-global-sections}), observe that if $\boldsymbol x=(x_{v})_{v : \operatorname{rk }v=0} \in C^{0}_{\operatorname{cell}}(S;F)$ lives in $H^{0}_{\operatorname{cell}}(S;F)=\operatorname{ker }d^{0}_{\operatorname{cell}}$, then $$(d^{0}_{\operatorname{cell}}\boldsymbol  x)_{e}=F_{u \leq e} x _{u}-F_{v \leq e} x _{v}=0$$
for all nodes $v$ and edges $e$ in the cell complex, i.e., $$F_{u \leq e}   x_{u}= F_{v \leq e}  x_{v}$$
for all nodes $v$ and edges $e$ in the cell complex. Consideration of compositionality constraints and induction show that $\boldsymbol x$ determines a \textit{unique} global section $\tilde{\boldsymbol x} \in \Gamma(S; F) \subset \bigoplus_{n \geq 0} C^{n}(S ;F)$, yielding a natural isomorphism \begin{equation}
\label{eqn:zeroth-cellular-cohomology-global-sections}
    H^{0}_{\operatorname{cell}}(S; F) \cong \Gamma(S; F).
\end{equation} 
This means that cellular sheaf cohomology and sheaf cohomology (Definition~\ref{def:sheaf-cohomology}) agree in grading zero. As we saw before with Roos cohomology, they in fact agree in all degrees, as one would hope. \begin{theorem}
    The cellular cohomology $H^*_{\operatorname{cell}}(S; F)$ of a sheaf $F$ on a cell poset $S$ is isomorphic to the cohomology $H^*(X_S;F)$. That is, $H^*_{\operatorname{cell}}$ honestly computes cohomology.
\end{theorem}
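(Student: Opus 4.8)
The plan is to recognize $H^{*}_{\operatorname{cell}}(S;-)$ as a universal cohomological $\delta$-functor on $\mathsf{Shv}_{\mathbb{R}\mathsf{Vect}}(X_S)$ that agrees with the global sections functor $\Gamma(X_S,-)$ in degree zero. Since $H^{*}(X_S;-)$ is another such $\delta$-functor — this is exactly what the axioms in Definition~\ref{def:sheaf-cohomology} encode, together with their uniqueness clause — any isomorphism in degree zero extends uniquely, forcing $H^{*}_{\operatorname{cell}}(S;-) \cong H^{*}(X_S;-)$ and, in particular, the flasque-vanishing property (4) comes along for free. (An alternative, sketched at the end, compares the cellular complex directly with the Roos complex and invokes Theorem~\ref{thm:Roos-complex-computes-sheaf-cohomology}.)

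Three ingredients are needed. First, degree-zero agreement: this is precisely the natural isomorphism~\eqref{eqn:zeroth-cellular-cohomology-global-sections} already established. Second, the $\delta$-functor structure: each cochain group $C^{j}_{\operatorname{cell}}(\mathcal{X};F)=\bigoplus_{\operatorname{rk}\tau=j}F(\tau)$ is a finite direct sum of stalks, and since a short exact sequence of sheaves on a poset is short exact stalkwise (the posetal counterpart of Theorem~\ref{thm:testing-on-stalks}), a short exact sequence $0\to F'\to F\to F''\to 0$ induces a short exact sequence of cochain complexes; the long exact cohomology sequence of the latter (the snake lemma) supplies the connecting maps and the Zig-Zag sequence, while the standard naturality of the snake connecting homomorphism gives the Naturality axiom. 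None of this uses the cell-poset hypothesis — the same bookkeeping also handles the Roos complex. Third, universality: by Grothendieck's criterion it suffices that $H^{>0}_{\operatorname{cell}}(S;-)$ be effaceable, i.e. that every sheaf embed into one with vanishing higher cellular cohomology. For $x\in S$ write $i_{x}\colon\{x\}\hookrightarrow X_S$; any sheaf $F$ embeds (via the adjunction units, stalkwise injectively) into the flasque product $\prod_{x\in S}i_{x*}(F(x))$, and $C^{\bullet}_{\operatorname{cell}}(\mathcal{X};i_{x*}V)$ is, up to tensoring with $V$, the ordinary constant-$\mathbb{R}$ cellular cochain complex of the closed cell $\{\tau:\tau\le x\}$.

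The crux is therefore the claim that this last complex is acyclic in positive degrees with $H^{0}=V$ — equivalently, that $\{\tau:\tau\le x\}$ has the cellular cohomology of a point. This is where the cell-poset axiom earns its keep: inductively, ``$|\partial C(x)|\cong S^{\operatorname{rk}x-1}$'' makes each closed cell a ball, hence contractible, and the standard agreement of cellular with singular cohomology for regular CW complexes gives the vanishing. I expect the genuinely non-formal part of the whole argument to be doing this carefully at the level of the explicit codifferential of Definition~\ref{def:cellular-sheaf-cohomology} — pinning down the orientation signs and producing an honest contracting homotopy (or acyclic carrier) rather than black-boxing singular cohomology — since this is the one step that is geometric rather than diagram-chasing.

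Finally, the Roos route, which reuses the same computation. Observe that $C^{\bullet}_{\operatorname{Roos}}(\mathcal{X};F)$ is itself a cellular cochain complex: it is $C^{\bullet}_{\operatorname{cell}}\!\big(\operatorname{ord}(\mathcal{X});\,G\big)$ for the order complex $\operatorname{ord}(\mathcal{X})$ (the barycentric subdivision of $|\mathcal{X}|$), with $G$ the cellular sheaf $(s_{0}<\cdots<s_{j})\mapsto F(s_{j})$ and restriction maps $G(\tau\le\sigma)=F((\tau_{\max})\le(\sigma_{\max}))$, matching the Roos codifferential verbatim. Granting Theorem~\ref{thm:Roos-complex-computes-sheaf-cohomology}, it then suffices to produce a natural quasi-isomorphism $C^{\bullet}_{\operatorname{cell}}(\mathcal{X};F)\simeq C^{\bullet}_{\operatorname{Roos}}(\mathcal{X};F)$, a subdivision-invariance statement for cellular sheaf cohomology. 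One builds it by an acyclic-carrier argument generalizing the classical simplicial subdivision map: the carrier sending a chain $s_{0}<\cdots<s_{j}$ to the closed cell at its top $s_{j}$ is acyclic precisely by the cone computation above, so it carries a chain-homotopy equivalence. Either way, the single substantive obstacle is the acyclicity of closed cells; everything else is formal homological algebra.
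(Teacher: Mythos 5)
Your proposal is correct in outline, but note that the paper does not actually prove this theorem: it establishes only the degree-zero identification $H^{0}_{\operatorname{cell}}(S;F)\cong\Gamma(S;F)$ (Equation~\ref{eqn:zeroth-cellular-cohomology-global-sections}) and, as with Theorem~\ref{thm:Roos-complex-computes-sheaf-cohomology}, defers the general statement to the literature (Ayzenberg et al.). What you supply is the standard derived-functor completion of that degree-zero observation, and it hangs together: stalkwise exactness of short exact sequences of sheaves on a finite poset (the posetal form of Theorem~\ref{thm:testing-on-stalks}) plus concreteness of the cellular functor gives short exact sequences of cochain complexes and hence the $\delta$-functor structure; the embedding $F\hookrightarrow\prod_{x}i_{x*}(F(x))$ is stalkwise split injective, and on a finite poset the cellular complex of that product decomposes as a finite direct sum of the complexes $C^{\bullet}_{\operatorname{cell}}(\overline{x};\underline{F(x)})$, so effaceability reduces exactly to the acyclicity of closed cells that you isolate. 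You are right that this is the single non-formal ingredient, and right about where the cell-poset axiom enters: one needs both that $|\operatorname{ord}(\overline{x})|$ is a cone on $|\partial C(x)|\cong S^{\operatorname{rk}x-1}$ (hence a ball) \emph{and} that the incidence numbers of Definition~\ref{def:cellular-sheaf-cohomology} really are those of a regular CW structure so that the cellular-equals-singular comparison applies — the paper itself glosses this by deferring to "Part III Algebraic Topology", so your proof inherits that same (standard but nontrivial) dependency rather than introducing a new gap. The closing Roos-via-subdivision alternative is plausible but much sketchier; the acyclic-carrier comparison between $C^{\bullet}_{\operatorname{cell}}(\mathcal{X};F)$ and $C^{\bullet}_{\operatorname{Roos}}(\mathcal{X};F)$ would need to be written out, and since it ultimately rests on the same cone computation it buys little over the effaceability route. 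In short: the argument is a legitimate and essentially self-contained proof of a statement the paper only asserts, with the one geometric input honestly flagged rather than hidden.
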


\begin{example}
\label{ex:cell-complex-graph-example}
    The {poset} $S$ in Example~\ref{ex:Roos-complex-graph-example} (a graph) is naturally {graded}, with $\operatorname{rk }u=\operatorname{rk }v=0$ and $\operatorname{rk }e=1$. The {boundary} of $e$ consists of a pair of points, hence is {homeomorphic} to he $0$-sphere $\mathbb{S}^{0}$. $S=\mathcal{X}$ is therefore a {cell poset}; the {diagram} $F$ on $\mathcal{X}$ a {cellular sheaf}. One has
\begin{equation}
C^{j}_{\operatorname{cell}}(\mathcal{X};F)=
\begin{cases}
F(u) \oplus F(v) & j=0 \\
F(e) & j=1 \\
0 & j>1.
\end{cases}
\end{equation}
\begin{figure}[H]
    \centering
    \includegraphics[width=0.5\linewidth]{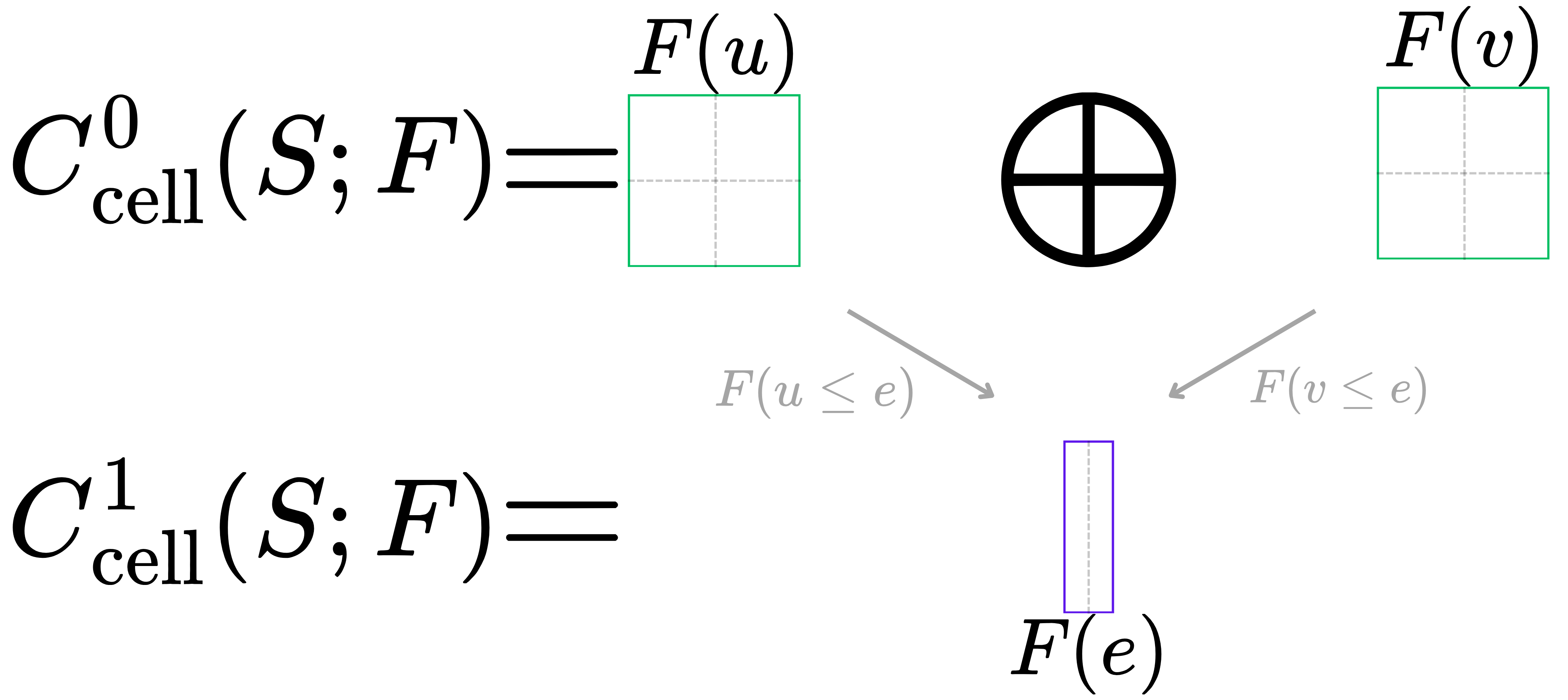}
    \label{fig:enter-label}
    \caption{The cellular cochain spaces corresponding to the diagram depicted in Figure~\ref{ex:Roos-complex-graph-example}.}
\end{figure}

Orienting $e$ to go from $u$ to $v$, the codifferential is
\begin{align}
d^{0}_{\operatorname{cell}}\boldsymbol  x&=d^{0}_{\operatorname{cell}}(x_{u} , x_{v}) \\
&= F(v \leq e) x_{v} - F(u \leq e)x_{u}.
\end{align}

\begin{figure}[H]
    \centering
    \includegraphics[width=\linewidth]{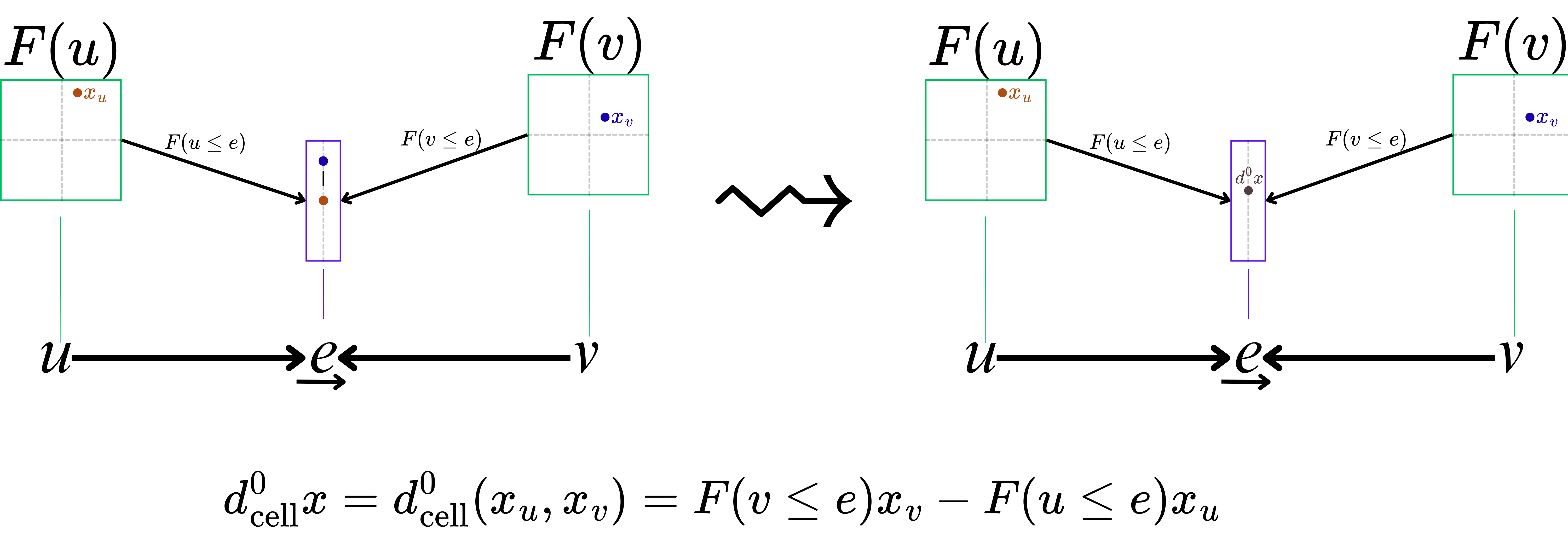}
    \label{fig:enter-label}
    \caption{Cellular codifferential for the diagram depicted in Figure~\ref{ex:Roos-complex-graph-example}.}
\end{figure}

\end{example}

The cellular computation in Example~\ref{ex:cell-complex-graph-example} is manifestly simpler, and more intuitive, than its Roos analogue in Example~\ref{ex:Roos-complex-graph-example}. Hereon, we will use cellular cohomology by default when working with cell posets. Roos cohomology will still be useful when we want to be able to accommodate non-cellular spaces such as hypergraphs.

We conclude this section by capturing into a definition a property commonly satisfied by cochain complexes honestly computing sheaf cohomology (certainly all which have been encountered thus far). 

\begin{definition}[Ayzenberg et al.\cite{ayzenberg2025sheaf}]
    A functor $\mathscr{F}:\mathsf{Shv}_{\mathbb{R}\mathsf{Vect}}(S) \to \mathsf{Cochain}(\mathbb{R}\mathsf{Vect})$ is said to be \textbf{concrete} if, for all $k$, the vector space $\mathscr{F}(F)^k$ is a direct sum of stalks of $F$. 
\end{definition}

\begin{example}
    The functors underlying Definitions~\ref{def:roos-complex-cohomology} and~\ref{def:cellular-sheaf-cohomology} are concrete.
\end{example}

\subsection{Sheaf Diffusion}

We now have a definition of sheaf cohomology along with a couple of cochain complexes — Roos and cellular — which honestly compute it over general posets and cell posets respectively. We would like to study the Laplacians and heat diffusion associated to these complexes. To do so, we need inner products. 

\begin{definition}
    A \textbf{Euclidean sheaf} on a poset $S$ is a sheaf of real vector spaces $F$ on $S$ along with a choice of inner product $\langle -,-\rangle_s$ on each stalk $F(s)$, $s \in S$. 
\end{definition}

Moving forward, we will implicitly assume that our sheaves are Euclidean. If $F$ is a Euclidean sheaf on a poset $S$, and  $\mathscr{F}:\mathsf{Shv}_{\mathbb{R}\mathsf{Vect}}(S) \to \mathsf{Cochain}(\mathbb{R}\mathsf{Vect})$ is a concrete functor honestly computing cohomology, then $(C^\bullet, d^\bullet):=\mathscr{F}(F)$ carries a canonical inner product in each grading $k$.\footnote{Indeed, each $C^k$ is already a direct sum of stalks since $\mathscr{F}$ is concrete; one just takes this direct sum to be orthogonal.}

\begin{definition}
    We call the $k$th Laplacian of $(C^{\bullet}, d^{\bullet})$ the \textbf{$k$th sheaf Laplacian} of $F$ and denote it by $\Delta^{k}_{F}$. We call $\Delta^{0}_{F}$ the \textbf{vanilla sheaf Laplacian}; sometimes it is denoted $L_{F}$. 
\end{definition}

Note that the sheaf Laplacian definition depends both on the choice of functor $\mathscr{F}$ and Euclidean structure $\langle -,- \rangle_{s \in S}$ on $F$. Of course, the kernels are all canonically isomorphic by Theorem~\ref{thm:discrete-Hodge-Theorem}, since they're all isomorphic to the same cohomology.

\begin{definition}
\label{def:sheaf-diff-layer}
Let $S$ be a poset and $\mathscr{F}:\mathsf{Shv}_{\mathbb{R}\mathsf{Vect}}(S) \to \mathsf{Cochain}(\mathbb{R}\mathsf{Vect})$ be a choice of concrete functor honestly computing cohomology. Let $F$ be a sheaf on $S$, and write $(C^{\bullet}, d^{\bullet}):=\mathscr{F}(F)$. For $\eta > 0$ a hyperparameter, the generator of the discrete heat diffusion
\begin{align}
\label{eqn:discrete-heat-diffusion-general}
\operatorname{sd}_{F}:C^{k}(S; D) \to C^{k}(S; F), \ \ \operatorname{sd}_{F}(x)=x-2\eta \Delta^{k}x
\end{align}
is called the ($k$th-order) \textbf{sheaf diffusion}, or \textbf{sheaf diffusion layer}, determined by $F$. 

\end{definition}

When $(S, \leq)$ represents a graph, the operator $\operatorname{sd}_{D}$ may be seen as a standard graph message passing update (a linear one).\footnote{The argument that Kipf-Welling GCNs (strictly speaking, Kipf-Weilling GCNs sans nonlinearity) \cite{kipf2016semi} are indeed MPNNs (Example~\ref{ex:kipf-welling-GCN}) essentially carries over unmodified to demonstrate this.}



\begin{example}[Sheaf diffusion on binary relations, e.g. hypergraphs]
\label{ex:sheaf-diffusion-on-hypergraphs}
Let $R \subset A \times B$ be a binary relation (e.g. a hypergraph), considered as a poset $S$ on $A \coprod B$ where $a<b$ iff $(a,b)\text{ (notation: }ab) \in R$.

A diagram (sheaf) $F$ on $S$ valued in $\mathbb{R}\mathsf{Vect}$ amounts merely to the assignment of a real vector space $F(a)$ to each $a \in A$, a real vector space $F(b)$ to each $b \in B$, and a linear map $F_{ab}:F(a) \to F(b)$ to each $(a,b) \in R$. (Since $\operatorname{dim }S=1$, $F$ is vacuously functorial, so these maps are not required to satisfy any compositionality requirements.)

In this case, the Roos complex $C^{\bullet}_{\operatorname{Roos}}(S;F)$ is as follows. The nontrivial cochain spaces are
\begin{align}
C^{0}_{\operatorname{Roos}}(S; F)&= \bigoplus_{a \in A}F(a) \oplus \bigoplus_{b \in B}F(b), \\
C^{1}_{\operatorname{Roos}}(S; F)&=\bigoplus_{ab \in R}F(b).
\end{align}
The codifferential $d_{\operatorname{Roos}}:C^{0}_{\operatorname{Roos}}(S; F) \to C^{1}_{\operatorname{Roos}}(S; F)$ is given by ($\boldsymbol x \in C^{0}_{\operatorname{Roos}}(S; F)$)
\begin{equation}
(d_{\operatorname{Roos}}\boldsymbol x)_{ab}=x_{b}-F(a < b)x_{a}.
\end{equation}
The Roos Laplacian $\Delta^{0}_F$ acts as
\begin{align}
\Delta^{0}_{F}&: \bigoplus_{a \in A} F(a) \oplus \bigoplus_{b \in B} F(b )  \to \bigoplus_{a \in A} F(a) \oplus \bigoplus_{b \in B} F(b ), \\
(\Delta^{0}_{F} \boldsymbol  x)_{a} &=\sum_{a:a < b} F(a < b)^{\top} \big( F(a< b) x_{a} - x_{b} \big), \\
(\Delta_{F}^{0})_{b} &=  \sum_{b:a < b} \big(x_{b}-F(a < b)x_{a} \big).
\end{align}
The space of global sections $\Gamma(S; F)$ equals $\operatorname{ker }d_{\operatorname{Roos}}$, and thus manifestly consists of the cochains $\boldsymbol x=(\boldsymbol x_{a}, \boldsymbol x_{b})\in C^{0}_{\operatorname{Roos}}(S;F)$ satisfying $F(a < b)x_{a}=x_{b}$ for all incidences $a<b$. By Theorem~\ref{thm:discrete-Hodge-Theorem}, $\Gamma(S;F)$ coincides with $\operatorname{ker }\Delta^{0}_{F}$. The sheaf diffusion minimizes the Dirichlet energy form $Q_{0}(\boldsymbol x)=\langle \boldsymbol x, \Delta^{0}_{F} \boldsymbol x \rangle$ given by
\begin{equation}
\label{eqn:roos-hypergraph-dirichlet-energy}
Q_{0}(\boldsymbol  x)=\langle d^{0}\boldsymbol x, d^{0} \boldsymbol x \rangle=\sum_{ab \in R} \langle (d^{0}\boldsymbol x)_{ab} , (d^{0} \boldsymbol x)_{ab} \rangle=\sum_{ab \in R} \|x_{b} - F(a < b)x_{a}  \|^{2}.
\end{equation}

    \begin{remark}[Relation to Duta et al.~\cite{duta2023sheaf}]

With a view toward directly generalizing the cellular sheaf Laplacian $\boldsymbol L_F: \bigoplus_{v \in V}F(v) \to \bigoplus_{v \in V}F(v)$ on a graph to the hypergraph setting, Duta et al. proposed a standalone Laplacian for a sheaf $F$ supported on the hypergraph $S$ as
\begin{align}
\boldsymbol  L_F&: \bigoplus_{a \in A}F(a) \to \bigoplus_{a \in A}F(a) \\
\boldsymbol  L_{F} (\boldsymbol  x)_{a} & = \sum_{b: a < b  } \frac{1}{|\{a : a < b\}|} F(a<b)^{\top} \big( \sum_{a' < b, a' \neq a} F(a < b) x_{a} - F(a' < b)x_{a'} \big).
\end{align}

Since this operator is not explicitly derived as the Laplacian of a Hilbertian cochcain complex, it does not strictly follow from the considerations in Section~\ref{sec:combinatorial-hodge-heat-diff} that performing linear diffusion with it monotonically minimizes the form
\begin{equation}
\label{eqn:duta-hypergraph-dirichlet-energy}
\langle \boldsymbol  x, \boldsymbol  L \boldsymbol  x \rangle =\sum_{b} \frac{1}{|\{ a : a < b \}|} \sum_{a,a'<b} \|F(a' < b)x_{a'} - F(a < b)x_{a}\|^{2}.
\end{equation}
Proposition 1 in Duta et al.~\cite{duta2023sheaf} shows, however, that this \textit{is} true when $\boldsymbol L$ is suitably normalized. Ayzenberg et al. show~\cite{ayzenberg2025sheaf} that, under near-trivial assumptions, this form is equivalent to the Roos Dirichlet energy~\ref{eqn:roos-hypergraph-dirichlet-energy}. A sufficient condition to be assured that $\boldsymbol L$ monotonically minimizes~\ref{eqn:duta-hypergraph-dirichlet-energy} without normalization (up to the aforementioned near-trivial assumptions), then, would be to prove that $\operatorname{ker } \boldsymbol L \cong \Gamma(S; F)$. This is true, though it also needs to be `proven manually':

\begin{theorem}
\label{thm:duta-laplacian-kernel-and-global-sections}
    The kernel of the sheaf hypergraph Laplacian proposed in Duta et al. is naturally isomorphic to the space $\Gamma(S;F)$ of global sections of $F$.
\end{theorem}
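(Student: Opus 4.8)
The plan is to first pin down $\ker\boldsymbol L_F$ explicitly, then produce a linear bijection onto $\Gamma(S;F)$ and observe that it is natural in $F$.

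\emph{Step 1: identify the kernel.} Following Example~\ref{ex:sheaf-diffusion-on-hypergraphs}, I would expand the Dirichlet form $\langle\boldsymbol x,\boldsymbol L_F\boldsymbol x\rangle$. Symmetrizing the double sum over pairs $a,a'$ inside each hyperedge $b$ rewrites it, up to an immaterial constant, as the expression~(\ref{eqn:duta-hypergraph-dirichlet-energy}): a sum $\sum_b \tfrac{1}{|\{a:a<b\}|}\sum_{a,a'<b}\lVert F(a<b)x_a-F(a'<b)x_{a'}\rVert^2$ of squared stalk-norms with nonnegative coefficients. Call a cochain $\boldsymbol x=(x_a)_{a\in A}$ \emph{coherent} if $F(a<b)x_a=F(a'<b)x_{a'}$ for every hyperedge $b$ and every pair $a,a'<b$; then $\langle\boldsymbol x,\boldsymbol L_F\boldsymbol x\rangle=0$ iff $\boldsymbol x$ is coherent. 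Now, if $\boldsymbol x$ is coherent then every bracketed term $F(a<b)x_a-F(a'<b)x_{a'}$ occurring in the defining formula for $(\boldsymbol L_F\boldsymbol x)_a$ vanishes, so $\boldsymbol L_F\boldsymbol x=0$; conversely $\boldsymbol L_F\boldsymbol x=0$ forces $\langle\boldsymbol x,\boldsymbol L_F\boldsymbol x\rangle=0$ and hence coherence. (Alternatively, $\boldsymbol L_F$ is symmetric — a short block computation — and positive semidefinite, so its kernel is the null cone of its form, by the observation recalled in Section~\ref{sec:combinatorial-hodge-heat-diff}.) Thus $\ker\boldsymbol L_F=\{(x_a)_{a\in A}:\boldsymbol x\text{ coherent}\}$; note in passing that this kernel does not actually depend on the chosen Euclidean structure, even though $\boldsymbol L_F$ does.

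\emph{Step 2: the bijection.} By Theorem~\ref{thm:sheaves-diagrams-category-equivalence} applied to the rank-one poset $S$ on $A\coprod B$ (the only relations are $a<b\iff ab\in R$, so there are no further compatibility conditions), $\Gamma(S;F)$ consists of the tuples $\big((s_a)_{a\in A},(s_b)_{b\in B}\big)$ with $F(a<b)s_a=s_b$ for all $ab\in R$, exactly as in Example~\ref{ex:sheaf-diffusion-on-hypergraphs}. I would then define $\Psi:\Gamma(S;F)\to\ker\boldsymbol L_F$ by discarding the $B$-block, $\big((s_a),(s_b)\big)\mapsto(s_a)_{a\in A}$; the image is coherent since $F(a<b)s_a=s_b=F(a'<b)s_{a'}$. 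Conversely define $\Phi:\ker\boldsymbol L_F\to\Gamma(S;F)$ by $(x_a)_{a\in A}\mapsto\big((x_a)_{a\in A},(x_b)_{b\in B}\big)$ with $x_b:=F(a<b)x_a$ for any $a<b$; this is well defined by coherence (and by the standing convention that hyperedges are nonempty, so some $a<b$ exists), and the tuple it produces is visibly a global section. Both maps are linear, $\Psi\circ\Phi=\mathrm{id}$ trivially, and $\Phi\circ\Psi=\mathrm{id}$ because in a genuine section the $B$-entries are already $s_b=F(a<b)s_a$; hence $\Phi,\Psi$ are mutually inverse isomorphisms.

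\emph{Step 3: naturality and the main obstacle.} For a morphism $\varphi\colon F\to G$ of sheaves on $S$, the induced maps on $A$-blocks send coherent cochains to coherent cochains (apply $\varphi_b$ to $F(a<b)x_a=F(a'<b)x_{a'}$ and use $\varphi_b F(a<b)=G(a<b)\varphi_a$) and send global sections to global sections in the evident way; since $\Phi$ and $\Psi$ only ever act as the identity on $A$-blocks and rebuild or delete $B$-blocks through restriction maps — all commuted past by $\varphi$ — the square relating the two isomorphisms commutes, yielding naturality in $F$. The one place I expect to need genuine care is Step~1: because $\boldsymbol L_F$ is \emph{not} presented as the Laplacian of a Hilbert cochain complex (this is exactly the point of the surrounding remark), we cannot invoke the Hodge-theoretic chain $\ker\Delta^0\cong H^0\cong\Gamma(S;F)$ and must instead read off $\ker\boldsymbol L_F$ directly from the algebra of the Dirichlet form; once that is done, Steps~2--3 are routine bookkeeping, modulo the harmless nonempty-hyperedge hypothesis that prevents isolated elements of $B$ from creating phantom global sections.
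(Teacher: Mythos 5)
Your proposal is correct and follows essentially the same route as the paper: identify $\ker\boldsymbol L_F$ with the coherent cochains via vanishing of the nonnegative Dirichlet form, then pass to $\Gamma(S;F)$ by appending the common edge-stalk values $y_b:=F(a<b)x_a$ and back by projecting onto the $A$-block. Your additions — the explicit converse that coherence forces $\boldsymbol L_F\boldsymbol x=0$, the naturality check, and the caveat that every $b\in B$ must cover some $a$ — are all sound and only make explicit what the paper's proof leaves implicit.
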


\begin{proof}
    
  If $\boldsymbol L_{F}\boldsymbol x=0$, then
\begin{equation}
0=\langle \boldsymbol  x, \boldsymbol  L\boldsymbol  x \rangle = \sum_{b} \frac{1}{|\{ a : a < b \}|} \sum_{a,a'<b} \|F(a' < b)x_{a'} - F(a < b)x_{a}\|^{2}.
\end{equation}
Each term on the right sum is nonnegative, thus zero, hence
\[
\underbrace{ F(a' < b)x_{a'}=F(a < b)x_{a} }_{ :=y_{b} }
\]
for all incidences $a<b$ in the hypergraph. Denote by $y_{b} \in F(b)$ this canonical shared value. We claim the map
\begin{align}
\operatorname{ker } \boldsymbol  L_{F} & \to \Gamma(S; F) \subset \bigoplus_{a \in A} F(a) \oplus \bigoplus_{b \in B} F(b) \\
\boldsymbol  x  & \mapsto \big( (x_{a})_{a \in A}, (y_{b})_{b \in B} \big)
\end{align}
is an isomorphism. Indeed, it is: the inverse is given by the restriction of the coordinate projection $\bigoplus_{a \in A} F(a) \oplus \bigoplus_{b \in B}F(b) \to \bigoplus_{a \in A}F(a)$ to $\Gamma(S; F)$.
\end{proof}

    \end{remark}
\end{example}

 From now on, we will mostly be concerned with the case where $S=G$ is a graph (or perhaps a hypergraph), as this is where the majority of literature lives. A (cellular) sheaf on a graph is called a \textbf{network sheaf}. A popular intuition for these sheaves, to which we will return in the coming examples, may be found in opinion dynamics~\cite{hansen2021opinion}. Viewing $G$ as a social network, a point in the stalk over person $v$ represents a linear combination of `basis opinions' held by that person, while an edge $e$ between people $u,v$ represents a `discourse space' where opinions may be expressed and compared. A given restriction map $F_{v \leq e}$ determines exactly \textit{how} $v$ chooses to express their opinion to $u$. For instance, perhaps $F_{v \leq e}=1$ when $v$ is being honest to $u$, $F_{v \leq e}=5$ when $v$ is exaggerating to $u$, and $F_{v \leq e}=-1$ when $v$ is lying to $u$. A global section of $G$ represents a `compatible distribution of \textit{expressed} opinions'. Under this model, the content of Section~\ref{sec:combinatorial-hodge-heat-diff} implies that social dynamics gravitate toward public consensus, though private opinions may be maintained. 

Some familiar faces arise among the (cellular) Laplacians of network sheaves.

\begin{example}[`Honest sheaves']
\label{ex:honest-sheaves}
The constant sheaf $\underline{\mathbb{R}}$ on $G$ has as cellular Laplacian the (unweighted) graph Laplacian. More generally, one can define a one-dimensional sheaf $F$ where the restriction maps entering an edge-stalk are still symmetric ($F_{v \leq e}=F_{u \leq e}$) but given as positive scalars $a_e=\sqrt{w_{uv}}=\sqrt{w_{vu}}$. This is exactly the sheaf explored in Motivation~\ref{sec:intuition-oversmoothing-heterophily}; its Laplacian is the (weighted) graph Laplacian. From an opinion dynamics perspective, such sheaves correspond to `honest' interactions concerning single opinion. The global sections, pictured in Figure~\ref{fig:motivation-graph-global-sections-oversmoothing-reprise}, reflect this: node representations become identical in the linear diffusion limit, leading to oversmoothing — private opinions homogenize alongside the public consensus. We may denote this class of one-dimensional sheaves by $\mathscr{F}^1_{\operatorname{sym}}$.
\begin{figure}[H]
\begin{center}
\begin{tikzpicture}[baseline]
    \path[use as bounding box] (-5.5,-1.5) rectangle (5.5,3.5);

    \node [circle, draw=tancolor, fill=tancolor!20, minimum size=0.2cm] (u) at (-3,0) {};
    \node [circle, draw=tancolor, fill=tancolor!20, minimum size=0.2cm] (v) at (3,0) {};

    \draw [black, thick] (u) -- (v);
    \draw [edgegreen, opacity=0.3, line width=3pt] (u) -- (v);

    \draw [opacity=0.2] (-3,0) -- (-5,1);
    \draw [opacity=0.2] (-3,0) -- (-5,0);
    \draw [opacity=0.2] (-3,0) -- (-5,-1);

    \draw [opacity=0.2] (3,0) -- (5,1);
    \draw [opacity=0.2] (3,0) -- (5,0);
    \draw [opacity=0.2] (3,0) -- (5,-1);

    \node [tancolor] at (-3,-0.5) {$u$};
    \node [tancolor] at (3,-0.5) {$v$};
    \node [edgegreen] at (0,-0.3) {$e$};

    \node [draw=tancolor, minimum width=0.2cm, minimum height=1.2cm] (squareu) at (-3,2.5) {};
    \draw [tancolor, opacity=0.2] ($(u)+(0,0.2)$) -- ($(squareu.south)+(0,-0.1)$);
    \node [left=0.1cm of squareu] {\textcolor{tancolor}{$\mathbb{R}$}};
    \draw [dotted] ($(squareu.north)+(0,-0.1)$) -- ($(squareu.south)+(0,0.1)$);
    \node at (-3,2.5) {$-$}; 

    \node [draw=edgegreen, minimum width=0.2cm, minimum height=1.2cm] (squaree) at (0,1.5) {};
    
    \draw [edgegreen, opacity=0.2] ($(0,0)+(0,0.2)$) -- ($(squaree.south)+(0,-0.1)$);
    \node [above=0.1cm of squaree] {\textcolor{edgegreen}{$\mathbb{R}$}};
    \draw [dotted] ($(squaree.north)+(0,-0.1)$) -- ($(squaree.south)+(0,0.1)$);
    \node at (0,1.5) {$-$};

    \node [draw=tancolor, minimum width=0.2cm, minimum height=1.2cm] (squarev) at (3,2.5) {};
    \draw [tancolor, opacity=0.2] ($(v)+(0,0.2)$) -- ($(squarev.south)+(0,-0.1)$);
    \node [right=0.1cm of squarev] {\textcolor{tancolor}{$\mathbb{R}$}};
    \draw [dotted] ($(squarev.north)+(0,-0.1)$) -- ($(squarev.south)+(0,0.1)$);
    \node at (3,2.5) {$-$};

    \draw [->]
        ($(squareu.east)+(0.1,0)$) --
        node[below=0.2cm, midway] {$\begin{bmatrix}
            a_{\textcolor{edgegreen}{e}}
        \end{bmatrix}$}
        ($(squaree.west)+(-0.1,0)$);

    \draw [->]
        ($(squarev.west)+(-0.1,0)$) --
        node[below=0.2cm, midway] {$\begin{bmatrix}
            a_{\textcolor{edgegreen}{e}}
        \end{bmatrix}$}
        ($(squaree.east)+(0.1,0)$);

    \node at (-3,2.8) {\textcolor{blue}{$\bullet$}}; 
    \node at (3,2.8) {\textcolor{pink}{$\bullet$}}; 
    \node at (-0.05,1.6) {\textcolor{blue}{$\bullet$}}; 
    \node at (0.05,1.6) {\textcolor{pink}{$\bullet$}}; 

\end{tikzpicture}
\caption{A global section of an `honest sheaf', i.e., a graph Laplacian (Example~\ref{ex:honest-sheaves}).}
\label{fig:motivation-graph-global-sections-oversmoothing-reprise}
\end{center}
\end{figure}
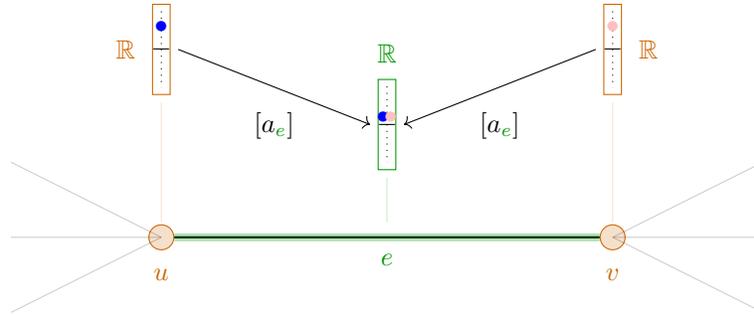

\end{example}

\begin{example}[`Harmonic tension sheaves']
\label{ex:harmonic-tension-sheaves}
    In Motivation~\ref{sec:intuition-oversmoothing-heterophily}, we conjectured that one way to mitigate the oversmoothing depicted in Figure~\ref{fig:motivation-graph-global-sections-oversmoothing-reprise} would be to have antisymmetric restriction maps (the proof that this indeed works will be in Section~\ref{sec:linear-separation-power-sheaf-diffusion}). See Figure~\ref{fig:lying-sheaf-reprise}. From an opinion dynamics perspective, such a `lying' or `polarizing' sheaf captures two-way interactions in which one party is honest and the other is deceptive. We remark that this class of sheaf Laplacians appear to subsume the collection of \textit{signed ratio Laplacians} discussed e.g. in \cite{kunegis2010spectral}.

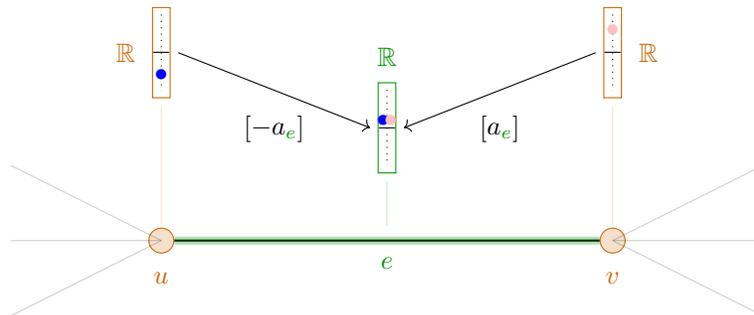
\begin{figure}[H]

\begin{center}
    
\begin{tikzpicture}[baseline]
    \path[use as bounding box] (-5.5,-1.5) rectangle (5.5,3.5);

    \node [circle, draw=tancolor, fill=tancolor!20, minimum size=0.2cm] (u) at (-3,0) {};
    \node [circle, draw=tancolor, fill=tancolor!20, minimum size=0.2cm] (v) at (3,0) {};

    \draw [black, thick] (u) -- (v);
    \draw [edgegreen, opacity=0.3, line width=3pt] (u) -- (v);

    \draw [opacity=0.2] (-3,0) -- (-5,1);
    \draw [opacity=0.2] (-3,0) -- (-5,0);
    \draw [opacity=0.2] (-3,0) -- (-5,-1);

    \draw [opacity=0.2] (3,0) -- (5,1);
    \draw [opacity=0.2] (3,0) -- (5,0);
    \draw [opacity=0.2] (3,0) -- (5,-1);

    \node [tancolor] at (-3,-0.5) {$u$};
    \node [tancolor] at (3,-0.5) {$v$};
    \node [edgegreen] at (0,-0.3) {$e$};

    \node [draw=tancolor, minimum width=0.2cm, minimum height=1.2cm] (squareu) at (-3,2.5) {};
    \draw [tancolor, opacity=0.2] ($(u)+(0,0.2)$) -- ($(squareu.south)+(0,-0.1)$);
    \node [left=0.1cm of squareu] {\textcolor{tancolor}{$\mathbb{R}$}};
    \draw [dotted] ($(squareu.north)+(0,-0.1)$) -- ($(squareu.south)+(0,0.1)$);
    \node at (-3,2.5) {$-$};

    \node [draw=edgegreen, minimum width=0.2cm, minimum height=1.2cm] (squaree) at (0,1.5) {};
    \draw [edgegreen, opacity=0.2] ($(0,0)+(0,0.2)$) -- ($(squaree.south)+(0,-0.1)$);
    \node [above=0.1cm of squaree] {\textcolor{edgegreen}{$\mathbb{R}$}};
    \draw [dotted] ($(squaree.north)+(0,-0.1)$) -- ($(squaree.south)+(0,0.1)$);
    \node at (0,1.5) {$-$};

    \node [draw=tancolor, minimum width=0.2cm, minimum height=1.2cm] (squarev) at (3,2.5) {};
    \draw [tancolor, opacity=0.2] ($(v)+(0,0.2)$) -- ($(squarev.south)+(0,-0.1)$);
    \node [right=0.1cm of squarev] {\textcolor{tancolor}{$\mathbb{R}$}};
    \draw [dotted] ($(squarev.north)+(0,-0.1)$) -- ($(squarev.south)+(0,0.1)$);
    \node at (3,2.5) {$-$}; 
    \draw [->]
        ($(squareu.east)+(0.1,0)$) --
        node[below=0.2cm, midway] {$\begin{bmatrix}
            -a_{\textcolor{edgegreen}{e}}
        \end{bmatrix}$}
        ($(squaree.west)+(-0.1,0)$);

    \draw [->]
        ($(squarev.west)+(-0.1,0)$) --
        node[below=0.2cm, midway] {$\begin{bmatrix}
            a_{\textcolor{edgegreen}{e}}
        \end{bmatrix}$}
        ($(squaree.east)+(0.1,0)$);

    \node at (3,2.8) {\textcolor{pink}{$\bullet$}}; 
    \node at (-0.05,1.6) {\textcolor{blue}{$\bullet$}}; 
    \node at (0.05,1.6) {\textcolor{pink}{$\bullet$}};
    \node at (-3,2.2) {\textcolor{blue}{$\bullet$}};  

\end{tikzpicture}

\caption{A global section of a `lying sheaf' (Example~\ref{ex:harmonic-tension-sheaves}). }
\label{fig:lying-sheaf-reprise}

\end{center}

\end{figure}

\end{example}

\begin{example}[Connection sheaves]
A sheaf $F$ on a poset $S$ is called a \textbf{connection sheaf} if its restriction maps are all isomorphisms. If $G$ is a subgroup of $\operatorname{Aut}(V)$ for some object $V$ of $\mathsf{D}$, a \textbf{$G$-connection sheaf} is a sheaf $F$ on $S$ whose restriction maps all belong to $G$. Connection sheaves are also called \textbf{discrete vector bundles} and \textbf{local systems}, and could likely form the content of an essay in their own right. Warning: while constraints of the medium forbid a faithful justification of these terminologies here, the term `discrete vector bundle' does \textit{not} arise because connection sheaves are equivalent (under Theorem~\ref{thm:sheaves-diagrams-category-equivalence}) to (sheaves of sections of) vector bundles on $X_S$. That is not generally true. 

The most important type of connection sheaf is that where $G=\operatorname{O}(d)$ (the sheaves in Example~\ref{ex:harmonic-tension-sheaves} give an example). The Laplacians of such sheaves correspond to the \textbf{connection Laplacians} studied e.g. in~\cite{singer2012vector}.

\begin{figure}[H]

\begin{center}

\begin{tikzpicture}[baseline]
    
    \path[use as bounding box] (-5.5,-1.5) rectangle (5.5,3.5);
    
    \node [circle, draw=tancolor, fill=tancolor!20, minimum size=0.2cm] (u) at (-3,0) {};
    \node [circle, draw=tancolor, fill=tancolor!20, minimum size=0.2cm] (v) at (3,0) {};
    
    \draw [black, thick] (u) -- (v);
    \draw [edgegreen, opacity=0.3, line width=3pt] (u) -- (v);
    
    \draw [opacity=0.2] (-3,0) -- (-5,1);
    \draw [opacity=0.2] (-3,0) -- (-5,0);
    \draw [opacity=0.2] (-3,0) -- (-5,-1);
    
    \draw [opacity=0.2] (3,0) -- (5,1);
    \draw [opacity=0.2] (3,0) -- (5,0);
    \draw [opacity=0.2] (3,0) -- (5,-1);
    
    \node [tancolor] at (-3,-0.5) {$u$};
    \node [tancolor] at (3,-0.5) {$v$};
    \node [edgegreen] at (0,-0.3) {$e$};
    
    \node [draw=tancolor, minimum width=0.2cm, minimum height=1.2cm] (squareu) at (-3,2.5) {};
    \draw [tancolor, opacity=0.2] ($(u)+(0,0.2)$) -- ($(squareu.south)+(0,-0.1)$);
    \draw [dotted] ($(squareu.north)+(0,-0.1)$) -- ($(squareu.south)+(0,0.1)$);
    \node at (-3,2.5) {$-$}; 
    
    \node [draw=edgegreen, minimum width=0.2cm, minimum height=1.2cm] (squaree) at (0,1.5) {};
    \draw [edgegreen, opacity=0.2] ($(0,0)+(0,0.2)$) -- ($(squaree.south)+(0,-0.1)$);
    \draw [dotted] ($(squaree.north)+(0,-0.1)$) -- ($(squaree.south)+(0,0.1)$);
    \node at (0,1.5) {$-$};
    
    \node [draw=tancolor, minimum width=0.2cm, minimum height=1.2cm] (squarev) at (3,2.5) {};
    \draw [tancolor, opacity=0.2] ($(v)+(0,0.2)$) -- ($(squarev.south)+(0,-0.1)$);
    \draw [dotted] ($(squarev.north)+(0,-0.1)$) -- ($(squarev.south)+(0,0.1)$);
    \node at (3,2.5) {$-$};

    \draw [->] 
        ($(squareu.east)+(0.1,0)$) -- 
        node[below=0.1cm, midway] {${F}({\textcolor{tancolor}{u} \leq \textcolor{edgegreen}{e}})$} 
        ($(squaree.west)+(-0.1,0)$);

    \draw [<-]
        ($(squareu.east)+(0.1,.15)$) -- 
        node[above=0.1cm, midway] {${F}({\textcolor{tancolor}{u} \leq \textcolor{edgegreen}{e}})^{\top}$} 
        ($(squaree.west)+(-0.1,0.15)$);

    \draw [->] 
        ($(squarev.west)+(-0.1,0)$) -- 
        node[below=0.1cm, midway] {${F}({\textcolor{tancolor}{v} \leq \textcolor{edgegreen}{e}})$}
        ($(squaree.east)+(0.1,0)$);

    \draw [<-] 
        ($(squarev.west)+(-0.1,.15)$) -- 
        node[above=0.1cm, midway] {${F}({\textcolor{tancolor}{v} \leq \textcolor{edgegreen}{e}})^{\top}$}
        ($(squaree.east)+(0.1,.15)$);

\end{tikzpicture}

\caption{An $\operatorname{O}(1)$ bundle on a graph. The corresponding sheaf Laplacian aligns with the \textit{connection Laplacian} studied e.g. in~\cite{singer2012vector}.}

\end{center}

\end{figure}
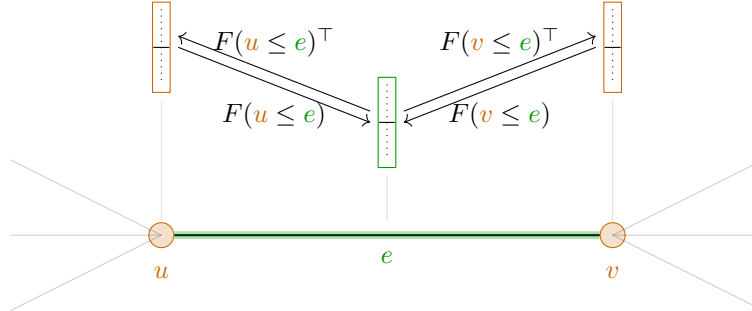

\end{example}


\section{How Powerful is Sheaf Diffusion?}
\label{sec:linear-separation-power-sheaf-diffusion}

We are now in a position to investigate the 
expressivity of sheaf diffusion. Our main novel results will be: \begin{enumerate}
    \item Diffusion with discrete vector bundles can asymptotically solve any rank-$0$ classification task on a poset (such as a cell complex or hypergraph). This is a nontrivial extension of the proof in~\cite{bodnar_neural_2023} for graph special case. 
    \item The class of discrete vector bundles \textit{is not} expressive enough to solve higher-rank classification tasks.
    \item Nevertheless, any rank-$k$ classification task can be solved by performing diffusion with the correct sheaf. Our proof rests on a lemma which roughly states that one can always construct a sheaf with a specified gradient space, and which may be of independent interest.
\end{enumerate}
Beyond these novel results, we will also review a hierarchy of results from Bodnar et al.\cite{bodnar_neural_2023} (Table~\ref{table:linear-sheaf-diffusion-catalog}) concerning the expressive power of \textit{graph} sheaf diffusion, leveraging the machinery developed in this essay thus far to provide novel proofs for many.


\subsection{Sheaf Diffusion is Expressive for Order $k=0$}


We prove that any rank-$0$ classification task on a poset $S$ can be solved by performing $0$th order heat diffusion with a suitable sheaf. We begin by investigating the case where $S=G$ is a graph. Table~\ref{table:linear-sheaf-diffusion-catalog} catalogs the hierarchy of linear separation results established. It says, among other things, that normalized graph Laplacian diffusion has two shortcomings: it can fail in non-homophilic settings, and it can fail when there are more than two node classes. Performing diffusion with the appropriate sheaf overcomes both of these limitations.




\begin{table}[h]
  \centering
  \setlength{\tabcolsep}{6pt}
  \renewcommand{\arraystretch}{1.35}

  \begin{tabular}{@{} l c c c c c @{}}
    \toprule
    & \makecell{Unnormalized\\graph\\Laplacian}
    & \makecell{Normalized\\graph\\Laplacian\\($\mathscr{F}^1_{
\operatorname{sym}
    }$)}
    & \makecell{Same–sign\\asymmetric\\restriction maps\\($\mathscr{F}^1_+$)}
    & \makecell{Lying\\$1$-sheaf}
    & \makecell{Lying\\$\ell$-sheaf} \\ \midrule
    \makecell[l]{$G$ two components\\$=$ classes}
      & \cmark & \cmark & \cmark & \cmark & \cmark \\ \midrule

    \makecell[l]{$G$ homophilic,\\connected, two classes}
      & \makecell{\xmark\\[-.3em] {\scriptsize Motivation~\ref{sec:intuition-oversmoothing-heterophily}}}%
      & \makecell{\cmark\\[-.3em] {\scriptsize Theorem~\ref{thm:linear-sheaf-diffusion-result-1}}}%
      & \cmark & \cmark & \cmark \\ \midrule

    \makecell[l]{$G$ connected, two-class\\(homo., hetero., anything in-between)}
      & \xmark
      & \makecell{\xmark\\[-.3em] {\scriptsize Theorem~\ref{thm:linear-sheaf-diffusion-result-1}}}
      & \makecell{\xmark\\[-.3em] {\scriptsize Remark \ref{rmk:asymmetric-positive-doesnt-work}}}%
      & \makecell{\cmark\\[-.3em] {\scriptsize Theorem~\ref{thm:2-way-linear-separability}}}%
      & \cmark \\ \midrule

    \makecell[l]{$G$ connected, $\ell$-class}
      & \xmark
      & \xmark
      & \xmark
      & \makecell{\xmark\\[-.3em] {\scriptsize Theorem~\ref{thm:1d-sheaf-cant-separate-3-classes}}}%
      & \makecell{\cmark\\[-.3em] {\scriptsize Theorem~\ref{thm:k-way-linear-separability}}}\\
    \bottomrule
  \end{tabular}

  \caption{A hierarchy of expressivity results for sheaf diffusion on graphs. }
\label{table:linear-sheaf-diffusion-catalog}
\end{table}

Let us begin by revisiting the Motivation~\ref{sec:intuition-oversmoothing-heterophily}: suppose $G=(V,E)$ is a connected graph with two classes $A,B$, as we had there. Recall that, in this situation, placing a (positive) weighting on the edges of $G$ is equivalent to placing a sheaf $F$ on $G$ whose stalks are all $\mathbb{R}$ and whose restriction maps are `symmetric': $F_{v \leq e}=\alpha_{e}=F_{u \leq e}$ for all $(u,v) \in E$, where $\alpha_{e}>0$; one determines the other. Recall (Example~\ref{ex:honest-sheaves}) that we have denoted by $\mathscr{F}^{1}_{\operatorname{sym}}$ this collection of sheaves. In this picture, the constant sheaf $\underline{\mathbb{R}} \in \mathscr{F}^{1}_{\operatorname{sym}}$ corresponds to the case where $G$ is unweighted. The collections of Laplacians are the same.\\

In Motivation~\ref{sec:intuition-oversmoothing-heterophily}, we showed that $\mathscr{F}^{1}_{\operatorname{sym}}$ has no separation power over $G$: since $G$ is connected, there must be one or more edges between nodes in different classes, and in the diffusion limit with the \textit{unnormalized} Laplacian these edges will homogenize the node representations between the two classes. In practice, of course, it is common to normalize graph Laplacians. Therefore, it is sensible to wonder: does oversmoothing remain inevitable even when performing linear diffusion with a normalized Laplacian? \\

A necessary condition, at least when $A$ and $B$ are of the same size, is that at least one node of $G$ has nonzero homophily. A sufficient condition is that \textit{each} node\footnote{Theorem~\ref{thm:linear-sheaf-diffusion-result-1} actually only assumes that each node in one of the classes (say, $A$) has nonzero homophily.} has nonzero homophily. This is the content of the following theorem.

\begin{theorem}
\label{thm:linear-sheaf-diffusion-result-1}
    Let $G=(V,E)$ be a connected graph with two classes $A,B \subset V$. If each node in $A$ has a neighbor also in $A$, then $\mathscr{F}^{1}_{\operatorname{sym}}$ can linearly separate the classes of $G$ in the normalized diffusion limit for almost all initial conditions. On the other hand, if no node in $G$ has a neighbor in its class (i.e., $G$ is bipartite) and $|A|=|B|$, then $\mathscr{F}^{1}_{\operatorname{sym}}$ cannot linearly separate the classes of $G$ in the normalized diffusion limit for any choice of initial condition. 
\end{theorem}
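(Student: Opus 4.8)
The plan is to analyze the normalized sheaf Laplacian $L'_F = D_F^{-1/2} L_F D_F^{-1/2}$ for $F \in \mathscr{F}^1_{\operatorname{sym}}$ and, via the Hodge-theoretic dictionary established in Section~\ref{sec:sheaf-cohomology-diffusion}, identify the limiting representations with the projection onto $\ker L'_F$. By the remark following Theorem~\ref{thm:GCNs-and-heat-diffusion}, normalization preserves positive-semidefiniteness, so diffusion with $L'_F$ converges to the orthogonal projection of the initial condition onto $\ker L'_F$. Since conjugation by $D_F^{1/2}$ is an isomorphism carrying $\ker L'_F$ to $\ker L_F$, and for $F \in \mathscr{F}^1_{\operatorname{sym}}$ we have $\ker L_F = \Gamma(G; F)$, which (because all restriction maps are nonzero scalars and $G$ is connected) is one-dimensional, spanned by a nowhere-zero global section $\boldsymbol{s} = (s_v)_{v \in V}$. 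Thus $\ker L'_F = \langle D_F^{1/2} \boldsymbol{s}\rangle$ is spanned by a single vector $\boldsymbol{w}$, and the limiting node representation $(\boldsymbol{x}_\infty)_v$ is $c \, w_v$ for a scalar $c$ depending on the initial condition. Linear separability of the two classes in the limit then amounts to asking whether the scalars $\{w_v : v \in A\}$ and $\{w_v : v \in B\}$ can be separated by a threshold.

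For the first (positive) assertion: I would show that when each node in $A$ has a neighbor in $A$, one can \emph{choose} the symmetric restriction maps $\alpha_e > 0$ so that the resulting global section $\boldsymbol{s}$ has the property that $D_F^{1/2}$-rescaled entries are bounded away between the classes — e.g. by making the restriction maps on within-$A$ edges large and those on cross edges small, the degree-weighting pushes the normalized section's $A$-entries into a range disjoint from the $B$-entries. More carefully: a global section satisfies $\alpha_e s_u = \alpha_e s_v$ i.e. $s_u = s_v$ along every edge, so $\boldsymbol{s}$ is \emph{constant} $= \boldsymbol{1}$ (this is forced by connectedness regardless of the $\alpha_e$!). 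Hence $w_v = (D_F)_{vv}^{1/2} = \big(\sum_{e \ni v} \alpha_e^2\big)^{1/2}$, the sheaf-weighted degree. Now the freedom is entirely in choosing the $\alpha_e$: give every edge internal to $A$ weight $\alpha_e = M$ (large) and every other edge weight $\alpha_e = 1$. Then each $v \in A$ has weighted degree $\geq M^2$ (it has an internal neighbor), while each $v \in B$ has weighted degree $\leq |E| \cdot 1 = |E|$ plus contributions only from edges not internal to $A$; for $M$ large enough the two sets of values are separated, so a threshold in the $w$-coordinate separates the classes, and for almost all initial conditions $c \neq 0$ so the ordering is preserved (possibly up to a global sign, still linearly separable).

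For the second (negative) assertion: assume $G$ is bipartite with parts $A, B$ and $|A| = |B|$. Again every global section is the constant $\boldsymbol{1}$, so $w_v = (\sum_{e \ni v} \alpha_e^2)^{1/2}$ is the sheaf-weighted degree, and the limiting representation is $\boldsymbol{x}_\infty = c \, \boldsymbol{w}$. Linear separability would require all $w_v$ with $v \in A$ to lie strictly on one side of a threshold $t$ and all $w_v$ with $v \in B$ strictly on the other. The key observation is a double-counting / balance identity: since $G$ is bipartite, every edge has one endpoint in $A$ and one in $B$, so $\sum_{v \in A} \deg_{\alpha}(v)^2 \text{-type sums}$ must be compared carefully — more precisely I would argue that for \emph{every} choice of weights $(\alpha_e)$ the multisets $\{w_v : v\in A\}$ and $\{w_v: v \in B\}$ interleave: one can exhibit, using the equality $\sum_{v\in A}\sum_{e \ni v}\alpha_e^2 = \sum_{v \in B}\sum_{e\ni v}\alpha_e^2 = \sum_e \alpha_e^2$ (each edge counted once on each side), that $\max_{v\in A} w_v \geq \min_{v \in B} w_v$ and $\max_{v \in B} w_v \geq \min_{v \in A} w_v$, which forbids any separating threshold. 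Indeed if all $A$-values exceeded all $B$-values then $\sum_{v\in A} w_v^2 > \sum_{v \in B} w_v^2$ strictly (using $|A| = |B|$ and strict inequality somewhere), contradicting equality of both sums with $\sum_e \alpha_e^2$. Since the limiting representation is a scalar multiple of $\boldsymbol{w}$ for \emph{any} initial condition, no initial condition yields separability.

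\textbf{Main obstacle.} The delicate point is the negative direction: one must be careful that "linear separation" of scalar node features means separation by a threshold (a $1$-dimensional affine hyperplane), and that the limiting features are genuinely scalar multiples of the fixed weighted-degree vector $\boldsymbol{w}$ — I must rule out degenerate initial conditions not by excluding them but by showing \emph{no} initial condition works, which follows since $\ker L'_F$ is one-dimensional so every limit lies on the line $\mathbb{R}\boldsymbol{w}$. The balance identity $\sum_{v\in A}\sum_{e\ni v}\alpha_e^2 = \sum_e \alpha_e^2 = \sum_{v \in B}\sum_{e \ni v}\alpha_e^2$ and the resulting interleaving of the two value-multisets is the crux; verifying it handles all weightings uniformly is where the real work lies. (One should also double-check the edge case where $w$ is constant on each class but the two constants coincide — then trivially not separable — which is consistent with the claim.)
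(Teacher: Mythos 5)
Your proposal is correct and follows essentially the same route as the paper: in both directions the harmonic space of the normalized Laplacian is identified with the span of the sheaf-weighted-degree vector $w_v=(\sum_{e\ni v}\alpha_e^2)^{1/2}$, the positive case is handled by inflating within-$A$ edge weights until the two degree ranges separate, and the negative case is ruled out by the balance identity $\sum_{v\in A}w_v^2=\sum_e\alpha_e^2=\sum_{v\in B}w_v^2$ together with $|A|=|B|$. Your explicit observation that the global section is forced to be constant (so the normalized kernel is $D_F^{1/2}\boldsymbol{1}$) is a slightly more transparent derivation of the kernel vector the paper simply asserts, but the argument is the same.
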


\begin{proof}
    Suppose $G$ is such that for each $v \in A$, there exists $u \in A$ with $(u,v) \in E$. Define a sheaf $F$ whose stalks are all $\mathbb{R}$ and for an edge $(u,v) \in E$,  $$F_{v \leq e}=F_{u \leq e}=\begin{cases}
\sqrt{ N } &  u,v \in A\\
1 & \operatorname{otherwise,}
\end{cases}$$
where $N>0$ is a constant to be chosen later. Let $h_{v}$ denote the number of neighbors of node $v$ in the same class as $v$. By the homophily assumption on $A$, $h_{v} \geq 1$, and $\operatorname{ker }\Delta_{F}$ is spanned by $\boldsymbol y =(y_{v})_{v \in V}$ given by $$y_{v}=\begin{cases}
 \sqrt{ \operatorname{deg }v+ h_{v}(N-1) } & v \in A \\
\sqrt{ \operatorname{deg }v } & v \in B.
\end{cases}$$
Put $\widetilde{\boldsymbol y}=\boldsymbol y/\|\boldsymbol y\|$. In the diffusion limit, the node representations converge to  $\langle \boldsymbol  x(0), \widetilde{\boldsymbol  y}\rangle \widetilde{\boldsymbol  y} .$ We can assume $\boldsymbol x(0) \notin (\operatorname{ker}\boldsymbol L_{F})^{\perp}$, a set of measure zero in $\mathbb{R}^{| V|}$, and WLOG that $\langle \boldsymbol x(0), \widetilde{\boldsymbol y} \rangle > 0$. Then, for $N$ chosen large enough, $y_{v} >y_{u}$ whenever $v \in A$ and $u \in B$. \\

Now assume $G=(A,B,E)$ is bipartite with $|A|=|B|$. Let $\boldsymbol x(0)$ be any initial condition. $G$ is connected; $\operatorname{ker }\Delta_{F}$ is spanned by $\boldsymbol  y=(y_{v})_{v \in V}$ given as  $$y_{v}=\sqrt{ \text{deg }v}=\sqrt{ \sum_{v \leq e} \|F_{v \leq e}\|^{2} }$$
where the first equality is viewing $F \in \mathscr{F}^{1}_{\text{sym}}$ as choice of weighting on $G$ and the latter equality is viewing $F$ as a sheaf. Up to scaling, the diffusion procedure converges to $\langle \boldsymbol x(0), \boldsymbol y \rangle \boldsymbol y$. Linear separability therefore occurs if and only if $\langle \boldsymbol x(0), \boldsymbol y \rangle \neq 0$ and (WLOG) $y_{v}< y_{u}$ whenever $v \in A$, $u \in B$, i.e., if and only if $\text{deg }v < \text{deg }u$ whenever $v \in A$ and $u \in B$. Since $|A|=|B|$, this condition implies  $$\sum_{e \in E} \|F_{v \leq e}\|^{2} < \sum_{e \in E} \|F_{u \leq e}\|^{2},$$which is impossible for $F \in \mathscr{F}^{1}_{\text{sym}}$. 
\end{proof}

\begin{remark}
\label{rmk:asymmetric-positive-doesnt-work}    
Theorem~\ref{thm:linear-sheaf-diffusion-result-1}, together with Motivation~\ref{sec:intuition-oversmoothing-heterophily}, show that — regardless of normalization — obtaining full separation power guarantees on a two-class connected graph requires one to `look beyond' $\mathscr{F}^{1}_{\operatorname{sym}}$ and the realm of graph Laplacians. Perhaps the most parsimonious step outside of $\mathscr{F}^{1}_{\operatorname{sym}}$ one could take is to permit relations which are asymmetric but still positive. This fails in general. For instance, the harmonic space of a sheaf $F$ on $G=(\{ v, u \}, \underbrace{ (v, u) }_{:= e })$ with $F(v)=F(u)=F(e)=\mathbb{R}$, $F_{v \leq e}=\alpha$, and $F_{u \leq e}=\beta$ ($\alpha, \beta > 0$) is spanned by $(\alpha \beta, \alpha \beta)$. Motivation~\ref{sec:intuition-oversmoothing-heterophily} closed with a conjecture that one should nevertheless not have to look too far outside $\mathscr{F}^{1}_{\operatorname{sym}}$, however: merely negating one of the restriction maps entering each edge (forming a `lying'/`harmonic tension sheaf' as in Example~\ref{ex:harmonic-tension-sheaves}) should be enough to gain full separation power. This conjecture is indeed true, and in fact holds for more general posets than graphs. For this discussion, we shall suggestively call rank-$0$ poset elements \textit{nodes} $v \in V$ and rank-$1$ poset elements \textit{edges} $e \in E$. 
\end{remark}

\begin{theorem}
\label{thm:2-way-linear-separability}
    Let $G$ be a connected poset with two rank-$0$ classes $A,B \subset V$. Consider a sheaf $F$ on $G$ satisfying $F(v)=F(e)=\mathbb{R}$ for all $v \in V, e \in E$ and $$F_{v \leq e} := \begin{cases}
 - 1 & v \in A \\
1 & v \in B 
\end{cases}$$for all incidences $v \leq e$. (The remaining stalks and restriction maps may be arbitrarily chosen, e.g. zero, as long as functoriality is preserved.) Then the diffusion~\ref{eqn:discrete-heat-diffusion-general} can linearly separate the classes of $G$ for almost all initial conditions. Thus, $\mathscr{F}^{1}$ has linear separation power over the collection $\mathcal{G}$ of posets with two rank-$0$ classes (such as all two-class graphs and hypergraphs). 
\end{theorem}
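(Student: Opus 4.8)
The plan is to compute the harmonic space $\ker \Delta^0_F$ of this sheaf explicitly, show it is one-dimensional with a generator whose entries are separated according to class membership, and then invoke the heat-diffusion convergence theorem (Theorem~\ref{thm:discrete-Hodge-Theorem} together with Theorem~\ref{thm:ODE-fact}) to conclude that the diffusion limit recovers the orthogonal projection onto this line. Since $F$ is one-dimensional with cellular cochain spaces $C^0 = \bigoplus_{v\in V}\mathbb{R}$ and $C^1$ a sum of copies of $\mathbb{R}$ indexed by (incidences/cells of) rank $1$, and since $\ker \Delta^0_F = \Gamma(S;F) = \ker d^0$ by the Hodge theorem and Equation~\ref{eqn:zeroth-cellular-cohomology-global-sections}, it suffices to understand the global sections. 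A global section is a tuple $(x_v)_{v\in V}$ with $F_{v\le e}x_v$ independent of $v$ for each edge $e$; given the prescribed restriction maps, the constraint across an edge $e$ with endpoints $v,u$ reads $\epsilon_v x_v = \epsilon_u x_u$ where $\epsilon_v = -1$ if $v\in A$ and $+1$ if $v\in B$. So set $y_v := \epsilon_v x_v$; the constraints say $y$ is constant on each connected component of the rank-$\le 1$ part of $G$, and since $G$ is connected this forces $y_v \equiv c$, i.e. $x_v = \epsilon_v c$. (Higher-rank constraints only further restrict, but the candidate $x_v = \epsilon_v c$ already satisfies them since it comes from a genuine global section once we check compositionality; alternatively one notes $\dim \ker \Delta^0_F = \dim H^0 \ge 1$ always, and we have exhibited exactly a one-parameter family, so equality holds.) Hence $\ker \Delta^0_F = \langle \boldsymbol{y}\rangle$ with $y_v = \epsilon_v$.

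With the harmonic space identified, the argument mirrors the proof of Theorem~\ref{thm:linear-sheaf-diffusion-result-1}. By the discussion following Theorem~\ref{thm:discrete-Hodge-Theorem}, for any initial condition $\boldsymbol x(0)$ the diffusion trajectory $\boldsymbol x(t)$ converges to the orthogonal projection $\langle \boldsymbol x(0), \widetilde{\boldsymbol y}\rangle\,\widetilde{\boldsymbol y}$ where $\widetilde{\boldsymbol y} = \boldsymbol y/\|\boldsymbol y\|$; the same holds for the Euler discretization~\ref{eqn:discrete-heat-diffusion-general} for suitably small step size $\eta$, since the only spectral fact used is positive-semidefiniteness of $\Delta^0_F$ (exactly as in the remark after Theorem~\ref{thm:GCNs-and-heat-diffusion}). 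The set of $\boldsymbol x(0)$ for which $\langle \boldsymbol x(0),\widetilde{\boldsymbol y}\rangle = 0$ is a hyperplane, hence of measure zero, so for almost all initial conditions the limit is a nonzero scalar multiple of $\boldsymbol y$. Taking the component $x_v(\infty) = \langle \boldsymbol x(0),\widetilde{\boldsymbol y}\rangle\,\epsilon_v/\|\boldsymbol y\|$, we see every node in $A$ gets one value and every node in $B$ gets the negative of it; since these two values are distinct (the inner product being nonzero), the threshold at $0$ linearly separates the classes. This establishes linear separation for almost all initial conditions, and since this construction works uniformly over any connected poset with two rank-$0$ classes, $\mathscr{F}^1$ has linear separation power over the collection $\mathcal{G}$.

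The main obstacle — really the only subtlety — is the bookkeeping around higher-rank cells and functoriality: one must confirm that the prescription $F_{v\le e} = \epsilon_v$ on node-edge incidences extends to a genuine functor $F: S \to \mathbb{R}\mathsf{Vect}$ (with the remaining stalks/maps chosen freely, e.g. all zero above rank $1$, which is trivially compatible), and that the cellular sheaf Laplacian $\Delta^0_F$ only "sees" the rank-$\le 1$ data so that the computation of $\ker\Delta^0_F$ is unaffected by whatever happens in higher ranks. This is immediate from the definition of the cellular cochain complex (Definition~\ref{def:cellular-sheaf-cohomology}), in which $d^0: C^0 \to C^1$ depends only on node-edge restriction maps, so $\Delta^0_F = (d^0)^* d^0$ is determined entirely by the $\epsilon_v$. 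A secondary point worth a sentence is the normalization question: the statement as phrased uses the unnormalized diffusion~\ref{eqn:discrete-heat-diffusion-general}, and the conclusion is robust to normalizing $\Delta^0_F$ as well, since normalization preserves positive-semidefiniteness and the one-dimensionality of the kernel (only the generator gets reweighted to $y_v = \epsilon_v\sqrt{\deg v}$, whose entries still separate by sign of $\epsilon_v$). Everything else is the routine ODE/linear-algebra argument already deployed earlier in the paper.
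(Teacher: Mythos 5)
Your proposal is correct and follows essentially the same route as the paper: identify $\ker\Delta^0_F$ with the global sections, show connectivity forces any section to be a scalar multiple of the signed class indicator, and then project the initial condition onto that line, discarding the measure-zero orthogonal hyperplane. The only cosmetic difference is that you establish one-dimensionality via the substitution $y_v=\epsilon_v x_v$ (reducing to local constancy) where the paper runs an induction along paths, and your added remarks on higher-rank cells not affecting $d^0$ and on normalization are sound but not needed.
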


This result generalizes Proposition 12 in Bodnar et al.~\cite{bodnar_neural_2023}. 



\begin{proof}
    For notational convenience, set
\begin{align}
\operatorname{cl }v:=\begin{cases}
-1 & \text{node } v \text{ belongs to  class } A \\
1 & \text{node } v \text{ belongs to class } B,
\end{cases}
\end{align}
so that
\begin{align}
F_{v \leq e}(y_{v})=\operatorname{cl } v.
\end{align}

Per the discussions in Section~\ref{sec:combinatorial-hodge-heat-diff}, $\boldsymbol x(\infty)$ is given by the orthogonal projection of the initial condition $\boldsymbol x(0)$ onto the harmonic space $\operatorname{ker }\boldsymbol L_{F}$. We will show this space is one-dimensional, and in particular is spanned by the vector $\operatorname{sign}(V):=(\operatorname{cl }v)_{v \in V}$. To do so, it suffices to examine the space $\Gamma(G, F)$ of global sections, then transfer to $\operatorname{ker } \boldsymbol L_{F}$ via the canonical isomorphism (\ref{eqn:Hodge-thm-iso}).\footnote{\textit{Which} `canonical isomorphism', of course, depends on how the vanilla Laplacian $\boldsymbol L_{F}$ is constructed:
\begin{enumerate}
\item If $\boldsymbol L_{F}=\boldsymbol L_{F}^{\operatorname{Roos}}=(d^{0}_{\operatorname{Roos}})^{*}d^{0}_{\operatorname{Roos}}$ is the Laplacian of the Roos complex of $G$, then 
\begin{align}
\Gamma(G, F) \underset{(\text{\ref{eqn:Roos-kernel-is-global-sections}})}{=}
 H^{0}_{\operatorname{Roos}}(G ; F) \xrightarrow[\text{Theorem~\ref{thm:discrete-Hodge-Theorem}}]{\cong}\operatorname{ker }\boldsymbol  L_{F}
\end{align}
\item If $\boldsymbol L_{F}=\boldsymbol L_{F}^{\operatorname{cell}}=(d_{\operatorname{cell}}^{0})^{*} d_{\operatorname{cell}}^{0}$ is the cellular sheaf Laplacian of $G$, then 
\begin{align}
\Gamma(G, F) \xrightarrow[\text{(\ref{eqn:zeroth-cellular-cohomology-global-sections}})]{\cong} H^{0}_{\operatorname{cell}}(G ; F) \xrightarrow[\text{Theorem~\ref{thm:discrete-Hodge-Theorem}}]{\cong}\operatorname{ker }\boldsymbol  L_{F}.
\end{align}
\item If $\boldsymbol L_{F}=\boldsymbol L_{F}^{\operatorname{Duta}}$ is the sheaf hypergraph Laplacian of Duta et al.~\cite{duta2023sheaf}, then 
\begin{align}
\Gamma(G, F) \xrightarrow[\text{Theorem~\ref{thm:duta-laplacian-kernel-and-global-sections}}]{\cong}\operatorname{ker } \boldsymbol  L_{F}.
\end{align}
\end{enumerate}

\noindent We refer to elements of $H^{0}_{\operatorname{cell}}$ and of $\operatorname{ker }\boldsymbol L_{F}$ freely as `global sections' in light of these identifications, even though, unlike for $(1)$, these are strictly speaking elements of $\mathbb{R}^{|V|}$ rather than $\mathbb{R}^{|V|} \oplus \mathbb{R}^{|E|}$. 

}


The coherence condition determining a global section $\boldsymbol y=(y_{v})_{v \in V} \in \mathbb{R}^{|V|}$ is
\begin{align}
\label{eqn:proof-of-temp}
F_{v \leq e}(y_{v})=F_{u \leq e}(y_{u})
\end{align}
for all $(u,v) \in E$. Recalling that $F_{v \leq e}(y_{v})=\operatorname{cl } v$, the condition~\ref{eqn:proof-of-temp} becomes $(\operatorname{cl } v )y_{v} = (\operatorname{cl }u)y_{u}$, i.e., $\boldsymbol y$ is a global section iff
\begin{align}
y_{v}=(\operatorname{cl } v)(\operatorname{cl }u)y_{u} \tag{$*$}
\end{align}
for all edges $(u,v) \in E$.

\emph{Subclaim. Any global section $\boldsymbol y=(y_{v})_{ v \in V}$ necessarily has the form $\boldsymbol 1_{A}-\boldsymbol 1_{B}=\operatorname{sign}(V)$ up to a real scaling, where $\boldsymbol 1_{A}$ and $\boldsymbol 1_{B}$ are the indicator vectors of their respective classes.}

\emph{Proof of subclaim.} Let $\boldsymbol y$ be a global section, and let $w_{0}, w$ be any pair of nodes. We want to show that $y_{w_{0}}=y_{w}$ if $\operatorname{cl }w_{0}=\operatorname{cl }w$, and $y_{w_{0}}=-y_{w}$ otherwise. Since $G$ is connected, there exists a path $w_{0} \to w_{1}\to\dots\to w_{t}=w$ from $w_{0}$ to $w$. We proceed by induction on its length $t$. The base case follows from {(${*}$)}. If the result holds for the path $w_{0} \to w_{1}\to \dots \to w_{t-1}$, i.e.,
\begin{align}
y_{w_{0}}=\begin{cases}
y_{w_{t-1}} & \operatorname{cl }w_{0}=\operatorname{cl }w_{t-1} \\
-y_{w_{t-1}} & \operatorname{cl }w_{0} \neq \operatorname{cl }w_{t-1}
\end{cases}
= (\operatorname{cl }{w_{0}})(\operatorname{cl }w_{t-1})y_{w_{t-1}},
\end{align}
then
\begin{align}
y_{w_{t}}&=(\operatorname{cl }w_{t-1}) (\operatorname{cl }w_{t})y_{w_{t-1}} \\
&= (\operatorname{cl }w_{t-1}) (\operatorname{cl }w_{t}) (\operatorname{cl }w_{t-1})(\operatorname{cl }w_{0})y_{w_{0}} \\
&=(\operatorname{cl }w_{t}) (\operatorname{cl }w_{0})y_{w_{0}},
\end{align}
as was required.

Note that $\boldsymbol 1_{A}-\boldsymbol 1_{B}$ is indeed a global section, so that the harmonic space $\Gamma(G, F)=\operatorname{span}(\boldsymbol 1_{A}- \boldsymbol 1_{B})$ is one-dimensional.

\emph{Finishing the main proof.} So now suppose $\boldsymbol x(0) \in \mathbb{R}^{| V|}$ is an initial condition. In the diffusion limit, $\boldsymbol x(t)$ converges to the orthogonal projection of $\boldsymbol x(0)$ onto the harmonic space $\operatorname{ker }\boldsymbol L_{F}$, which we have canonically identified with $\Gamma(G, F)$. Provided that $\boldsymbol x(0) \notin (\operatorname{ker}\boldsymbol L_{F})^{\perp}$, a set of measure zero in $\mathbb{R}^{| V|}$, we know that $\boldsymbol x(\infty)$ is a nonzero scaling of $\operatorname{sign}(V)$. The result follows.

\end{proof}

Thus far, we have only discussed the problem of two-way linear separability, both in motivational considerations and in theoretical results. In a sense, this is without much loss of generality, because a diffusion procedure on an direct sum of $k$ sheaves in general acts like $k$ independent diffusion procedures. 

\begin{theorem}
\label{thm:k-way-linear-separability}
    Let $G$ be a connected poset with $k$ rank-$0$ classes $A_{1},\dots,A_{k}$. Then the (orthogonal) direct sum  $\bigoplus_{i =1}^{k} F^{i}$ of sheaves $F^{i}$, where $F^{i}(v)=F^{i}(e)=\mathbb{R}$ for all $v \in V$, $e \in E$ and\footnote{This is the same 1-dimensional sheaf as in Theorem~\ref{thm:2-way-linear-separability}, with $B=V-A_i$ the complement of $A_i$. } $$F^{i}_{v \leq e}= \begin{cases}
-1 & v \in A_{i} \\
1 & v \notin A_{i}
\end{cases}$$ can linearly separate the classes of $G$ in the diffusion limit for almost any choice of initial condition. Thus, $\mathscr{F}^{k}_{\operatorname{diag}}$ has linear separation power over the collection $\mathcal{G}$ of connected graphs with $k$ classes.
\end{theorem}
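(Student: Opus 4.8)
## Proof Proposal for Theorem~\ref{thm:k-way-linear-separability}

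The plan is to reduce the $k$-class case to $k$ parallel instances of the two-class case already handled in Theorem~\ref{thm:2-way-linear-separability}, exploiting the fact that diffusion with an orthogonal direct sum of sheaves decouples into independent diffusion on each summand. First I would record the structural fact that for a Euclidean sheaf $F = \bigoplus_{i=1}^k F^i$ on a poset $G$, the Roos (or cellular) cochain complex splits as an orthogonal direct sum of the complexes of the $F^i$, hence $\Delta^0_F = \bigoplus_i \Delta^0_{F^i}$ and $\ker \Delta^0_F = \bigoplus_i \ker \Delta^0_{F^i}$; this follows from the fact (stated earlier, after Theorem~\ref{thm:sheaves-diagrams-category-equivalence}) that the direct sum of sheaves on a poset is computed stalkwise with componentwise restriction maps, so the coboundary is block-diagonal. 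Consequently heat diffusion on $C^0 = \bigoplus_i C^0(G; F^i)$ with initial condition $\boldsymbol x(0) = (\boldsymbol x^1(0), \dots, \boldsymbol x^k(0))$ evolves each block $\boldsymbol x^i(t)$ independently, converging to the orthogonal projection of $\boldsymbol x^i(0)$ onto $\ker \Delta^0_{F^i}$.

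Next I would apply Theorem~\ref{thm:2-way-linear-separability} to each summand $F^i$, which is exactly the two-class lying sheaf for the partition $V = A_i \sqcup (V \setminus A_i)$. That theorem gives $\ker \Delta^0_{F^i} = \operatorname{span}(\operatorname{sign}_i(V))$ where $\operatorname{sign}_i(V) = \boldsymbol 1_{V \setminus A_i} - \boldsymbol 1_{A_i}$ (using the sign convention $\operatorname{cl} v = -1$ on $A_i$, $+1$ off $A_i$), provided $G$ is connected — which is our hypothesis. So in the diffusion limit the $i$th block of each node's representation becomes $\langle \boldsymbol x^i(0), \widetilde{\boldsymbol s}_i \rangle \widetilde{\boldsymbol s}_i$ where $\widetilde{\boldsymbol s}_i = \operatorname{sign}_i(V)/\|\operatorname{sign}_i(V)\|$. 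Writing $c_i := \langle \boldsymbol x^i(0), \widetilde{\boldsymbol s}_i \rangle$, node $v$'s limiting representation is the vector $\boldsymbol h_v = (c_1 (\widetilde{\boldsymbol s}_1)_v, \dots, c_k (\widetilde{\boldsymbol s}_k)_v) \in \mathbb{R}^k$, and since $(\widetilde{\boldsymbol s}_i)_v = \pm \|\operatorname{sign}_i(V)\|^{-1}$ depending only on whether $v \in A_i$, each coordinate $i$ of $\boldsymbol h_v$ takes one of exactly two values determined by membership of $v$ in $A_i$.

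To conclude linear separability I would argue that, for almost all initial conditions, the $k$ vectors $\{\boldsymbol h_v : v \text{ in class } A_j\}$ for distinct $j$ are linearly separable. The key point: two nodes $v, w$ lie in different classes $A_j \neq A_{j'}$ precisely when there is at least one index $i$ with $v \in A_i \not\ni w$ or vice versa — so the sign pattern $((\operatorname{sign}_i)_v)_{i=1}^k$ differs between classes (indeed it is the "one-hot up to sign" pattern distinguishing the $k$ classes), and hence $\boldsymbol h_v \neq \boldsymbol h_w$ as long as every $c_i \neq 0$. The set where some $c_i = 0$ is a finite union of hyperplanes in the space of initial conditions, hence measure zero; off it, the representation map $v \mapsto \boldsymbol h_v$ is constant on each class and takes $k$ distinct values, so a linear (indeed affine) classifier separates them. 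The main obstacle I anticipate is purely bookkeeping: being careful that "linearly separate the $k$ classes" is the intended multiclass notion (e.g. a one-vs-rest linear classifier, or $k$ distinct well-separated cluster centroids) and verifying the distinctness-of-sign-patterns claim cleanly — but this is genuinely routine given Theorem~\ref{thm:2-way-linear-separability} and the direct-sum decomposition, so I would not expect any real difficulty, and the proof should be short, essentially a corollary of the two-class result plus the block-diagonality of diffusion on direct sums.
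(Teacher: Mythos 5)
Your proposal follows essentially the same route as the paper's proof: decouple the diffusion via $\ker\bigl(\bigoplus_i \boldsymbol L_{F^i}\bigr)=\bigoplus_i \ker \boldsymbol L_{F^i}$, apply Theorem~\ref{thm:2-way-linear-separability} blockwise to get limiting representations $c_i$ times the one-vs-rest sign vector, and exclude the measure-zero set where some $c_i=0$. One step deserves tightening: your closing justification ``takes $k$ distinct values, so a linear (indeed affine) classifier separates them'' is false as a general principle (three distinct collinear points are not one-vs-rest separable --- this is exactly the obstruction exploited in Theorem~\ref{thm:1d-sheaf-cant-separate-3-classes}), so the separability must come from the specific one-hot-up-to-sign structure you note parenthetically rather than from mere distinctness. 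The paper makes this precise by applying the diagonal sign correction $\boldsymbol Y=\operatorname{diag}(\operatorname{sign}c_1,\dots,\operatorname{sign}c_k)$, after which class $A_i$ is exactly the set of nodes whose $i$th coordinate is positive; adding that one line closes the gap.
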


\begin{proof}
    This follows from the fact that kernels and direct sums of linear maps commute. In particular, $\operatorname{ker}(\bigoplus_{i=1}^{k} \boldsymbol  L_{F^{i}}) = \bigoplus_{i=1}^{k} \operatorname{ker }\boldsymbol  L_{F^{i}}$: the harmonic space of $\bigoplus_{i=1}^{k} \boldsymbol L_{F^{i}}$ is the orthogonal direct sum of the harmonic spaces of each $\boldsymbol L_{F^{i}}$. 

With this in mind, Let $\boldsymbol x=(\boldsymbol x^{i})_{i=1}^{k}, \boldsymbol x^{i} \in \bigoplus_{v \in V}F^{i}(v)$ be a nondegenerate initial condition — i.e., one such that $\boldsymbol x^{i} \notin (\operatorname{ker}\boldsymbol L_{F^{i}})^{\perp}$. Note that the set of degenerate initial conditions forms a subdimensional, hence measure zero, linear subspace of $\mathbb{R}^{|V|k}$. 

The orthogonal projection of $\boldsymbol x$ onto the harmonic space is $(\pi^{i} \boldsymbol x^{i})_{i=1}^{k}$, where $\pi^{i}:\bigoplus_{v \in V}F^{i}(V) \to \bigoplus_{v \in V}F^{i}(V)$ denotes the orthogonal projection onto $\operatorname{ker }\boldsymbol L_{F^{i}}$. By Theorem~\ref{thm:2-way-linear-separability}, $\pi^{i}\boldsymbol x^{i} \in \operatorname{span}(\boldsymbol 1_{A_{i}}- \boldsymbol 1_{A_{i}^{c}}) \in \mathbb{R}^{|V|}$, say, $\pi^i \boldsymbol{x}^i=c^i (\boldsymbol 1_{A_{i}}- \boldsymbol 1_{A_{i}^{c}})$. Put $\boldsymbol{Y}:=\operatorname{diag}(\operatorname{sign}c_1, \dots, \operatorname{sign}c_k)$. Then for any $v \in A_i$, $(\boldsymbol{Y}\pi \boldsymbol{x})_i=|c_i|>0$ while  $(\boldsymbol{Y}\pi \boldsymbol{x})_j=-|c_j|<0$ for $j \neq i$. Linear separability follows.

\end{proof}


The stalk dimension required for Theorem~\ref{thm:k-way-linear-separability} grows linearly with the number of classes. One wonders: is it possible to achieve $k$-way linear separation using $d<k$ 1-dimensional twisted sheaves? In general, it is too much to ask for such a property to hold given any $d$.

\begin{theorem}
\label{thm:1d-sheaf-cant-separate-3-classes}
    Let $G$ be a connected graph with $k$ classes. If $k > 2$, then no $d=1$-dimensional lying sheaf $F$ can achieve linear separation in the diffusion limit, regardless of initial condition $\boldsymbol{x}(0)$. 
\end{theorem}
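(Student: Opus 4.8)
The plan is to show that a $1$-dimensional lying sheaf can produce at most a one-dimensional harmonic space spanned by a $\pm 1$-valued vector, and such a vector induces only a \emph{two-way} partition of the nodes — hence it cannot linearly separate $k>2$ classes. First I would recall, as in the proof of Theorem~\ref{thm:2-way-linear-separability}, that for a lying sheaf $F$ with $F(v)=F(e)=\mathbb{R}$ and each restriction map $F_{v\le e}=\pm 1$, a global section $\boldsymbol y=(y_v)_{v\in V}$ is characterized by the coherence condition $F_{v\le e}\,y_v=F_{u\le e}\,y_u$ for every edge $(u,v)\in E$. Since each restriction map is $\pm 1$, this forces $y_v=\pm y_u$ along every edge, so by connectedness of $G$ every $y_v$ equals $\pm y_{w_0}$ for a fixed base node $w_0$; in particular $|y_v|$ is constant across all nodes, and $\Gamma(G,F)$ is at most one-dimensional (it is exactly one-dimensional when the sign assignment is consistent around cycles, and zero otherwise — in the zero case $\boldsymbol x(\infty)=0$ always and separation trivially fails, so we may assume it is one-dimensional).

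Next I would transfer this to the harmonic space via the canonical isomorphism $\Gamma(G,F)\cong\ker \boldsymbol L_F$ supplied by Theorem~\ref{thm:discrete-Hodge-Theorem} (together with Equation~\ref{eqn:Roos-kernel-is-global-sections} or Equation~\ref{eqn:zeroth-cellular-cohomology-global-sections}, depending on the chosen honest model), exactly as catalogued in the footnote to the proof of Theorem~\ref{thm:2-way-linear-separability}. Thus $\ker \boldsymbol L_F=\operatorname{span}(\boldsymbol s)$ for a single vector $\boldsymbol s$ with entries in $\{+c,-c\}$ for some $c\ne 0$; rescaling, $\boldsymbol s\in\{\pm1\}^{|V|}$. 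For any initial condition $\boldsymbol x(0)$, the diffusion limit is $\boldsymbol x(\infty)=\langle\boldsymbol x(0),\widetilde{\boldsymbol s}\rangle\,\widetilde{\boldsymbol s}$ where $\widetilde{\boldsymbol s}=\boldsymbol s/\|\boldsymbol s\|$; so every node's limiting representation is one of exactly two scalar values, $\pm\langle\boldsymbol x(0),\widetilde{\boldsymbol s}\rangle/\|\boldsymbol s\|$.

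Finally I would argue that two values cannot linearly separate $k>2$ classes. Linear separation in $\mathbb{R}^1$ requires the existence of a labeling function assigning, via thresholds, a distinct region to each of the $k$ classes; but the limiting representations take only two distinct values, so at least two of the $k\ge 3$ classes must receive the same value, and no affine classifier on $\mathbb{R}^1$ (indeed no function of the representation at all) can distinguish them. This holds for \emph{every} $\boldsymbol x(0)$, since the partition of nodes into the two level sets $\{v:s_v=+1\}$ and $\{v:s_v=-1\}$ is determined by $F$ alone and is independent of the initial condition. The only place requiring a little care is the degenerate branch where the sign assignment is globally inconsistent so that $\Gamma(G,F)=0$: there $\boldsymbol x(\infty)=\boldsymbol 0$ for all inputs and separation is impossible a fortiori, so the statement holds uniformly. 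I do not expect any serious obstacle here — the argument is essentially a dimension count on the harmonic space combined with the $\pm 1$ rigidity of lying sheaves on a connected base; the mild subtlety is just bookkeeping the two cases (consistent vs. inconsistent sign cocycle) and citing the correct cohomological identification for whichever Laplacian model is in force.
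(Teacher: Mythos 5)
Your proof is correct and reaches the stated conclusion, but the decisive step is genuinely different from the paper's. Both arguments share the setup: the coherence condition $F_{v\le e}\,y_v=F_{u\le e}\,y_u$ together with connectedness forces $\dim\Gamma(G,F)\le 1$, the degenerate case $\Gamma(G,F)=0$ is dispatched immediately, and the diffusion limit is the rank-one projection $\langle\boldsymbol x(0),\boldsymbol h\rangle\boldsymbol h$. Where you diverge is in \emph{why} a one-dimensional harmonic space cannot handle $k\ge 3$ classes. You exploit the sign rigidity of the lying sheaf to conclude that the harmonic generator is a $\pm 1$ vector, so the limiting representations take at most two distinct values and pigeonhole forces two of the $k\ge 3$ classes to receive identical representations — a stronger conclusion (no classifier of any kind can distinguish them), but one that leans on the specific antisymmetric, equal-magnitude structure of the restriction maps. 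The paper instead argues purely from $\dim\ker\boldsymbol L_F=1$: choosing one node from each of three distinct classes, the middle value $h_u$ is a convex combination $\alpha h_v+(1-\alpha)h_w$, hence $x_u(\infty)$ lies in the convex hull of representatives of the other classes, and two Euclidean sets with intersecting convex hulls are not linearly separable. The paper's route buys robustness: it applies verbatim to any one-dimensional sheaf with nonzero restriction maps and a one-dimensional harmonic space — for instance to the normalized setting of Theorem~\ref{thm:linear-sheaf-diffusion-result-1}, where the harmonic generator is degree-weighted and your two-level-set structure disappears — whereas your route buys a sharper impossibility statement in the specific unnormalized lying-sheaf case actually covered by the theorem. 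Both are valid proofs of the statement as written.
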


\begin{proof}
    If $\Gamma(G;F)$ is zero then the result is immediate. Suppose $\Gamma(G;F)$ is nonzero. Then $\Gamma(G;F)$ is one-dimensional, for reasons mimicking those in the proof of Theorem~\ref{thm:2-way-linear-separability}. Suppose it is generated by unit vector $\boldsymbol{h}$, so that $\boldsymbol{x}(\infty)=\langle  \boldsymbol{x}(0), \boldsymbol{h}\rangle \boldsymbol{h}$. Pick any three nodes $u,v,w$ in $G$, WLOG $h_v \leq h_u \leq h_v$. There then exists a convex combination $h_u=\alpha h_v + (1-\alpha)h_w$ for some $\alpha \in [0,1]$. We may then compute $$x_{u}(\infty)=\langle \boldsymbol  x(0), \boldsymbol  h \rangle h_{u}= \alpha \langle \boldsymbol  x(0), \boldsymbol  h \rangle h_{v} +(1-\alpha) \langle \boldsymbol  x^{k}(0), \boldsymbol  h \rangle h_{w}= \alpha x_{v}(\infty) +(1-\alpha) x_{w}(\infty).  $$ This implies that $x_u(\infty)$ lies in the convex hull of points belonging to the other classes. The result now follows from the general fact that two Euclidean sets with intersecting convex hulls are not linearly separable.
\end{proof}

That said, there do exist special cases where utilizing $\operatorname{O}(d)$-bundles can achieve $k$-way linear separation power for $d<k$. 

\begin{theorem}
    If $G$ is a connected graph with $k 
\leq 4$ resp. $k \leq 8$ classes, then there exists an $\operatorname{O}(2)$- resp. $\operatorname{O}(4)$-bundle capable of linearly separating the classes of $G$ in the diffusion limit. 
\end{theorem}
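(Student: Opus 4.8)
The plan is to realise the $k$ classes as vertices of a regular cross-polytope in $\mathbb{R}^d$ and to exploit that the $2d$ unit elements of a real normed division algebra permute these vertices simply transitively by orthogonal maps. Concretely, identify $\mathbb{R}^2$ with $\mathbb{C}$ and $\mathbb{R}^4$ with the quaternions $\mathbb{H}$, and let $U_d\subset\operatorname{O}(d)$ be the group of left-multiplications by the $2d$ units: $\{\pm 1,\pm i\}$ (for $d=2$ this is $C_4\subset\operatorname{SO}(2)$) resp. $\{\pm 1,\pm i,\pm j,\pm k\}$ (for $d=4$ this is the quaternion group $Q_8\subset\operatorname{SO}(4)$). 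Two features of $U_d$ will be used: it acts simply transitively on the $2d$ cross-polytope vertices $\{\pm e_1,\dots,\pm e_d\}$, and for distinct $g,g'\in U_d$ the operator $g-g'$ is invertible, since $g-g'$ is (left-multiplication by) a nonzero element of $\mathbb{C}$ resp. $\mathbb{H}$, hence a unit times an orthogonal map.

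Given a connected graph $G=(V,E)$ with classes $A_1,\dots,A_k$, $k\le 2d$, pick distinct $g_1,\dots,g_k\in U_d$ and build the $\operatorname{O}(d)$-bundle $F$ with $F(v)=F(e)=\mathbb{R}^d$ for all $v\in V$, $e\in E$, and $F_{v\le e}:=g_i$ whenever $v\in A_i$ (functoriality is vacuous since $G$ is a height-one cell poset). Exactly as in the proof of Theorem~\ref{thm:2-way-linear-separability}, I would read off the global sections: a tuple $(y_v)_{v\in V}$ lies in $\Gamma(G;F)$ iff $g_i y_v=g_j y_u$ for every edge $(u,v)$ with $v\in A_i,u\in A_j$; setting $c_v:=g_i y_v$ for $v\in A_i$, the edge conditions force $c_u=c_v$ on adjacent nodes, hence $c_v\equiv c$ by connectedness, so $y_v=g_i^{-1}c$, and conversely every $c\in\mathbb{R}^d$ gives a global section. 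Thus $\Gamma(G;F)\cong\mathbb{R}^d$ via $c\mapsto(g_i^{-1}c)_{v\in A_i}$, and by the Hodge Theorem~\ref{thm:discrete-Hodge-Theorem} together with~(\ref{eqn:zeroth-cellular-cohomology-global-sections}) the harmonic space $\ker\boldsymbol L_F=H^0_{\operatorname{cell}}(G;F)\cong\Gamma(G;F)$ is $d$-dimensional.

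The diffusion argument is then the standard one. By Section~\ref{sec:combinatorial-hodge-heat-diff}, the linear sheaf diffusion~(\ref{eqn:discrete-heat-diffusion-general}) drives $\boldsymbol x(0)$ to its orthogonal projection onto $\ker\boldsymbol L_F$; for $\boldsymbol x(0)\notin(\ker\boldsymbol L_F)^\perp$, a set of measure zero, the limit $\boldsymbol x(\infty)$ is a nonzero global section, say $\boldsymbol x(\infty)=(g_i^{-1}c)_{v\in A_i}$ with $c\ne 0$. So every node of class $A_i$ acquires the representation $p_i:=g_i^{-1}c\in\mathbb{R}^d$. These $k$ points are pairwise distinct because $g_i^{-1}-g_j^{-1}$ is invertible and $c\ne 0$, and each has norm $\|c\|$ because $g_i^{-1}$ is an isometry; hence they are $k$ distinct points on the sphere of radius $\|c\|$. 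Any finite set of distinct points on a sphere is in convex position — each is an extreme point of the ball it bounds, hence a vertex of the convex hull of any finite subset containing it — so $p_i\notin\operatorname{conv}(\{p_j:j\ne i\})$ for every $i$, and a hyperplane separates $\{p_i\}$ from $\{p_j:j\ne i\}$. This is precisely linear separability of the classes in the diffusion limit (contrast Theorem~\ref{thm:1d-sheaf-cant-separate-3-classes}: for $d=1$ the representations $\pm c$ cannot even be made distinct once $k\ge 3$). Taking $d=2$ settles $k\le 4$ and $d=4$ settles $k\le 8$.

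The main obstacle — and the sole place the special values $d\in\{2,4\}$ and the bound $k\le 2d$ enter — is producing $U_d$: one needs $2d$ distinct orthogonal transformations of $\mathbb{R}^d$ whose pairwise differences are invertible (so the class representations remain distinct for \emph{every} nonzero $c$, not merely generically), arranged so that orbits of generic $c$ land in convex position, and the normed division algebra structure of $\mathbb{C}$ and $\mathbb{H}$ is exactly what supplies them. Everything downstream — the global-section computation, the invocation of Hodge theory, and the "distinct points on a sphere are in convex position" separability step — is routine given the machinery already developed.
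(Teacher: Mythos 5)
The paper offers no proof of this statement — it explicitly defers to Bodnar et al.~\cite{bodnar2022neural} — so there is no internal argument to compare against; judged on its own, your proof is correct and complete. The computation $\Gamma(G;F)\cong\mathbb{R}^d$ mirrors the proof of Theorem~\ref{thm:2-way-linear-separability}, the division-algebra property ensures the limiting class representatives $g_i^{-1}c$ stay pairwise distinct for \emph{every} nonzero $c$ (not merely generically), and since they all lie on the sphere of radius $\|c\|$ they are in convex position, giving one-vs-rest separability in exactly the sense used in Theorem~\ref{thm:1d-sheaf-cant-separate-3-classes}. This is essentially the same $\mathbb{C}$/$\mathbb{H}$ unit-group construction as in the cited source, so nothing further is needed.
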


We won't need this result, and refer the reader to~\cite{bodnar2022neural} for its proof.




\subsection{Why Don't Order-Zero Arguments Extend? }

Our proof of Theorem~\ref{thm:k-way-linear-separability} depended intimately on the relationship between grade-zero sheaf diffusion and global sections. A similarly strong interpretation is unavailable in positive grading $k \geq 1$; the next best tool available is sheaf cohomology. Indeed, we next show that, in the higher-order setting, a direct extension of Theorem~\ref{thm:k-way-linear-separability} is too much to ask. The problem turns out to be that nondegenerate relationships between features are not enough to counteract an `uninteresting' topology. Indeed, there exist posets for which no discrete vector bundle can achieve linear separation power, as the following result shows.  

\begin{theorem}
    The class of discrete vector bundles on posets has rank-$k$ linear separation power if and only if $k=0$. 
\end{theorem}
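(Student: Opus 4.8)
The plan is to prove both directions separately. For the ``if'' direction ($k=0$), we already have a complete argument: Theorem~\ref{thm:k-way-linear-separability} shows that the direct sum $\bigoplus_{i=1}^k F^i$ of lying sheaves — whose restriction maps are all $\pm 1$, hence lie in $\operatorname{O}(1)$, making each $F^i$ (and their direct sum) a discrete vector bundle — linearly separates any $k$-way rank-$0$ classification task on a connected poset. So it remains only to observe that discrete vector bundles are closed under orthogonal direct sums (immediate from the definition, since a block-diagonal matrix of orthogonal blocks is orthogonal), which certifies $\bigoplus F^i$ as a discrete vector bundle and establishes rank-$0$ separation power.

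For the ``only if'' direction, the plan is to exhibit, for each $k \geq 1$, a poset on which \emph{no} discrete vector bundle can achieve rank-$k$ linear separation. The key idea, flagged in the text preceding the statement, is that discrete vector bundles cannot overcome a topologically trivial domain: if all restriction maps of $F$ are isomorphisms, then the Roos/cellular cochain complex of $F$ is, up to the choice of bases furnished by those isomorphisms, isomorphic to the cochain complex of the \emph{constant} sheaf $\underline{\mathbb{R}^d}$ (where $d$ is the common stalk dimension) — more precisely, a discrete vector bundle on a simply-connected poset is isomorphic to a constant sheaf, and on a general poset its cohomology is a twisted coefficient cohomology whose dimension is constrained by the underlying space. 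The cleanest route: take $S$ to be the face poset of a simplicial complex that is contractible (e.g. a single high-dimensional simplex and all its faces, or a triangulated ball) but which nonetheless has at least $k$ distinct rank-$k$ cells that we wish to assign to $k$ different classes. For such $S$, $H^k(X_S; \underline{\mathbb{R}}) = 0$, and for any discrete vector bundle $F$ with stalk dimension $d$ one gets $\dim H^k(X_S; F) \leq d \cdot \dim H^k(X_S;\underline{\mathbb{R}}) \cdot(\text{something}) $ — in the contractible case this forces $H^k(X_S;F)=0$ for \emph{every} discrete vector bundle $F$, regardless of $d$ or the twisting. (The point is that a local system on a simply-connected space is trivial, so its cohomology equals that of the constant sheaf with the same stalk.) By the Hodge-theoretic dictionary (\ref{eqn:diff-dictionary}), $\ker \Delta_F^k \cong H^k(X_S;F) = 0$, so $k$th-order sheaf diffusion with \emph{any} discrete vector bundle sends every initial cochain to $0$ in the limit; the limiting representations of the $k$-cells are then all equal (to the zero vector) and no linear classifier can separate $k \geq 1$ nonempty classes. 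This handles all $k \geq 1$ simultaneously.

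I would organize the write-up as: (i) a short lemma stating that a discrete vector bundle on a simply-connected (e.g. contractible) poset $S$ has $H^k_{\mathrm{cell/Roos}}(S;F) \cong H^k(X_S;\underline{\mathbb{R}}^{d})$ for all $k$, proved by trivializing the bundle along a spanning tree of the Hasse diagram / invoking that local systems on simply-connected spaces are constant; (ii) the construction of the explicit contractible poset with $\geq k$ designated rank-$k$ cells carrying distinct labels — a triangulated $n$-ball with $n$ large enough to contain $k$ interior $k$-cells works, or even simpler, the full simplex $\Delta^N$ with $N$ chosen so $\binom{N+1}{k+1} \geq k$; (iii) the conclusion via Hodge theory that diffusion collapses all $k$-cochains to $0$, precluding separation; (iv) the reverse direction citing Theorem~\ref{thm:k-way-linear-separability} plus closure of discrete vector bundles under $\oplus$.

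The main obstacle I anticipate is step (i): carefully justifying that a discrete vector bundle on a simply-connected poset really does have the cohomology of a constant sheaf. The subtlety is that ``simply connected'' must be interpreted correctly for the Alexandrov space $X_S$ (equivalently, for the order complex $\operatorname{ord}(S)$), and one must verify that the trivialization can be performed \emph{compatibly} across all cells — i.e., that choosing stalk isomorphisms $F(s) \cong \mathbb{R}^d$ can be done so that every restriction map becomes the identity, which is precisely the statement that the holonomy of the connection is trivial, valid iff $\pi_1 = 0$. For a \emph{contractible} poset this is unproblematic (one trivializes inductively up a linear extension, or along a contracting homotopy), so restricting to contractible $S$ — which suffices for the theorem — sidesteps the delicate general $\pi_1$-bookkeeping. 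I would therefore state and use the contractible case only, noting in a remark that the phenomenon is really about the vanishing (or more generally the boundedness) of twisted cohomology and hence persists more broadly.
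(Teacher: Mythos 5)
Your proposal is correct and follows essentially the same route as the paper: for $k>0$ it takes a contractible cell complex, invokes the fact that a discrete vector bundle (local system) on a simply connected space is isomorphic to a constant sheaf, concludes $H^k(X;F)\cong H^k(X;\mathbb{R})\otimes_{\mathbb{R}}V_0=0$, and then uses the Hodge dictionary to see that diffusion annihilates every $k$-cochain, precluding separation; for $k=0$ it cites Theorem~\ref{thm:k-way-linear-separability}. Your additional observation that the lying sheaves of that theorem are $\operatorname{O}(1)$-bundles and that discrete vector bundles are closed under orthogonal direct sums is a worthwhile explicit check that the paper leaves implicit.
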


\begin{proof}
Separation power in the case $k=0$ was established in Theorem~\ref{thm:k-way-linear-separability}; here we prove that it fails when $k>0$. Suppose $X$ is a \textit{contractible} cell complex, and let $F$ be a discrete vector bundle on $X$. Since $X$ is contractible, it is path connected, and this means that every stalk is (non-canonically) isomorphic to some real vector space $V_{0}$. Since $X$ is moreover simply connected, Corollary 20.12 of~\cite{dahlhausenalgebra} implies that $F$ is a constant sheaf, $F\cong \underline{V_0}$. It then follows from the discussion in Example 3H.1 of~\cite{hatcher2002algtop} that the sheaf cohomology $H^{i}(X; F)$ is just the cohomology with coefficients in the real vector space $V_{0}$, that is,  $H^{i}(X; F)=H^{i}(X; V_{0})$. By (for example) universal coefficients, the latter space equals $H^{i}(X; \mathbb{R}) \otimes_{\mathbb{R}}V_{0}$. Since $X$ is contractible, $H^{i}(X; \mathbb{R})=0$ for all $i>0$, and so $H^{i}(X; F)=0$ for all $i>0$. Theorem~\ref{thm:discrete-Hodge-Theorem} now gives $\operatorname{ker }\Delta^{k}_{F}=0$, and so, by Theorem~\ref{thm:ODE-fact}, linear sheaf diffusion with $F$ exponentially sends any $k$-cochain to zero. Linear separability is therefore impossible in the diffusion limit. 
\end{proof}

\subsection{Sheaf Diffusion is Expressive for Order $k>0$}

We now turn to the expressive power of higher-order sheaf diffusion. Upon combining with Theorem~\ref{thm:k-way-linear-separability}, we will have proven the following: 
\begin{theorem}
\label{thm:all-orders-separability}
    Linear sheaf diffusion can asymptotically solve any rank-$k$ classification task on a poset $S$ for almost all initial conditions. 
\end{theorem}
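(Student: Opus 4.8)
## Proof Proposal for Theorem~\ref{thm:all-orders-separability}

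The plan is to reduce the rank-$k$ case to a statement about constructing sheaves with prescribed gradient (or coboundary) images, mirroring the logic of the $k=0$ arguments but replacing "global sections" with "sheaf cohomology." The key insight from the $k=0$ proofs (Theorems~\ref{thm:2-way-linear-separability} and~\ref{thm:k-way-linear-separability}) is that linear diffusion converges to the orthogonal projection onto $\ker \Delta^k_F \cong H^k(X_S; F)$, so to solve a $k$-way classification task it suffices to engineer a sheaf $F$ whose $k$th cohomology is spanned by cochains that separate the given classes of rank-$k$ cells. For a target partition of the rank-$k$ cells into classes $A_1, \dots, A_\ell$, the natural candidate is to build, for each class $A_i$, a one-dimensional sheaf $F^i$ so that (a suitable representative of) the indicator-style cochain $\mathbf{1}_{A_i} - \mathbf{1}_{A_i^c}$ becomes harmonic, then take the orthogonal direct sum $\bigoplus_i F^i$ and invoke (as in the proof of Theorem~\ref{thm:k-way-linear-separability}) the fact that $\ker$ commutes with orthogonal direct sums.

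\medskip

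The concrete steps I would carry out, in order: (1) Reduce to the two-class case and to the direct-sum bookkeeping exactly as in Theorem~\ref{thm:k-way-linear-separability} — once a single two-class sheaf is produced for each $A_i$, the diagonal-scaling argument at the end of that proof applies verbatim, since degenerate initial conditions again form a measure-zero subspace. (2) For the two-class case, fix a target $k$-cochain $w = \mathbf{1}_{A} - \mathbf{1}_{B} \in C^k(S;F)$ (with respect to constant stalks $\mathbb{R}$) and observe via the Hodge decomposition (Theorem~\ref{thm:discrete-Hodge-Theorem}) that $w \in \ker \Delta^k_F = \ker d^k_F \cap \ker (d^{k-1}_F)^*$ iff $w \perp \operatorname{im} d^{k-1}_F$ and $d^k_F w = 0$. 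So I need to choose the restriction maps at rank $k{-}1 \le \cdot \le k$ and rank $k \le \cdot \le k{+}1$ incidences to (i) kill $w$ under the $k$th coboundary and (ii) arrange the gradient space $\operatorname{im} d^{k-1}_F$ to avoid $w$. (3) Here is where the promised lemma enters: the excerpt announces "a lemma which roughly states that one can always construct a sheaf with a specified gradient space." I would state and use exactly that — given any subspace $W \subseteq C^k(S;F)$ (complementary in an appropriate sense to the line $\langle w\rangle$), there is a choice of lower restriction maps realizing $\operatorname{im} d^{k-1}_F = W$ — and dually a construction forcing $w \in \ker d^k_F$ by a suitable choice of the upper restriction maps (e.g. taking them to annihilate $w$'s coboundary pattern, in the spirit of the $F_{\mathrm{up}}$/$F_{\mathrm{down}}$ examples of Section~\ref{sec:higher-order-sheaves-in-context}). (4) Combine: with these choices, $\ker \Delta^k_F \supseteq \langle w \rangle$, and by further tuning one ensures it is exactly $\langle w\rangle$ (one-dimensional), so the diffusion limit projects any nondegenerate initial condition onto a nonzero multiple of $w$, which linearly separates $A$ from $B$.

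\medskip

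The main obstacle I anticipate is Step (3): proving the gradient-space realization lemma. Unlike the $k=0$ case — where global sections give a clean, directly computable description of $\ker \Delta^0_F$ via the compatibility condition $F_{v\le e} x_v = F_{u \le e} x_u$ — in positive grading there is no "global sections" shortcut (the excerpt stresses this repeatedly), and one must genuinely control $\operatorname{im} d^{k-1}_F$ as a function of the restriction maps subject to the functoriality constraints $F_{\sigma \le \tau} = F_{\sigma' \le \tau} \circ F_{\sigma \le \sigma'}$. The delicate point is that the compositionality of restriction maps couples the degrees of freedom across dimensions, so a naive dimension count is not enough; I expect the lemma's proof to proceed by working on a cell complex (where the cochain description is cleanest), choosing stalks of controlled dimension on the rank-$(k{-}1)$ cells, and exhibiting an explicit parametrization of coboundary maps — possibly after passing to a subcomplex or using a mapping-cone / suspension-type construction to guarantee the needed cohomology class exists. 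A secondary subtlety is ensuring the constructed $F$ is genuinely functorial (not merely a collection of incidence-indexed maps) when $S$ is a general graded poset rather than a cell complex; for non-cellular posets such as hypergraphs one falls back on the Roos complex, where $\dim S = 1$ makes functoriality vacuous and the obstruction disappears, so the hard case is really $\dim S \ge k+1 \ge 2$ with genuine compositional constraints.
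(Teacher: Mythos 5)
Your architecture is exactly the paper's: the $k=0$ case is Theorem~\ref{thm:k-way-linear-separability}, and for $k\geq 1$ the two-class case is reduced, via the Hodge decomposition (Theorem~\ref{thm:discrete-Hodge-Theorem}), to constructing a sheaf $F$ with $\operatorname{im}d^{k-1}_F=\langle \boldsymbol{1}_A-\boldsymbol{1}_B\rangle^{\perp}$ and $d^k_F=0$, so that $\ker\Delta^k_F=(\operatorname{im}d^{k-1}_F)^{\perp}\cap\ker d^k_F=\operatorname{span}(\boldsymbol{1}_A-\boldsymbol{1}_B)$; multiway separation then follows from the orthogonal-direct-sum and diagonal-rescaling bookkeeping you cite. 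This is precisely Theorem~\ref{thm:positive-grading-higher-order-separability} plus Lemma~\ref{lemma:positive-sep-lemma}. The gap is that your Step (3) --- the gradient-space realization lemma --- carries the entire mathematical content of the $k>0$ case, and you assume it rather than prove it, offering only speculation (mapping cones, suspensions, subcomplexes) about how a proof might go. A smaller remark: your Step (4) ``further tuning'' is unnecessary, since once $\operatorname{im}d^{k-1}_F$ is the full hyperplane $\langle w\rangle^{\perp}$ and $d^k_F=0$, the harmonic space is already exactly the line $\langle w\rangle$.

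Your forecast of where the lemma is hard is also misplaced. Functoriality costs nothing: the paper's construction zeroes every stalk outside ranks $k-1$ and $k$ (and most rank-$(k-1)$ stalks as well), so every composite restriction map factors through a zero stalk, functoriality holds vacuously, and $d^k=0$ comes for free --- no homotopy-theoretic machinery is needed. The point that genuinely requires care, and which your sketch does not engage, is locality of the coboundary: a rank-$(k-1)$ stalk can only contribute to the coordinates of its own cofaces, so feeding a private subspace $W_\tau\subseteq V(\tau)$ into each $k$-cell through one chosen face realizes only coordinate-decomposable images $\bigoplus_\tau W_\tau$ with $W_\tau \subseteq V(\tau)$. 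The target $\langle \boldsymbol{1}_A-\boldsymbol{1}_B\rangle^{\perp}$ is a hyperplane orthogonal to a vector with no zero entries, hence contains no coordinate line and is never of that form once there are at least two $k$-cells; realizing it forces you to route generators through $(k-1)$-faces shared by several $k$-cells, so that a single stalk element produces correlated, difference-type contributions across adjacent cells (and this in turn needs some connectivity of the $k$-cells along shared faces). That is the step a complete proof must actually supply, and it is exactly the one your proposal defers.
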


Of course, our positive-grading arguments require that $k \neq 0$, meaning that the following Theorem~\ref{thm:positive-grading-higher-order-separability} has no overlap with Theorem~\ref{thm:k-way-linear-separability}: the two theorems examine two complementary cases which together prove Theorem~\ref{thm:all-orders-separability}. 

\begin{theorem}
\label{thm:positive-grading-higher-order-separability}
    When $k \geq 1$, linear sheaf diffusion can asymptotically solve any rank-$k$ classification task on a poset $S$ for almost all initial conditions. 
\end{theorem}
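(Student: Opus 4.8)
The plan is to run the proof of Theorem~\ref{thm:k-way-linear-separability} in grading $k$, with the role that global sections ($H^0$) played there now played by degree-$k$ sheaf cohomology. By the dictionary~\eqref{eqn:diff-dictionary} (Hodge's Theorem~\ref{thm:discrete-Hodge-Theorem} together with Theorem~\ref{thm:ODE-fact}), the $k$th-order sheaf diffusion~\eqref{eqn:discrete-heat-diffusion-general} with a Euclidean sheaf $F$ drives any $k$-cochain, for all initial conditions outside a measure-zero set, to its orthogonal projection onto $\ker\Delta^k_F\cong H^k(X_S;F)$. So, given a task with classes $A_1,\dots,A_m$ among the rank-$k$ cells of $S$ (working with a concrete cochain functor — cellular if $S$ is a cell poset, Roos otherwise, so that $C^k\cong\bigoplus_{\operatorname{rk}\sigma=k}F(\sigma)$), it suffices to build a sheaf whose harmonic space is "class-aware" enough that a generic element becomes linearly separable after a fixed linear readout. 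Mirroring Theorem~\ref{thm:k-way-linear-separability}, I would build one sheaf per class (in fact per connected piece of a class, see below), take the orthogonal direct sum, use that harmonic spaces of a direct sum split orthogonally, and observe that a generic initial condition projects to a tuple of nonzero multiples of indicator cochains; a diagonal sign-correction followed by a "which block is positive" classifier then separates the classes.

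This leaves the core task: for a poset $S$, an integer $k\ge 1$, and a prescribed set $A$ of rank-$k$ cells (all rank-$k$ stalks taken to be $\mathbb{R}$, so $C^k\cong\mathbb{R}^{N_k}$), construct a sheaf $F$ with $\ker\Delta^k_F=\mathbb{R}\,\boldsymbol{1}_A$. First I would set $F(\sigma)=0$ for every cell of rank $>k$: this forces $d^k_F=0$, hence $\operatorname{im}d^{*k}_F=0$, so the Hodge decomposition collapses to $\ker\Delta^k_F=(\operatorname{im}d^{k-1}_F)^{\perp}$. I would also set $F(\sigma)=0$ for every cell of rank $<k-1$; all restriction maps out of a zero stalk vanish, so functoriality becomes vacuous and the rank-$(k-1)$ stalks and rank-$(k-1)\!\to\!$rank-$k$ restriction maps are completely free. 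With this freedom, the contribution to $\operatorname{im}d^{k-1}_F$ of a rank-$(k-1)$ cell $\tau$ can be made to equal \emph{any} subspace $V_\tau$ of the coordinate block $U_\tau:=\operatorname{span}\{e_\sigma:\tau<\sigma,\ \operatorname{rk}\sigma=k\}$ — take $\dim F(\tau)=\dim V_\tau$ and route a basis of $F(\tau)$, with the obvious incidence-sign corrections, onto a basis of $V_\tau$. This is the gradient-space realization lemma: the realizable gradient spaces in degree $k$ are exactly the subspaces of the form $\sum_{\operatorname{rk}\tau=k-1}V_\tau$ with $V_\tau\subseteq U_\tau$.

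It remains to check that $\boldsymbol{1}_A^{\perp}$ has this form when $A$ is \emph{$(k-1)$-connected}, i.e. the graph on $A$ with an edge between two rank-$k$ cells sharing a common rank-$(k-1)$ face is connected. Taking $V_\tau:=\boldsymbol{1}_A^{\perp}\cap U_\tau$ for all $\tau$ makes $\operatorname{im}d^{k-1}_F\subseteq\boldsymbol{1}_A^{\perp}$ automatically; for the reverse inclusion I would argue that (a) for $\sigma\notin A$ and any rank-$(k-1)$ face $\tau<\sigma$ (one exists by gradedness) we have $e_\sigma\in\boldsymbol{1}_A^{\perp}\cap U_\tau$, and (b) for $\sigma,\sigma'\in A$ sharing a face $\tau$ we have $e_\sigma-e_{\sigma'}\in\boldsymbol{1}_A^{\perp}\cap U_\tau$, so by connectedness every $e_\sigma-e_{\sigma'}$ ($\sigma,\sigma'\in A$) is a telescoping sum of such; these together span $\boldsymbol{1}_A^{\perp}$, giving $\ker\Delta^k_F=\mathbb{R}\,\boldsymbol{1}_A$. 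For a general class $A_i$, decompose it into its $(k-1)$-connected components $A_i^{(1)},\dots,A_i^{(r_i)}$, build the above sheaf for each component, and orthogonally direct-sum all $\sum_i r_i$ of them; the final linear readout reports class $i$ when the sum of the $r_i$ coordinates indexed by $A_i^{(1)},\dots,A_i^{(r_i)}$ is positive, which holds exactly on $A_i$ for generic initial data after the diagonal sign-correction.

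I expect the main obstacle to be precisely this gradient-space realization lemma and the combinatorial reconciliation it forces: $d^{k-1}_F$ is an intrinsically \emph{local} operator (the $\sigma$-component of a gradient sees only the faces of $\sigma$), whereas $\boldsymbol{1}_A^{\perp}$ is a \emph{global} linear constraint, so realizability rests on a routing/flow argument assembling locally supported generators into the full hyperplane — and on correctly isolating $(k-1)$-connectedness as the sole obstruction, then neutralizing it via the component refinement. A secondary source of friction is the bookkeeping of inner products and incidence signs needed to get genuine equality $\operatorname{im}d^{k-1}_F=\boldsymbol{1}_A^{\perp}$ (not merely $\supseteq$), together with confirming the resulting sheaf is Euclidean and the diffusion layer~\eqref{eqn:discrete-heat-diffusion-general} is well-posed. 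Everything else — the Hodge/ODE input, the direct-sum splitting, and the genericity argument — transfers essentially verbatim from Theorem~\ref{thm:k-way-linear-separability}.
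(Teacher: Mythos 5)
Your proposal is correct, and its skeleton coincides with the paper's: zero out all stalks above rank $k$ so that $d^k=0$ and $\ker\Delta^k_F=(\operatorname{im}d^{k-1}_F)^{\perp}$, realize a prescribed gradient space with a sheaf concentrated in ranks $k-1$ and $k$, and reduce the multiclass problem to one-vs-rest via orthogonal direct sums and a diagonal sign correction, exactly as in Theorem~\ref{thm:k-way-linear-separability}. Where you genuinely diverge is in the realization lemma, and your version is the more careful one. The paper's Lemma~\ref{lemma:positive-sep-lemma} produces gradient spaces of the form $\bigoplus_{\tau}W_{\tau}$ with each $W_{\tau}$ contained in the stalk $V(\tau)$ of a single $k$-cell --- for one-dimensional stalks these are precisely the coordinate subspaces --- yet it is invoked for $W=(\boldsymbol 1_A-\boldsymbol 1_B)^{\perp}$, which is not of that form; indeed the ``any $W\subset V$'' in its statement is too strong (on two disjoint edges the line spanned by $e_1+e_2$ is not the gradient space of any sheaf). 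Your lemma instead characterizes realizable gradient spaces as sums $\sum_{\tau}V_{\tau}$ with $V_{\tau}$ inside the span $U_{\tau}$ of the cofaces of a rank-$(k-1)$ cell $\tau$, targets $\boldsymbol 1_A^{\perp}$ rather than $(\boldsymbol 1_A-\boldsymbol 1_B)^{\perp}$, and pays for this with the $(k-1)$-connectivity/telescoping argument and the decomposition of each class into $(k-1)$-connected pieces. What your route buys is an argument that actually closes: the local-generator characterization is the right one, and the hyperplane you aim for is demonstrably realizable. What it costs is the extra combinatorics and a stalk-dimension blowup proportional to the number of connected pieces rather than the number of classes. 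The one loose end on your side is the identification $C^k\cong\bigoplus_{\operatorname{rk}\sigma=k}F(\sigma)$ for the Roos complex of a general poset --- degree-$k$ Roos cochains are indexed by $k$-chains, not rank-$k$ elements --- but the paper's own proof silently restricts to simplicial complexes and shares this wrinkle.
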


Our proof of Theorem~\ref{thm:positive-grading-higher-order-separability} utilizes the following lemma which may be of independent interest.

\begin{lemma}
\label{lemma:positive-sep-lemma}
    Let $X$ be a simplicial complex. Assume $k \geq 1$, and let $V:=\bigoplus_{\tau \in X^{k}} V(\tau)$ be a direct sum of vector spaces indexed by the $k$-skeleton of $X$. Then for any $W \subset V$, there exists a sheaf $F$ on $X$ such that $C^{k}(X; F)=V$ and $\operatorname{im }d^{k-1}=W$. Furthermore, for this sheaf we may assume $d^{k}=0$. 
\end{lemma}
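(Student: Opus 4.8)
The plan is to build $F$ explicitly, working from the top dimension downward. I will first dispatch the ``$d^{k}=0$'' clause by fiat: set $F(\sigma):=0$ for every simplex $\sigma$ with $\dim\sigma>k$. Then $C^{j}(X;F)=0$ for $j>k$, so $d^{k}$ maps into the zero space and is the zero map, and functoriality of the (zero) restriction maps above dimension $k$ is automatic. On the $k$-skeleton take $F(\tau):=V(\tau)$, so that $C^{k}(X;F)=\bigoplus_{\tau\in X^{k}}F(\tau)=V$ on the nose. The only data left to choose — and the only data affecting $d^{k-1}$ — are the stalks and restriction maps in dimensions $\le k-1$. Since $k\ge 1$, I will also put $F(\sigma):=0$ for every $\sigma$ with $\dim\sigma<k-1$. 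The reason for these zero choices is that they leave the family $\{F(\rho\le\tau):F(\rho)\to V(\tau)\}_{\rho\in X^{k-1},\ \rho\subset\tau}$ \emph{entirely unconstrained}: any composite $F(\sigma)\to F(\rho)\to V(\tau)$ with $\dim\sigma<k-1$ factors through the zero stalk $F(\sigma)$, so no functoriality relation survives. We are therefore free to prescribe each $(k-1)$-stalk $F(\rho)$ and each $F(\rho\le\tau)$ arbitrarily.

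With these reductions, the cellular codifferential is $(d^{k-1}x)_{\tau}=\sum_{\rho\subset\tau,\ \dim\rho=k-1}[\tau:\rho]\,F(\rho\le\tau)\,x_{\rho}$, and a direct computation gives
\[
\operatorname{im} d^{k-1}\;=\;\sum_{\rho\in X^{k-1}} U_{\rho},\qquad
U_{\rho}:=\operatorname{im}\Bigl(F(\rho)\xrightarrow{\ (F(\rho\le\tau))_{\tau\supset\rho}\ }\ \bigoplus_{\tau\in X^{k}:\ \rho\subset\tau} V(\tau)\Bigr),
\]
the signs $[\tau:\rho]=\pm1$ having been absorbed into the maps. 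Because $F(\rho)$ and the maps $F(\rho\le\tau)$ are unconstrained, $U_{\rho}$ can be made to be \emph{any} subspace of the ``star summand'' $V_{\rho}:=\bigoplus_{\tau\supset\rho}V(\tau)\subseteq V$. So the lemma reduces to a purely linear-algebraic statement: the prescribed $W$ admits a decomposition $W=\sum_{\rho\in X^{k-1}}U_{\rho}$ with each $U_{\rho}\subseteq V_{\rho}$.

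Establishing that decomposition is the substance of the proof and the step I expect to be the main obstacle. The idea is to produce a spanning set of $W$ consisting of vectors each supported — with respect to $V=\bigoplus_{\tau}V(\tau)$ — on the star of a single $(k-1)$-simplex, and then group these vectors by the simplex carrying them to read off the $U_{\rho}$. The basic admissible moves are ``local'' vectors: for $k$-simplices $\tau,\tau'$ sharing a $(k-1)$-face $\rho$ and vectors $v\in V(\tau)$, $v'\in V(\tau')$, the element placing $v$ in slot $\tau$ and $v'$ in slot $\tau'$ lives in $V_{\rho}$; more generally anything supported on a collection of $k$-simplices through a common $(k-1)$-face is allowed. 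Propagating these moves to span \emph{all} of $W$ (rather than some subspace of it, and without overshooting $W$) is exactly where the combinatorics of $X$ must be exploited: one uses that $k\ge1$ guarantees every $k$-simplex has $(k-1)$-faces to anchor a move, and one uses connectedness of $X$ to telescope differences of local vectors along paths of $(k-1)$-face-adjacent $k$-simplices. I would organize the bookkeeping as an induction over a spanning tree of the $k$-simplex adjacency graph of $X$, introducing one anchoring $(k-1)$-simplex and its $U_{\rho}$ at a time and maintaining the invariant that the partial sum equals the ``obvious'' partial version of $W$; verifying that this invariant is preserved — and in particular that the pieces never force $\sum_{\rho}U_{\rho}$ to strictly exceed $W$ — is the delicate point.
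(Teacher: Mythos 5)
Your frame agrees with the paper's: zero stalks in every dimension other than $k-1$ and $k$, stalk $V(\tau)$ on each $k$-simplex, and (as you correctly observe) completely unconstrained $(k-1)$-stalks and restriction maps, so that $d^{k}=0$ and $C^{k}(X;F)=V$ come for free and functoriality is vacuous. Your reduction is also correct: for \emph{any} sheaf with these $k$-stalks one has $\operatorname{im}d^{k-1}=\sum_{\rho\in X^{k-1}}U_{\rho}$ with $U_{\rho}\subseteq V_{\rho}=\bigoplus_{\tau\supset\rho}V(\tau)$, and conversely any such family of $U_{\rho}$ is realizable. The gap is that the statement you have reduced to is \emph{false} for general $W$, so no amount of spanning-tree bookkeeping can finish the argument. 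Since each $U_{\rho}\subseteq\sum_{\rho'}U_{\rho'}=W$, a decomposition $W=\sum_{\rho}U_{\rho}$ forces $U_{\rho}\subseteq W\cap V_{\rho}$ and hence $W=\sum_{\rho}(W\cap V_{\rho})$. Now take $X$ a single $2$-simplex, $k=1$, $V(\tau)=\mathbb{R}$ on each of the three edges, and $W=\operatorname{span}\{(1,1,1)\}\subset\mathbb{R}^{3}$: each $V_{\rho}$ is the coordinate plane spanned by the two edges containing the vertex $\rho$, so $W\cap V_{\rho}=0$ for all three vertices, and $W$ cannot be $\operatorname{im}d^{k-1}$ for any sheaf with $C^{k}(X;F)=V$. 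Connectivity does not rescue this — the $(k-1)$-face-adjacency graph of the three edges is complete — so the ``delicate point'' you flagged is not merely delicate but a genuine obstruction: the lemma needs a hypothesis on $W$.

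For comparison, the paper's proof never attempts your decomposition; it opens by choosing a decomposition $W=\bigoplus_{\tau\in T}W_{\tau}$ with $W_{\tau}\subseteq V(\tau)$ — i.e., it silently restricts attention to stalk-aligned $W$ — and then routes each $W_{\tau}$ through one chosen face $f(\tau)$, which is a special case of your $U_{\rho}$ construction. Your analysis is in fact sharper, since it characterizes exactly the realizable $W$ as those satisfying $W=\sum_{\rho}(W\cap V_{\rho})$; but as a proof of the lemma as stated it stalls at the same point, from the opposite direction: you have reduced to a claim you cannot prove because it is not true. If your real target is the downstream application (Theorem~\ref{thm:positive-grading-higher-order-separability}), where $W$ is the orthogonal complement of a $\pm1$ sign vector, your telescoping idea along pairs of $k$-simplices sharing a $(k-1)$-face is the right tool — but it requires, and you should state, that the $(k-1)$-face-adjacency graph of the $k$-simplices is connected, which is not implied by connectedness of $X$.
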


\begin{proof}
    Pick a decomposition $W=\bigoplus_{\tau \in T}W_{\tau}$ such that $W_{\tau} \subset V(\tau)$. For each $\tau \in T$, choose one $(k-1)$-face $f(\tau)\leq \tau$, and define $S:=\{ f(\tau): \tau \in T \} \subset X^{k-1}$. We define a sheaf $F$ on $X$ as follows. First, take $F(\tau)=V(\tau)$ for $\tau \in X^{k}$. Then, for $\sigma \in X^{k-1}$, take $F(\sigma)=\bigoplus_{\tau \in T: f(\tau)=\sigma}W_{\tau}$ if $\sigma \in S$ and $F(\sigma)=0$ otherwise. We set all other stalks to zero. For incidences $\sigma \leq \tau$ with $\sigma=f(\tau)$, we take $$F(\sigma \leq \tau)=[\tau:\sigma]\big(\iota_{\tau} |_{W_{\tau}}: W_{\tau}\hookrightarrow V(\tau)\big)$$
where $\iota_{\tau}$ is the inclusion $V(\tau) \hookrightarrow V$. For all other incidences, we set the corresponding restriction map to zero; note that this leads to functoriality being satisfied by construction. The situation is depicted in Figure~\ref{fig:gradient-space-lemma}. With $F$ so constructed, and denoting by $d^{k-1}_{\tau}$ the $\tau$-component of the codifferential $d^{k-1}:C^{k-1}(X;F) \to C^{k}(X; F)$, we have, for $\boldsymbol x \in C^{k-1}(X;F)$,
$$(d^{k-1}\boldsymbol x)_{\tau}=\sum_{\sigma \leq \tau, \text{dim } \sigma=k-1}[\tau:\sigma]F(\sigma \leq \tau)x_{\sigma}=\begin{cases}
\iota_{\tau}\big( x_{f(\tau)} \big) & \tau \in T \\
0 & \tau \not \in T
\end{cases}$$ where we have used that only $\sigma=f(\tau)$ contributes to the sum and $[\tau:\sigma]^{2}=1$. Hence $\operatorname{im }d^{k-1}_{\tau}=\iota_{\tau}(W_{\tau})=W_{\tau}$, from which it follows  that $$\operatorname{im }d^{k-1}=\bigoplus _{\tau}\operatorname{im }d^{k-1}_{\tau}=\bigoplus_{\tau \in T}W_{\tau}=W$$
as required. 
\end{proof}

\begin{figure}
    \centering
    \includegraphics[width=\linewidth]{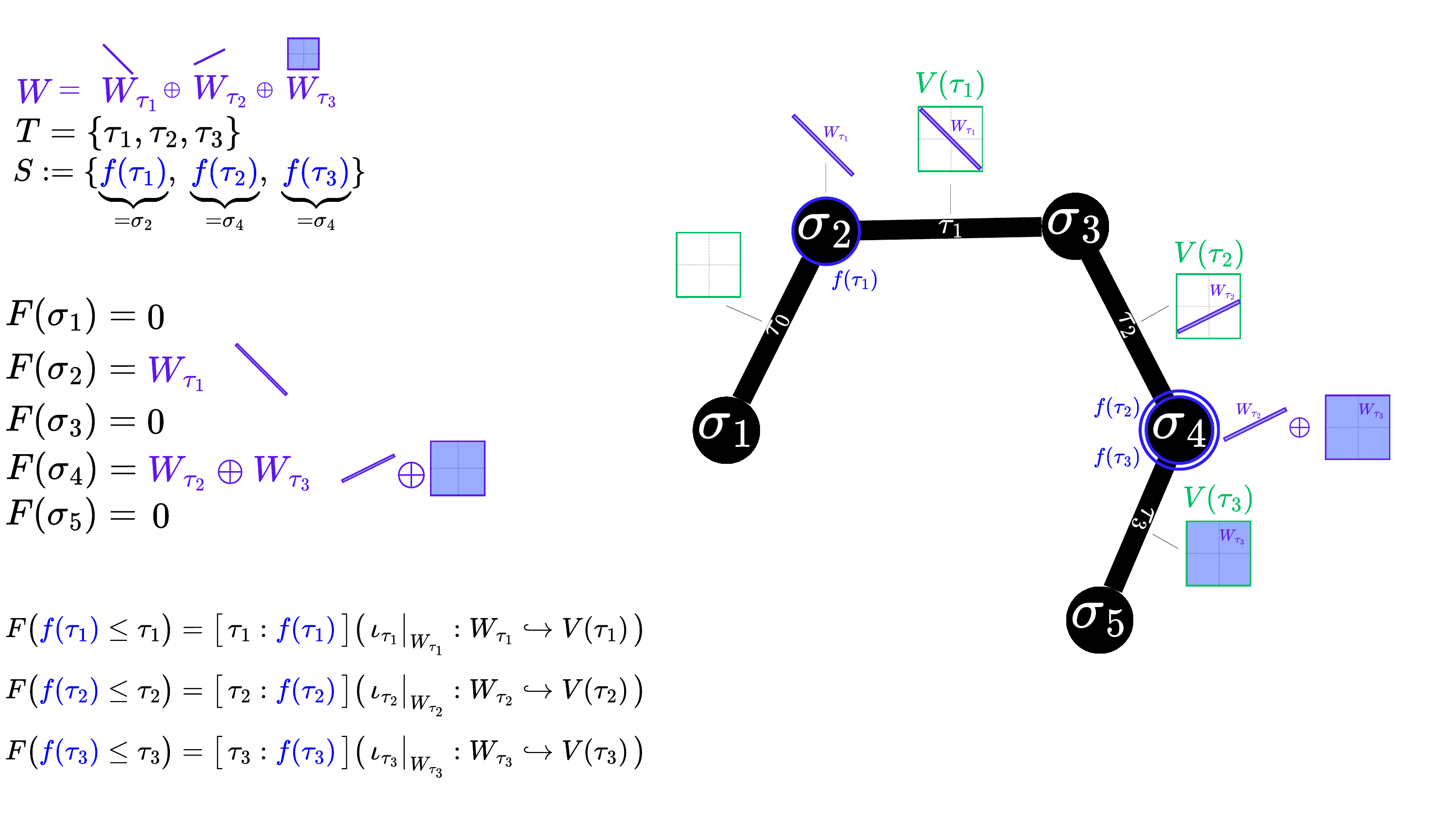}
    \caption{Visualizing the situation of the proof of Lemma~\ref{lemma:positive-sep-lemma}}
    \label{fig:gradient-space-lemma}
\end{figure}

\begin{proof}[Proof of Theorem~\ref{thm:positive-grading-higher-order-separability}.]
    First suppose that the $k$-simplices of $X$ are partitioned into two classes $A$ and $B$. Put $V=\bigoplus_{\tau}\mathbb{R}(\tau)$ an orthogonal direct sum, and consider the element $\boldsymbol s:=\boldsymbol 1_{A}-\boldsymbol 1_{B}$ of $V$, where $\boldsymbol 1_{A}$, $\boldsymbol 1_{B}$ respectively represent the indicator vectors for the classes $A$ and $B$. Let $W=\operatorname{span}^{\perp}\boldsymbol s$. Using Lemma~\ref{lemma:positive-sep-lemma}, obtain a sheaf $F$ on $X$ such that $C^{k}(X; F)=V$, $\operatorname{im }d^{k-1}=W$, and $d^{k}=0$. The $k$th Laplacian of $F$ then has kernel \begin{align}
\operatorname{ker }\Delta^{k}_{F}&=\overbrace{ \operatorname{ker } (d^{k-1})^{*} }^{ = \operatorname{im }^{\perp}d^{k-1} } \cap \overbrace{ \operatorname{ker } d^{k} }^{= C^{k}(X; F) } \\
&= \operatorname{im }^{\perp} d^{k-1} \\
&= W^{\perp}\\
&= \operatorname{span} \boldsymbol  s.
\end{align}
It follows from Theorem~\ref{thm:ODE-fact} that, in the diffusion limit, any initial $k$-cochain not in the orthogonal complement of $\operatorname{span}\boldsymbol s$ (a subspace of measure zero) exponentially converges to a nonzero scalar multiple of $\boldsymbol s=\boldsymbol 1_{A}-\boldsymbol 1_{B}$ and thus witnesses linear separability of the classes $A$ and $B$. 

To prove general multiway linear separability, we would like to establish a `diffusion direct sum' argument analogous to Theorem~\ref{thm:k-way-linear-separability}. This may be accomplished as follows. Suppose that the $k$-simplices of $X$ are partitioned into classes $A^{1},\dots,A^{\ell}$. To disambiguate, we will use superscripts for class indexing (e.g. $A^{i}$) and subscripts for simplex indexing (e.g. $\boldsymbol x_{\tau}$). For $i \in [\ell]$, let $V^{i}=\bigoplus_{\tau}\mathbb{R}(\tau)$ and consider the element $\boldsymbol s^{i}=\boldsymbol 1_{A^{i}}-\boldsymbol 1_{B^{i}}$ of $V^{i}$, where $B^{i}:=A^{c}$ is the complement of $A$. Now apply the two-way classification argument above to obtain a sheaf $F^{i}$ whose diffusion converges exponentially to the projection of the initial condition onto $\boldsymbol s_{i}$, yielding one-vs-many separation power for class $i$. We claim that the direct sum $F:=\bigoplus _{i=1}^{\ell} F^{i}$ can linearly separate the classes $A^{1},\dots,A^{\ell}$ in the diffusion limit for almost any choice of initial condition $\boldsymbol x=(\boldsymbol x^{i})_{i=1}^{\ell} \in C^{k}(X; F)$. Indeed, with $\pi^{i}:\bigoplus_{\tau: \text{dim } \tau=k}F^{i}(\tau) \to\pi^{i}:\bigoplus_{\tau: \text{dim } \tau=k}F^{i}(\tau)$ denoting the orthogonal projection onto $\operatorname{ker }\Delta_{F^{i}}^{k}$, we know $\pi^{i}\boldsymbol x^{i} \in \operatorname{span}(\boldsymbol 1_{A_{i}}- \boldsymbol 1_{B_{i}})$, say, $\pi^{i}\boldsymbol x^{i}=c^{i}\boldsymbol  s ^{i}$ for some $c \in \mathbb{R}$. Then $\boldsymbol x=(\boldsymbol x_{\tau})=(x_{\tau}^{i})$ converges to $$\big(\boldsymbol  x_{\tau}(\infty) \big)=\big( c^{1} s ^{1}_{\tau}, \dots, c^{\ell} s ^{\ell}_{\tau} \big)= \begin{cases}
 \big( \pm c^{1}, \dots, +c^{i}, \dots, \pm c^{\ell} \big) & \tau \in A^{i} \\
\big( \pm c^{1}, \dots, - c^{i}, \dots, \pm c^{\ell} \big) & \tau \not \in A^{i}.
\end{cases}$$
Taking $\boldsymbol Y:=\operatorname{diag}\big( \frac{\operatorname{sign}(c^{1})}{|c^{1}|}, \dots, \frac{\operatorname{sign}(c^{\ell})}{|c^{\ell}|} \big)$ then recovers $\boldsymbol Y \boldsymbol x_{\tau}(\infty)=(s_{\tau}^{1}, \dots, s_{\tau}^{\ell})$, provided that each initial $\boldsymbol x^{i} \in C^{k}(X; F^{i})$ is not in the orthogonal complement of $\operatorname{span}(\boldsymbol 1_{A_{i}}- \boldsymbol 1_{B_{i}})$. Linear separability for almost all initial conditions follows immediately. 
\end{proof}

\section{Learning with Sheaves}
\label{sec:learning-with-sheaves}


The previous sections have established that linear sheaf diffusion on a poset can achieve remarkable expressivity gains. Arising immediately from these demonstrations are two questions. The first question returns to Motivation~\ref{sec:intuition-oversmoothing-heterophily}, where it was shown that the Graph Convolutional Architecture (GCN) of Kipf and Welling~\cite{kipf2016semi} may be viewed as an augmentation of Hodge diffusion on a graph. More generally, many approaches to higher-order message passing arise as augmented Hodge diffusion on more general posets. It is thus sensible to wonder: what would a `neural' version of sheaf diffusion look like, and how would it behave? The second question stems from the observation that, in practice, the most expressive sheaf for the observed data is not known \textit{a priori}. Can it be learned? The following definition pertains to both questions.

\begin{definition}[\cite{ayzenberg2025sheaf}]
\label{def:sheaf-type-nn}
    Provisionally speaking, a \textbf{sheaf-type neural network} is an architecture which cascades linear sheaf diffusion layers alongside nonlinear activations, often in tandem with learned weights, auxiliary channels, etc. Commonly the sheaf with respect to which heat diffusion is performed is itself learned. 
\end{definition}

The past five years have witnessed a proliferation of sheaf-type neural networks for (hyper)graphs in the literature. Most intimately related to our dicussions so far are the Sheaf Neural Networks of Hansen and Gebhart~\cite{hansen2020sheaf} and {Neural Sheaf Diffusion (NSD)} architecture of Bodnar et al.~\cite{bodnar_neural_2023}.  
Alongside the development of sheaf-type neural networks has been that of higher-order message passing with the Hodge Laplacian e.g. on a simplicial or cell complex~\cite{ebli_simplicial_2020, roddenberry_principled_2021, hajij2022topological, bunch2020simplicial, yang2022efficient, yang_convolutional_2023, hajij2020cell}  . Of yet, there has not been an explicit integration of the sheaf formalism into these higher-order architectures. Thus, the goal of this section is twofold. To begin, we circle back to Motivation~\ref{sec:intuition-oversmoothing-heterophily} to explore the expressivity of graph NSD with respect to phenomena such as oversmoothing. Following this, we novelly define and investigate a suite of architectures for higher-order message passing with sheaves. 



\subsection{Sheaf Convolutional Networks on Graphs}

We have seen in Section~\ref{sec:linear-separation-power-sheaf-diffusion} that while linear diffusion-based approaches to node classification on a graph using the vanilla graph Laplacian are doomed to oversmoothing, any node classification task on a graph (in fact, on a hypergraph) can be asymptotically solved in principle by performing linear diffusion more generally with a suitable sheaf. Recall now the following question, originally posed in Section~\ref{sec:intuition-oversmoothing-heterophily}: 

\begin{quote}
\centering
\emph{Does oversmoothing remain inevitable even when nonlinearity, weights, and normalization are (re)introduced?}
\end{quote}

Section~\ref{sec:linear-separation-power-sheaf-diffusion} proved that normalization helps \textit{a bit} (especially in homophilic settings), but is generally not enough to mitigate oversmoothing. We would like to explore the extent to which adding nonlinearity and weights to (normalized) graph Laplacian-based diffusion — that is, employing the GCN architecture of Kipf and Welling (\cite{kipf2016semi}, Example~\ref{ex:kipf-welling-GCN}) — help. Moreover, Section~\ref{sec:linear-separation-power-sheaf-diffusion} showed that linear sheaf diffusion can be very expressive beyond just mitigating oversmoothing.   The following sheaf-type neural network architecture (Definition~\ref{def:sheaf-type-nn}), based on \cite{hansen2020sheaf, bodnar_neural_2023}, pertains to both points. 



\begin{definition}
\label{def:NSD}
    For $d$ a hyperparameter and $G$ an $n$-node graph, NSD learns the restriction maps for a sheaf $F$ on $G$, where each stalk of $F$ is a copy of $\mathbb{R}^{d}$. A layer of neural sheaf diffusion with $f_{\text{in}}$ input channels and $f_{\text{out}}$ output channels is given by applying a nonlinearity $\sigma$ to the tensor product of linear maps
\begin{align}
\label{eqn:order-0-nsd}
(\operatorname{sd}_{F} \circ \boldsymbol W_{1}^{\oplus n}) \otimes \boldsymbol W_{2}: C^{0}(F; G) \otimes \mathbb{R}^{f_{\operatorname{in}}} \to C^{0}(F; G) \otimes \mathbb{R}^{f_{\operatorname{out}}},
\end{align}
where $\boldsymbol W_{1}:\mathbb{R}^{d} \to \mathbb{R}^{d}$ consists of learnable weights applied independently at each node-stalk and $\boldsymbol W_{2}:\mathbb{R}^{f_{\operatorname{in}}} \to \mathbb{R}^{f_{\operatorname{out}}}$ consists of learnable weights for channel mixing.

\end{definition}
The details of `learning a sheaf' will be discussed in Section~\ref{sec:higher-order-sheaf-type-NNs}. Note that the GCN architecture of Kipf and Welling~\cite{kipf2016semi} arises as the special case of Definition~\ref{def:NSD} wherein $d=1$, $\boldsymbol{W}_1$ is the identity, and $F$ is the constant sheaf $\underline{\mathbb{R}}$. In particular, a corollary of the following result answers the question posed in Motivation~\ref{sec:intuition-oversmoothing-heterophily}: to what extent is oversmoothing inevitable in GCNs?




\begin{theorem}[\cite{bodnar_neural_2023}]
\label{thm:gcn-answer-to-question}
     Denote by $\mathscr{F}^{1}_{+}$ the class of one-dimensional sheaves $F$ on connected graphs $G=(V,E)$ satisfying $F_{v \leq e}F_{u \leq e}>0$ for all $(u,v)\in E$.  Consider one sheaf-convolutional layer 
\begin{align}
\boldsymbol  Y=\sigma \big( (\boldsymbol  I- \boldsymbol  \Delta_{F}  ) (\boldsymbol  I_{n} \otimes \boldsymbol  W_{1}) \boldsymbol  X \boldsymbol  W_{2} \big)
\end{align}
with (Leaky)ReLU $\sigma$ and $\boldsymbol \Delta_{F}$ denoting the \textit{normalized} vanilla sheaf Laplacian, $\boldsymbol \Delta_{F}=\dots$. Then any $F \in \mathscr{F}^{1}_{+}$  satisfies 
\begin{align}
E_{F}(\boldsymbol  Y) \leq \lambda_{*} \|\boldsymbol  W_{1}\|_{2}^{2} \|\boldsymbol  W_{2}^{\top}\|_{2}^{2} E_{F}(\boldsymbol  X),
\end{align}
where $\lambda_{*}=\max_{i  > 0}\{(\lambda_{i}(\Delta_{F})-1)^{2}\}$. Note that $0<\lambda_{*} \leq 1$. 

\end{theorem}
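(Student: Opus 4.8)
The plan is to decompose one layer $\boldsymbol Y=\sigma\bigl((\boldsymbol I-\boldsymbol\Delta_F)(\boldsymbol I_n\otimes\boldsymbol W_1)\boldsymbol X\boldsymbol W_2\bigr)$ into its four constituent operations — channel mixing by $\boldsymbol W_2$, stalk-wise weighting by $\boldsymbol W_1$, the diffusion step $\boldsymbol I-\boldsymbol\Delta_F$, and the pointwise nonlinearity $\sigma$ — and to control how the sheaf Dirichlet energy $E_F(\boldsymbol Z)=\operatorname{tr}(\boldsymbol Z^\top\boldsymbol\Delta_F\boldsymbol Z)$ transforms under each. Concretely I would establish
\begin{align}
E_F(\boldsymbol Y) &\le E_F\bigl((\boldsymbol I-\boldsymbol\Delta_F)(\boldsymbol I_n\otimes\boldsymbol W_1)\boldsymbol X\boldsymbol W_2\bigr) \notag\\
&\le \lambda_*\, E_F\bigl((\boldsymbol I_n\otimes\boldsymbol W_1)\boldsymbol X\boldsymbol W_2\bigr) \notag\\
&\le \lambda_*\,\|\boldsymbol W_1\|_2^2\, E_F(\boldsymbol X\boldsymbol W_2) \notag\\
&\le \lambda_*\,\|\boldsymbol W_1\|_2^2\,\|\boldsymbol W_2^\top\|_2^2\, E_F(\boldsymbol X),
\end{align}
from which the theorem is immediate, the corollary for GCNs being the specialization $d=1$, $\boldsymbol W_1=\boldsymbol I$, $F=\underline{\mathbb R}$.

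The three linear bounds are short spectral computations. For the channel-mixing bound, cyclicity of the trace gives $E_F(\boldsymbol X\boldsymbol W_2)=\operatorname{tr}\bigl(\boldsymbol W_2\boldsymbol W_2^\top\,\boldsymbol X^\top\boldsymbol\Delta_F\boldsymbol X\bigr)\le\lambda_{\max}(\boldsymbol W_2\boldsymbol W_2^\top)\operatorname{tr}(\boldsymbol X^\top\boldsymbol\Delta_F\boldsymbol X)=\|\boldsymbol W_2^\top\|_2^2\,E_F(\boldsymbol X)$, using $\operatorname{tr}(\boldsymbol A\boldsymbol B)\le\lambda_{\max}(\boldsymbol A)\operatorname{tr}(\boldsymbol B)$ for $\boldsymbol A,\boldsymbol B\succeq0$ together with $\boldsymbol X^\top\boldsymbol\Delta_F\boldsymbol X\succeq0$. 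For the weighting bound, since $F\in\mathscr{F}^1_+$ is one-dimensional, $\boldsymbol I_n\otimes\boldsymbol W_1$ is scalar multiplication on each node stalk and $E_F(\boldsymbol W_1\boldsymbol N)=\|\boldsymbol W_1\|_2^2\,E_F(\boldsymbol N)$ exactly (in the $d>1$ regime one would instead need $\boldsymbol W_1$ to commute suitably with the restriction maps, but that is not needed here). For the diffusion bound, expand the columns of $\boldsymbol M:=(\boldsymbol I_n\otimes\boldsymbol W_1)\boldsymbol X\boldsymbol W_2$ in an orthonormal eigenbasis $(\boldsymbol u_i,\lambda_i)$ of $\boldsymbol\Delta_F$: then $E_F\bigl((\boldsymbol I-\boldsymbol\Delta_F)\boldsymbol M\bigr)=\sum_i\lambda_i(1-\lambda_i)^2\|\boldsymbol m_i\|^2$ while $E_F(\boldsymbol M)=\sum_i\lambda_i\|\boldsymbol m_i\|^2$; the indices with $\lambda_i=0$ contribute nothing to either sum, and for $\lambda_i>0$ one has $(1-\lambda_i)^2\le\lambda_*$, so $E_F\bigl((\boldsymbol I-\boldsymbol\Delta_F)\boldsymbol M\bigr)\le\lambda_*\,E_F(\boldsymbol M)$. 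The bound $0<\lambda_*\le1$ follows from the spectral localization $\lambda_i(\boldsymbol\Delta_F)\in[0,2]$ of the normalized vanilla sheaf Laplacian (giving $(1-\lambda_i)^2\le1$), strict positivity holding as soon as $\boldsymbol\Delta_F$ is not the identity on $(\ker\boldsymbol\Delta_F)^\perp$.

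The crux — and the step I expect to be the main obstacle — is the first inequality: that the pointwise nonlinearity is energy non-increasing, $E_F(\sigma(\boldsymbol Z))\le E_F(\boldsymbol Z)$ for $\sigma$ a (Leaky)ReLU. This is precisely where the hypothesis $F\in\mathscr{F}^1_+$, i.e.\ $F_{v\le e}F_{u\le e}>0$ on every edge, is essential, and I would isolate it as a lemma (the sheaf-theoretic refinement of the Cai--Wang monotonicity lemma for GCNs). Writing $E_F(\boldsymbol Z)=\operatorname{tr}\bigl((\boldsymbol B\boldsymbol Z)^\top\boldsymbol B\boldsymbol Z\bigr)=\sum_{e=(u,v)}\sum_j\bigl(F_{v\le e}d_v^{-1/2}Z_{vj}-F_{u\le e}d_u^{-1/2}Z_{uj}\bigr)^2$ for $\boldsymbol B$ the sheaf coboundary and $d_v>0$ the (block-diagonal) normalization weights, observe that on each edge the two restriction scalars share a sign, which can be pulled out and disappears on squaring; since $\sigma$ is positively homogeneous the positive factors $|F_{v\le e}|d_v^{-1/2}$ pass through $\sigma$, and since $\sigma$ is $1$-Lipschitz each edge-and-channel term of $\sigma(\boldsymbol Z)$ is then bounded by the corresponding term of $\boldsymbol Z$. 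Summing yields $E_F(\sigma(\boldsymbol Z))\le E_F(\boldsymbol Z)$. I would emphasize that this genuinely fails outside $\mathscr{F}^1_+$ — for a lying sheaf with $F_{v\le e}F_{u\le e}<0$ the edge term becomes a \emph{sum} of magnitudes and ReLU can strictly increase it — which is consistent with the separation results of Section~\ref{sec:linear-separation-power-sheaf-diffusion}. Chaining this lemma with the three spectral bounds above closes the proof.
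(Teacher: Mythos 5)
Your proposal is correct and follows essentially the same route as the paper's proof: the nonlinearity is handled first via the edge-wise argument exploiting $F_{v\le e}F_{u\le e}>0$, positive homogeneity, and $1$-Lipschitzness of (Leaky)ReLU; the diffusion step is bounded by $\lambda_*$ through the eigenbasis expansion $\sum_i \lambda_i(1-\lambda_i)^2 c_i^2 \le \lambda_*\sum_i \lambda_i c_i^2$; $\boldsymbol W_1$ is treated as a scalar; and $\boldsymbol W_2$ is absorbed via trace cyclicity and the bound $\operatorname{tr}(\boldsymbol A\boldsymbol B)\le\lambda_{\max}(\boldsymbol A)\operatorname{tr}(\boldsymbol B)$. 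The only differences are presentational (you package the steps as a cleaner four-inequality chain and add a useful remark on why the nonlinearity bound fails for lying sheaves).
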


Theorem~\ref{thm:gcn-answer-to-question} implies, in particular, that the GCN architecture sends initial node representations to $\operatorname{ker }\boldsymbol \Delta$ exponentially quickly provided the weights are small enough. When $G$ is connected, this is tantamount to asserting that the GCN architecture exhibits oversmoothing exponentially quickly when the weights are small enough. Moreover, once oversmoothing ($E_{F}(\boldsymbol X)=0$) occurs, it cannot be undone, no matter what the weights are. 

\begin{proof}
    We follow the proof in \cite{bodnar_neural_2023}. 

First note that it suffices to argue the result ignoring $\sigma(\cdot)$, because in general $E_{F}\big( \sigma(Z)\big) \leq E_{F}(Z)$ for $\sigma$ a (Leaky)ReLU.  Indeed, 
\begin{align*}
E_{F}\big( \sigma(\boldsymbol  x) \big) &= \frac{1}{2} \sum_{v,w \leq e} \|F_{v \leq e} d_{v}^{-1/2} \sigma(x_{v})-F_{w \leq e} d_{w}^{-1/2} \sigma(x_{w})\|_{2}^{2} \\
&= \frac{1}{2} \sum_{v, w \leq e} \| |F_{v \leq e}| d_{v}^{-1/2} \sigma(x_{v}) - |F_{w \leq e}|d_{w}^{-1/2}\sigma(x_{w})  \|_{2}^{2} \tag{ $F_{v \leq e} F_{w \leq e}>0$. } \\
&= \frac{1}{2} \sum_{v, w \leq e} \| \sigma\big( \frac{|F_{v \leq e}| x_{v} }{\sqrt{ d_{v} }}\big) - \sigma\big( \frac{|F_{w \leq e}|x_{w}}{\sqrt{ d_{w} }} \big) \|_{2}^{2} \tag{$c \sigma(x)=\sigma(cx), c>0$}\\
&\leq \frac{1}{2} \sum_{v,w \leq e}  \| \frac{|F_{v \leq e}|x_{v}}{\sqrt{ d_{v} }}- \frac{|F_{v \leq e}|x_{w}}{\sqrt{ d_{w} }} \|_{2}^{2} \tag{ ReLU 1-Lipschitz cont. }\\
&=\frac{1}{2} \sum_{v, w \leq e} \|F_{v \leq e} d_{v}^{-1/2}x_{v}-F_{w \leq e}D_{w}^{-1/2}x_{w}\|_{2}^{2} \tag{$F_{v \leq e}F_{w \leq e}>0$}\\
&= E_{F}(\boldsymbol  x).
\end{align*}
Put $\boldsymbol P:= \boldsymbol I-\boldsymbol \Delta_{F}$. Noting that in this case $\boldsymbol W_{1}$ is just a scalar $w_{1}$, we are done if we can show 
\begin{align}
E_{F}(  \underbrace{ w_{1}\boldsymbol  P \boldsymbol  X \boldsymbol  W_{2}  }_{ \boldsymbol  Y }) \leq \lambda_{*}|w_{1}|^{2}\|\boldsymbol  W_{2}\|_{2}^{2}E_{F}(\boldsymbol  X).
\end{align}
Indeed,

\begin{align*}
E_{F}(w_{1}\boldsymbol  P \boldsymbol  X \boldsymbol  W_{2}) &= |w_{1}|^{2}E_{F}(\boldsymbol  P \boldsymbol  X \boldsymbol  W_{2})\\
& \leq |w_{1}|^{2} \lambda_{*} E_{F} (\boldsymbol  X \boldsymbol  W_{2}) \tag{$*$} \\
& = |w_{1}|^{2} \lambda_{*}\operatorname{Tr}(  \boldsymbol  W^{\top}\boldsymbol  X^{\top} \boldsymbol  \Delta_{F} \boldsymbol  X \boldsymbol  W )\\
&= |w_{1}|^{2} \lambda_{*}\operatorname{Tr}(\boldsymbol  X^{\top} \boldsymbol  \Delta_{F} \boldsymbol  X \boldsymbol  W \boldsymbol  W^{\top})\\
& \leq |w_{1}|^{2} \lambda_{*} \operatorname{Tr}(\boldsymbol  X^{\top} \boldsymbol  \Delta_{F} \boldsymbol  X) \|\boldsymbol  W \boldsymbol  W^{\top}\|_{2}\\
&=|w_{1}| ^{2} \lambda_{*}\operatorname{Tr}(\boldsymbol  X^{\top} \boldsymbol  \Delta_{F} \boldsymbol  X) \|\boldsymbol  W^{\top}\| _{2}^{2}\\
&= |w_{1}|^{2} \lambda_{*} \|\boldsymbol  W^{\top}\|_{2}^{2} E_{F}(\boldsymbol  X).
\end{align*}
To see why $(*)$ holds, assume $\boldsymbol x:=\boldsymbol X\boldsymbol W_{2}$ has eigencoordinates $(c_{i})$, whence $\boldsymbol x^{\top} \boldsymbol \Delta_{F} \boldsymbol x=\sum_{i}c_{i}^{2}\lambda_{i}(\boldsymbol \Delta_{F})$  and so 
\begin{align}
E_{F}(\boldsymbol  P \boldsymbol  x)=\boldsymbol  x^{\top} \boldsymbol  P^{\top} \boldsymbol  \Delta_{F} \boldsymbol  P \boldsymbol  x= \sum_i c_{i}^{2} \lambda_{i}(1-\lambda_{i})^{2} \leq \lambda_{*} \sum_{i} c_{i}^{2} \lambda_{i}=\lambda_{*}E_{F}(\boldsymbol  x),
\end{align}
where the inequality is a consequence of the fact that the eigenvalues of $\boldsymbol \Delta_{F}$ are normalized to lie in $[0,2]$. 

\end{proof}


\subsection{Higher-Order Sheaf-Type Neural Networks}
\label{sec:higher-order-sheaf-type-NNs}
We now define a collection of architectures for TDL based on sheaf diffusion. A first extension of the SCN architecture to higher-order settings may be obtained by augmenting general order-$k$ Hodge heat diffusion in precisely the same manner that Equation~\ref{eqn:order-0-nsd} augments order-$0$ Hodge heat diffusion. A layer of this model is given by applying an odd nonlinearity $\phi$ to the tensor product of linear maps \begin{equation}
\label{eqn:naive-hosd-update}
   (\text{sd}_{F}^{(k)} \circ   W_{\text{stalk}}^{\oplus n}) \otimes  W_{\text{channel}}:C^{k}(S;F) \otimes f_{\text{in}} \to C^{k}(S; F) \otimes f_{\text{out}} 
\end{equation}
where $\text{sd}_{F}^{(k)}=\operatorname{id}- 2 \eta \Delta^{k}_{F}=\operatorname{id} - 2 \eta (\Delta^{k}_{F, \text{up}}+ \Delta^{k}_{F, \text{down}})$ (Definition~\ref{def:sheaf-diff-layer}). Here, $W_{\text{stalk}}:\mathbb{R}^{d} \to \mathbb{R}^{d}$ represents a linear transformation of the typical stalk parameterized by learned weights, while $W_{\text{channel}}:\mathbb{R}^{f_{\text{in}}} \to \mathbb{R}^{f_{\text{out}}}$ consists of learned weights for channel mixing. The nonlinearity $\phi$ is chosen to be odd to ensure equivariance with respect to the orientation on cochains used to define the codifferential $d$; the proof of this fact is a straightforward generalization to posets of the argument in~\cite{roddenberry_principled_2021} for simplicial complexes and so we omit it. While simple, the update~\ref{eqn:naive-hosd-update} forces uniform weights across all factors of the Hodge decomposition $C^{k}(S;F)=\operatorname{ker }\Delta^{k} \oplus \operatorname{im }d^{k-1} \oplus \operatorname{im }d^{^{*}k}$. We decouple by assigning separate weights for the up- (curl) and down- (gradient) Laplacians, as well as a 'center' term facilitating the passage of information along the harmonic component. With $\text{sd}_{F, \text{up}}^{(k)}= \frac{1}{2} \operatorname{id}-\Delta^{k}_{F,\text{up}}$ and $\text{sd}_{F, \text{down}}^{(k)}= \frac{1}{2} \operatorname{id}-\Delta^{k}_{F, \text{down}}$, so that $\text{sd}_{F}^{(k)}=\text{sd}_{F, \text{up}}^{(k)}+\text{sd}_{F, \text{down}}^{(k)}$, the resulting layer is given by a nonlinearity $\phi$ applied to

\begin{equation}
\begin{aligned}
&(\text{sd}_{F,\text{up}}^{(k)} \circ   W_{\text{stalk,up}}^{\oplus n}) \otimes  W_{\text{channel,up}} \\
+ &(\text{sd}_{F,\text{down}}^{(k)} \circ   W_{\text{stalk,down}}^{\oplus n}) \otimes  W_{\text{channel,down}} \\
+ &W_{\text{stalk,center}}^{\oplus n} \otimes W_{\text{channel,center}}:
C^{k}(S;F) \otimes f_{\text{in}} \to C^{k}(S;F) \otimes f_{\text{out}}
\end{aligned}
\label{eq:honsd-operator-form}
\end{equation}

where $W_{\text{stalk}, \bullet}^{(k)} : \mathbb{R}^{d} \to \mathbb{R}^{d}$ and $W_{\text{channel}, \bullet}: \mathbb{R}^{f_{\text{in}}} \to \mathbb{R}^{f_{\text{out}}}$. In practice, we identify $C^{k}(S;F) \cong \mathbb{R}^{d |X^{k}|}$ by stacking stalks, giving in matrix form the update $(\boldsymbol X \in \mathbb{R}^{d |X^{k} | \times f_{\text{in}}})$ 

\begin{equation}
\begin{aligned}
\boldsymbol X_{\text{new}}=\phi \big(
&\text{sd}^{(k)}_{F,\text{up}}\;\big(I_{|X^{k}|}\otimes \boldsymbol{W}_{\text{stalk,up}}\big)\;\boldsymbol{X}\;\boldsymbol{W}_{\text{channel,up}} \\
+&\text{sd}^{(k)}_{F,\text{down}}\;\big(I_{|X^{k}|}\otimes \boldsymbol{W}_{\text{stalk,down}}\big)\;\boldsymbol{X}\;\boldsymbol{W}_{\text{channel,down}} \\
+&\big(I_{|X^{k}|}\otimes \boldsymbol{W}_{\text{stalk,center}}\big)\;\boldsymbol{X}\;\boldsymbol{W}_{\text{channel,center}}
\big)
\end{aligned}
\label{eq:honsd-matrix-form}
\end{equation}

Let us examine some special cases. When $S$ is a simplicial complex and $F$ is the constant sheaf, and $\eta=\frac{1}{2}$, Equation~\ref{eq:honsd-matrix-form} recovers the SCoNe architecture of~\cite{roddenberry_principled_2021}. If the channel weight matrices are furthermore equal it recovers a one-hop version of the simplicial neural network architecture of~\cite{ebli_simplicial_2020}. When $k=0$ and $S$ represents a hypergraph (with Laplacian obtained per Example~\ref{ex:sheaf-diffusion-on-hypergraphs} and the accompanying remark), Equation~\ref{eq:honsd-matrix-form} recovers the sheaf hypergraph network architecture of~\cite{duta2023sheaf}, and in turn the sheaf convolutional graph neural network of~\cite{hansen2020sheaf, bodnar2022neural}, up to normalization. 



\paragraph{Learning Sheaves}
In many practical settings, the 'right' sheaf for the data is not known and the restriction maps are therefore learned. Let $d$ be a stalk width hyperparameter and $f$ the feature dimension. In the most naive graph setting, one learns an (asymmetric) function $\Phi:\mathbb{R}^{f} \times \mathbb{R}^{f} \to \mathbb{R}^{d \times d}$ which takes as input pairs $(\boldsymbol x_{u}, \boldsymbol x_{v})$ of node features and outputs a $d \times d$ matrix $\Phi(\boldsymbol x_{u}, \boldsymbol x_{v})$ representing the restriction map $F_{u \leq (u,v)}$. Usually one takes $\Phi$ to be an $\operatorname{MLP}$ applied to concatenations $\boldsymbol x_{u}\| \boldsymbol x_{v}$. It is not uncommon to constrain the restriction maps to be e.g. diagonal or orthogonal to decrease parameter count and achieve stabler training~\cite{bodnar_neural_2023}. Assuming that $(k-1)$- and $k$-simplex features of dimensions $f_{k-1}, f_{k}$ have been procured, the most straightforward generalization of this procedure to simplicial complexes is to learn a pair of permutation-aware functions $\Phi_{k-1}:\underbrace{ \mathbb{R}^{f_{k-1}} \times \mathbb{R}^{f_{k-1} }   \times \dots \times \mathbb{R}^{f_{k-1}} }_{ k \text{ times}}\to \mathbb{R}^{d \times d}$ and  $\Phi_{k}:\underbrace{ \mathbb{R}^{f_{k}} \times \mathbb{R}^{f_{k}} \times\dots \times \mathbb{R}^{f_{k}} }_{ k+1 \text{ times} } \to \mathbb{R}^{d \times d}$. Since one is usually concerned only with interactions spanning at most two adjacent simplex ranks, it suffices to then take the restriction maps for all other covering incidences to be zero, and then define the remaining restriction maps by composition to obtain an earnest sheaf. For general posets, one loses the regularity condition that every element of rank $k$ covers exactly $k+1$ elements of rank $k-1$. In this case, our generalization follows the strategy for hypergraphs outlined in~\cite{duta2023sheaf}: we learn functions $\Phi(\boldsymbol x_{\sigma}, \boldsymbol x_{\tau})$ of features $\boldsymbol x_{\sigma} \in F(\sigma) \cong \mathbb{R}^{d}$, $\boldsymbol x_{\tau} \in F(\tau) \cong \mathbb{R}^{d}$ for $\sigma \leq \tau$, where the target element's features are obtained via a permutation-invariant aggregation of its covered features if not naturally available. As before, usually one takes $\Phi=\operatorname{MLP}(\cdot \| \cdot)$, e.g. $F(\sigma \leq \tau)=\operatorname{MLP}(\boldsymbol x_{ \sigma} \| \boldsymbol x_{\tau})$.

\paragraph{Experiment: Trajectory Prediction} We demonstrate the effectiveness of the proposed architecture on a synthetic trajectory prediction task inspired by the motivating examples of higher-order sheaves in Section~\ref{sec:higher-order-sheaves-in-context}. Given a rank-$2$ simplicial complex $S$, we define an \textit{trajectory} on $S$ to be a sequence of adjacent nodes $i_0,\dots,i_{m-1}$ in $S$. The goal is to forecast what the subsequent node $i_{m}$ will be. Roddenberry et al.~\cite{roddenberry_principled_2021} approach this task by modeling trajectories as simplicial $1$-chains $[i_0, i_1] + \dots + [i_{m-2}, i_{m-1}] \in C_1(S)$,  learning a neural map $f:C^1(S) \to C^1(S)$, and then softmaxing a node signal induced by $f$ (e.g. via postcomposition with $\operatorname{div}$) over the one-hop neighborhood of $i_{m-1}$. We will compare three neural methods for learning $f$. The first takes $F=\underline{\mathbb{R}}$ to be the constant sheaf in Equation~\ref{eq:honsd-matrix-form}, recovering the neural Hodge diffusion architecture of Roddenberry et al.~\cite{roddenberry_principled_2021}. The second takes $F=F_{\text{handcrafted}}$ to be a sheaf tailored to the data generation process for the synthetic dataset we employ. The third \textit{learns} $F$, per the preceding discussion. We also compare to linear sheaf diffusion, in the form of projections onto the kernels $\ker \Delta_{F_{\text{handcrafted}}}$, $\ker \Delta_{\text{up}}=\ker \Delta_{F_{\text{up}}}$, $\ker \Delta=\ker \Delta_{\underline{\mathbb{R}}}$ $\ker \Delta_{\text{down}}=\ker \Delta_{F_{\text{down}}}$. For the linear kernel projection approaches, we obtain node signals by restricting the boundary map to just edges incident to the current terminal node $i_{m-1}$ (since postcomposing an harmonic signal with $\operatorname{div}$ nullifies it). 


Our evaluation takes place on a synthetic dataset consisting of trajectories on a simplicial complex $S$ given by a regularly triangulated plane with a hole in the middle. In one half of the plane, the trajectories are designed to be `approximately harmonic', such that the ordinary Hodge Laplacian bias is appropriate. We will call this the \textit{harmonic region} of the plane. In the other half, the trajectories are designed to be `approximately curly', such that the ordinary Hodge Laplacian bias is inappropriate but something like the Hodge down-Laplacian \textit{is} appropriate (Motivation~\ref{sec:higher-order-sheaves-in-context}, Figure~\ref{fig:example-trajs}). We will call this the \textit{curl region}. Specifically, trajectories in the harmonic region are constructed by, for each step, choosing among neighbors in the harmonic region (excluding immediate backtracking) the neighbor whose displacement from the hole's center achieves the highest dot product with a $\frac{\pi}{2}$-rotation of the current vector. This encourages trajectories to roughly circle around the hole. Trajectories in the curl region are constructed as follows: at each step, with $p_{\text{curl}}=0.8$ and if there is a previous node, choose a common triangle to the two nodes uniformly at random and jump to its other vertex (if that vertex is also in the curl region). Otherwise, step to a random neighbor in the curl region. This creates trajectories for which circulation is an important signal property. We generate $250$ synthetic trajectories in the harmonic region and $250$ in the curl region, each of length $10$. 

This dataset's heterogeneity presents a challenge for architectures based on Hodge diffusion: for half of the trajectories, $\boldsymbol{W}_{\text{stalk, up}}$ should really be set (or learned) to zero to get the `right' inductive bias. For the other half, setting (learning) $\boldsymbol{W}_{\text{stalk, up}}=0$ would be nonsensical, as triangle information is essential. Following Example~\ref{sec:higher-order-sheaves-in-context}, we handcraft a sheaf $F_{\text{handcrafted}}$ to circumvent this challenge: we take the constant sheaf $\underline{\mathbb{R}}$, but set the triangle stalks in the curl region to zero. This produces a sheaf Laplacian emulating $\Delta_{\text{down}}$ in the curl region and $\Delta$ in the harmonic region (see Example~\ref{sec:higher-order-sheaves-in-context} for further details). We train neural sheaf diffusion (Equation~\ref{eq:honsd-matrix-form}) $4$ layers of $32$ hidden dimensions each for $100$ epochs using the Adam optimizer~\cite{kingma2014adam}. Results are shown in Figure~\ref{fig:traj-prediction-results}.




\begin{figure}[ht]
  \centering
  \begin{subfigure}[t]{0.32\textwidth}
    \centering
    \includegraphics[width=\linewidth]{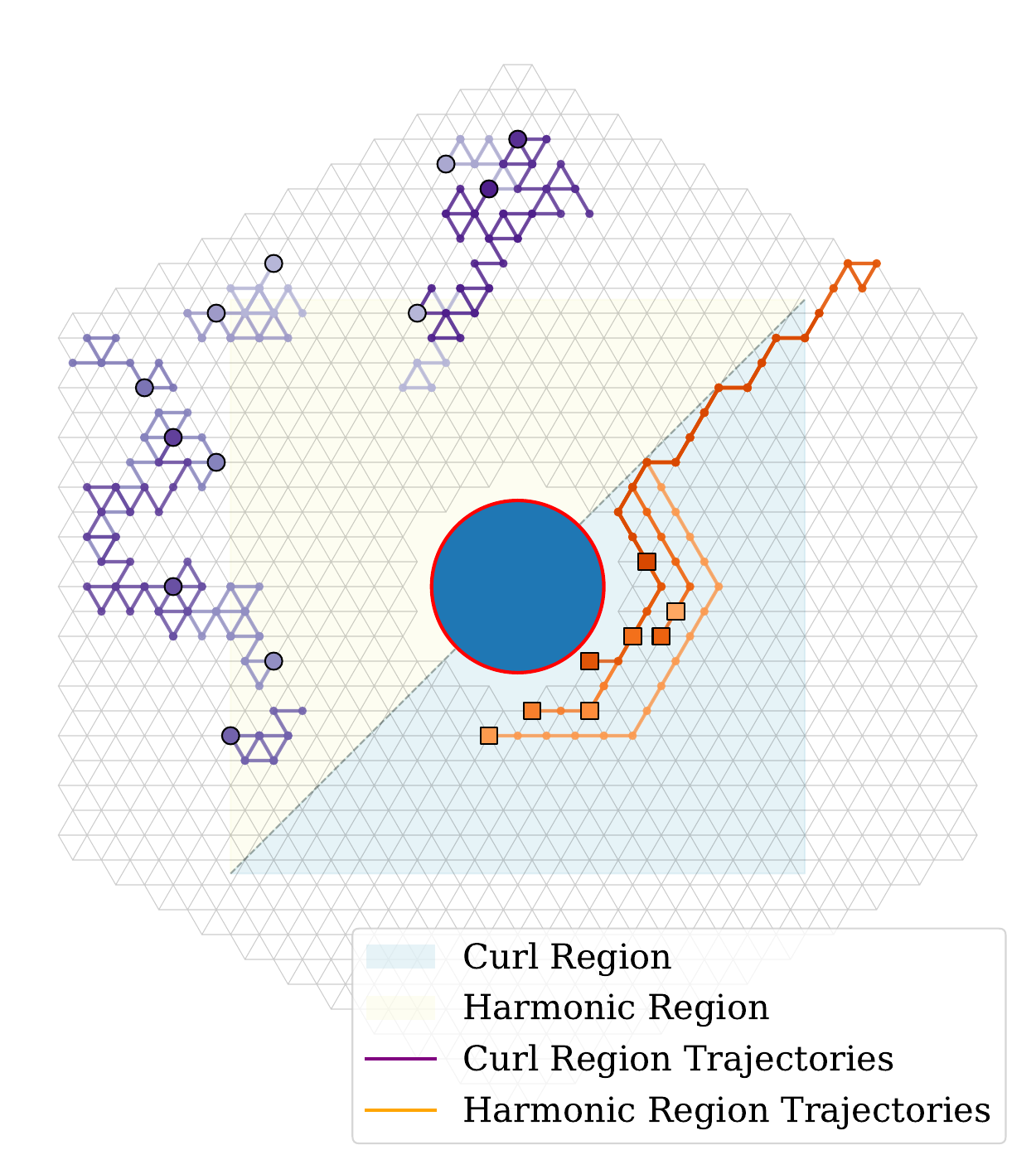}
    \caption{}
    \label{fig:traj-pred-trajectories}
  \end{subfigure}\hfill
  \begin{subfigure}[t]{0.32\textwidth}
    \centering
    \includegraphics[width=\linewidth]{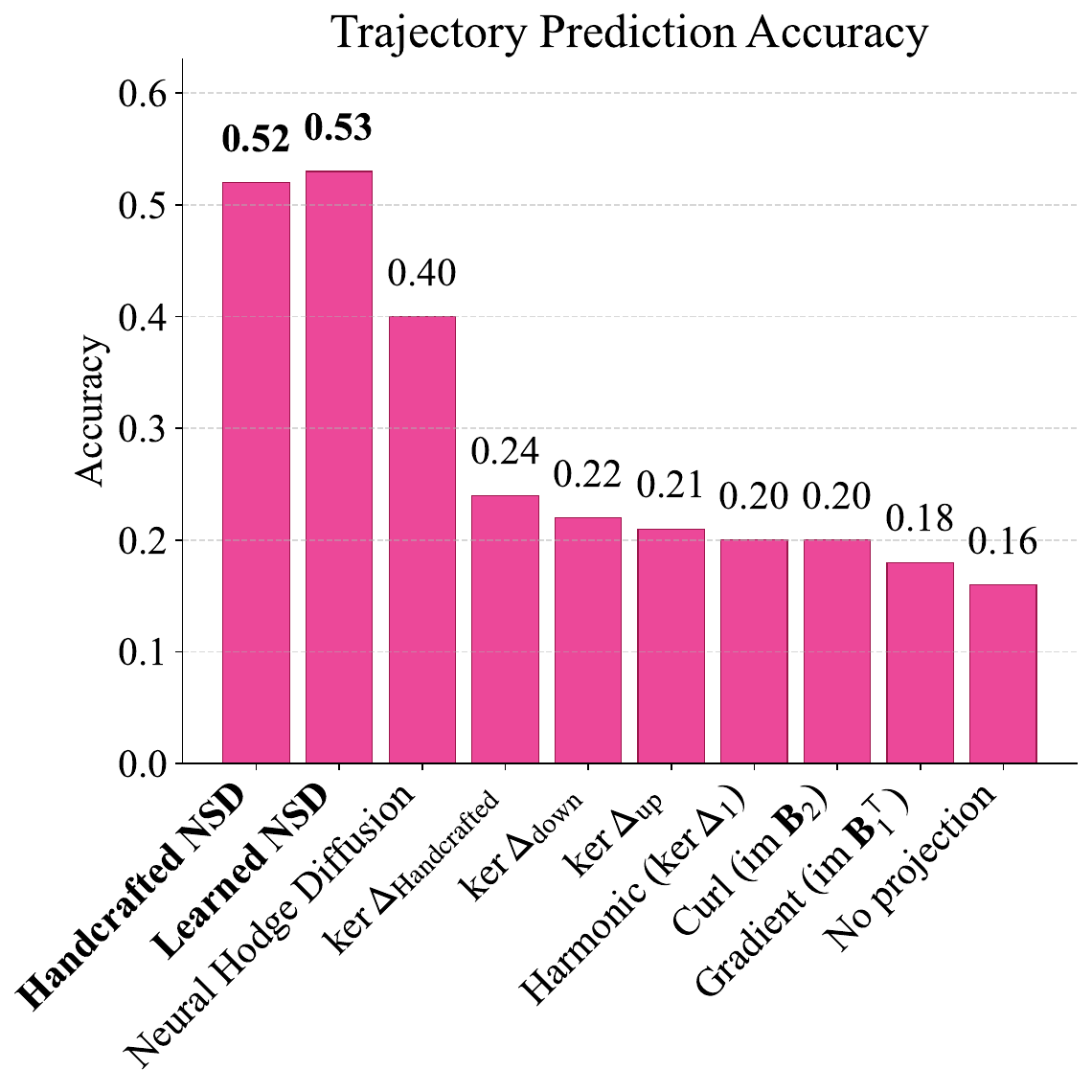}
    \caption{}
    \label{fig:traj-pred-results-subfig}
  \end{subfigure}\hfill
  \begin{subfigure}[t]{0.32\textwidth}
    \centering
    \includegraphics[width=\linewidth]{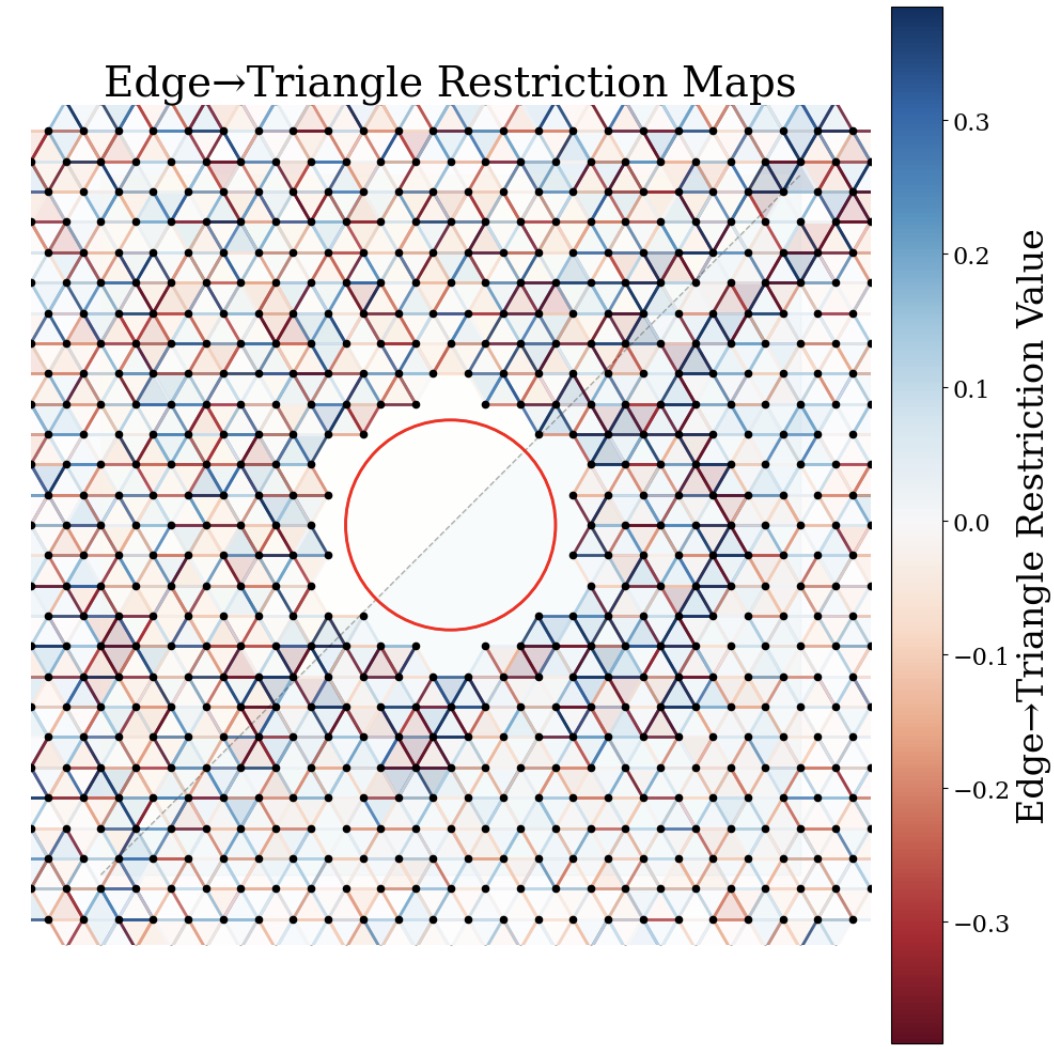}
    \caption{}
    \label{fig:traj-pred-learned-sheaf}
  \end{subfigure}
\caption{(\ref{fig:traj-pred-trajectories}): Heterogeneous trajectories on a triangulation of the punctured plane. (\ref{fig:traj-pred-results-subfig}): Trajectory prediction results. The sheaf-based methods (both handcrafted and learned) outperform the Hodge-based methods and the linear diffusion methods, with the best-performing linear method being diffusion with the handcrafted sheaf Laplacian. (\ref{fig:traj-pred-learned-sheaf}) Visualization of the edge-triangle restriction maps for the (unit stalk width) learned sheaf. Notably, the learned restriction maps of highest magnitude concentrate around the planted harmonic-like trajectories, while the restriction maps around the planted curl-like trajectories have low magnitude. This pattern aligns with the philosophy used to construct the handcrafted sheaf.}
  \label{fig:traj-prediction-results}
\end{figure}



\section{Conclusion}
\label{sec:conclusion}

This essay initiated a theory and practice for the integration of sheaves into Topological Deep Learning (TDL). Following an introduction shepherding sheaf theory from the abstract to the applied, we comprehensively reviewed its extant theoretical and empirical applications in graph representation learning before developing novel extensions to the higher-order setting. These included new techniques for examining and ultimately establishing the expressivity of higher-order sheaf diffusion (alongside proofs that order-zero techniques fail to generalize) as well as the development of novel sheaf-type neural networks for TDL. Along the way, we provided a detailed discussion on the potential and pitfalls of diffusion with the Hodge Laplacian, incidentally shedding light on some misconceptions in recent literature regarding the nature of `higher-order oversmoothing'. Carrying out the exposition at the level of posets allowed for a more ready reconciliation of the `topological' and `combinatorial' sheaf characterizations found throughout pure and applied mathematics while allowing for diverse domains (e.g. cell complexes versus hypergraphs) to be treated with the same formalism.

This work was completed as a portion of the Essay component of Part III of the Mathematical Tripos at the University of Cambridge. The constraints of the Essay medium leave a wealth of future work to be explored. Most notably, comprehensive empirical validation against the state-of-the-art is not part of the Essay rubric. An updated version of this manuscript will include such experiments. Additional future work might consider sheafifying architectures beyond the baselines outlined here. This may include extension to attentional~\cite{goh2022simplicial, battiloro_generalized_2024, giusti2023cell} or Dirac-based~\cite{battiloro_generalized_2024, calmon2023dirac} formulations of higher-order message passing. It may also include going beyond higher-order message passing altogether, e.g. toward continuous formulations~\cite{einizade2025cosmos} or multi-cellular networks~\cite{eitan2024topological}.

Another potential avenue for future work might involve changing the data category $\mathsf{D}=\mathbb{R}\mathsf{Vect}$ altogether. Many categories of recent interest are nonabelian, such as lattices or convex spaces of probability measures, but nevertheless recent work has developed for them a young but fruitful sheaf theory~\cite{ghrist2020cellular, ghrist2025categorical, dacunto_relativity_2025}. In parallel, the field of Algebraic Signal Processing (ASP)~\cite{puschel2008algebraic} poses an analogue to graph signal processing (including convolutional neural networks) for data valued in categories beyond $\mathsf{Vect}$. It seems a fascinating direction of future work to marry the nascent theories of deep learning and sheaves within these categories.

\paragraph{Acknowledgments} The authors would like to acknowledge Cristian Bodnar, Claudio Battiloro, Riccardo Ali, Anton Ayzenberg, and Vincent Grande for their insightful discussions.

\vfill \eject

\bibliographystyle{unsrtnat}
\bibliography{references, zotero}  






\end{document}